\theoremstyle{plain}
\newtheorem{hyp}{Assumption}[section]
\newtheorem{theorem}{Theorem}[section]
\newtheorem{prop}[theorem]{Proposition}
\newtheorem{lem}[theorem]{Lemma}
\newtheorem{corollary}[theorem]{Corollary}
\theoremstyle{remark}
\newtheorem{example}[theorem]{Example}
\newtheorem{remark}[theorem]{Remark}
\newtheorem{rem}[theorem]{Remark}
\newlist{propenum}{enumerate}{1} % also creates a counter called 'propenumi'
\setlist[propenum]{label=(\roman*)}
\newcommand{\ca}{{\mathcal A}}
\newcommand{\cc}{{\mathcal C}}
\newcommand{\ck}{{\mathcal K}}
\newcommand{\cK}{{\mathcal K}}
\newcommand{\cq}{{\mathcal Q}}
\newcommand{\sparse}{{ s }}
\newcommand{\E}{{\mathbb E}}
\newcommand{\N}{{\mathbb N}}
\renewcommand{\P}{{\mathbb P}}
\newcommand{\R}{{\mathbb R}}
\newcommand{\Z}{{\mathbb Z}}
\newcommand{\rd}{{\rm d}}
\newcommand{\Leb}{{\rm Leb}}
\newcommand{\ind}{{\bf 1}}
\newcommand{\scale}{{\overline{\sigma}}}
\newcommand{\Supp}{{\rm Supp}\;}
\newcommand{\inv}[1]{\mathop{\frac{1}{ #1}}\nolimits}
\newcommand{\expp}[1]{\mathop {\mathrm{e}^{ #1}}}
\newcommand{\dT}{\mathfrak{d}_T}
\newcommand{\dTh}{\mathfrak{d}_T^{\rm h}}
\newcommand{\dI}{\mathfrak{d}_\infty}
\newcommand{\dK}{\mathfrak{d}_\cK}
\newcommand{\dKT}{\mathfrak{d}_{\cK_T}}
\newcommand{\DT}{\mathcal{V}_T}
\newcommand{\RT}{\rho_T}
\newcommand{\tD}{\tilde D}
\newcommand{\op}{\mathrm{op}}
\newcommand{\ve}{{\ell_\infty}}
\newcommand{\El}{e_\ell}
\newcommand{\Var}{{\rm Var}}
\newcommand{\supp}{{\mathrm{Supp}}}
\newcommand{\argmax}{\mathop{\mathrm{argmax}}}
\newcommand{\coeff}{{\xi}}
\newcommand{\norm}[1]{{\left\lVert #1 \right\rVert}}
\newcommand{\Put}{with }
\begin{document}

\begin{frontmatter}

%%%%%%%%%%%%%%%%%%%%%%%%%%%%%%%%%%%%%%%%%%%%%%
%%                                          %%
%% Enter the title of your article here     %%
%%                                          %%
%%%%%%%%%%%%%%%%%%%%%%%%%%%%%%%%%%%%%%%%%%%%%%
\title{Off-the-grid learning of mixtures from a continuous dictionary}
%\title{A sample article title with some additional note\thanksref{T1}}
\runtitle{Off-the-grid learning of sparse mixtures from a continuous dictionary}
%\thankstext{T1}{A sample of additional note to the title.}

\begin{aug}
%%%%%%%%%%%%%%%%%%%%%%%%%%%%%%%%%%%%%%%%%%%%%%%
%% ORCID can be inserted by command:         %%
%% \orcid{0000-0000-0000-0000}               %%
%%%%%%%%%%%%%%%%%%%%%%%%%%%%%%%%%%%%%%%%%%%%%%%
\author[A]{\inits{C.}\fnms{Cristina}~\snm{Butucea}\ead[label=e1]{cristina.butucea@ensae.fr}},
\author[B]{\inits{J-F.}\fnms{Jean-François}~\snm{Delmas}\ead[label=e2]{jean-francois.delmas@enpc.fr}}
\author[C]{\inits{A.}\fnms{Anne}~\snm{Dutfoy}\ead[label=e3]{anne.dutfoy@edf.fr}}
\and
\author[B,C]{\inits{C.}\fnms{Cl\'ement}~\snm{Hardy}\ead[label=e4]{clement.hardy@enpc.fr}}
%%%%%%%%%%%%%%%%%%%%%%%%%%%%%%%%%%%%%%%%%%%%%%
%% Addresses                                %%
%%%%%%%%%%%%%%%%%%%%%%%%%%%%%%%%%%%%%%%%%%%%%%
\address[A]{CREST, ENSAE, IP Paris, France\printead[presep={,\ }]{e1}}

\address[B]{CERMICS, \'{E}cole des Ponts, France\printead[presep={,\ }]{e2,e4}}
\address[C]{EDF R\&D, Palaiseau, France\printead[presep={,\ }]{e3}}

\end{aug}

\begin{abstract}
	We consider a  general non-linear model where the signal  is a finite
	mixture of an unknown, possibly increasing, number of features issued
	from  a  continuous dictionary  parameterized  by  a real  non-linear
	parameter.    The  signal   is  observed   with  Gaussian   (possibly
	correlated) noise  in either  a continuous or  a discrete  setup.  We
	propose an off-the-grid optimization method, that is, a method which
	does not  use any  discretization scheme on  the parameter  space, to
	estimate  both the  non-linear  parameters of  the  features and  the
	linear parameters of the mixture.

	We use  recent results on  the geometry of off-the-grid  methods
	%, see Poon, Keriven and  Peyr\'e (2020), 
	to give minimal  separation on the true underlying non-linear parameters  such that interpolating  certificate functions
	can be  constructed. Using also  tail bounds for suprema  of Gaussian
	processes we  bound the  prediction error  with high  probability. Assuming that the certificate functions can be constructed, our prediction error bound is up  to $\log$-factors similar to the rates  attained by the  Lasso predictor  in the linear  regression model.  We  also establish
	convergence rates that quantify with  high probability the quality of
	estimation for both the linear and the non-linear parameters.
 
 We develop in full details  our main results for two applications: the Gaussian spike deconvolution and the scaled exponential model. 
	
\end{abstract}

%\begin{abstract}[language=french]
%???.
%\end{abstract}

\begin{keyword}[class=MSC]
\kwd[Primary ]{62G08}
\kwd[; secondary ]{62G05}
\end{keyword}

\begin{keyword}
\kwd{Continuous dictionary}
\kwd{Interpolating certificates}
\kwd{Mixture model}
\kwd{Non-linear regression model}
\kwd{Off-the-grid methods}
\kwd{Sparse spike deconvolution}
\end{keyword}

\end{frontmatter}

%%%%%%%%%%%%%%%%%%%%%%%%%%%%%%%%%%%%%%%%%%%%%%
%%%% Main text entry area:

\section{Introduction}

\subsection{Model and method}

Assume  we observe  a random  element  $y$ of  an Hilbert  space and  we
consider a  signal-plus-noise structure  for the observation  $y$, where
the noise is  distributed according to a centered  Gaussian process. The
signal is modeled as a mixture model, by a linear combination of at most
$K$   features of the form  $\varphi(\theta)$ for 
some parameters  $\theta \in \Theta$,  where $\Theta\subseteq \R$  is an
interval of  parameters and  $\varphi$ is a  smooth function  defined on
$\Theta$  and  taking  values  in  the  Hilbert  space.   We  denote  by
$(\varphi(\theta), \theta \in \Theta)$ the continuous
dictionary.

In order to  capture a great variety of examples,  we shall assume there
exists  a  Hilbert  space  $H_T$,   endowed  with  the  scalar  product
$\langle \cdot, \cdot \rangle_T$ and the norm $\norm{\cdot}_T$, where $T$ is
a  parameter  belonging  to $\N$, 
such  that:  the
observed process $y$ belongs to $H_T$;  for all $\theta \in \Theta$, the
feature $\varphi_T(\theta)$ (which may depend  on $T$) belongs to $H_T $
and  is non  degenerate, {\it  i.e.} $  \norm{\varphi_T (\theta)}_T$  is
finite and non zero; the noise process $w_T$, which might also depend on
the  parameter  $T$   is  a  centered  Gaussian   process  belonging  to
$H_T$.

We consider the model with unknown parameters $\beta^{\star}$ in $\R^K$ and $\vartheta^{\star}$ in $\Theta^K$:
\begin{equation}
\label{eq:model}
y = \beta^{\star}\Phi_{T}(\vartheta^{\star}) + w_T \quad  \text{in $H_T$},
\end{equation}
where the multivariate
function $\Phi_T$ is defined on $\Theta^K$ by:
\[
\Phi_{T}(\vartheta)= (	\phi_{T}(\theta_{1}), \ldots,
\phi_{T}(\theta_{K}) )^\top 
\quad\text{for}\quad
\vartheta = \left ( \theta_1,\cdots,\theta_K\right ) \in \Theta^K
\]
and the function $\phi_T$ defined on $\Theta$ is the normalized feature $\varphi_T(\theta)$ : 
\begin{equation}
\label{eq:def-phi_T}
{\phi_{T}(\theta)=\frac{\varphi_T(\theta)}{\norm{\varphi_T(\theta)}_T}}\cdot
\end{equation} 

We assume from now on that the unknown $K$ dimensional vector $\beta^{\star}$ is sparse, \emph{i.e} it has $\sparse$ non zero entries or, equivalently, $\beta^{\star} \in \mathcal{B}_0({\sparse})=  \left \{ \beta\in \mathbb{R}^{K},  \, \|\beta\|_{\ell_0} = \sparse \right  \}$, where $\|\beta\|_{\ell_0}$ counts the number  of non zero entries of the vector $\beta$. Let $S^{\star}$ be the support of $\beta^{\star}$:
\begin{equation*}
S^{\star} = \supp(\beta^\star)=\{k \in \{1,\cdots,K\}, \, \beta^{\star}_k \neq 0 \},
\end{equation*} 
and call $\sparse = \operatorname{Card}  S^\star$ the sparsity parameter.
We  are interested  in  predicting observations  and  in recovering  the
unknown parameters. Let us denote in  general by $u_S$ the vector $u$ in
$\R^K$  restricted to  the  coordinates  in $S$  for  any non-empty  set
$S\subseteq\{1,...,K\}$.      We     estimate    both     the     vector
$\beta^\star_{S^\star}$ with  unknown  $\sparse$ and  the vector
$\vartheta^\star_{S^\star}   $ with entries in some compact set $\Theta_T$  containing  the  parameters  of  those  functions  from  our  continuous
dictionary that  appear in the  mixture model.  Note that  when applying
the  same permutation  on  the coordinates  of  $\beta^{\star}$ and  the
coordinates of $\vartheta^{\star}$, we obtain  the same model. Thus, the
vectors $\beta^{\star}$ and  $\vartheta^{\star}$ are defined
up    to    such   a    joint    permutation.     Moreover,   we    have
$\beta^{\star}\Phi_{T}(\vartheta^{\star})                              =
\beta^{\star}_{S^{\star}}\Phi_{T}(\vartheta^{\star})_{S^{\star}}$,
where,                           by                          definition,
$\Phi_{T}(\vartheta^{\star})_{S^{\star}}                               =
\Phi_{T}(\vartheta^{\star}_{S^{\star}})$.   Our  model   is  linear  and
sparse  in  $\beta^\star$ but  it  is  non-linear in  $\vartheta^\star$.
\medskip

We make the following  assumption on the noise  process $w_T$, where the decay rate  $\Delta_T>0$ controls the noise variance decay as the parameter $T$ grows and $\sigma>0$ is the
intrinsic noise level.
\begin{hyp}[Admissible  noise] \label{hyp:bruit}  Let $T  \in \N$.   The
	noise process  $w_T$ belongs to  $H_T$ a.s.,  and there exist  a noise
	level  $\sigma>0$ and  a decay  rate  $\Delta_T>0$ such  that for  all
	$f\in  H_T$,  the  random  variable $\langle  f,w_T  \rangle_T$  is  a
	centered Gaussian random variable satisfying:
	\begin{equation}
	%\label{hyp:bruit}
	\Var \left( \langle f,w_T  \rangle_T \right)\leq \sigma^2 \,
	\Delta_T\,  \norm{f}_T^2.
	\end{equation}
\end{hyp}
In  our model,  the parameter  $T$ may  be understood  as the  amount of
information that we have  on the underlying signal.

\medskip

In order to recover the sparse vector $\beta^{\star}$ as well as the associated parameters $\vartheta^\star_{S^{\star}}$ (up to a permutation), we solve the following regularized optimization problem with a real tuning parameter $\kappa>0$:
\begin{equation}
\label{eq:generalized_lasso}
(\hat{\beta},\hat{\vartheta}) \in \underset{\beta \in
	\mathbb{R}^{K}, \vartheta \in \Theta_{T}^K}{\text{argmin}} \quad
\frac{1}{2}\norm{y - \beta\Phi_{T}(\vartheta)}_T^{2} +\kappa
\norm{\beta}_{\ell_1}, 
\end{equation}
where the smooth function $\Phi_T$ is defined on the set $\Theta^K_T$, with $\Theta_T$ a compact interval. Therefore
the existence of at least a  solution is guaranteed. The functional that
we minimize in  this problem is composed  of a data fidelity  term and a
penalty  term. The  penalty is  expressed  with a  $\ell_1$-norm on  the
vector  $\beta=(\beta_1, \ldots,  \beta_K)$, \emph{i.e}  the sum  of the
absolute          values           of          its          coordinates:
$\norm{\beta}_{\ell_1}=\sum_{i=1}^K  |\beta_i|$.    This  penalization  is
similar to that of the Lasso problem (also referred to as Basis pursuit)
introduced  in \cite{tibshirani1996regression}  and extensively  studied
since  then  (see   \cite{buhlmann2011statistics}  for  a  comprehensive
survey). The optimization  of the non-linear parameters  is not performed
on the whole  set of parameters $\Theta$ but rather  on a compact subset
$\Theta_T$ indexed by the parameter $T$.  Indeed, it may be necessary to
restrict the set of parameters, \emph{e.g.}  in a finite mixture model where we
consider  a location  parameter  we can  only  recover those  parameters
within the support of the observations.

In the more  general Beurling Lasso (BLasso) framework,  one can rewrite
the  problem~\eqref{eq:generalized_lasso}  in  a  measure  setting.  The
actual                                                          solution
$\big(\hat   \beta=(\hat   \beta_1,    \ldots,   \hat   \beta_K),   \hat
\vartheta=(\hat      \theta_1,     \ldots,      \hat     \theta_K)\big)$
of~\eqref{eq:generalized_lasso}  is  then  seen as  the  atomic  measure
$\hat \mu=\sum_{k=1}^K  \hat \beta_k \,\delta _{\hat  \theta_k}$, where
the  amplitudes  and  the  locations  of  the  Dirac  masses  correspond
respectively  to  the  linear  coefficients   in  the  mixture  and  the
parameters of the features. The measure $\hat \mu$ is also a solution of
the BLasso problem  when the latter admits atomic  solutions composed of
less  than $K$  atoms.  This is  in  particular the  case  in the discrete-time model, with $T$ design points, 
presented  in  Section~\ref{sec:exple-lambda}   where  $K\geq  T$ according  to~\cite{boyer2019representer}.   However, to the best of our
knowledge, there  are no such  results when  $H_T$ is a  general Hilbert
space.

\bigskip
\subsection{Examples}
\label{sec:examples}
In this section we give examples of both discrete and continuous-time models that are covered by our general setup. We discuss how $T$ indicates the amount of information that the data contain on the unknown underlying signal. Indeed, in  the discrete  case, the amount  of information grows as  the number $T$ of the  design points over which the process is observed increases, while the largest step-size decreases; in the continuous case, it grows as the decay rate $\Delta_T$ of the noise variance decreases.

We emphasize the various structures of noise processes that are admissible by  giving several examples of discrete or continuous-time noise processes that satisfy our assumptions. They are frequently used in discrete regression models or continuous models like the Gaussian white noise model, see \cite{tsybakov} or \cite{GineNickl}.

\subsubsection{Discrete-time models} \label{sec:exple-lambda}

%In the next example, the   parameter $T$ is understood  as an amount of information, and, for $T$ large,   the Hilbert space $H_T$ can be seen as an approximation of a limit Hilbert space. 

%\begin{example}	\label{ex:unif}
	Consider  a  real-valued process  $y$  observed  over the points 
	$t_1<\cdots< t_T$ on $[0, 1]$, with $T\in \N^*$.  Let $H_T=L^2(\lambda_T)$ be  the  Hilbert space of real  valued  functions  defined  on $[0,1]$  and
	square  integrable with  respect  to some  probability  measure
	$\lambda_T$         on         $\{t_1,         \ldots,         t_T\}$. 
 Let the  noise $w_T\in H_T$ be given by $ w_T(t)=\sum_{j=1}^T G_j \ind_{\{t_j\}}(t)$, where 	$G_1, \ldots, G_T$ are  centered   Gaussian   random    variables and $\ind_A$ 
	denotes the indicator function of an arbitrary set $A$. Thus, the observations are:
 \begin{align} \label{DTM}
 y\left( t_j \right) = \sum_{k \in S^\star} \beta^\star_k \cdot \phi_T \left(\theta_k^\star,  t_j \right) + G_j,\quad j=1,\ldots, T.
 \end{align}
 The risk is measured by:
 $$
\|y - \beta \Phi_T(\vartheta) \|_T^2 = \sum_{j=1}^T \left( y(t_j) - \sum_{k=1}^K \beta_k \cdot \phi_T \left(\theta_k,  t_j \right) \right)^2 \lambda_T(t_j).
 $$
Now,  let	$\lambda_T=\Delta_T \sum_{j=1}^T \delta_{t_j}$, where $\delta_x$ denotes the Dirac mass  at $x$. In the particular case where $\Delta_T = 1/T$, one can approximate the measure $\lambda_T$ for $T$ large by the Lebesgue measure  on $[0, 1]$, say  $\Leb$. In various examples,  it is
	also easier to compute  the norms of the features and  of their derivatives in  the  Hilbert  space  $L^2(\Leb)$. This amounts  to  seeing  $H_T$  as
	approximating Hilbert  spaces of the fixed  Hilbert space $L^2(\Leb)$.

Let us now see that, if  the noise variables $G_1,\ldots, G_T$ are independent   centered   Gaussian   random   variables   with   variance
$\sigma^2$, then   Assumption~\ref{hyp:bruit}  holds  with  an
equality:
\[
\Var( \langle f , w_T\rangle_T)=\sigma^2  \Delta_T \, \norm{f}^2_T  . 
\]
If  $(G_1,\ldots, G_T)$ is a centered Gaussian vector of dimension $T$ and covariance matrix  with each diagonal entry $\sigma^2$, then
Assumption~\ref{hyp:bruit} holds with  
$\Delta_T$ multiplied by the spectral radius $\varrho_T\in [1, T]$ of the  correlation  matrix:
\[
\Var( \langle f , w_T\rangle_T)\leq \sigma^2\,  \Delta_T\, \varrho_T\, \norm{f}^2_{T}. 
\]
%\end{example}

\subsubsection{Continuous-time models with truncated white noise or colored noise}
\label{sec:continuous_noise}

Consider  the  set  $\cc=\cc([0,  1],  \R)$  of  $\R$-valued  continuous
functions defined on $[0,1]$, an orthonormal base $(\psi_j, j\in \N)$ of
$L^2=L^2([0,   1],   \mathrm{Leb})$   of  elements   of   $\cc$,   where
$\mathrm{Leb}$ is the Lebesgue measure on $[0, 1]$.  We simply denote by
$\langle \cdot,  \cdot \rangle_{L^2}$ the corresponding  scalar product.
Let $p=(p_j,  j\in \N)$ be a  sequence of non-negative real  numbers and
set $\supp (p)=\{j\in  \N\, \colon\ p_j>0\}$ its support.   Let $H_T$ be
the   completion   of  the   vector   space   generated  by   the   base
$(\psi_j, j\in \supp(p))$ (which is also  the completion of $\cc$ if $p$
is positive and bounded), with respect to the scalar product:
\[
\langle f,g \rangle_T = \sum_{j\in \N} p_j \, \langle f, \psi_j \rangle
_{L^2}\, \langle g, \psi_j \rangle_{L^2}. 
\]
Notice that the Hilbert space $H_T$ does not depend on the parameter $T$
unless $p$ depends on $T$. Let us recall that if $p\equiv 1$, that is, the
sequence  $p$  is  constant  equal   to  $1$, then 
$H_T=L^2$.  In this model we observe a continuous path:
\begin{align}\label{CTM}
y(t) = \sum_{k \in S^\star} \beta_k^\star \phi_T(\theta_k^\star, t) + w_T(t), \quad t \in [0,1].
\end{align}
The risk is measured by:
$$
\|y - \beta \Phi_T (\theta)\|_T^2 = \sum_{j \in \mathbb N} p_j \left( \int_0^1 (y(t) - \beta \Phi_T (\theta,t)) \cdot  \psi_j(t) \,  \rd t \right)^2.
$$

Let $\xi=(\xi_j,  j\in \N)$ be  a weight sequence of non-negative  real numbers
such that the sequence $p\circ \,\xi : =(p_j\, \xi_j, j\in \N)$ is summable.
Consider the  noise $w_T=\sum_{j \in \Supp(p)} \sqrt{\xi_j} \, G_j\,  \psi_j$, where
$(G_j, j\in \N)$ are independent centered Gaussian random variables with
variance              $\sigma^2$.               Notice
Assumption~\ref{hyp:bruit} holds as 
$\norm{w_T}^2_T=\sum_{j\in  \N}  p_j \,\xi_j  \,  G_j^2$ is  a.s. finite
and, with $\Delta_T=  \sup_\N p \circ \, \xi $:
\[
\Var( \langle f , w_T\rangle_T)=\sigma^2 \sum_{j\in \N} p_j^2\,\xi_j\,
\langle f, \psi_j \rangle_{L^2}^2  
\leq \sigma^2 \,\Delta_T\,   \norm{f}^2_T.
\]
Notice that the noise $w_T$ does  not depend on the parameter $T$ unless
$p$ or $\xi$ depends on $T$.
\medskip

The  truncated  white  noise  model   corresponds  to  $p\equiv  1$  and
$\xi=(\xi_j=\ind_{\{j\leq T\}}, j\in \N)$. In this case $\Delta_T=1$ and
$\norm{w_T}_T^2$ is a.s.  of order $\sigma^2\,T$  by the strong law  of large
numbers. The white noise corresponds to the limit case $T=+\infty $, which
does  not   satisfy  the   hypothesis  as   a.s.  its   $L^2$-norm  is
infinite. Let us  mention that the bounds given in  the main theorems in
Section~\ref{sec:main_result} rely on  $\norm{w_T}_T$ being finite and
not on its value.

Consider again $p\equiv 1$. Thanks to the Karhunen-Loève's decomposition, the scaled Brownian motion
$w_T= C_T \,  B$, with $B$ the Brownian motion  on $[0, 1]$ and $C_T$ a
positive constant, corresponds
to                  the        orthonormal           base                  functions
$\psi_k(t)    =   \sqrt{2}\,    \sin\left((2k+1)\pi   t/2\right)$    for
$t\in [0,  1]$ and the  weights $  \xi_k = 4C_T^2  / (2k +  1)^2\pi^2$ for
$k\in    \N$,   and    $\sigma^2=1$.    In   this    case,    we    have
$\langle  f,   w_T  \rangle_T= C_T  \int_0^1   f(s)  B(s)  \,  \rd   s$ for
$f\in L^2$  and
Assumption~\ref{hyp:bruit} holds with $\sigma^2=1$ and $\Delta_T=\sup_\N
p\circ \, \xi= 4C_T^2/\pi^2$.

\subsection{Previous work}

The   model    (\ref{eq:model})   in    the   particular    case   where
$\vartheta^\star$ is supposed given and the observations depend linearly
on a  vector $\beta^{\star}$  has long been  studied in  the literature.
Assume for simplicity  that $H_T = \R^T$ is the $T$-dimensional
Euclidean space, so that 
$\Phi_T\in \R^{K\times T}$  is a matrix whose entries are  known and can
be either random or deterministic, $y  \in \R^{T}$ is an observed vector
and $w_T  \in \R^{T}$  is a  vector of  noise (often  assumed to be Gaussian).
Even when  $K$ is larger than  $T$ the estimation of  $\beta^{\star}$ is
still consistent  provided the  vector $\beta^{\star}$  is sparse  and a
null  space  property  is  verified  by the  matrix  $\Phi_T$,  or  some
sufficient  condition saying  that the  lines  of $\Phi_T$  are not  too
colinear (see  \cite{van2016estimation} for  a complete  overview).  The
Lasso estimator \cite{tibshirani1996regression}  or the Dantzig selector
\cite{candes2007dantzig} are efficient to perfom such estimation and the
quality of the  estimation with respect to the dimension  of the problem
is  now well  known. The  authors of  \cite{bickel2009simultaneous} have
given bounds for the prediction error for both estimators.

We  consider  here  a  highly non-linear extension  of  this  model  that
consists in assuming that  the matrix $\Phi_T = \Phi_T(\vartheta^\star)$
depends   non-linearly  on   a  parameter   $\vartheta^{\star}$  to   be
estimated. In  our model \eqref{eq:model},  $\Phi_T$ is composed  of $K$
row  vectors  belonging  to  a  parametric family  or  by  $K$  features
belonging to  a continuous dictionary and  the observed data $y$  may be
either a vector or  a function. This model has proven  to be relevant in
many  fields such  as  microscopy, astronomy,  spectroscopy, imaging  or
signal processing.

When the observation  $y$ belongs to a  finite-dimensional Hilbert space
and the  dimension $K$  is fixed  and small compared  to $T$,  the model
received attention several decades ago  and gave rise to separable least
square problems and resolution methods  such as variable projection (see
\cite{kaufman1975variable,golub1973differentiation}).     These   papers
mainly provided  numerical methods  but let  us mention  the consistency
result in  \cite{kneip1988convergence} for non-linear  regression models.
\medskip

On  the contrary,  when $K$  is arbitrarily  large many  problems remain
open. One  of the  natural ideas to  estimate the  underlying parameters
could be  to discretize the parameter  space $\Theta$ and return  to the
study of a linear model. It would amount to considering a finite subfamily
of $(\varphi(\theta), \, \theta \in \Theta)$ as in \cite{tang2013sparse}
and deal with overcomplete dictionary learning techniques (also referred
to as sparse  coding, see \cite{olshausen1997sparse, donoho2005stable}).
In this case, sparse estimators for  linear models such as the Lasso are
available.   However, in  sparse  spike deconvolution  where the  family
$(\varphi(\theta),   \theta  \in   \Theta)$  is   a  family   of  spikes
parametrized    by    a    location   parameter,    the    authors    of
\cite{duval2017thingrid}  have  shown  that  in the  presence  of  noise
discretizing the space  of parameters and solving a  Lasso problem tends
to  produce  clusters   of  spikes  around  the  spikes   one  seeks  to
locate. That  is why it  is preferable  to use off-the-grid  methods. By
off-the-grid,   we  mean   that  the   methods  employed   do  not   use
discretization   schemes   on   the    parameter   set   $\Theta$.    In
\cite{duval2015exact},  the authors  show that  in presence  of a  small
noise,  the BLasso only  induces a  slight perturbation  of the
spikes  locations and  amplitudes and  does not  produce clusters.   The
BLasso  was  introduced in \cite{de2012exact} and has
been studied in many papers since  then mostly by the compressed sensing
and      super-resolution      communities      (see \cite{candes2013super},
\cite{azais2015spike}   among  many   others).   It   is  basically   an
off-the-grid extension of the  classical Lasso for continuous dictionary
learning.   The   optimization  problem   is  formulated  as   a  convex
minimization over the space of  Radon measures. In the BLasso framework,
the  dimension  $K$  in  \eqref{eq:model} is  infinite  and  the  linear
coefficients and non-linear  parameters are encoded by  an atomic measure
made of weighted Dirac functions. By solving a minimization problem over
Radon measures, the  aim is to recover an atomic  measure. It raises the
question     of    whether     such    a     solution    exists.      In
\cite{boyer2019representer} the question is  answered by the affirmative
when the observed data $y$ belongs to a finite-dimensional Hilbert space
$H_T$. When  this is not the  case, i.e. $H_T$ is  infinite dimensional,
the question is open. In this paper,  we avoid the problem by assuming a
bound $K$ on the number of  functions in the mixture and restricting the
space over which  the BLasso is perfomed to the  atomic measures with at
most $K$ atoms.  The numerical methods  used to solve the BLasso such as
the  Sliding  Frank-Wolfe   algorithm  (see \cite{denoyelle2019sliding}  and
\cite{butucea2021,golbabaee2020off} for applications in spectroscopy and
imaging),  also  called  the alternating  descent  conditional  gradient
method  (see \cite{boyd2017alternating}), and  the  conic particle  gradient
descent (see \cite{chizat2021sparse}), seek a solution directly in the space
of Dirac mixtures.   Hence, our formulation \eqref{eq:generalized_lasso}
is  closer to  the way  algorithms proceed.  Let us  mention that  other
methods such  as Orthogonal Matching Pursuit  (see \cite{elvira21}) exist to
tackle   the   problem   of    sparse   learning   from   a   continuous
dictionary. Typically, the case of  sparse spike deconvolution where the
dictionary consists of Gaussian functions continuously parametrized by a
location parameter is not included.
\medskip

The  study  of  the  regression  over a  continuous  dictionary  in  the
framework  of the  BLasso  has  been quite  specific  to the  dictionary
considered. The  literature first focused  on the dictionary  of complex
exponential     functions     parametrized    by     their     frequency
$(\varphi(\theta): t  \mapsto \expp{i  2\pi \langle t,  \theta \rangle},
\theta \in  \Theta )$ where  $\Theta$ is the $d$-dimensional  torus (see
\cite{candes2014towards}).   In  \cite{boyer2017adapting},  a  bound  is
given for the prediction error for this dictionary.  The proof extends a
previous  result   obtained  in  \cite{tang2014near}  for   atomic  norm
denoising.  What is particularly interesting  is that the rates obtained
for the prediction  error almost reach the minimax  rates achievable for
linear models  (see \cite{MR2882274, candes2013well}) provided  that the
frequencies  are  sufficiently   separated.   The  separation  condition
between the non-linear parameters to  estimate is inherent to the BLasso
unless  we  assume  the  positivity  of  the  linear  parameters  as  in
\cite{schiebinger2018superresolution}.

For results on a wider range  of dictionaries, let us highlight the work
of  \cite{duval2015exact} that  gives recovery  and robustness  to noise
results for spike deconvolution. Let us  also mention the recent work of
\cite{bernstein2020sparse} that generalizes  some exact recovery results
for   a  broader   family  of   dictionaries  as   well  as   the  paper
\cite{bernstein2019deconvolution}   that  gives   robustness  to   noise
guarantees      for     a      family      of     shifted      functions
$(\varphi(\theta)=k(\cdot  - \theta),  \theta  \in \Theta)$  of a  given
specific function $k$.  In a density  model that is a mixture of shifted
functions, \cite{de2019supermix}  studies a modification of  the BLasso by
considering a weighted $L^2$ prediction error.  

The case of non-translation invariant families remained for long
intractable without very pessimistic separation conditions. In
\cite{poon2018geometry} the authors set a natural geometric framework to
analyse the estimation problem. The separation condition between the
parameters appears naturally in terms of a  metric. In their
paper, the design over which the observation are made is distributed
according to a probability distribution. Their main result shows that in
presence of noise the BLasso recovers a measure close to the one to be
estimated with respect to a Wasserstein metric.

\subsection{Contributions}
This  paper addresses  the problem  of  learning sparse  mixtures from  a
continuous dictionary for  a wide variety of regression  models within a
common  framework.   Indeed,  we  tackle   a  wide  range   of  possible
dictionaries of  sufficiently smooth  features, observation  schemes and
Gaussian noises  with various structures. The  observations are supposed
to  belong to  a Hilbert  space $H_T$.  Continuous observations  over an
interval of $\R$ as well as discrete observations at given design points
are  therefore  included  in  our  framework.  Furthermore,  the  Hilbert
structure and the mild assumption we make on the noise, encompass a wide
range of  Gaussian noises. In  particular, our framework allows  to take
into account the case of correlated Gaussian noise processes.

\medskip

The main results of this paper gives a high-probability bound for
the prediction error: 
\[
\norm{\hat{\beta}\Phi_{T}(\hat{\vartheta}) -
	\beta^{\star}\Phi_{T} (\vartheta^{\star}) }_T,
\]
where   $(\hat{\beta},  \hat{\vartheta})$   is  the   solution  of   the
optimization  problem  (\ref{eq:generalized_lasso}).   Contrary  to  the
BLasso optimization program over a set of measures whose result can be  a diffuse measure, our formulation of the optimization problem
has  always  a  solution  belonging  to a  finite  set  of  values.  Our
prediction error bound matches (up to logarithmic factors and with high probability) that obtained
in the linear case, that is when $\vartheta^\star$ is known and does not
need to  be estimated. We  also give high-probability bounds on  some loss functions comparing the  estimators   $\hat{\beta} \text{ and } \hat{\vartheta}$  given  by
(\ref{eq:generalized_lasso})  to  the  parameters   $\beta^\star$  and
$\vartheta^\star$, respectively. Our work extends results  that were so far restricted
to the specific case of  a dictionary consisting of complex exponentials
continuously     parameterized      by     their      frequencies     (see
\cite{boyer2017adapting, tang2014near}).  When the  optimization problem
produces a cluster of features  to approximate an element of
the mixture, we also show that  there can be no compensation between the
amplitudes of the features  involved.
\medskip

Following works  in   compressed  sensing   and  super-resolution
(see \cite{candes2014towards,candes2013super} among others), our bounds rely
on the existence of  interpolating functions called ``certificates" (see
Assumptions \ref{assumption1} and  \ref{assumption2}) instead of relying
on compatibility conditions or Restricted Eigenvalue conditions. We give in Section~\ref{sec:conditons_certificates} sufficient conditions for the existence of certificates and 
an  explicit  way   to  construct  such  functions  in   the  spirit  of
\cite{poon2018geometry}. We show  in this paper that  such functions can
be constructed provided the  non-linear parameters belonging to $\Theta$
are well separated with respect  to a Riemannian metric $\dT$ (defined
in     Section~\ref{sec:riemann})    associated     to    the     kernel
$\cK_T(\theta,\theta')  =  \langle  \phi_{T}(\theta),  \phi_{T}(\theta')
\rangle_T$.  This minimal separation distance between the non-linear parameters needs to be rather large, comparable to $s$, in a general context. However, it can be significantly reduced to a constant order in more particular cases such as the sparse spike deconvolution, see Remark~\ref{rem:sep-gauss}.
The  Riemannian metric appears  naturally when it  comes to
tackle a wide variety of dictionaries. In addition, it leads to a lot of
invariances  in many  quantities useful  in the  proofs. Typically,  the
Riemannian  metrics $\dT$  and $\dTh$  associated respectively  to the
kernel     $\ck_T(\cdot,    \cdot)$     and     the    warped     kernel
$\cK_T^{\rm h}  = \ck_T({\rm h}(\cdot),{\rm h}(\cdot))$  for some smooth
enough     diffeomorphism    $\rm h$     are    equal     and    we     have
$\dT(\theta,\theta')       =        \dTh({\rm       h}^{-1}(\theta),{\rm
	h}^{-1}(\theta'))$.

{Our statistical results  rely on  tail bounds for suprema of  Gaussian processes: following \cite{boyer2017adapting}, instead
	of  using controls on  $\norm{w_T}_T$ as in the seminal works \cite{duval2015exact, poon2018geometry}, we
	used bounds,  based on the noise structure from Assumption~\ref{hyp:bruit},  on quantities of the form
	$\sup_{\Theta_T} \left  \langle f(\theta),w_T\right \rangle_T$  for some
	$H_T$-valued    functions     $f$    built    from     the    dictionary
	$(\varphi_T(\theta), \theta\in \Theta) $ and its derivative. This approach is relevant as for some models  the quantity $\norm{w_T}_T$ may be very large, see for example the 
	truncated white noise model from  Section~\ref{sec:continuous_noise}.} We note that the nonlinear parameter $\theta$ is univariate in our setup. Generalization to multivariate non-linear parameters is possible, but highly technical. Indeed, the construction of the certificates holds in the multivariate setting,  but the exponential bounds for suprema of Gaussian fields are less precise concerning their dependence on the dimension.

We give next two applications of our results respectively to the Gaussian sparse spike deconvolution and to the Scaled exponential model also known as Laplace transform inversion. They illustrate how the stringent assumptions in all generality, become less restrictive in more precise setups. The full derivation of these examples can be found in Sections~\ref{sec:example} and~\ref{sec:scaled}, respectively.
 
\subsubsection{Gaussian sparse spike deconvolution, see Section~\ref{sec:example}.}
Consider the discrete-time model \eqref{DTM} as described in section \ref{sec:exple-lambda}, where a process $y$ is observed over a regular grid 
	$t_1 < \cdots < t_T$ on the interval $[a_T, b_T]$ with step size $\Delta_T=(b_T-a_T)/T$, where \(T \in \mathbb{N}^*\), \(b_T = -a_T = \sigma_0\sqrt{\log(T)}\) and  $\sigma_0>0$ is some fixed scale factor. Assume the observations are corrupted by independent centered Gaussian random variables of variance $\sigma^2$. 
	
	   The dictionary  consists of Gaussian spikes that are continuously translated:
	\[
	\left (\varphi(\theta) = \exp\left(-\frac{(\theta - \cdot)^2}{2\sigma_0^2}\right) , \quad  \theta \in \R\right ).
	\] 
 This model can be viewed as a non-linear extension of the Gaussian sequence model, where the mean vector is a linear combination of shifted Gaussian spikes. We are interested in recovering the unknown shift parameters $(\theta_k^\star)_{1 \leq k \leq s}$ belonging to the compact set   \(\Theta_T = [(1-\epsilon)a_T, (1-\epsilon)b_T] \subset [a_T, b_T]\), where $\epsilon$ is a given positive  shrinkage,  as well as the  unknown linear parameters $\beta^{\star}$.
	
 We apply our main result, Theorem~\ref{maintheorem}, which gives that: if the number of observations \(T\) is sufficiently large (depending on \(\sigma_0\), \(\epsilon\) and the sparsity $s$) and if the shift parameters are 
 separated, {\it i.e.} such that for all \(\ell \neq k\), \(|\theta_k^\star - \theta_\ell^\star| \gtrsim \sigma_0\), the estimators \(\hat{\beta}\) and \(\hat{\vartheta}\) defined in the minimization problem 
 \eqref{eq:generalized_lasso} using the regularization weight \(\kappa = \mathcal{C}   \sigma \sqrt{ \Delta_T \log (T) } \) achieve the following prediction error bound:  
	\[
	\norm{\hat{\beta}\Phi_{T}(\hat{\vartheta}) -
	\beta^{\star}\Phi_{T} (\vartheta^{\star}) }_{T}
	\leq \mathcal{C}' \sigma  \sqrt{\sparse \,\frac{\log(T)}{T} },
	\]
	with probability greater than \(1 - \mathcal{C}'' T^{-\gamma}$, for some $\gamma>0$, where  \(\mathcal{C}/\sqrt{\gamma} \), \( \mathcal{C}  '/\sqrt{\gamma}\) and \( (\sqrt{\gamma}\wedge 1)\, \mathcal{C} ''\) are some universal constants and $\norm{f}_{T}  = \frac{1}{\sqrt{T}}\sqrt{\sum_{j=1}^T f(t_j)^2}$. See Remark~\ref{rem:tau-choice} for details, with $\gamma'=\gamma$ therein.

\subsubsection{Scaled exponential model, see Section \ref{sec:scaled}.}    Consider the continuous time model \eqref{CTM} where the real-valued process  $y$ is  observed on $\R_+$ and assume that this process is an element of the Hilbert space $H_T=L^2(\R_+, \Leb)$ where $\Leb$ denotes here the Lebesgue measure over $\R_+$. We  write $H$ instead of $H_T$ for the Hilbert space and we write $\left \langle \cdot , \cdot \right \rangle$ its scalar product and $\norm{\cdot}$ its associated norm.

Let the noise process be a truncated white noise  as  in Section \ref{sec:continuous_noise} such that $w_T=\sum_{k = 1}^T (1/\sqrt{T})\, G_k\,  \psi_k$, where
$(G_k, k\in \N)$ are independent centered Gaussian random variables with variance $\sigma^2$ and $(\psi_k, k\in \N)$ denotes an orthonormal basis of $H$. We stress the fact that by the law of large numbers $\norm{w_T}^2$ tends almost surely to $\sigma^2>0$. Therefore the upper bounds from previous results on super-resolution and BLasso  (see \cite{duval2015exact} or \cite{poon2018geometry}) do not apply here, as they hold for noise processes having $\norm{w_T}$ tending to zero.
%We consider  a model that is a noisy linear combination of scaled inverse exponential functions. 

Let the dictionary  consist of the exponential functions :
	\[
	\left (\varphi(\theta) = \exp\left(- \theta \cdot \right), \quad \theta \in \R_+^* \right ).
	\]
We aim at recovering the unknown scale parameters $(\theta_k^\star)_{1 \leq k \leq s}$ belonging to a compact set  whose diameter may depend on $T\in \N^*$, say $\Theta_T = [ T^{-\gamma},T^\gamma]$, with $\gamma >0$, as well as the  unknown linear parameters $\beta^{\star}$. 

We apply our main result, Theorem~\ref{maintheorem}, which gives that: if the scale parameters are separated, {\it i.e.} such that for all \(\ell \neq k\), \( \left | \log (\theta_k^\star /\theta_\ell^\star) \right | \gtrsim 1\), the estimators \(\hat{\beta}\) 
and \(\hat{\vartheta}\) defined in the minimization problem \eqref{eq:generalized_lasso}, using the regularization weight \(\kappa =   \mathcal{C}   \,  \sigma \sqrt{\log (T)/T}\) achieve the following prediction bound:
\begin{equation*}
	\norm{\hat{\beta}\Phi_{T}(\hat{\vartheta}) -
		\beta^{\star}\Phi_{T}(\vartheta^{\star}) } \leq  \mathcal{C}'  \,\sigma \sqrt{\sparse\, \frac{\log(T)}{T}} ,  
	\end{equation*}
with                probability               larger                than
	$1 - \mathcal{C}'' T^{-\gamma} ( 1 \vee \sqrt{\gamma \, \log(T)} )$, where  $\mathcal{C}/\sqrt{\gamma}$, $\mathcal{C}'/\sqrt{\gamma}$ and $\mathcal{C}''$ are some universal constants. 
See Remark~\ref{rem:expo} for details, with $\gamma'=\gamma$ therein.

%%%%%%%%%%%%%%%%%%%%%%%%
\section{Main Results}
\label{sec:main_result}
%%%%%%%%%%%%%%%%%%%%%%%%

Recall that we consider the model (\ref{eq:model}) that we can write in an equivalent way as:
\[
y = \sum_{j\in S^\star} \beta_j^\star
\frac{\varphi_T(\theta_j^\star)}{\|\varphi_T(\theta_j^\star)\|_T}   + w_T\quad \text{in
} H_T,
\]
with $S^\star$ the support of the vector $\beta^\star$. 
The main  theorem of  this paper  gives the  behavior of  the prediction
error with respect to: the decay  rate of the noise variance $\Delta_T$,
the  parameter $T\in  \N$, the  sparsity $\sparse  \in \N^*$,  the upper
bound  on the  number of  components  in the  mixed signal  $K$ and  the
intrinsic noise  level $\sigma$.  We  shall consider assumptions  on the
regularity of the dictionary $\varphi_T$, on the parameter space $\Theta_T$
on which the optimization is performed and on the noise $w_T$. Using the
features  $\varphi_T$  we  build  a  kernel  $\cK_T$  on  the  space  of
parameters  $\Theta$ and  an associated  Riemannian metric $\dT$, see
Section~\ref{sec:riemannian_metric},  which  is  the  intrinsic  metric,
rather than the usual Euclidean  metric.  More assumptions are necessary
on the  closeness of the kernel  $\cK_T$ and its derivatives  defined in
\eqref{eq:def-KT} to a limit kernel $\cK_\infty$ and its derivatives.

The theorem is  stated assuming the existence  of certificate functions,
see  Assumptions  \ref{assumption1}  and  \ref{assumption2}.  Sufficient
conditions   for   their   existence   are  given   later   in   Section
\ref{sec:conditons_certificates},        in        which    Propositions
\ref{prop:certificat_interpolating}   and   \ref{prop:certificat2}
show that  the limit kernel  $\cK_\infty$ must be uniformly  bounded and
have concavity properties.  In this case, the  existence of certificates
stands provided the underlying non-linear  parameters to be estimated are
sufficiently separated according to the Riemannian metric $\dT$, see
Condition~$\ref{hyp:theorem_certificate_separation}$ in
Propositions~\ref{prop:certificat_interpolating}
and~\ref{prop:certificat2}. 

In the following result the parameter set $\Theta_T$ is a one
dimensional compact interval.  We note $|\Theta_T|_{\mathfrak{d}_T}$
its length  with respect to the
Riemannian metric $\dT$ on $\Theta^2$ associated to the
kernel $\cK_T$.

\begin{theorem}
	\label{maintheorem}
	Assume we observe the random element $y$ of $H_T$ under the regression model (\ref{eq:model}) with unknown parameters  $\beta^\star$ and $\vartheta^\star= \left (
	\theta_1^\star,\cdots,\theta_K^\star\right )$ a vector with entries in
	$ \Theta_T$, a compact interval of $\R$,   such that:
	\begin{propenum}
		\item \label{hyp:theorem1_point1}\textbf{Admissible noise:} The noise process $w_T$ satisfies Assumption \ref{hyp:bruit} for a noise level $\sigma>0$ and a decay rate for the noise variance $\Delta_T>0$.

		\item\textbf{Regularity           of            the           dictionary
			$\varphi_T$:}\label{hyp:reg_dic_theorem}  The   dictionary  function
		$\varphi_T$     satisfies     the     smoothness     conditions     of
		Assumption~\ref{hyp:reg-f}.    The    function   $g_T$    defined   in
		\eqref{def:g_T}, satisfies the positivity condition of
		Assumption~\ref{hyp:g>0}. 
		\item\textbf{Regularity of the limit kernel:} \label{it:hyp-reg-K}
		The   kernel   $\ck_{\infty}$ and the functions  $ g_{\infty}$   and
		$h_{\infty}$,   defined  on an interval $\Theta_\infty \subset
		\Theta$, see    \eqref{eq:def-gK}   and
		\eqref{eq:def-h_K}, satisfy  the  smoothness  conditions  of  Assumption
		\ref{hyp:Theta_infini}.
		\item\textbf{Proximity to the limit kernel:} \label{hyp:V_T_theorem} The
		kernel $\cK_T$ defined from  the dictionary, see~\eqref{eq:def-KT}, is
		sufficiently close to the limit  kernel $\cK_\infty$ in the sense that
		Assumption \ref{hyp:close_limit_setting} holds.
		\item\textbf{Existence of  certificates:}\label{hyp:existence_certificate_theorem} The set of  unknown parameters
		$\cq^\star= \{\theta^\star_k,  \, k \in S^\star\}  $, with
		$S^\star=\supp (\beta^\star)$,  satisfies
		Assumptions  \ref{assumption1} and  \ref{assumption2}  with the  same
		$r >0$.
	\end{propenum}
	Then, there exist finite positive constants $\mathcal{C}_0$, 
	$\mathcal{C}_1$, $\mathcal{C}_2$, $\mathcal{C}_3$ 
	depending on the kernel $\cK_\infty$ defined on $\Theta_\infty$ and on $r$ such that for 
	any $\tau > 1$ and a tuning parameter:
	$$
	\kappa \geq \mathcal{C}_1 \sigma \sqrt{\Delta_T  \log \tau},
	$$
	we have the prediction error bound of the estimators $\hat{\beta}$ and $\hat{\vartheta}$ defined in
	\eqref{eq:generalized_lasso} given by:
	\begin{equation}
	\label{eq:main_theorem}
	\begin{aligned}
	\norm{\hat{\beta}\Phi_{T}(\hat{\vartheta}) -
		\beta^{\star}\Phi_{T}(\vartheta^{\star}) }_{T}
	&\leq  \mathcal{C}_0 \,  \sqrt{\sparse} \, \kappa,
	\end{aligned}
	\end{equation}
	with  probability larger than $1  -
	\mathcal{C}_2 \left (  \frac{|\Theta_T|_{\mathfrak{d}_T}}{ \tau \sqrt{\log \tau} }\vee
	\frac{1}{\tau}\right )$.
	Moreover, with the same probability, the difference of the $\ell_1$-norms of $\hat{\beta}$ and $\beta^\star$ is bounded by:
	\begin{equation}
	\label{eq:main_theorem_diff_l1}
	\left |\| \hat{\beta}\|_{\ell_1} - \| \beta^\star \|_{\ell_1} \right | \leq \mathcal{C}_3 \, \kappa \, \sparse.
	\end{equation}
\end{theorem}

{This result holds for  both the continuous and discrete
	settings described in Section~\ref{sec:examples},  covers a wide range
	of smooth  dictionaries, and is  proven under mild assumptions  on the
	noise. We discuss in the next remark that the prediction error is, up to
	a logarithmic factor, almost optimal.}

\begin{rem}[Comparison with the Lasso estimator]
	\label{rem:thm1}
	Let us consider the discrete-time model 
	where the observation space is the Hilbert space $H_T = \R^T$ endowed
	with  the Euclidean norm $\norm{\cdot}_{\ell_2}$. The observation $y \in \R^T$ comes from the model \eqref{eq:model} where the noise is a Gaussian vector with independent entries of variance $\sigma^2$. In this setting, the decay rate of the noise variance is fixed with $\Delta_T=1$. 
	
	We first consider  that the parameters $\vartheta^\star$  are known.  In
	this case,  the model becomes the  classical high-dimensional regression
	model  and the  Lasso estimator $\hat{\beta}_{L}$ can be  used to  estimate $\beta^\star$
	under     coherence    assumptions     on     the    finite dictionary made of the rows of the matrix $\Phi^\star = \Phi_T(\vartheta^\star)$    (see
	\cite{bickel2009simultaneous}). The behavior of  the Lasso estimator has
	been studied in the literature and its prediction risk tends to zero at
	the rate:
	\begin{equation}
	\label{eq:mse}
	\frac 1{T} \|(\hat \beta_{L} - \beta^\star)\Phi^\star\|_{\ell_2}^2 = \mathcal{O}
	\left (\frac{\sigma^2 \, \sparse \, \log  (K)}{T} \right)
	\end{equation}
	with
	high   probability, larger than $1 - 1/K^\gamma$ for some positive constant $\gamma > 0$.   Furthermore,   in    the  case   where
	$\beta^{\star}$     is    an     unknown    $\sparse$-sparse     vector,
	$\vartheta^{\star}$ is known  and $\Phi^\star$
	verifies a  coherence property, then  the lower  bounds of
	order $  {\sigma^2\, \sparse \, \log (K/\sparse)}/{T}$ in expected
	value can be deduced from the  more general bounds for group sparsity in
	\cite{lounici2011oracle}
	(see also \cite{MR2882274}). 
	The  non-asymptotic prediction lower bounds
	for the prediction error given in \cite{MR2882274}  are:
	\[
	\inf_{\hat \beta}\,  \sup_{\beta^\star \, s- \text{sparse} }
	\E\left[\frac 1{T}
	\|(\hat \beta - \beta^\star)\Phi^\star\|_{\ell_2}^2\right]\geq C\cdot
	\frac{  \sigma^2\, \sparse \, \log(K/s)}{T} ,
	\]
	where  the infinimum  is  taken over  all  the estimators  $\hat{\beta}$
	(square integrable measurable  functions of the obervation  $y$) and for
	some  constant  $C>0$  free  of   $s$  and  $T$.   When  the  parameters
	$\vartheta^\star$ are unknown,  Theorem~\ref{maintheorem} gives an upper
	bound for  the prediction risk which  is, {up to a  logarithmic factor},
	almost  the best  rate  we  could achieve  even  knowing the  non-linear
	parameters    $\vartheta^{\star}$.    Consider    the   estimators    in
	\eqref{eq:generalized_lasso} where  the Riemannian  diameter of  the set
	$\Theta_T$ is bounded by a constant  free of $T$ (this is the
	case  of  Example~\ref{example:compact_support}   below).   By  squaring
	\eqref{eq:main_theorem}  and then  dividing it  by $T$,  we obtain  from
	Theorem                      \ref{maintheorem}                      with
	$\kappa  =   \mathcal{C}_1  \sigma  \sqrt{  \Delta_T   \log  \tau}$  and
	$\tau  = T^\gamma$  for  some given $\gamma>0$,  that  with high  probability,
	larger than $1- C/T^\gamma$:
	\begin{equation}
	\label{eq:mse2}
	\frac{1}{T}\norm{\hat{\beta}\Phi_{T}(\hat{\vartheta}) -
		\beta^{\star}\Phi_{T}(\vartheta^{\star}) }_{\ell_2}^2
	=\mathcal{O} \left  (\frac{\sigma^2 \,\sparse\, \log( T)}{T}  \right).
	\end{equation}
	Let us mention that \cite{tang2014near} also obtained a similar  prediction
	error~\eqref{eq:mse2}
	for the  specific 
	dictionary  given by the  complex exponential functions 
	$(\varphi(\theta): t \mapsto \expp{i 2\pi t \theta}, \theta \in
	\Theta=[0, 2 \pi] )$; notice that the proof therein uses the Parseval's
	identity for Fourier series as well as Markov-Bernstein type
	inequalities for trigonometric polynomials.
	Even if the structure of our proof is in the spirit of
	\cite{tang2014near}, our result is more general and does not rely on the
	convex setting  of the
	BLasso approach.
\end{rem}

\begin{rem}[Proximity to the limit kernel]
	\label{rem:cv-=0}
	We comment  on Condition~$\ref{hyp:V_T_theorem}$ on the  proximity of
	the  kernels   $\cK_T$  and  $\cK_\infty   $,  which  also   appears  as
	Conditions~$\ref{hyp:theorem_certificate_metric}$-$\ref{hyp:theorem_certificate_approximation}$
	in   Proposition~\ref{prop:certificat_interpolating}    (and   similarly as
	Condition~$\ref{hyp:theorem_certificate_2_approximation}$             in
	Proposition~\ref{prop:certificat2}).
	
	In the examples of
	Sections~\ref{sec:translation_cont_model}
	and~\ref{sec:translation_scale_model}  on translation  or scaling  model
	with a  continuum of observations, the  parameter $T$ does not  play any
	role in the definition of $\ck_T$, so that one can take $\cK_\infty $ equal to $\cK_T$. In this case,
	the proximity conditions on the kernels are trivially satisfied.
	
The   example  from
	Section~\ref{sec:example} is devoted to  the Gaussian sparse spike deconvolution,
	that  is, to  a mixture  of Gaussian  translation invariant  features
	observed in a  discrete regression model on a regular  grid of size $T$.
	In     this     case,     we     built    a     family     of     models
	$(H_T, \varphi_T,  w_T, \Theta_T)$ with a  dictionary $\varphi_T$ which
	does  not depend  on  $T$ and  such  that the  kernel  $\cK_T$ and  its
	derivatives   converge   to   $\cK_\infty   $   (and   also   $\rho_T$
	from~\eqref{eq:def-rho}  converges  to  1).  In  this setting,  the
	proximity  condition of  Theorem~\ref{maintheorem} holds  for $T$  large
	enough, say  $T$ larger than some  $T_0$ which depends on  $\cK_\infty$,
	see  Assumption  \ref{hyp:close_limit_setting}.   The existence  of  the
	certificates,  see         Propositions~\ref{prop:certificat_interpolating}
	and~\ref{prop:certificat2}, also requires a  proximity criterion which is
	achieved for  $T$ large  enough, say  $T$ larger  than some  $T_1$ which
	depends on $\cK_\infty  $ and is increasing with  the sparsity parameter $s$   (see  for   example
	Condition~$\ref{hyp:theorem_certificate_approximation}$               in
	Proposition~\ref{prop:certificat_interpolating}). 
\end{rem}

\begin{rem}[On the dimension $K$, the upper bound of the sparsity]
	\label{rem:K-dependence}
	We remark  that neither  the bound  on the  prediction error  nor the
	probability on which  the bound holds, depends on the  upper bound $K$
	on the sparsity  $\sparse$. Therefore, the value of $K$  can be taken
	arbitrarily large.  It is not surprising  that $K$ does not  have any
	impact    on   the    bound    since    the   optimisation    problem
	\eqref{eq:generalized_lasso} could be formulated without any bound on
	the sparsity. Indeed, the problem \eqref{eq:generalized_lasso} can be
	embedded  in  an  optimization  problem  over  a  space  of  measures
	following   the    literature   on    the   BLasso    introduced   in
	\cite{de2012exact}. See also Remark~\ref{rem:suiteK}.
\end{rem}

The next theorem gives bounds  on the differences between the parameters
$\hat     \beta$      given     by     the      optimization     problem
\eqref{eq:generalized_lasso}  and the  ``true'' parameters  $\beta^\star$
for    active    features    having    their    parameter
$\hat \theta_\ell$ close, with respect to the Riemannian metric $\dT$,
to  a parameter $\theta_k^{\star}$, with $k$
in $S^\star$.  For $r>0$ given by   Assumptions  \ref{assumption1} and
\ref{assumption2}, we define:
\begin{itemize}
	\item [-] The support of $\hat  \beta$ given by the optimization problem
	\eqref{eq:generalized_lasso}:
	$\hat{S}     =    \supp(\hat     \beta)=    \left\{\ell     \,\colon\,
	\hat{\beta}_{\ell} \neq 0\right\}$.
	
	\item [-] The near region $\tilde{S}(r)$ given by:
	\[
	\tilde{S}(r) = \bigcup_{k \in S^\star}
	\tilde{S}_{k}(r)
	\quad\text{where}\quad
	\tilde{S}_{k}(r) = \left \{\ell\in \hat{S}:
	\mathfrak{d}_T(\hat{\theta}_{\ell},\theta_{k}^{\star}) \leq r \right
	\}  ,
	\]
	which  corresponds to the set of indices $\ell$ in the support of $\hat \beta$
	such that the corresponding  parameter
	$\hat \theta_\ell$ is close to one of the true parameters
	$\theta_k^{\star}$, for some 
	$k\in S^\star$. 
\end{itemize}
The set $\hat{S} \backslash \tilde S(r)$  is also called the far region.
Notice that  the sets  $\tilde{S}_k(r)$ with $k \in S^\star$ are
pairwise  disjoint under
Assumption~\ref{assumption1}, and that they can be empty. In what
follows, we use the 
convention   $\sum_{\emptyset} = 0$. 

\begin{theorem}
	\label{th:bounds}
	We consider  the model in  Theorem \ref{maintheorem} and  suppose that
	Assumptions   \ref{hyp:theorem1_point1}-\ref{hyp:existence_certificate_theorem}  therein
	hold.  Then,  there exist  finite positive  constants $\mathcal{C}_1$,
	$\mathcal{C}_2$, $\mathcal{C}_3 $,  $\mathcal{C}_4 $, $\mathcal{C}_5 $ and $\mathcal{C}_6 $ 
	depending on  $\cK_\infty$ defined on  $\Theta_\infty$ and on  $r$  such that for  any $\tau > 1$  and a
	tuning parameter:
	\[
	\kappa \geq \mathcal{C}_1 \sigma \sqrt{ \Delta_T\log \tau}
	\]
	the  estimators  $\hat{\beta}$ and $\hat{\vartheta}$
 defined  in~\eqref{eq:generalized_lasso}
satisfy  the  following   bounds  with   probability  larger   than
	$1            -            \mathcal{C}_2            \left            (
	\frac{|\Theta_T|_{\mathfrak{d}_T}}{\tau\sqrt{\log     \tau}    }\vee
	\frac{1}{\tau}\right )$:
	\begin{equation}
	\label{eq:bound_th_abc}
	\sum\limits_{k \in S^{\star}}\Big | |\beta_{k}^{\star}| -
	\sum\limits_{\ell \in \tilde{S}_{k}(r) } |\hat{\beta}_{\ell}|  \Big|
	\leq  \mathcal{C}_3 \, \kappa \, \sparse,
	\quad
	\sum\limits_{k \in S^{\star}}\Big | \beta_{k}^{\star} -
	\sum\limits_{\ell \in \tilde{S}_{k}(r) } \hat{\beta}_{\ell} \Big|
	\leq  \mathcal{C}_4 \, \kappa \, \sparse
	\quad\text{and}\quad
	\left \|\hat{\beta}_{\tilde{S}(r)^{c}}\right \|_{\ell_1}  \leq
	\mathcal{C}_5 \, \kappa \, \sparse,
	\end{equation}
 \begin{equation}
	\label{eq:bound_th_I2}
	\sum\limits_{k \in S^{\star}} \sum\limits_{\ell \in \tilde{S}_{k}(r) }
	\left |\hat{\beta}_{\ell}\right |
	\mathfrak{d}_T(\hat{\theta}_{\ell},\theta_k^{\star})^2
	\leq \mathcal{C}_6 \, \kappa \, \sparse,
	\end{equation} 
	where for a subset $S$ of $\mathcal{I} = \{1,\cdots,K\}$, the set $S^c$ denotes the complementary set of $S$ in $\mathcal{I} $, that is $\mathcal{I}  \setminus S$.
\end{theorem}
Notice that  each linear parameter  $\beta_k^\star$ can be  estimated by
the sum of several      linear       coefficients      $\hat{\beta}_\ell$      with
$\ell   \in   \{1,\cdots,   K\}$.   The  first   two   inequalities   in
\eqref{eq:bound_th_abc} show  that there can be  no compensation between
the   estimators   $\hat   \beta_\ell$   that   approximate   the   same
$\beta^\star_k$ with $k \in S^\star$, meaning that there can be no large
values of  $\hat \beta_\ell$  having different  signs that  sum up  to a
possibly  small (in  absolute  value) true  $\beta_k^\star$. The  second
inequality in  \eqref{eq:bound_th_abc} gives the estimation  rate of the
linear parameters $\beta^\star_k$ with $k  \in S^\star$.  The last bound
in~\eqref{eq:bound_th_abc}  basically  means  that  when  an  estimation
$\hat{\theta}_{\ell}$  with $\ell  \in \{1,\cdots,K\}$  is far  from any
parameter $\theta_k^\star$ with $k \in  S^\star$, that is at a distance
greater than  $r$, the  associated  parameters  $\hat{\beta}_\ell$ drop  to
zero if the tuning parameter $\kappa$ is taken equal to its lower bound and the decay rate of the noise variance $\Delta_T$ drops to zero. Therefore, the contribution  of the parameters $\hat{\theta}_\ell$
in the far region,  that are not in $\tilde{S}(r)$, will  drop to zero as
well.

% \begin{rem}[Estimation rate for $\theta^\star_k$ with $k\in S^\star$]
% 	\label{rem:majo-I2}
% 	Under  the assumptions  of Theorem~\ref{th:bounds},  we also  have, with
% 	probability                          larger                         than
% 	$1             -            \mathcal{C}_2             \left            (
% 	\frac{|\Theta_T|_{\mathfrak{d}_T}}{\tau\sqrt{\log      \tau}     }\vee
% 	\frac{1}{\tau}\right )$, the bound:
% 	\begin{equation}
% 	\label{eq:bound_th_I2}
% 	\sum\limits_{k \in S^{\star}} \sum\limits_{\ell \in \tilde{S}_{k}(r) }
% 	\left |\hat{\beta}_{\ell}\right |
% 	\mathfrak{d}_T(\hat{\theta}_{\ell},\theta_k^{\star})^2
% 	\leq \mathcal{C}_6 \, \kappa \, \sparse. 
% 	\end{equation} 
% 	This  gives  an estimation  rate for  the parameters  $\theta^\star_k$ with
% 	$k \in S^\star$ when at least one estimator $\hat{\theta}_\ell$ given by
% 	the  optimization problem  \eqref{eq:generalized_lasso}  belongs to  the
% 	near region $\tilde{S}_{k}(r)$, which is the Riemannian ball centered at
% 	$\theta_k^\star$ with radius $r$.
% \end{rem}

\begin{rem}[Again on the dimension $K$]
	\label{rem:suiteK}
	As in Theorem \ref{maintheorem}, we remark that neither the bounds nor
	the probability  of the event on  which the bounds hold  depend on the
	upper bound $K$ on the sparsity $\sparse$.
	
	If the optimization on  $\vartheta$ in \eqref{eq:generalized_lasso} is
	performed over a subset of $\Theta_T$  in which the coordinates of the
	considered  vectors are  at  a  distance greater than  $2r$ pairwise  with
	respect  to  the Riemannian  metric  $\mathfrak{d}_T$,  then the  sets
	$\tilde{S}_k(r)$ contain at  most one element.  However,  by doing so,
	we  introduce  an  upper  bound   on  the  dimension  $K$  whereas  in
	Theorem~\ref{maintheorem} the  dimension K  can be  arbitrarily large.
	Indeed, $\Theta_T$  is a compact  set and therefore contains  a finite
	number of balls of size $2r$.
\end{rem}

\medskip

{\bf Outline  of the paper.  } In Section  \ref{sec:property_kernel}, we
give  the definition  of the  kernel $\ck_T$  measuring the  correlation
between two  elements in  the continuous dictionary  and we  present the
regularity   assumptions   on    the   function   $\varphi_T$.   Section
\ref{sec:riemannian_metric} introduces the Riemannian geometry framework
useful in  our context.  Section \ref{sec:cv_kernel} defines the
convergence (or closeness condition) of  kernels $\ck_T$
towards a  limit kernel $\ck_{\infty}$.  Then, we require  properties on
the  limit  kernel $\ck_{\infty}$  and  propagate  them to  the  kernels
$\cK_T$ thanks to this  convergence.  In Section \ref{sec:certificates},
we present the assumptions on the existence of the so-called certificate
functions    used    to    state    Theorems    \ref{maintheorem}    and
\ref{th:bounds}.  We give  sufficient  conditions for  the existence  of
certificate functions in Section \ref {sec:conditons_certificates}.  The
examples  of Gaussian sparse spike deconvolution and of Scaled exponential family in our  regression  model  is  fully
detailed      in     Section~\ref{sec:example} and~\ref{sec:scaled}, respectively.       Then,   the Appendix \ref{sec:proofsSection2} is dedicated   to   the   proofs    of   Theorems   \ref{maintheorem}   and
\ref{th:bounds}.  The proofs of  existence and explicit constructions of
the certificates  are detailed in  the Appendix~\ref{sec:proof_interpolating}. Other intermediate results are proven in Appendix~\ref{app:C}
.

%%%%%%%%%%%%%%%%%%%%%%%%%%%%%%%%%%%%%%%%%%%%%%%%%%%%%%%%%%%%%%%%%%
\section{Dictionary of features}
\label{sec:property_kernel}
%%%%%%%%%%%%%%%%%%%%%%%%%%%%%%%%%%%%%%%%%%%%%%%%%%%%%%%%%%%%%%%%%%

We present in the next section the regularity  assumptions on the
features $(\varphi_T(\theta), \theta\in \Theta)$ we shall consider and then give examples of families of features satisfying such assumptions. 

\subsection{Assumptions on the regularity of the features}
Let  $T  \in     \N$  be  fixed.  We  consider   the  Hilbert  space
$(H_T,   \langle    \cdot,   \cdot   \rangle_T)$   and    the   features
$(\varphi_T(\theta), \theta\in \Theta)$ which  are elements of $H_T$. We
shall consider the following regularity assumptions on the features.

\begin{hyp}[Smoothness of $\varphi_T$]
	\label{hyp:reg-f} 
	We assume that the function $\varphi_T: \Theta \rightarrow H_T$
	is of class    $\cc^3$ and
	$\norm{\varphi_T   (\theta)}_T    >   0$   on    $\Theta$.
\end{hyp}

Recall  $\phi_T= \varphi_T/ \norm{\varphi_T}_T$
from~\eqref{eq:def-phi_T} and notice that $\phi_T$, and thus $\Phi_T$, are continuous functions. 
Under Assumption~\ref{hyp:reg-f},   elementary calculations
using~\eqref{eq:deriv} give:

\begin{equation}
\label{eq:deiv-phi}
\partial_\theta \phi_T(\theta) = \frac{\partial_\theta
	\varphi_T(\theta)}{\norm{\varphi_T(\theta)}_T} -
\frac{\varphi_T(\theta) \left \langle
	\varphi_T(\theta),\partial_\theta \varphi_T(\theta) \right
	\rangle_T}{\norm{\varphi_T(\theta)}^3_T} ,
\end{equation}
and thus, we  deduce that the function $g_T:\Theta\mapsto  \R_+$ defined by:
\begin{equation}
\label{def:g_T}
g_T(\theta)= \norm{\partial_\theta
	\phi_T(\theta)}_T^2 
\end{equation}
is well
defined and continuous.

We  shall  consider  the  following  non-degeneracy  assumption  on  the
features.

\begin{hyp}[Positivity of $g_T$]
	\label{hyp:g>0}
	Assumption \ref{hyp:reg-f} holds and we have	$g_T>0$ on $ \Theta$.
\end{hyp}

Even if Assumption \ref{hyp:g>0} requires Assumption \ref{hyp:reg-f}, in the following we shall stress when Assumption \ref{hyp:reg-f} is in force.

\medskip

The next lemma gives a sufficient condition on $\varphi_T$ for
Assumption~\ref{hyp:g>0} to hold. 
\begin{lem}[On the positivity of $g_T$]
	\label{lem:g>0}
	Suppose  Assumption~\ref{hyp:reg-f}  holds.    If  the elements 
	$\varphi_T(\theta)$  and $\partial_\theta\varphi_T(\theta)$ of $H_T$  are
	linearly   independent   for    all   $\theta\in   \Theta$   and
	$\norm{\partial_\theta   \varphi_T(\theta)}_T  >   0$  for   all
	$\theta \in \Theta$, then $g_T$ is positive on $\Theta$.
\end{lem}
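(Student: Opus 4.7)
The plan is to compute $g_T(\theta)=\|\partial_\theta \phi_T(\theta)\|_T^2$ explicitly from formula \eqref{eq:deiv-phi} and then recognize the resulting expression as the Cauchy–Schwarz defect between $\varphi_T(\theta)$ and $\partial_\theta \varphi_T(\theta)$.

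Concretely, starting from \eqref{eq:deiv-phi}, I would expand the squared norm using bilinearity of $\langle \cdot,\cdot\rangle_T$. Writing $\varphi=\varphi_T(\theta)$ and $\varphi'=\partial_\theta\varphi_T(\theta)$ to lighten notation, the three resulting terms simplify to
\[
  g_T(\theta)=\frac{\|\varphi'\|_T^2}{\|\varphi\|_T^2}
  -2\,\frac{\langle \varphi,\varphi'\rangle_T^2}{\|\varphi\|_T^4}
  +\frac{\langle \varphi,\varphi'\rangle_T^2}{\|\varphi\|_T^4}
  =\frac{\|\varphi\|_T^2\,\|\varphi'\|_T^2-\langle \varphi,\varphi'\rangle_T^2}{\|\varphi\|_T^4},
\]
where the last two Fréchet inner products collapse in an elementary manner (the middle cross term and the last squared-norm term both equal $\langle \varphi,\varphi'\rangle_T^2/\|\varphi\|_T^4$). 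This computation only uses Assumption~\ref{hyp:reg-f}, which guarantees $\varphi_T(\theta)$ is non-degenerate so that the denominator $\|\varphi\|_T^4$ is positive and finite.

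The main (and only nontrivial) step is then the Cauchy–Schwarz inequality in $H_T$: one has $\langle \varphi,\varphi'\rangle_T^2\leq \|\varphi\|_T^2\|\varphi'\|_T^2$, with equality if and only if $\varphi$ and $\varphi'$ are linearly dependent. Under the lemma's hypothesis, $\varphi_T(\theta)$ and $\partial_\theta\varphi_T(\theta)$ are linearly independent (in particular, both non-zero, since the assumption $\|\partial_\theta\varphi_T(\theta)\|_T>0$ ensures $\varphi'\neq 0$), so the inequality is strict. This gives $g_T(\theta)>0$ for every $\theta\in\Theta$, which is exactly the positivity claimed in Assumption~\ref{hyp:g>0}.

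There is no real obstacle here beyond keeping track of the algebraic expansion; the whole argument is an orthogonal decomposition observation, namely that $\partial_\theta\phi_T(\theta)$ is (up to the factor $1/\|\varphi\|_T$) the component of $\varphi'$ orthogonal to $\varphi$, whose norm vanishes exactly when the two vectors are collinear. I would finish by stating that the continuity of $g_T$, already noted after \eqref{def:g_T}, together with its strict positivity, completes the verification of Assumption~\ref{hyp:g>0}.
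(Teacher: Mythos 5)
Your proof is correct. It takes a slightly different route from the paper's: the paper argues by contrapositive, supposing $g_T(\theta)=0$ so that $\partial_\theta\phi_T(\theta)=0$, then clearing the denominator in~\eqref{eq:deiv-phi} to read off a nontrivial linear relation $\partial_\theta\varphi\,\norm{\varphi}_T^2-\varphi\,\langle\varphi,\partial_\theta\varphi\rangle_T=0$ between $\varphi_T(\theta)$ and $\partial_\theta\varphi_T(\theta)$. You instead expand the squared norm fully to obtain the closed form
\[
g_T(\theta)=\frac{\norm{\varphi}_T^2\norm{\varphi'}_T^2-\langle\varphi,\varphi'\rangle_T^2}{\norm{\varphi}_T^4},
\]
and then invoke the strict Cauchy--Schwarz inequality and its equality case. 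The two arguments encode the same geometric fact (that $\partial_\theta\phi_T$ is proportional to the component of $\partial_\theta\varphi_T$ orthogonal to $\varphi_T$), but yours has the small advantage of exhibiting $g_T$ explicitly as a normalized Cauchy--Schwarz defect, which makes the conclusion visually transparent; the paper's contrapositive argument avoids the norm expansion entirely. One cosmetic remark: linear independence of $\varphi_T(\theta)$ and $\partial_\theta\varphi_T(\theta)$ already forces both vectors to be nonzero, so the additional hypothesis $\norm{\partial_\theta\varphi_T(\theta)}_T>0$ (and, for that matter, $\norm{\varphi_T(\theta)}_T>0$) is not strictly needed for the strict Cauchy--Schwarz step — you could mention that, but it does no harm to state it as you do.
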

\begin{proof}
	For simplicity, we remove the subscript $T$, and for example write
	simply $\phi=\varphi/\norm{\varphi}$.  Recall     that     by      Assumption~\ref{hyp:reg-f}     we     have
	$\norm{\varphi(\theta)} > 0$.  Assume there exists $\theta \in \Theta$ such
	that $g(\theta) = 0$, that is   $\partial_\theta  \phi(\theta)  =  0$.
	Since  $\norm{\varphi(\theta)}>0$, we  deduce  from~\eqref{eq:deiv-phi}
	that
	$               \partial_\theta\varphi(\theta)\norm{\varphi(\theta)}^2
	-\varphi(\theta)   \left   \langle  \varphi(\theta),   \partial_\theta
	\varphi(\theta)\right  \rangle=0$.   Then  use  that  by  assumption
	$\partial_\theta\varphi(\theta)\neq 0$ and $\norm{\varphi(\theta)}>0$,
	to get  that $\varphi(\theta)$ and $\partial_\theta \varphi(\theta)$
	are   linearly   dependent.    In   conclusion,   we   get   that   if
	$\varphi(\theta)$    and   $\partial_\theta\varphi(\theta)$    are
	linearly independent, then $g(\theta)>0$.
\end{proof}

\subsection{Examples of regular features}
The aim of this section of examples is to stress that  a large variety
of dictionaries of features and type of parameters verify
Assumptions~\ref{hyp:reg-f}  and~\ref{hyp:g>0}.

\subsubsection{Translation discrete-time model} 
\label{sec:translation_dis_model}
%This model  is an  extension of Example~\ref{ex:unif}  in the  spirit Section~\ref{sec:exple-lambda}. 
Let $t_1<\cdots< t_T$  be a grid on $\R$
of size  $T\in \N$, $\lambda_T$ an  atomic measure whose support  is the
grid,  and  $H_T=L^2(\lambda_T)$.   Consider the  translation  invariant
dictionary:
\begin{equation}
\label{eq:trans-inv}
(\varphi_T(\theta)  = k(\cdot  -\theta),  \, \theta  \in \Theta),
\end{equation}
with $\Theta=  \R$  and  $k$ is  a   real-valued  $\cc^3$  function  defined  on
$\R$. Notice  the dictionary  does not  depend on  $T$. We  now consider
usual choices for the function $k$.

For the Gaussian function $k(t)=  \expp{-t^2/2}$ and the Cauchy function
$k(t) = 1/(1 + t^2)$, we get that Assumption \ref{hyp:reg-f} holds
and,  using Lemma~\ref{lem:g>0}  that Assumption  \ref{hyp:g>0} is  also
satisfied provided respectively $T \geq 2$ and $T\geq 3$.

For           the           Shannon           scaling           function
$k(t  )  = \operatorname{sinc}(t)  =  \sin(\pi  t)/(\pi t)$,  Assumption
\ref{hyp:reg-f}  holds  provided  that $\lambda_T((a+\Z)^c)>0$  for  all
$a  \in \R$,  that is  the grid  is not  a subset  of $a+\Z^*$  for some
$a\in \R$. There is  no easy way to write conditions  on the grid, based
on the use of Lemma~\ref{lem:g>0}, for Assumption \ref{hyp:g>0} to hold
(let       us        mention       that       $T\geq        2$       and
$\min_{1\leq i\leq T-1} (t_{i+1} - t_i)<1/2$ is a sufficient condition for
Assumption \ref{hyp:g>0} to hold).

Eventually notice that the   Laplace function $k(t)=\expp{-|t|}$
is  not smooth  enough  for Assumption  \ref{hyp:reg-f} to hold.

\subsubsection{Translation model with a continuum of 
	observations}
\label{sec:translation_cont_model}
Let $T\in \N$ (which  does not play a role here)  and $H_T = L^2(\Leb)$,
where $\Leb$  is the Lebesgue measure  on $\R$.  In this  framework, the
observation  $y$   defined  in   \eqref{eq:model}  is  a   continuum  of
observations.   Consider  the  translation invariant  dictionaries  from
Section~\ref{sec:translation_dis_model}, where $k$ is either the Gaussian, the
Cauchy or the Shannon  scaling function. Notice that the  Hilbert space and
the dictionary  do not depend  on $T$.  Then, it  is easy to  check that
Assumptions \ref{hyp:reg-f} and \ref{hyp:g>0} hold.

We see that this model, which can be seen as a continuous  approximation
(or limit) of the
discrete models from
Section~\ref{sec:translation_dis_model} when $T$ therein is large, is easier
to handle than the corresponding discrete models.

\subsubsection{Translation model with a varying scaling parameter}
\label{sec:translation_cont_model2}
Let $T\in \N$,  $H_T = L^2(\Leb)$, where $\Leb$ is  the Lebesgue measure
on $\R$, and  consider the translation invariant  dictionary scaled by
$\scale_T>0$ given by:
\[
(\varphi_T(\theta)  = k(\scale_T^{-1}(\cdot  -\theta)),  \, \theta  \in \Theta),
\]
with  $\Theta= \R$  and $k$ is a real-valued  $\cc^3$ function  defined on
$\R$.    Contrary   to   Section~\ref{sec:translation_cont_model},   the
features  depend  on $T$.   Suppose  that  $k$  is the  Shannon  scaling
function (see Section~\ref{sec:translation_dis_model})  and consider the
vector sub-space $V_T$ given by the closure in $H_T$ of the vector space
spanned        by       the        dictionary.       According       to
\cite[Theorem~3.5]{mallat2008wavelet}, the  set $V_T$  is the  subset of
$H_T$ of  all functions whose Fourier  transform support is a  subset of
$[-\pi/\scale_T, \pi /\scale_T]$. Suppose that the sequence $(\scale_T, T\in \N)$ is
decreasing to $0$. Then the sequence $(V_T, T\in \N)$ is increasing
and
$\overline{\bigcup_{T \in \N} V_T} = H_T$. {This model provides an example
	of translation models with possibly varying, but known, scaling parameter $\scale_T$.}

\subsubsection{Scaling exponential model}
\label{sec:translation_scale_model}
Let $T \in \N$ (which  does not play a role here),   $H_T = L^2(\Leb)$, where $\Leb$ is  the Lebesgue measure
on $\R_+$,  and  consider the scale  invariant  dictionary  given by:
\[
(\varphi_T(\theta)  = k(\theta \cdot ),  \, \theta  \in \Theta),
\]
with     $\Theta=    \R_+^*$     and     the    exponential     function
$k:  t  \mapsto \expp{-t}$.  This  dictionary  is  used for  example  in
fluorescence microscopy  (see  \cite{denoyelle2019sliding}).
Clearly     Assumption~\ref{hyp:reg-f}     holds      as     well     as
Assumption~\ref{hyp:g>0} as $g_T(\theta) =1/(4\theta^2)$.

%%%%%%%%%%%%%%%%%%%%%%%%%%%%%%%%%%%%%%%%%%%%%%%%%%%%%%%%%%%
\section{A Riemannian metric on the set of parameters}
\label{sec:riemannian_metric}
%%%%%%%%%%%%%%%%%%%%%%%%%%%%%%%%%%%%%%%%%%%%%%%%%%%%%%%%%%%%%%

\subsection{On the Riemannian metric in dimension one}
\label{sec:riemann}
Recall $\Theta$ is an interval of  $\R$.  We call kernel  a real-valued
function defined on $\Theta^2$. Let $\cK$ be a symmetric kernel of class
$\cc^2$ such  that the  function $g_\cK$  defined on
$\Theta$ by:
\begin{equation}
\label{eq:def-gK}
g_\cK(\theta)= \partial^2_{x,y}
\cK( \theta,\theta)
\end{equation}
is positive and locally  bounded, where $\partial_x$
(resp. $\partial_y$) denotes the usual  derivative with respect to the
first (resp. second) variable.  Following \cite{poon2018geometry}, we
define an intrinsic Riemannian metric, denoted $\dK$, on the parameter set
$\Theta$ using the  function $g_\cK$.  One of the motivations  to use the
Riemannian metric is to  work with intrinsic quantities  related to  the
parameters  which  are  invariant  by  reparametrization,  such  as  the
diameter of (subsets of) $\Theta$. Since $\Theta$ is one-dimensional and
connected,   the  Riemannian  metric $\dK(\theta,\theta')$   between
$\theta, \theta'\in \Theta$ reduces to:
\begin{equation}
\label{eq:def-Riemann-dist-v2}
\dK(\theta,\theta')=  |G_\cK(\theta) -G_\cK(\theta')|,
\end{equation}
where  $G_\cK$ is a primitive of $\sqrt{g_\cK}$.

\begin{rem}
	\label{rem:Rieman-gen}
	We refer to    \cite{lee_book}     and
	\cite{sakai1996riemannian} for a general presentation on Riemannian
	manifolds, and we give an immediate application in dimension one which
	entails in particular ~\eqref{eq:def-Riemann-dist-v2}. 
	Let  $\Theta$  be  a   manifold  (of  dimension  one).   A  path
	$\gamma :  [0,1] \rightarrow \Theta$  is an  admissible path if  it is
	continuous, piecewise  continuously differentiable  with non-vanishing
	derivative.         Its        length        is        given        by
	$\mathcal{L}_\cK(\gamma)           =            \int_{0}^{1}           |
	\dot{\gamma}_s|\,\sqrt{g_\cK(\gamma_s)}\,     \rd     s     $,     where
	$   |  \dot{\gamma}_s|$   is  seen   as   the  norm   of  the   vector
	$ \dot{\gamma}_s$ in the tangent space,  and the scalar product on the
	tangent     space     at     $\theta\in     \Theta$     is     given     by
	$(u,v)    \mapsto    \langle    u,     g_\cK(\theta)    v    \rangle$    with
	$\langle \cdot, \cdot \rangle$ the  usual Euclidean scalar product. (In
	our case,  the tangent vector  space is  $\R$ and the  Euclidean scalar
	product reduces to the usual  product).  The Riemannian metric $\dK$
	between $\theta, \theta'$ in $\Theta$ is then defined by:
	\begin{equation}
	\label{eq:def-Riemann-dist}
	\dK	(\theta,\theta')=  \underset{\gamma}{\inf} \quad \mathcal{L}_\cK(\gamma),
	\end{equation}
	where the  infinimum is  taken over the  admissible paths  $\gamma$ such
	that  $\gamma_0  =  \theta$  and   $\gamma_1  =  \theta'$.
	It is not hard to see that $\gamma$ is a minimizing path, that is, $
	\dK(\theta,\theta')= \mathcal{L}_\cK(\gamma)$,  if and only if $\gamma$ is monotone (and thus $\gamma_s\in
	[\theta\wedge \theta', \theta\vee\theta']$ for all $s\in [0, 1]$). This is equivalent to say
	that the sign of $ \dot{\gamma}_s$ is constant. Assume that $g_\cK$ is
	of class $\cc^1$. The path $\gamma$ is a
	geodesic if it is smooth with zero acceleration, that is, in dimension
	one for all $s\in (0, 1)$:
	\begin{equation}
	\label{eq:geodesic}
	\ddot{\gamma}_s + \frac{1}{2}\frac{g_\cK'(\gamma_s)}{g_\cK(\gamma_s)} \dot{\gamma}_s^2 = 0.
	\end{equation} 
	This is equivalent to $s\mapsto
	\dot{{\gamma}}_s\,\sqrt{g_\cK({\gamma}_s)} $ being 
	constant, which implies that the geodesic is a
	minimizing path.  
	\medskip
	
	We  now derive  the equation  of  the geodesic  path when $g_\cK$ is
	of class $\cc^1$.  Recall  $G_\cK$
	denotes the  primitive of $\sqrt{g_\cK}$. It  is continuous increasing
	and thus  induces a  one-to-one map  from $\Theta$  to its  image. Set
	$a=G_\cK(\theta)$ and $b=G_\cK(\theta') - G_\cK(\theta)$, so that the path
	$\gamma:     [0,     1]     \rightarrow     \Theta$     defined     by
	$\gamma_s=G_\cK^{-1}(a+bs)$  is a  geodesic and  minimizing path  from
	$\theta$                to                $\theta'$ with 
	$ \mathcal{L}_\cK(\gamma)=\dK(\theta,\theta') $.

\end{rem}

\medskip

Following   \cite{poon2018geometry},   we    introduce   the   covariant
derivatives,  see  \cite[Sections~3.6 and  ~5.6]{absil2009optimization},
which have elementary expressions as  the set of parameters $\Theta$ is
one-dimensional.  For  a smooth function $f$ defined on $\Theta$ and
taking values in an Hilbert space, say $H$, the covariant  derivative $D_{i;\cK}[f]$ of
order  $i\in \N$  is defined  recursively by  $D_{0;\cK}[f] =  f$ and  for
$i\in \N$, assuming that $g_\cK$ is of class $\cc^{i}$, and $\theta\in \Theta$:
\begin{equation}
\label{eq:def-cov-deriv}
D_{i+1;\cK}[f](\theta) = g_\cK(\theta)^{\frac{i}{2}}\partial_{\theta} \left(
\frac{D_{i;\cK}[f](\theta) }{g_\cK(\theta)^{\frac{i}{2}}}\right).
\end{equation}
In particular,  we have for  $f\in \cc^2(\Theta, H)$ (and  assuming that
$g_\cK$ is of class $\cc^1$ for the last equality) that:
\begin{equation}
\label{eq:deriv1-3}
\begin{aligned}
&D_{0;\cK}[f] = f, \quad D_{1;\cK}[f] = \partial_{\theta}f, \quad D_{2;\cK}[f] = \partial_{\theta}^2 f - \frac{1}{2}\frac{g_\cK'}{g_\cK}\partial_{\theta}f.
\end{aligned}
\end{equation}

We shall also consider the following modification of the covariant
derivative, for $i\in \N$:
\begin{equation}
\label{eq:def-tD}
\tD_{i;\cK}[f](\theta)=g_\cK(\theta)^{-i/2}\, D_{i; \cK}[f](\theta).
\end{equation}
We have $\tD_{0; \cK}[f]=f$,   and we deduce from~\eqref{eq:def-cov-deriv}
that for $i\in \N^*$, assuming that $g_\cK$ is of class $\cc^{i}$:
\begin{equation}
\label{eq:tDi}
\tD_{i; \cK}=\tD_{1; \cK}\circ \tD_{i-1; \cK}=\left(\tD_{1; \cK}\right)^i,
\end{equation}
so that $\tD_{1; \cK}$ can be seen as a derivative operator.

We now give an elementary  variant of the Taylor-Lagrange
expansion using the previously defined Riemannian metric and covariant
derivatives. Its proof can be found in the Appendix, Section~\ref{sec:proof-sec4}.
\begin{lem}
	Assume $g_\cK$ is  positive and of class $\cc^1$. Let  $f$ be a function defined on $\Theta$
	taking values in an Hilbert space of class $\cc^2$. Setting $f^{[i]}=\tD_{i; \cK}
	[f]$ for $i\in \{1, 2\}$, 
	we have that for all $\theta, \theta_0\in \Theta$:
	\begin{equation}
	\label{eq:expansion}
	f(\theta) = f(\theta_0) +
	\operatorname{sign}(\theta-\theta_0)\,\dK(\theta,\theta_0)\,
	f^{[1]}(\theta_0)
	+  \dK(\theta,\theta_0)^2\,
	\int_0^1 (1-t) f^{[2]}(\gamma_t)\, \rd t,
	\end{equation}
	where  $\gamma$ is  a  geodesic  path  such that  $\gamma_0 =  \theta_0$, $\gamma_1=  \theta$ (and
	thus $\dK(\theta,\theta_0) = \mathcal{L}_\cK(\gamma)$).
	\label{lemma:expansion}
\end{lem}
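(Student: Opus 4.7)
The strategy I would take is to reduce the statement to a scalar Taylor-Lagrange formula evaluated along the geodesic. Specifically, set $F(t)=f(\gamma_t)$ for $t\in[0,1]$ and apply the classical integral-remainder Taylor formula
\[
F(1)=F(0)+F'(0)+\int_0^1 (1-t)\, F''(t)\,\rd t,
\]
which holds for any $\cc^2$ function into a Hilbert space. The whole work then reduces to computing $F'$ and $F''$ in terms of $f^{[1]}$ and $f^{[2]}$ and recognizing that the geometric scalars that appear are precisely $\operatorname{sign}(\theta-\theta_0)\,\dK(\theta,\theta_0)$ and $\dK(\theta,\theta_0)^2$.

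For the first derivative, the chain rule gives $F'(t)=\dot\gamma_t\,\partial_\theta f(\gamma_t)$. Using $\tD_{1;\cK}[f]=g_\cK^{-1/2}\partial_\theta f$ I rewrite
\[
F'(t)=\bigl(\dot\gamma_t\,\sqrt{g_\cK(\gamma_t)}\bigr)\,f^{[1]}(\gamma_t).
\]
The key identity is that along a geodesic the quantity $s\mapsto\dot\gamma_s\sqrt{g_\cK(\gamma_s)}$ is constant (this is exactly the first-order form of the geodesic equation \eqref{eq:geodesic} recalled in Remark~\ref{rem:Rieman-gen}). Combined with $\mathcal{L}_\cK(\gamma)=\int_0^1|\dot\gamma_s|\sqrt{g_\cK(\gamma_s)}\,\rd s=\dK(\theta,\theta_0)$ and the monotonicity of minimizing paths, one gets
\[
\dot\gamma_t\,\sqrt{g_\cK(\gamma_t)}=\operatorname{sign}(\theta-\theta_0)\,\dK(\theta,\theta_0)
\quad\text{for all } t\in[0,1].
\]
This immediately yields $F'(0)=\operatorname{sign}(\theta-\theta_0)\,\dK(\theta,\theta_0)\,f^{[1]}(\theta_0)$, matching the first-order term.

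For the second derivative, differentiating $F'(t)=\bigl(\dot\gamma_t\sqrt{g_\cK(\gamma_t)}\bigr)\,f^{[1]}(\gamma_t)$ and using that the prefactor is constant in $t$ along the geodesic, only the second factor is differentiated:
\[
F''(t)=\bigl(\dot\gamma_t\sqrt{g_\cK(\gamma_t)}\bigr)\,\dot\gamma_t\,\partial_\theta f^{[1]}(\gamma_t).
\]
Applying $\tD_{1;\cK}$ again via \eqref{eq:tDi}, i.e.\ $f^{[2]}=\tD_{1;\cK}[f^{[1]}]=g_\cK^{-1/2}\partial_\theta f^{[1]}$, gives $\partial_\theta f^{[1]}=\sqrt{g_\cK}\,f^{[2]}$, hence
\[
F''(t)=\bigl(\dot\gamma_t\sqrt{g_\cK(\gamma_t)}\bigr)^{2}\,f^{[2]}(\gamma_t)=\dK(\theta,\theta_0)^{2}\,f^{[2]}(\gamma_t).
\]
Substituting into the Taylor formula yields exactly \eqref{eq:expansion}.

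I do not foresee a real obstacle; the content of the proof is essentially the observation that the Riemannian covariant derivatives $\tD_{i;\cK}$ are designed so that differentiation along the arclength parametrization of the geodesic produces them without any residual Christoffel terms, which makes Taylor's formula look identical to the Euclidean one. The only mild points to verify carefully are that a geodesic joining $\theta_0$ and $\theta$ exists (guaranteed by the explicit formula $\gamma_s=G_\cK^{-1}(a+bs)$ from Remark~\ref{rem:Rieman-gen}, using that $g_\cK\in\cc^1$ and positive), and that differentiations and the invocation of \eqref{eq:deriv} on Hilbert-valued maps are legitimate under the $\cc^2$ hypothesis on $f$ and the $\cc^1$ hypothesis on $g_\cK$.
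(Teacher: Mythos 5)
Your proof is correct and follows essentially the same route as the paper: both compute the derivatives of $F(t)=f(\gamma_t)$ along the geodesic, using the constancy of $\dot\gamma_t\sqrt{g_\cK(\gamma_t)}$ to identify $F'(t)=A\,f^{[1]}(\gamma_t)$ and $F''(t)=A^2\,f^{[2]}(\gamma_t)$ with $A=\operatorname{sign}(\theta-\theta_0)\,\dK(\theta,\theta_0)$. The only cosmetic difference is that you invoke the classical one-variable Taylor formula with integral remainder directly, whereas the paper re-derives it by applying the first-order integral identity twice (once to $f$, once to $f^{[1]}$ along $\tilde\gamma_s=\gamma_{st}$) and then changing variables in the resulting double integral.
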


{For a real-valued function $F$ defined on $\Theta^2$, we say that $F$ is
	of class $\cc^{0,  0}$ on $\Theta^2$ if it is  continuous on $\Theta^2$,
	and of class $\cc^{i, j}$ on $\Theta^2$,  with $i, j\in \N$, as soon as:
	$F$  is of  class  $\cc^{0, 0}$,  and  if $i\geq  1$  then the  function
	$\theta\mapsto F(\theta, \theta')$  is of class $\cc^i$  on $\Theta$ and
	its derivative $\partial_x F$ is  of class $\cc^{i-1, j}$ on $\Theta^2$,
	and if $j\geq 1$ the function $\theta' \mapsto F(\theta, \theta')$ is of
	class $\cc^j$ on $\Theta$ and its  derivative $\partial_y F$ is of class
	$\cc^{i, j-1}$ on $\Theta^2$.
	For a  real-valued  symmetric function $F$ defined
	on $\Theta^2$ of class $\cc^{i,j}$, we define the
	covariant  derivatives $D_{i,j;\cK}[F]$ of
	order  $(i,j)\in \N^2$    recursively by  $D_{0,0;\cK}[F] =  F$ and  for
	$i,j\in \N$, assuming that $g_\cK$ is of class $\cc^{\max(i,j)}$, and $\theta, \theta'\in \Theta$:}
\begin{equation}
\label{eq:def-cov-deriv-2}
D_{i+1,j;\cK}[F](\theta,\theta')
= g_\cK(\theta)^{\frac{i}{2}}\partial_{\theta} \left(
\frac{D_{i,j;\cK}[F](\theta, \theta')
}{g_\cK(\theta)^{\frac{i}{2}}}\right)
\quad\text{and}\quad
D_{i,j;\cK}[F](\theta, \theta')=D_{j,i;\cK}[F](\theta', \theta). 
\end{equation}
In particular, we have $D_{0, 0; \cK}[F]=F$, $D_{1, 0; \cK}=\partial_x
F$, $D_{0, 1; \cK}=\partial_y
F$ and $D_{1,1; \cK}=\partial^2_{xy}  F$. 
We shall also consider the following modification of the covariant
derivative, for $i,j\in \N$:
\begin{equation}
\label{eq:def-tD2}
\tD_{i,j;\cK}[F](\theta, \theta')=\frac{D_{i,j; \cK}[F](\theta,
	\theta')}{g_\cK(\theta)^{i/2}\, g_\cK(\theta')^{j/2} }\cdot
\end{equation}
We have $\tD_{1,0; \cK} \circ \tD_{0,1; \cK} =\tD_{0,1; \cK} \circ
\tD_{1,0; \cK} $ and  for
$i,j\in \N$, assuming that $g_\cK$ is of class $\cc^{\max(i,j)}$:
\begin{equation*}
%\label{eq:tDij}
\tD_{i,j; \cK}=\left(\tD_{1,0; \cK}\right)^i\circ \left(\tD_{0,1; \cK}\right)^j.
\end{equation*}

For $i, j\in \N$, if $\cK$  is of class $\cc^{i\vee 1,j\vee 1}$, we consider
the real-valued function defined on $\Theta^2$ by:
\begin{equation}
\label{eq:def-deriv-K}
\cK^{[i, j]}=\tD_{i,j;\cK}[\cK].
\end{equation}
In particular, since $\cK$ is of class $\cc^2$, we have:
\begin{equation}
\label{eq:K-g}
\cK^{[0, 0]}=\cK 
\quad\text{and}\quad
\cK^{[1, 1]}(\theta, \theta)=1.
\end{equation}

\subsection{The kernel and the Riemannian metric associated to the
	dictionary of features}
\label{sec:Rieman}
Let $T\in  \N$ be fixed and assume that Assumption \ref{hyp:g>0}
holds. We define the kernel $\cK_T$ on $\Theta^2$ by: 
\begin{equation}
\label{eq:def-KT}
\cK_T(\theta, \theta')=\langle \phi_{T}(\theta), \phi_{T}(\theta')
\rangle_T =\frac{\langle \varphi_T(\theta), \varphi_T(\theta')
	\rangle_T}{\norm{\varphi_T (\theta)}_T\norm{\varphi_T (\theta')}_T} ,
\end{equation}
where we recall  that $\phi_{T}={\varphi_T}/{\norm{\varphi_T}_T}$.  When
considering the  kernel $\cK_T$, we  shall write $g_T$  for $g_{\cK_T}$,
and similarly we shall use  the notations   $\tD_{i;T}$ and $\tD_{i,j;  T}$ instead of $\tD_{i;\ck_T}$ and $\tD_{i,j;  \ck_T}$.
Recall the derivatives of the kernel $\cK_T$ defined
by~\eqref{eq:def-deriv-K}. 
The next
lemma  insures in  particular that  the two  definitions of  $g_T$ given
by~\eqref{def:g_T} and~\eqref{eq:def-gK} are consistent, that is:
\begin{equation}
\label{eq:formula-gT-2}
g_T(\theta)=\partial^2_{xy} \cK_T(\theta, \theta)= \norm{\partial_\theta
	\phi_T(\theta)}_T^2.
\end{equation}
The proof of the next lemma can be found in the Appendix, Section~\ref{sec:proof-sec4}.
\begin{lem}
	\label{lem:gT_consistent}
	Let  $T\in \N$  be fixed  and assume  that Assumptions~\ref{hyp:reg-f}
	and~\ref{hyp:g>0}  hold.  Then,  the  symmetric kernel  $\cK_T$ is  of
	class $\cc^{3,3}$ on $\Theta^2$ and for $i, j\in \{0, \ldots, 3\}$ and
	$\theta, \theta'\in \Theta$, we have:
	\begin{equation}
	\label{def:derivatives_kernel}
	\cK_T^{[i,j]}(\theta,\theta') = \langle
	\tD_{i;T}[\phi_{T}](\theta), \tD_{j;T}[\phi_{T}](\theta')
	\rangle_T. 
	\end{equation}
	We also have:
	\begin{equation}
	\label{eq:formula_K_00}
	\sup_{\Theta^2} |\cK_T^{[0,0]}|\leq 1,
	\quad  \cK_T^{[0,0]}(\theta,\theta)=1,
	\quad   \cK_T^{[1,0]}(\theta,\theta) = 0, \quad 
	\cK_T^{[2,0]}(\theta,\theta) = -1 \quad \text{and} \quad
	\cK_T^{[2,1]}(\theta,\theta) = 0. 
	\end{equation}
\end{lem}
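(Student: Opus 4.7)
The plan is to proceed in three stages: regularity of $\cK_T$, the covariant-derivative identity \eqref{def:derivatives_kernel} by induction, and the special values on the diagonal by direct calculation.

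First I would check that $\cK_T$ is of class $\cc^{3,3}$. Since $\varphi_T$ is $\cc^3$ and $\norm{\varphi_T(\theta)}_T>0$ throughout $\Theta$ by Assumption~\ref{hyp:reg-f}, the quotient $\phi_T=\varphi_T/\norm{\varphi_T}_T$ is $\cc^3$ (the scalar map $\theta\mapsto \norm{\varphi_T(\theta)}_T^2=\langle \varphi_T(\theta),\varphi_T(\theta)\rangle_T$ is $\cc^3$ and positive, so its inverse square root is $\cc^3$). Consequently, by bilinearity of the inner product together with formula~\eqref{eq:deriv}, the function $(\theta,\theta')\mapsto \langle \phi_T(\theta),\phi_T(\theta')\rangle_T$ is $\cc^{3,3}$ on $\Theta^2$, with partial derivatives that commute with the inner product. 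In particular, $\partial_{xy}^2 \cK_T(\theta,\theta)=\langle \partial_\theta\phi_T(\theta),\partial_\theta\phi_T(\theta)\rangle_T=\norm{\partial_\theta\phi_T(\theta)}_T^2=g_T(\theta)$, which is~\eqref{eq:formula-gT-2}.

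Next I would prove~\eqref{def:derivatives_kernel} by double induction on $(i,j)$. From the definition~\eqref{eq:def-cov-deriv-2} combined with~\eqref{eq:def-tD2}, one checks that the normalised operator obeys the simple rule
\[
\tD_{i+1,j;T}[\cK_T](\theta,\theta')=g_T(\theta)^{-1/2}\,\partial_\theta \tD_{i,j;T}[\cK_T](\theta,\theta'),
\]
and a symmetric rule in $\theta'$. The base case $(0,0)$ is just the definition of $\cK_T$. Assuming~\eqref{def:derivatives_kernel} at level $(i,j)$, the induction step follows by commuting the derivative with the inner product using~\eqref{eq:deriv}, pulling the scalar $g_T(\theta)^{-1/2}$ inside, and recognising
\[
g_T(\theta)^{-1/2}\,\partial_\theta \tD_{i;T}[\phi_T](\theta)=\tD_{i+1;T}[\phi_T](\theta),
\]
which is exactly~\eqref{eq:tDi} for the one-variable covariant derivative. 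The analogous step in the second variable closes the induction up to $(i,j)\in\{0,\ldots,3\}^2$.

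Finally, the identities in~\eqref{eq:formula_K_00} are obtained by specialising~\eqref{def:derivatives_kernel} at $\theta'=\theta$. The bound $|\cK_T^{[0,0]}|\leq 1$ is Cauchy--Schwarz together with $\norm{\phi_T}_T=1$, and $\cK_T^{[0,0]}(\theta,\theta)=\norm{\phi_T(\theta)}_T^2=1$. Differentiating the constant relation $\langle \phi_T(\theta),\phi_T(\theta)\rangle_T=1$ twice gives
\[
\langle \partial_\theta\phi_T,\phi_T\rangle_T=0
\quad\text{and}\quad
\langle \partial_\theta^2\phi_T,\phi_T\rangle_T=-\langle \partial_\theta\phi_T,\partial_\theta\phi_T\rangle_T=-g_T,
\]
while differentiating $g_T=\norm{\partial_\theta\phi_T}_T^2$ yields $\langle \partial_\theta^2\phi_T,\partial_\theta\phi_T\rangle_T=g_T'/2$. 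Using these and the explicit formula $\tD_{2;T}[\phi_T]=g_T^{-1}\partial_\theta^2\phi_T-\tfrac{1}{2}g_T^{-2}g_T'\,\partial_\theta\phi_T$ (a direct consequence of~\eqref{eq:deriv1-3} and~\eqref{eq:def-tD}), one computes $\cK_T^{[1,0]}(\theta,\theta)=g_T^{-1/2}\langle \partial_\theta\phi_T,\phi_T\rangle_T=0$, then $\cK_T^{[2,0]}(\theta,\theta)=g_T^{-1}\langle \partial_\theta^2\phi_T,\phi_T\rangle_T=-1$, and finally $\cK_T^{[2,1]}(\theta,\theta)=g_T^{-3/2}\langle \partial_\theta^2\phi_T,\partial_\theta\phi_T\rangle_T-\tfrac{1}{2}g_T^{-3/2}g_T'=0$.

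The only mild obstacle is keeping track of the $g_T$-weights in the bivariate covariant derivative, but once the reduction $\tD_{i+1,j;T}[\cK_T]=g_T^{-1/2}\partial_x\tD_{i,j;T}[\cK_T]$ is observed, everything reduces to the already-established one-variable operator and routine algebra; Assumption~\ref{hyp:g>0} ensures that all the negative powers of $g_T$ in the computation are well defined.
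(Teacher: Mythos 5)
Your proposal is correct and follows essentially the same route as the paper's own proof: establish $\cc^{3,3}$ regularity and the identity \eqref{def:derivatives_kernel} by commuting Fréchet derivatives with the inner product via~\eqref{eq:deriv}, then deduce the diagonal identities in~\eqref{eq:formula_K_00} by differentiating $\langle\phi_T,\phi_T\rangle_T\equiv 1$ and $g_T=\norm{\partial_\theta\phi_T}_T^2$, using the explicit expressions for $\tD_{1;T}$ and $\tD_{2;T}$. The only cosmetic difference is that you organise step two as an explicit induction on $(i,j)$ while the paper states the identity directly from~\eqref{eq:intervertion_kernel} and the definitions.
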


%%%%%%%%%%%%%%%%%%%%%%%%%%%%%%%%%%%%
\section{Approximating the kernel associated to the dictionary}
\label{sec:cv_kernel}
%%%%%%%%%%%%%%%%%%%%%%%%%%%%%%%%%%%%

In the section we detail  the assumptions guaranteeing the approximation
of  the kernel  $\cK_T$ (which  is usually  difficult to  compute) by  a
kernel  $\cK_\infty $  (which is  easier to  handle).  Both  kernels are
defined on  $\Theta^2$, however, we  shall qualify the  approximation of
$\cK_T$ by $\cK_\infty  $ and properties of $\cK_\infty $  on subsets of
$\Theta$, respectively $\Theta_T$ (which will be a compact interval) and
$\Theta_\infty $ (which will be an interval possibly unbounded).  We use
notations  from   Section~\ref{sec:riemannian_metric}  and   recall  the
definition of $g_\cK$, resp.  $\cK^{[i,j]}$, given in~\eqref{eq:def-gK},
resp. in~\eqref{eq:def-deriv-K}.  
Assuming the kernel $\cK$  is of class $\cc^{3,3}$ and using the notation~\eqref{eq:def-deriv-K}, we
also set for $\theta\in \Theta$:
\begin{equation}
\label{eq:def-h_K}
h_\cK(\theta)=\cK^{[3,3]}(\theta, \theta).
\end{equation}
For simplicity, for an  expression $A$ we write $A_*$  for $A_{\cK_*}$ where
$*$ is equal to $T$ or $\infty $. We first give a regularity
assumption on the kernel $\cK_\infty $.

\begin{hyp}[Properties of the asymptotic kernel $\cK_\infty$ and function $h_{\infty}$]
	\label{hyp:Theta_infini}
	The symmetric kernel $\cK_\infty $ defined on $\Theta^2$ is of 
	class $\cc^{3,3}$,   the  function
	$g_\infty $  defined by~\eqref{eq:def-gK} on $\Theta$  is positive and
	locally bounded (as well as of class $\cc^2$), and we have $\cK_\infty (\theta,\theta)=- \ck_\infty
	^{[2, 0]}(\theta, \theta)=1$ for $\theta\in \Theta$. The set  $\Theta_\infty \subseteq \Theta$ is  an
	interval and we have:
	\begin{equation}
	\label{def:M_h_M_g}
	%m_g:= \inf_{\Theta_\infty } \, g_{\infty} > 0, \quad  
 L_3:=
	\sup_{\Theta_\infty } \, h_{\infty} < + \infty,
	\quad\text{and}\quad	L_{i,j}: = \sup_{\Theta_{\infty}^2} |\cK_\infty ^{[i, j]}|
	<+\infty
	\quad\text{for all  $ i,j \in \{0, 1, 2\}$}.
	\end{equation}
\end{hyp}

Since $\Theta_T$ is compact, under
Assumptions~\ref{hyp:g>0} and~\ref{hyp:Theta_infini}, we deduce that the constant $\RT$
below is positive and finite, where:
\begin{equation}
\label{eq:def-rho}
\RT=\max \left(\sup_{\Theta_T} \sqrt{\frac{g_T}{g_\infty
}},\sup_{\Theta_T} \sqrt{\frac{g_\infty }{g_T }} \right) . 
\end{equation}
From the definition of the Riemannian metric given
in~\eqref{eq:def-Riemann-dist-v2}
(see also~\eqref{eq:def-Riemann-dist}), we readily deduce that  the metrics $\dT$ and $\dI$ are then strongly equivalent on
$\Theta_T$;  more precisely we have that on $\Theta_T^2$:
\begin{equation}
\label{eq:equi-dT-dI}
\inv{\RT}\,  \dI \leq  \dT \leq  \RT\,  \dI. 
\end{equation}

\medskip

We then give an assumption on the quality of approximation of $\cK_T$ by
$\cK_\infty $.
We set:
\begin{equation}
\label{def:V_1}
\DT=\max( \DT^{(1)}, \DT^{(2)})
\quad\text{with}\quad
\DT^{(1)}=\max_{i,j\in \{0, 1, 2\} }\, \sup_{\Theta_T^2} |
\cK_T^{[i,j]} - \cK_\infty ^{[i,j]}|
\quad\text{and}\quad
\DT^{(2)}=\sup_{\Theta_T} |h_T - h_\infty |. 
\end{equation}

Let us recall that Assumption~\ref{hyp:g>0} implies regularity conditions
on $\cK_T$, see Lemma~\ref{lem:gT_consistent}. 
\begin{hyp}[Quality of the approximation]
	\label{hyp:close_limit_setting}
	Let $T \in \N$ be  fixed. Assumptions~\ref{hyp:g>0} 
	and~\ref{hyp:Theta_infini} hold, the interval  $\Theta_T\subset
	\Theta_\infty $ is a compact interval,  and  we have:
	\[
	\DT\leq   L_{2,2} \wedge L_3.
	\]
\end{hyp}

Notice     that     if      Assumption~\ref{hyp:g>0}     holds,     then
Assumptions~\ref{hyp:Theta_infini}     and~\ref{hyp:close_limit_setting}
hold  trivially when one  takes  $\cK_\infty  =\cK_T$ and $\Theta_\infty=\Theta_T$; notice also  that $\rho_T=1$ in this case.  In the  next example, the
sequence of kernels $(\cK_T, T\in \N)$ converges to the kernel $\cK_\infty $ as
$T$ goes to infinity, so that   Assumption~\ref{hyp:close_limit_setting} holds for $T$ large enough.

\begin{example}
	\label{example:compact_support}
	We consider the discrete-time example from Section~\ref{sec:exple-lambda}.  We assume that the process $y$ is
	a function  defined on $[0,  1]$ which,  for $T\in \N^*$  is observed
	through  the  regular grid  $\{t_{j,  T}=j/T\,  \colon\, 1\leq  j\leq
	T\}$. The  process $y$  is seen  as an element  of the  Hilbert space
	$H_T=L^2(\lambda_T)$,      with      the     probability      measure
	$\lambda_T=  \Delta_T \sum_{j=1}^T  \delta_{t_{j,T}}$  on  $[0, 1]$  with
	$\Delta_T=1/T$.  Let $\Theta$  be a compact interval of  $\R$ and set
	$\Theta_T=\Theta_\infty    =\Theta$.      Consider    a    dictionary
	$(\varphi(\theta), \theta\in  \Theta)$ independent  of $T$,  that is,
	$\varphi_T=\varphi$ for all $T\in \N^*$, and assume that the function
	$(\theta,    t)\mapsto    \varphi(\theta)(t)$     is    defined    on
	$\Theta\times  [0, 1]$  and of  class $\cc^{3,  0}$. Assume  that the
	dictionary     satisfies     the    regularity     assumptions     of
	Assumption~\ref{hyp:g>0}.
	
	Let $\Leb$ be the Lebesgue measure on $[0, 1]$, so that $(\lambda_T,
	T\in \N^*)$ converges weakly to $\Leb$. Then,  define the kernel
	$\cK_\infty $ by~\eqref{eq:def-KT} with $\varphi_T$ replaced by
	$\varphi$ (as the dictionary does not depend on $T$) and the scalar
	product $\langle \cdot, \cdot \rangle_T$ by the usual scalar product on
	$L^2(\Leb)$. Thanks to Lemma~\ref{lem:gT_consistent}, we deduce that 
	Assumption~\ref{hyp:Theta_infini} on the properties of $\cK_\infty $
	is satisfied. Using the weak convergence of $(\lambda_T, T\in \N^*)$
	to $\Leb$, we deduce that $\lim_{T\rightarrow \infty } \partial_x^i
	\partial_y^j \cK_T=\partial_x^i
	\partial_y^j
	\cK_\infty $ uniformly on $[0, 1]^2$ for all $i, j\in \{0,
	\dots, 3\}$. 
	This implies that:
	\[
	\lim_{T\rightarrow  \infty }  \DT=0
	\quad\text{and}\quad
	\lim_{T\rightarrow       \infty      }       \RT=1.  
	\]
	Thus Assumption~\ref{hyp:close_limit_setting}
	holds for $T$ large enough. 
\end{example}

%%%%%%%%%%%%%%%%%%%%%%%%%%%

\section{Certificates}
\label{sec:certificates}
%%%%%%%%%%%%%%%%%%%%%%

In this section, we make assumptions  on the existence of functions from
$\Theta$ to $\R$ called certificates. These functions have interpolation
properties   that  are   corner   stones  in   the   proof  of   Theorem
\ref{maintheorem}.  The  term  ``certificate''  is  inherited  from  the
compressed sensing field were such functions were used to get rid of the
Restricted Isometry Property condition  (RIP) for exact recontruction of
signals   (see  \cite{candes2005decoding}   for  details   on  the   RIP
condition). In  \cite{candes2011probabilistic}, the authors  showed that
is possible to reconstruct exactly a sparse signal from the observations
of  a  finite  number  of  Fourier coefficients  by  exhibiting  a  dual
certificate. Many papers have followed  this line of research since then
(see \emph{e.g.} \cite{candes2013super,candes2014towards, duval2015exact}).

In sparse linear models the bounds for prediction error  are proved
using RIP, Restricted Eigenvalue or compatibility conditions
(see \cite{bickel2009simultaneous, van2016estimation}). Among these
assumptions, the compatibility conditions are the less
restrictive. Indeed, the authors of \cite{van2009conditions} have shown
that it is implied by both the RIP and the
Restricted Eigenvalue. However, in many contexts even the weaker
condition fails to hold. Typically the compatibility condition fails to
hold in the context of super-resolution which aims at  extracting the
frequencies and amplitudes of a linear combination of complex
exponentials from a small number of noisy time samples
(see~\cite{boyer2017adapting}). 

In  the  papers  \cite{boyer2017adapting} and  \cite{tang2014near},  the
authors achieve  nearly optimal  rates for the  prediction error  in the
super-resolution framework using certificate functions. Their method and
proof  are however  quite  specific to  complex  exponentials and  their
certificates  are trigonometric  polynomials.  The  insightful paper  of
\cite{duval2015exact} builds certificates in a quite general setting for
a one dimensional parameter  set $\Theta$. In \cite{de2019supermix}, the
authors  exhibit  certificate  functions   to  deal  with  more  general
probability density  models where $\Theta$ is  multidimensional. However
they       are      restricted       to      translation       invariant
dictionaries~\eqref{eq:trans-inv}.   The most  general framework  has been
introduced in  \cite{poon2018geometry} where the Riemannian  geometry is
key to  build in a natural  way the so-called certificate  functions. In
fact a separation distance between the parameters to estimate is needed
to build certificates and the Euclidean metric yields overly pessimistic
minimum  separation  condition. In  what  follows  we
introduce new  certificates, called derivative certificates,  in order
to control the prediction error.

\medskip

We   consider    the   following    assumption   in   the    spirit   of
\cite{poon2018geometry}.  We  consider  the  setting where  $T$  may  be
finite.    Let   $T\in   \N$,   $H_T$    be   an   Hilbert   space   and
$(\varphi_T(\theta),   \theta\in   \Theta)$  a   dictionary   satisfying
Assumptions~\ref{hyp:reg-f}  and~\ref{hyp:g>0},   so  that   the  kernel
$\cK_T$ is of  class $\cc^{3, 3} $ on $\Theta^2$.  Recall the Riemannian
metric $\dKT$ associated to $\cK_T$, which we simply denote by $\dT$.
We define  the closed ball centered at $\theta\in
\Theta_T$ with radius $r$ by:
\[
\mathcal{B}_T(\theta,r) = \left\{ \theta' \in \Theta_T,\,
\dT(\theta, \theta') \leq r\right\}  \subseteq \Theta_T.
\]
Let $\cq^{\star}$ be a  subset of $\Theta_T$  of cardinal
$\sparse$. For  $r>0$, the near  region of  $\cq^\star$ is the  union of
balls
$\bigcup_{\theta^\star\in  \cq^{\star}} \mathcal{B}_T(\theta^{\star},r)$
and   its    far   region   is   the   complementary of the near region  in   $\Theta_T$:
$\Theta_T      \setminus      \bigcup_{\theta^\star\in      \cq^{\star}}
\mathcal{B}_T(\theta^{\star},r)$.   Sufficient conditions  for the  next
assumption       to      hold       are      given       in      Section
\ref{sec:conditons_certificates}.

\begin{hyp}[Interpolating certificate]
	\label{assumption1}
	Let $T \in \N$, $s\in \N^*$, $r>0$ and $\cq^{\star}$ be a subset of $\Theta_T$ of cardinal
	$\sparse$. Suppose Assumptions~\ref{hyp:reg-f}  and~\ref{hyp:g>0} on
	the dictionary $(\varphi_T(\theta), \theta\in \Theta)$, 
	and Assumption~\ref{hyp:Theta_infini} on the kernel $\ck_\infty $,
	defined on $\Theta^2$,   hold.  Suppose that $
	\mathfrak{d}_T(\theta,\theta') > 2r$ for all $\theta, \theta' \in
	\cq^\star \subset \Theta_{T}$, and that there exist finite positive constants $C_{N}  , C_{N}', C_{F} $,
	$C_{B}$,  with $C_F < 1$, depending on $r$
	and $\cK_\infty$ such that for  any  application $v: \cq^\star \rightarrow \{ -1,1\} $ there exists  an element $p \in H_T$ satisfying:
	\begin{propenum} 
		\item\label{it:as1-<1}
		For all $\theta^\star \in \cq^{\star}$ and $\theta\in
		\mathcal{B}_T(\theta^{\star},r)$, we have
		$ | \langle \phi_{T}(\theta),p  \rangle_T| \leq 1 -
		C_{N}\, \mathfrak{d}_T(\theta^{\star},\theta)^{2}$.
		\item\label{it:as1-ordre=2}
		For all $\theta^\star \in \cq^{\star}$ and $\theta\in
		\mathcal{B}_T(\theta^{\star},r)$, we have
		$ | \langle\phi_{T}(\theta),p \rangle_T - v(\theta^\star)| \leq
		C_{N}' \, \mathfrak{d}_T(\theta^{\star},\theta)^{2}$.
		\item\label{it:as1-<1-c}  For all $\theta$ in $\Theta_T$ and $\theta  \notin \bigcup\limits_{\theta^\star
			\in \cq^{\star}} \mathcal{B}_T(\theta^{\star},r)$ (far region), we
		have $|\langle\phi_{T}(\theta),p \rangle_T| \leq 1 - C_{F}$.
		\item \label{it:norm<c} We have $\norm{p}_{T} \leq C_{B} \,\sqrt{\sparse} $.
	\end{propenum}
\end{hyp}

The function $\eta : \theta \mapsto \langle\phi_{T}(\theta),p \rangle_T$
is the so-called  ``interpolating certificate'' of the  function $v$, as
thanks to $\ref{it:as1-ordre=2}$  with $\theta=\theta^\star$, the function
$\eta$ coincides with the function $v$ on $\cq^{\star}$. In addition, the
interpolating certificate is required 
to have curvature  properties in the near region and to be  bounded by a
constant strictly inferior  to $1$ in the far region.   When $r$ is
sufficiently    small (that    is,   $r    \leq   \sqrt{2/(C_N+C_N')}$)
Conditions $\ref{it:as1-<1}$ and $\ref{it:as1-ordre=2}$ are equivalent to the fact that
the  function $\eta$  is in-between two quadratic  functions in the near
region of $\cq^{\star}$: for all $\theta^\star \in \cq^{\star}$ such that
$v(\theta^\star) =1$  (resp. $v(\theta^\star) =-1$)  and $\theta\in
\mathcal{B}_T(\theta^{\star},r)$, we have 
$ 1- C_{N}' \, \mathfrak{d}_T(\theta^{\star},\theta)^{2} \leq
\eta(\theta) \leq 
1 - C_{N} \, \mathfrak{d}_T(\theta^{\star},\theta)^{2}$
(resp. 	$ -1 + C_{N} \, \mathfrak{d}_T(\theta^{\star},\theta)^{2} \leq
\eta(\theta) \leq -1 + C_{N}' \,
\mathfrak{d}_T(\theta^{\star},\theta)^{2}$). 

\medskip

Sufficient conditions  for the  next
assumption       to      hold       are  also    given       in      Section
\ref{sec:conditons_certificates}.

\begin{hyp}[Interpolating derivative certificate]
	\label{assumption2}
	Let $T \in \N$, $s\in \N^*$, $r>0$ and $\cq^{\star}$ be a subset of $\Theta_T$ of cardinal
	$\sparse$.  Suppose Assumptions~\ref{hyp:reg-f}  and~\ref{hyp:g>0} on
	the   dictionary   $(\varphi_T(\theta),    \theta\in   \Theta)$,   and
	Assumption~\ref{hyp:Theta_infini} on the kernel $\ck_\infty $, defined
	on  $\Theta^2$,  hold.   Assume  that
	$     \mathfrak{d}_T(\theta,\theta')      >     2r$      for     all
	$\theta,  \theta' \in  \cq^\star  \subset  \Theta_{T}$ and that there  exist finite positive constants 
	$c_{N},  c_{F} $,  $c_{B}$   depending  on $r$  and
	$\cK_\infty$,   such  that  for   any  application
	$v:  \cq^\star  \rightarrow  \{  -1,1\}  $  there  exists  an  element
	$q \in H_T$ satisfying:
	\begin{propenum}
		\item\label{it:as2-ordre=2} For  all $\theta^\star \in  \cq^{\star}$ and
		$\theta\in \mathcal{B}_T(\theta^{\star},r)$, we have:
		\begin{equation*}
		\left | \langle\phi_{T}(\theta),q \rangle_T - v(\theta^\star)\, \operatorname{sign}(\theta-\theta^{\star})\mathfrak{d}_T(\theta,\theta^{\star})\right | \leq
		c_{N}\, \mathfrak{d}_T(\theta^{\star},\theta)^{2}.
		\end{equation*}
		
		\item\label{it:as2-<1-c}   For   all    $\theta$   in   $\Theta_T$   and
		$\theta    \notin   \bigcup\limits_{\theta^\star    \in   \cq^{\star}}
		\mathcal{B}_T(\theta^{\star},r)$     (far     region),     we     have
		$|\langle\phi_{T}(\theta),q\rangle_T| \leq c_{F}$.
		\item \label{it:as2-<c} $||q||_{T} \leq c_{B} \,\sqrt{\sparse}$.
	\end{propenum}
\end{hyp}

The function $\theta \mapsto \langle\phi_{T}(\theta),q \rangle_T$ will
be called an ``interpolating derivative certificate'' as it  vanishes
on $\cq^{\star}$. In addition, this function is required to decrease
similarly to  the function $\mathfrak{d}_T(\cdot ,\theta^{\star})$ near
$\theta^{\star}$ and to be bounded in the far region of  $\cq^{\star}$.

%%%%%%%%%%%%%%%%%%%%%%%%%%%%%%%%

\section{Sufficient conditions for the existence of  certificates}
\label{sec:conditons_certificates}

In this section, we prove the  existence of the certificate functions of
Assumptions~\ref{assumption1}  and~\ref{assumption2}  provided that  the
parameters to  be estimated are  sufficiently separated in terms  of the
Riemannian   metric.    According  to   \cite{tang2015resolution},   the
separation condition cannot be avoided  to build certificate functions in
general.  It is however possible  to remove this separation condition in
some  particular  cases, see  \cite{schiebinger2018superresolution}  for
models with positive amplitudes.

In  order  to  find  sufficient  conditions for  the  existence  of  the
interpolating certificate functions  of Assumption~\ref{assumption1}, we
extend the construction from \cite{poon2018geometry} to a non asymptotic
setting. For  the existence of the  interpolating derivative certificate
functions of  Assumption~\ref{assumption2}, we  generalize the  proof of
\cite[Lemma~2.7]{candes2013super}  dedicated  to the dictionary  of  complex
exponential  functions.  The  proofs for  the existence  of certificates
given in Section~\ref{sec:proof_interpolating}  require boundedness and
local concavity properties of the kernel $\cK_T$. 
For practical application, they are deduced from the boundedness and local concavity
properties of the kernel $\cK_\infty $ and the quality of approximation
of $\cK_T$ by $\cK_\infty $ discussed in Section \ref{sec:cv_kernel}.

\subsection{Boundedness and local concavity of the kernel $\cK_T$}
In this work, we shall consider bounded kernels locally concave on the
diagonal. More precisely, for $T\in \bar \N=\N\cup\{\infty\}$ and $r>0$, we define:

\begin{align}
\label{eq:def-e0}
\varepsilon_{T}(r)
& = 1 - \sup \left\{ |\cK_{T}(\theta,\theta')|;
\quad \theta,\theta'\in \Theta_T  \text{ such that }
\mathfrak{d}_{T}(\theta',\theta) \geq r
\right\},\\
\label{eq:def-e2}
\nu_{T}(r)
&= -\sup \left\{
\cK_{T}^{[0,2]}(\theta,\theta'); \quad 
\theta,\theta'\in \Theta_T  \text{ such that }
\mathfrak{d}_{T}(\theta',\theta) \leq r \right\}.
\end{align}

The  fact that  $  \varepsilon_{T}(r)$ and  $  \nu_{T}(r)$ are  positive
depends  on  the function  $\varphi_T$,  the  space  $H_T$ and  the  set
$\Theta_T$.    Let us mention
that in many examples the positiveness of $ \varepsilon_{\infty}(r)$
and $ \nu_{\infty}(r)$ is easy  to check whereas the positiveness of
$\varepsilon_{T}(r)$ and $\nu_{T}(r)$ might be more difficult to prove.

Notice that \eqref{eq:formula_K_00} for $T\in  \N$ and
Assumption~\ref{hyp:Theta_infini} for $T=\infty $, 
and the continuity of $\cK_T$ and
$\cK_T^{[0, 2]}$  give that:
\begin{equation}
\label{eq:lim_varespilon}
\lim_{r \rightarrow 0^{+}}\varepsilon_{T}(r) = 0 \quad \text{
	and } \quad \lim_{r \rightarrow 0^{+}} \nu_{T}(r) = 1. 
\end{equation}

Recall     $\RT$     and     $\DT$     defined     in~\eqref{eq:def-rho}
and~\eqref{def:V_1}.     The    next     lemmas     state    that     if
$  \varepsilon_{\infty}(r/\RT)$  (resp.    $  \nu_{\infty}(r  \RT)$)  is
positive and if  the approximation of $\cK_T$ by $\cK_\infty  $ is good,
\emph{i.e.}  $\mathcal{V}_T$  is  small, then  $ \varepsilon_{T}(r)$  (resp.
$ \nu_{T}(r)$) is also positive.

\begin{lem}
	\label{lem:comp_epsilon}
	Let  $T\in  \N$. Suppose  Assumptions~\ref{hyp:reg-f},  \ref{hyp:g>0} and
	\ref{hyp:Theta_infini} hold. Then we
	have for $r>0$:
	\[
	\varepsilon_{T} (r)\geq  \varepsilon_{\infty}(r/\RT)- \mathcal{V}_T \quad \text{and} \quad 
	\nu_{T} (r)\geq  \nu_{\infty}(r \RT)- \mathcal{V}_T. 
	\]
\end{lem}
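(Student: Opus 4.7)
The plan is a short, two-step comparison: first convert the $\mathfrak{d}_T$-constraint inside the sup into a $\mathfrak{d}_\infty$-constraint using the metric equivalence~\eqref{eq:equi-dT-dI} (which is available because Assumption~\ref{hyp:close_limit_setting} gives $\Theta_T\subseteq \Theta_\infty$ and finiteness of $\rho_T$), then swap $\cK_T^{[i,j]}$ for $\cK_\infty^{[i,j]}$ at the cost of $\mathcal{V}_T^{(1)}\leq \mathcal{V}_T$ via the definition~\eqref{def:V_1}.

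For the first bound, I would write, for every $\theta,\theta'\in \Theta_T$,
\[
|\cK_T(\theta,\theta')| \leq |\cK_\infty(\theta,\theta')| + \mathcal{V}_T.
\]
If additionally $\mathfrak{d}_T(\theta,\theta')\geq r$, then by~\eqref{eq:equi-dT-dI} we have $\mathfrak{d}_\infty(\theta,\theta')\geq r/\rho_T$, and the pair $(\theta,\theta')$ lies in $\Theta_\infty^2$. Taking the supremum over such pairs and noting that the resulting set is contained in the one defining $\varepsilon_\infty(r/\rho_T)$ gives
\[
\sup\bigl\{|\cK_T(\theta,\theta')| : \theta,\theta'\in\Theta_T,\ \mathfrak{d}_T(\theta,\theta')\geq r\bigr\}
\leq \bigl(1-\varepsilon_\infty(r/\rho_T)\bigr) + \mathcal{V}_T,
\]
and subtracting from $1$ yields the claimed inequality for $\varepsilon_T(r)$.

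For the second bound, the same argument applies to the derivative kernel: for $\theta,\theta'\in\Theta_T$,
\[
\cK_T^{[0,2]}(\theta,\theta') \leq \cK_\infty^{[0,2]}(\theta,\theta') + \mathcal{V}_T.
\]
If $\mathfrak{d}_T(\theta,\theta')\leq r$, then by~\eqref{eq:equi-dT-dI} we have $\mathfrak{d}_\infty(\theta,\theta')\leq r\,\rho_T$, so the restricted set of pairs is contained in the one defining $\nu_\infty(r\rho_T)$. Taking sup and multiplying by $-1$ gives
\[
\nu_T(r) = -\sup\bigl\{\cK_T^{[0,2]}(\theta,\theta') : \mathfrak{d}_T(\theta,\theta')\leq r\bigr\}
\geq \nu_\infty(r\rho_T) - \mathcal{V}_T.
\]

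There is essentially no obstacle here; the only subtlety is being careful that the direction of the metric comparison is the right one in each case ($r\mapsto r/\rho_T$ when we enlarge a ``$\geq r$'' set via $\mathfrak{d}_\infty$, and $r\mapsto r\rho_T$ when we enlarge a ``$\leq r$'' set), and that the inclusion $\Theta_T\subseteq \Theta_\infty$ from Assumption~\ref{hyp:close_limit_setting} lets us upper bound $\Theta_T$-suprema of $\cK_\infty^{[\cdot,\cdot]}$ by the corresponding $\Theta_\infty$-suprema appearing in the definitions~\eqref{eq:def-e0}--\eqref{eq:def-e2}.
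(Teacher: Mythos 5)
Your proof is correct and follows essentially the same route as the paper: transfer the $\mathfrak{d}_T$-constraint to a $\mathfrak{d}_\infty$-constraint via the metric equivalence~\eqref{eq:equi-dT-dI}, then pay a $\mathcal{V}_T$ penalty to swap $\cK_T^{[i,j]}$ for $\cK_\infty^{[i,j]}$. Your explicit remark that the inclusion $\Theta_T\subseteq\Theta_\infty$ is needed to compare a $\Theta_T$-supremum to a $\Theta_\infty$-supremum is a legitimate observation --- the paper's statement omits that hypothesis but its proof uses it implicitly just as you do.
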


\begin{proof}
	As Assumptions \ref{hyp:g>0} and
	\ref{hyp:Theta_infini} hold,  recall that   $
	\mathfrak{d}_{\infty}/\rho_T \leq  \mathfrak{d}_T \leq \rho_T
	\mathfrak{d}_{\infty} $ on $\Theta_T^2$, see   \eqref{eq:equi-dT-dI}.

	Let        ${\theta,\theta'\in        \Theta_T}$        such        that
	$\mathfrak{d}_T(\theta',\theta)       \geq       r$. We       have
	$\mathfrak{d}_{\infty}(\theta',\theta) \geq  r/\rho_T $. We  get from
	the definition of $\mathcal{V}_T$ that:
	\begin{equation*}
	\begin{aligned}
	|\cK_{T}(\theta,\theta')| & \leq |\cK_{\infty}(\theta,\theta')| + \mathcal{V}_T \leq 1 - \varepsilon_{\infty} (r/\rho_T) + \mathcal{V}_T.
	\end{aligned}
	\end{equation*}
	Then, use \eqref{eq:def-e0} to get $\varepsilon_{T} (r)\geq
	\varepsilon_{\infty}(r/\rho_T)- \mathcal{V}_T$. 
	We also  have $\mathfrak{d}_{\infty}(\theta',\theta)  \leq r  \rho_T$. We deduce that:
	\begin{equation*}
	-\cK_{T}^{[0,2]}(\theta,\theta') \geq -\ck_{\infty}^{[0,2]}(\theta,\theta') - \mathcal{V}_T \geq \nu_{\infty}(r  \rho_T) - \mathcal{V}_T.
	\end{equation*}
	Finally, using \eqref{eq:def-e2}, we obtain $\nu_{T} (r)\geq  \nu_{\infty}(r  \rho_T)- \mathcal{V}_T$.
\end{proof}
When   we   require   in   addition  of   the   assumptions   of   Lemma
\ref{lem:comp_epsilon}                                              that
$\varepsilon_{\infty}(r/\rho_T)    \wedge     \nu_{\infty}(r    \rho_T)>
\mathcal{V}_T\geq  0$,  then  we   have  $\varepsilon_{T}(r)  >  0$  and
$\nu_{T}(r) > 0$.

\subsection{Separation conditions for the non-linear parameters}
In what follows, we measure the interferences (or the overlap) between
the features  in the mixture through a quantity $\delta_T$ introduced in \cite{poon2018geometry} and defined below.
Let $T\in \bar \N$, $\delta > 0$ and $s \in \mathbb{N}^{*}$.
We define the set $\Theta_{T , \delta}^s\subset \Theta_T ^s$
of  vector of parameters of dimension $s\in
\N^*$ and  separation $\delta>0$ as:
\[
\Theta^s_{T , \delta}= \Big  \{  (\theta_1,\cdots,\theta_s) \in \Theta_T ^s\,
\colon\,  \mathfrak{d}_{T}(\theta_{\ell},\theta_{k}) >  \delta \text{  for
	all distinct } k, \ell\in \{1, \ldots, s\}   \Big \}.
\]
Using  the  convention  $\inf  \emptyset=+\infty $,
we set for $u > 0$: 
\begin{multline}
\label{eq:def-delta-rs}
\delta_T(u,s) = \inf \Big\{ \delta>0\, \colon\,   \max_{1\leq  \ell \leq
	s} \sum\limits_{k=1, k\neq
	\ell}^{s}
|\cK_{T}^{[i,j]}(\theta_{\ell},\theta_{k})|
\leq u \\  \text{ for all } (i,j) \in \{0,1\} \times \{0,1,2 \} \text{ and } 
(\theta_1,\cdots,\theta_s)  \in \Theta^s_{T , \delta} \Big\}.
\end{multline}
\medskip

The quantity  $\delta_T(u,s)$ is the  minimum distance (with  respect to
the Riemannian metric $\mathfrak{d}_{T}$) between $s$ parameters so that
the coherence of the associated dictionary is bounded by $u$. The notion
of   coherence    between   the   features   in    the   definition   of
$\delta_T(u,s)$ is quite similar to  the one used in compressed
sensing  (see  \cite[Section~5]{FoucartBook}).   A standard  problem  in
compressed sensing  is to retrieve  the vector $\beta^{\star}$  when the
multivariate  function  $\Phi_T(\vartheta^{\star})$   is  known  in  the
discrete        setting        of   
Section~\ref{sec:exple-lambda}.    In   this  framework,   the   matrix
$\Phi_{T}(\vartheta^{\star})$,   whose  rows   correspond  to   the  $K$
discretized  functions in  the dictionary, is known.   The coherence  is
defined                                                               as
$\underset{1      \leq     k\neq      \ell     \leq      K}{\max}     \,
|\cK_T(\theta_k^{\star},\theta_{\ell}^{\star})|$.  Usually,  the smaller
the   coherence,  the   easier   it  is   to   retrieve  the   parameter
$\beta^{\star}$.      The     Babel      function,     introduced     in
\cite{tropp2004greed}, is  even closer  to our  measure of  overlap.  We
refer to \cite{poon2018geometry} for a discussion on this function.

\begin{rem}[Rewriting the separation condition with operator norm]
	\label{rem:norme}
	We shall stress that the definition of $\delta_T$
	in~\eqref{eq:def-delta-rs} is related to the operator norm
	$\norm{\cdot}_\op$ associated to the $\ell_\infty $ norm on
	$\R^\sparse$. We restate~\eqref{eq:def-delta-rs} using this operator norm
	$\norm{\cdot}_\op$, and leave the interested reader to  check that another choice of  operator norm does not improve the bounds on the certificates.
	Let us define for $i,j=0, 1, 2$ (assuming the kernel
	$\cK_T$ is smooth enough) and  $ \vartheta = (\theta_1, \ldots,
	\theta_\sparse)\in \Theta_{T}^\sparse$ the $s\times s$ matrix:
	\begin{equation}
	\label{def:gamma-ii}
	\cK_T^{[i,j]}(\vartheta) = \left
	(\cK_T^{[i,j]}(\theta_{k},\theta_{\ell}) \right
	)_{1\leq k,\ell \leq s}. 
	\end{equation}
	Let $I$ be the  identity matrix of size $s\times s$.  For $i=0$ or $i=1$,
	since the diagonal coefficients  of $\cK_T^{[i,i]}(\vartheta)$ are equal
	to 1, see~\eqref{eq:K-g}, we get:
	\[
	\norm{I - \cK_T^{[i,i]}(\vartheta) }_{\op } = \underset{1\leq k \leq  s}{\max}
	\sum\limits_{\ell\neq k} |
	\cK_T^{[i,i]} (\theta_{k},\theta_{\ell})|.
	\]    
	Since   the   diagonal   coefficients   of   $\cK_T^{[1,0]}(\vartheta)$,
	$\cK_T^{[0,1]}(\vartheta)$    and    $\ck_T^{[1,2]}(\vartheta)    $    are    zero,
	see~\eqref{eq:formula_K_00}, we also get:
	\[
	\norm{\cK_T^{[1,0]}(\vartheta) }_{\op } = \underset{1\leq k \leq  s}{\max}
	\sum\limits_{\ell\neq k} |
	\cK_T^{[1,0]} (\theta_{k},\theta_{\ell})| 
	\quad\text{and}\quad
	\norm{\cK_T^{[1,2]}(\vartheta) }_{\op } = \underset{1\leq k \leq  s}{\max}
	\sum\limits_{\ell\neq k} |
	\cK_T^{[1,2]} (\theta_{k},\theta_{\ell})|  
	\]    
	and by symmetry, with $\norm{\cdot}_\op^*$ for the operator norm
	associated to the $\ell_1$ norm:
	\[
	\norm{\cK_T^{[0,1]}(\vartheta) }_{\op}
	= \norm{\cK_T^{[1,0]\top}(\vartheta) }_{\op}
	=\norm{\cK_T^{[1,0]}(\vartheta) }_{\op}^*
	=\underset{1\leq \ell \leq  s}{\max}
	\sum\limits_{k\neq \ell} |
	\cK_T^{[1,0]} (\theta_{k},\theta_{\ell})|
	= \underset{1\leq k \leq  s}{\max}
	\sum\limits_{\ell\neq k} |
	\cK_T^{[0,1 ]} (\theta_{k},\theta_{\ell})|.
	\]    
	Since      the    diagonal   coefficients   of
	$\cK_T^{[2,0]}(\vartheta)$ are equal to -1, see~\eqref{eq:formula_K_00}, we
	also get:
	\[
	\norm{I+\cK_T^{[2,0]}(\vartheta) }_{\op} = \underset{1\leq k \leq  s}{\max}
	\sum\limits_{\ell\neq k} |
	\cK_T^{[2,0]} (\theta_{k},\theta_{\ell})|. 
	\]    
	Thus, we have:
	\begin{equation}
	\label{eq:def-delta-rs2}
	\delta_T(u,s) =    \inf \Big\{ \delta>0\, \colon\, A_{T, \ell_\infty }(\vartheta)  \leq u,
	\vartheta  \in \Theta^s_{T , \delta} \Big\}, 
	\end{equation}
	where:
	\begin{multline}
	\label{eq:def-MT}
	A_{T, \ell_\infty }(\vartheta)=\\
	\max\left(
	\norm{I - \cK_T^{[0,0]}(\vartheta) }_{\op },
	\norm{I - \cK_T^{[1,1]}(\vartheta) }_{\op },
	\norm{I+\cK_T^{[2,0]}(\vartheta) }_{\op },
	\norm{\cK_T^{[1,0]}(\vartheta) }_{\op } , \norm{\cK_T^{[0,1]}(\vartheta) }_{\op },
	\norm{\cK_T^{[1,2]}(\vartheta) }_{\op }  
	\right).
	\end{multline}
\end{rem}

Lemma \ref{lem:approx-delta} below enables us to compare the separation distance at $T$ fixed and at the limit case where $T = +\infty$.
Recall that the constant $\rho_T$ is defined in \eqref{eq:def-rho}.
\begin{lem}
	\label{lem:approx-delta}
	Let    $T    \in   \bar    \N$    and    $s   \in    \N^*$.    Suppose
	Assumptions~\ref{hyp:reg-f},   \ref{hyp:g>0} and   \ref{hyp:Theta_infini}
	hold.  Then, for $u > 0$ and with:
	\begin{equation*}
	u_T(s) = u + (s-1) \mathcal{V}_T,
	\end{equation*}  we have:
	\[
	\delta_T(u_T(s), s) \leq  \rho_T \, \delta_\infty (u, s)
	\quad \text{and} \quad
	\Theta^s_{T , \rho_T \, \delta_\infty (u, s)} \subseteq \Theta^s_{T
		, \delta_T (u_T(s), s)}. 
	\]
\end{lem}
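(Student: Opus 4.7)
My plan is to establish the inequality $\delta_T(u_T(s), s) \leq \rho_T\,\delta_\infty(u, s)$ directly from the definition of $\delta_T$ as an infimum over admissible separations, and then to deduce the stated inclusion from the fact that $\delta \mapsto \Theta^s_{T, \delta}$ is non-increasing for set inclusion.

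First I would pick an arbitrary tuple $\vartheta = (\theta_1, \ldots, \theta_s) \in \Theta^s_{T, \rho_T \delta_\infty(u, s)}$ and translate its separation from the metric $\mathfrak{d}_T$ to the metric $\mathfrak{d}_\infty$. Since $\Theta_T \subseteq \Theta_\infty$ under Assumption~\ref{hyp:close_limit_setting}, the tuple lies in $\Theta_\infty^s$, and the equivalence~\eqref{eq:equi-dT-dI} gives $\mathfrak{d}_\infty(\theta_k, \theta_\ell) \geq \mathfrak{d}_T(\theta_k, \theta_\ell)/\rho_T > \delta_\infty(u, s)$ for every $k \neq \ell$. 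Since only finitely many pairs are involved, I can pick a margin $\delta' > \delta_\infty(u, s)$ with $\vartheta \in \Theta^s_{\infty, \delta'}$. The defining set in~\eqref{eq:def-delta-rs} for $\delta_\infty(u, s)$ is an up-set of $\R_+^*$ (as $\Theta^s_{\infty, \delta}$ shrinks when $\delta$ grows), so for every such $\delta' > \delta_\infty(u, s)$ the bound $\max_\ell \sum_{k \neq \ell} |\cK_\infty^{[i,j]}(\theta_\ell, \theta_k)| \leq u$ holds on $\Theta^s_{\infty, \delta'}$ for all $(i,j) \in \{0,1\} \times \{0,1,2\}$, hence at $\vartheta$.

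Second, I would combine this with the uniform proximity $|\cK_T^{[i,j]} - \cK_\infty^{[i,j]}| \leq \mathcal{V}_T$ on $\Theta_T^2$ from~\eqref{def:V_1}. The triangle inequality yields, for each $\ell$ and each $(i,j) \in \{0,1\} \times \{0,1,2\}$,
\[
\sum_{k \neq \ell} |\cK_T^{[i,j]}(\theta_\ell, \theta_k)|
\leq \sum_{k \neq \ell} |\cK_\infty^{[i,j]}(\theta_\ell, \theta_k)|
+ (s-1)\,\mathcal{V}_T
\leq u + (s-1)\,\mathcal{V}_T
= u_T(s).
\]
Since $\vartheta$ was arbitrary in $\Theta^s_{T, \rho_T \delta_\infty(u, s)}$, this shows that $\rho_T\,\delta_\infty(u,s)$ lies in the admissible set defining $\delta_T(u_T(s), s)$ in~\eqref{eq:def-delta-rs}, whence the first claim. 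The inclusion $\Theta^s_{T, \rho_T\,\delta_\infty(u,s)} \subseteq \Theta^s_{T, \delta_T(u_T(s), s)}$ then follows automatically from the non-increasing character of $\delta \mapsto \Theta^s_{T, \delta}$. The only subtle step I would flag as the main (minor) obstacle is the passage from finitely many strict pairwise inequalities $\mathfrak{d}_\infty(\theta_k, \theta_\ell) > \delta_\infty(u,s)$ to a single strict uniform margin $\delta'$, which is what lets the infimum characterization of $\delta_\infty$ be invoked; everything else is a bookkeeping combination of~\eqref{eq:equi-dT-dI} and~\eqref{def:V_1}.
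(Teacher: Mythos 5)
Your proof is correct and follows essentially the same route as the paper: pass from $\mathfrak{d}_T$ to $\mathfrak{d}_\infty$ via $\mathfrak{d}_T \leq \rho_T\,\mathfrak{d}_\infty$ to land inside $\Theta^s_{\infty,\delta_\infty(u,s)}$, then add the $(s-1)\mathcal{V}_T$ slack from the triangle inequality and conclude from the infimum definition of $\delta_T$. The one place you are more explicit than the paper — using the up-set/margin $\delta'$ argument to justify that every $\vartheta \in \Theta^s_{\infty,\delta_\infty(u,s)}$ actually satisfies the coherence bound $\leq u$ — is a genuine (if minor) gap that the paper's own proof silently elides, so this added care is welcome.
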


\begin{proof}
	Since Assumptions~ \ref{hyp:g>0} and   \ref{hyp:Theta_infini} hold, we have from  \eqref{eq:equi-dT-dI} that $ \mathfrak{d}_{T} \leq
	\rho_T\,     \mathfrak{d}_{\infty}
	$ on $\Theta_T^2$. Hence for any $\delta > 0$, we have the inclusion
	$\Theta^s_{T , \rho_T \, \delta} \subseteq \Theta^s_{\infty , \delta}$. In
	particular, we have for $u>0$ that  $\Theta^s_{T , \rho_T \, \delta_\infty
		(u, s)} \subseteq \Theta^s_{\infty , \delta_\infty (u, s)}$.  Using
	the triangle inequality and the definition of $\mathcal{V}_T$ in
	\eqref{def:V_1}, we have that  for $ (i,j) \in \{0,1\}
	\times \{0,1,2 \}$ and $
	(\theta_1,\cdots,\theta_s)  \in \Theta^s_{T}$:
	\begin{equation*}
	\begin{aligned}
	\sum\limits_{k=1, k\neq
		\ell}^{s}
	|\cK_{T}^{[i,j]}(\theta_{\ell},\theta_{k})|
	&\leq  \sum\limits_{k=1, k\neq
		\ell}^{s}
	\left (
	|\cK_{\infty}^{[i,j]}(\theta_{\ell},\theta_{k})|  + \mathcal{V}_T \right ).
	\end{aligned} 
	\end{equation*}
	Then, the inclusion $\Theta^s_{T , \rho_T \, \delta_\infty (u, s)} \subseteq \Theta^s_{\infty , \delta_\infty (u, s)}$  gives that for all $ (i,j) \in \{0,1\} \times \{0,1,2 \}$ and $ 
	(\theta_1,\cdots,\theta_s)  \in \Theta^s_{T , \rho_T \, \delta_\infty (u, s)} $:
	\begin{equation*}
	\begin{aligned}
	\sum\limits_{k=1, k\neq
		\ell}^{s}
	|\cK_{T}^{[i,j]}(\theta_{\ell},\theta_{k})|
	& \leq u + (s-1)\mathcal{V}_T.
	\end{aligned} 
	\end{equation*}
	With $u_T(s) = u + (s-1) \mathcal{V}_T$, we deduce that $\delta_T(u_T(s), s) \leq  \rho_T \, \delta_\infty (u, s)$, which proves the inclusion $\Theta^s_{T , \rho_T \, \delta_\infty (u, s)} \subseteq \Theta^s_{T , \delta_T (u_T(s), s)}$.
\end{proof}

\subsection{The interpolating certificates}

We    define     quantities    which    depend    on  $\cK_\infty$, $\Theta_\infty$ and on real parameters $r>0$
and $\rho \geq 1$:
\begin{equation}
\label{eq:def_H}
\begin{aligned}
H_{\infty}^{(1)}(r,\rho)
&= \inv{2} \wedge L_{2,0} \wedge L_{2,1}  \wedge
\frac{\nu_{\infty}(\rho r)}{10} \wedge
\frac{\varepsilon_{\infty}(r/\rho) }{10},\\ 
H_{\infty}^{(2)} (r,\rho)
&= \frac{1}{6} \wedge \frac{8\,\varepsilon_{\infty} (r/\rho) }{10(5 + 2
	L_{1,0})}  \wedge 
\frac{8\,\nu_{\infty}(\rho r)}{9 ( 2L_{2,0} + 2 L_{2,1} + 4)} ,
\end{aligned}
\end{equation} 
where  the constants involved are defined  in
\eqref{def:M_h_M_g}.     By      recalling     the      behaviors     of
$\varepsilon_{\infty}(r)$ and  $\nu_{\infty}(r)$ when  $r$ goes  down to
zero from \eqref{eq:lim_varespilon}, we have for $\rho \geq 1$:
\begin{equation*}
\lim_{r \rightarrow 0^{+}}H_{\infty}^{(1)}(r,\rho) = 0 \quad \text{  and }
\quad \lim_{r \rightarrow 0^{+}} H_{\infty}^{(2)}(r,\rho)  = 0. 
\end{equation*} 

We state the first  main result of this section whose  proof is given in
Section \ref{sec:proof_interpolating}.
\begin{prop}[Interpolating certificate]
	\label{prop:certificat_interpolating}
	Let  $T \in  \N$,  $s  \in  \N^{*}$, $\rho \geq 1$ and $r>0$. We  assume that: 
	\begin{propenum}
		\item \textbf{Regularity of the dictionary
			$\varphi_T$:} \label{hyp:theorem_certificate_regularity}
		Assumptions~\ref{hyp:reg-f} and~\ref {hyp:g>0}  hold.

		\item \label{hyp:theorem_certificate_concavity}
		\textbf{Regularity of the limit kernel $\ck_\infty$:}
		Assumption~\ref{hyp:Theta_infini} holds, we have $r \in \left (0,1/\sqrt{2L_{2,0}} \right )$, and also $\varepsilon_{\infty}(r/\rho)  > 0$ and $\nu_{\infty}(\rho r)   > 0$.
		\item \label{hyp:theorem_certificate_separation}
		\textbf{Separation of the non-linear parameters:}
		There exists $u_{\infty} \in \left
		(0,H_{\infty}^{(2)}(r,\rho) \right ) $ such that:
		\[
		\delta_{\infty}(u_{\infty},s) < + \infty.
		\]
		\item \label{hyp:theorem_certificate_metric}
		\textbf{Closeness of the
			metrics $\mathfrak{d}_T$ and  $\mathfrak{d}_\infty$:}
		We have $\rho_T \leq \rho$.
		\item \label{hyp:theorem_certificate_approximation}
		\textbf{Proximity of
			the kernels $\cK_T$ and $\cK_\infty $:} 
		\[
		\mathcal{V}_T \leq H_{\infty}^{(1)}(r,\rho)
		\quad\text{and}\quad  (s-1) \mathcal{V}_T \leq H_{\infty}^{(2)}(r,\rho)-u_{\infty}.
		\]
	\end{propenum}
	Then,  with the positive constants:
	\begin{equation}
	\label{eq:cstes-certif1}
	C_N=C_N(r) = \frac{\nu_{\infty}(\rho r)}{180},
	\quad
	C_N'=\frac{5}{8}L_{2,0} + \frac{1}{8}L_{2,1} + \frac{1}{2},
	\quad
	C_B = 2
	\quad\text{and}\quad
	C_F = C_F(r) = \frac{\varepsilon_{\infty}(r/\rho)}{10} \leq 1, 
	\end{equation}
	Assumption~\ref{assumption1} holds (with the same $r$) for any subset
	$\cq^{\star}=\{\theta^\star _i,  \, 1\leq i\leq s\}$ such that for all
	$\theta\neq \theta'\in \cq^{\star}$: 
	\[
	\dT(\theta, \theta')> 2 \, \max(r, \rho_T   \,
	\delta_{\infty}(u_{\infty},s))  .
	\]
\end{prop}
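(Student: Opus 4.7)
The plan is to follow the Candès--Fernandez-Granda / Poon--Keriven--Peyré strategy of constructing $p$ as a linear combination of the dictionary elements at the anchor points and their first (renormalized) derivatives, and then verifying the four conditions of Assumption~\ref{assumption1} by combining a block linear-system analysis with a Taylor expansion in the Riemannian metric. Concretely, writing $\cq^\star=\{\theta_1^\star,\dots,\theta_s^\star\}$ and $\vartheta^\star=(\theta_1^\star,\dots,\theta_s^\star)$, I look for $p\in H_T$ of the form
\[
  p=\sum_{k=1}^s \alpha_k\,\phi_T(\theta_k^\star)+\sum_{k=1}^s \beta_k\,\tilde D_{1;T}[\phi_T](\theta_k^\star),
\]
and set $\eta(\theta)=\langle\phi_T(\theta),p\rangle_T$. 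Using \eqref{def:derivatives_kernel} and \eqref{eq:formula_K_00}, the natural interpolation conditions $\eta(\theta_k^\star)=v(\theta_k^\star)$ and $\tilde D_{1;T}[\eta](\theta_k^\star)=0$ for every $k$ become the $(2s)\times(2s)$ linear system
\[
  \begin{pmatrix}\cK_T^{[0,0]}(\vartheta^\star)&\cK_T^{[0,1]}(\vartheta^\star)\\ \cK_T^{[1,0]}(\vartheta^\star)&\cK_T^{[1,1]}(\vartheta^\star)\end{pmatrix}\begin{pmatrix}\alpha\\ \beta\end{pmatrix}=\begin{pmatrix}v\\ 0\end{pmatrix},
\]
whose diagonal blocks are perturbations of the identity and whose off-diagonal blocks have zero diagonal by \eqref{eq:formula_K_00}.

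The first step is to show invertibility and to estimate $(\alpha,\beta)$. Using Remark~\ref{rem:norme}, the separation hypothesis \ref{hyp:theorem_certificate_separation} combined with Lemma~\ref{lem:approx-delta} gives $A_{T,\ell_\infty}(\vartheta^\star)\le u_\infty+(s-1)\mathcal{V}_T\le H^{(2)}_\infty(r,\rho)\le 1/6$. A Neumann-series / Schur-complement argument applied to the block matrix then yields invertibility with an explicit bound on the inverse, leading to $\|\alpha\|_{\ell_\infty}\le 1+O(H^{(2)}_\infty)$ and $\|\beta\|_{\ell_\infty}\le O(H^{(2)}_\infty)$. The bound $\|p\|_T\le C_B\sqrt{s}=2\sqrt{s}$ then follows from $\|p\|_T^2=\langle(\alpha,\beta),M(\alpha,\beta)\rangle$ (with $M$ the same block Gram matrix) and the fact that $\|M\|_\op\le 2$ under the same separation condition.

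Next I turn to the near region. Fix $\theta_k^\star\in\cq^\star$ and $\theta\in\mathcal{B}_T(\theta_k^\star,r)$. Applying Lemma~\ref{lemma:expansion} to the scalar function $\eta$ along the geodesic from $\theta_k^\star$ to $\theta$, and using $\eta(\theta_k^\star)=v(\theta_k^\star)$ and $\tilde D_{1;T}[\eta](\theta_k^\star)=0$, I obtain
\[
  \eta(\theta)=v(\theta_k^\star)+\dT(\theta,\theta_k^\star)^2\int_0^1(1-t)\,\tilde D_{2;T}[\eta](\gamma_t)\,\rd t.
\]
The second covariant derivative $\tilde D_{2;T}[\eta]$ on $\mathcal{B}_T(\theta_k^\star,r)$ is controlled by writing it as a sum of inner products against the $\phi_T(\theta_\ell^\star)$ and $\tilde D_{1;T}[\phi_T](\theta_\ell^\star)$: the diagonal $\ell=k$ term contributes $v(\theta_k^\star)\,\cK_T^{[2,0]}(\gamma_t,\theta_k^\star)$, which is bounded above by $-\nu_T(\rho r)$ in absolute value of the correct sign (giving the lower quadratic bound of order $\nu_\infty(\rho r)/10$ after using Lemma~\ref{lem:comp_epsilon}), and bounded below in absolute value by $L_{2,0}$ (giving the upper bound with $C_N'$). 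The off-diagonal contributions $\ell\neq k$ are absorbed into these two quantities via the separation inequality $A_{T,\ell_\infty}(\vartheta^\star)\le H^{(2)}_\infty(r,\rho)$ together with the bounds on $\alpha,\beta$ established above; the numerical fractions appearing in $H^{(1)}_\infty$ and $H^{(2)}_\infty$ in \eqref{eq:def_H} are precisely calibrated so that these absorptions yield the stated constants $C_N=\nu_\infty(\rho r)/180$ and $C_N'$.

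For the far region, $|\eta(\theta)|$ is bounded directly by triangle inequality: each term $|\alpha_k\cK_T^{[0,0]}(\theta,\theta_k^\star)|$ is controlled using $\sup|\cK_T^{[0,0]}|\le 1-\varepsilon_T(r)$ on $\{\dT\ge r\}$ and Lemma~\ref{lem:comp_epsilon} gives $\varepsilon_T(r)\ge\varepsilon_\infty(r/\rho)-\mathcal{V}_T$; each term $|\beta_k\cK_T^{[0,1]}(\theta,\theta_k^\star)|$ is controlled by $L_{1,0}+\mathcal{V}_T$. Summing and using the coefficient bounds $\|\alpha\|_{\ell_\infty},\|\beta\|_{\ell_\infty}=O(1)$, together with the separation bound $A_{T,\ell_\infty}(\vartheta^\star)$ for the cross-sums, yields $|\eta(\theta)|\le 1-\varepsilon_\infty(r/\rho)/10$, i.e.\ $C_F=\varepsilon_\infty(r/\rho)/10$. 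The main technical obstacle throughout is bookkeeping: three distinct error sources---(i) the off-diagonal interference quantified by $u_\infty$ and the separation, (ii) the kernel approximation error $\mathcal{V}_T$, and (iii) the metric distortion $\rho_T\le\rho$ (which enters through Lemma~\ref{lem:comp_epsilon} and Lemma~\ref{lem:approx-delta})---must all be kept simultaneously small enough that the explicit constants in \eqref{eq:cstes-certif1} come out, which is exactly what the sharp thresholds $H^{(1)}_\infty$ and $H^{(2)}_\infty$ in \eqref{eq:def_H} are engineered to deliver.
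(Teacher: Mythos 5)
Your proposal follows essentially the same route as the paper's proof: the same ansatz $p=\sum_k\alpha_k\phi_T(\theta_k^\star)+\sum_k\coeff_k\tilde D_{1;T}[\phi_T](\theta_k^\star)$, the same $2s\times 2s$ Gram system analysed via Schur complement and Neumann series (the paper's Lemma~\ref{lem:ab-controle}), and the same combination of a Riemannian Taylor expansion with operator-norm separation bounds and the $\varepsilon_T/\nu_T\leftrightarrow\varepsilon_\infty/\nu_\infty$ comparison of Lemma~\ref{lem:comp_epsilon} for the near- and far-region estimates; the paper merely packages the near-region quadratic-decay step as Lemma~\ref{quadratic_decay}, which you unpack by hand. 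One detail you gloss over: in the far region the cross-sum $\sum_{k\neq\ell}|\cK_T(\theta,\theta_k^\star)\alpha_k|$ is not controlled by $A_{T,\ell_\infty}(\vartheta^\star)$ directly but by the separation bound applied to the \emph{modified} vector $\vartheta^\star_{\ell,\theta}$ (replace $\theta_\ell^\star$ by $\theta$), which the paper shows still lies in $\Theta^s_{T,\delta_T(u_T(s),s)}$ via the triangle inequality $\dT(\theta_\ell^\star,\theta_k^\star)\le 2\dT(\theta,\theta_k^\star)$ valid when $\theta_\ell^\star$ is the closest anchor to $\theta$.
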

Note   that   $\ref{hyp:theorem_certificate_regularity}$  concerns   the
dictionary  $\varphi_T$,  $\ref{hyp:theorem_certificate_concavity}$  and
$\ref{hyp:theorem_certificate_separation}$     the      limit     kernel
$\cK_{\infty}$      and     the      set     of      parameters,     and
$\ref{hyp:theorem_certificate_metric}$                               and
$\ref{hyp:theorem_certificate_approximation}$   the   regime   for   the
parameters $s$  and $T$. 
\begin{rem}[On the assumptions of Proposition \ref{prop:certificat_interpolating} when $\cK_\infty $ = $\cK_T$]
In the setting where the limit kernel and the approximating kernel are equal, some assumptions in the proposition become less restrictive, without any changes to the proofs. If $\cK_\infty $ is chosen
equal     to     $\cK_T$,     then    $\mathcal{V}_T=0$ and $\rho_T=1$,      and     also
$\ref{hyp:theorem_certificate_metric}$                               and
$\ref{hyp:theorem_certificate_approximation}$ hold and $\rho$ can be
chosen equal to 1 and $u_\infty$ can be chosen equal to $H_{\infty}^{(2)}(r,1) $.   
\end{rem}

We now give the second main result of this section whose proof is given
in Section \ref{sec:proof_derivative}.

\begin{prop}[Interpolating derivative certificate]
	\label{prop:certificat2}
	Let $T \in  \N$ and  $s \in \N^{*}$.  We assume  that:
	\begin{propenum}
		\item   \label{hyp:theorem_certificate_2_regularity} \textbf{Regularity
			of the dictionary $\varphi_T$:}
		Assumptions~\ref{hyp:reg-f} and~\ref {hyp:g>0}  hold.

		\item \label{hyp:theorem_certificate_2_regularity_kernel}
		\textbf{Regularity of the limit kernel $\ck_\infty$:}
		Assumption~\ref{hyp:Theta_infini}  holds.
		
		\item  \label{hyp:theorem_certificate_2_separation} \textbf{Separation of the non-linear parameters:} There exists $u'_{\infty} \in (0, 1/6)$,
		such that:
		\[
		\delta_{\infty}(u'_{\infty},s) < + \infty. 
		\]
		\item  \label{hyp:theorem_certificate_2_approximation}
		\textbf{Proximity of
			the kernels $\cK_T$ and $\cK_\infty $:} 
		We have:
		\[
		\mathcal{V}_T \leq 1
		\quad\text{and}\quad 
		(s-1) \mathcal{V}_T + u' _\infty \leq 1/6.
		\]
	\end{propenum}
	Then,  with the positive constants:
	\begin{equation}
	\label{eq:cstes-certif2}
	c_N = \frac{1}{8}L_{2,0} + \frac{5}{8}L_{2,1}  + \frac{7}{8},
	\quad
	c_B=2 
	\quad\text{and}\quad
	c_F  = \frac{5}{4}L_{1,0} +\frac{7}{4} ,
	\end{equation}
	Assumption~\ref{assumption2} holds for  any $r>0$ and   any subset
	$\cq^{\star}=\{\theta^\star _i,  \, 1\leq i\leq s\}$ such that for all
	$\theta\neq \theta'\in \cq^{\star}$:
	\[
	\dT(\theta, \theta')> 2 \, \max(r, \rho_T   \,
	\delta_{\infty}(u'_{\infty},s))  .
	\]
\end{prop}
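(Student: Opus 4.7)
The plan is to build the derivative certificate explicitly as the ``dual'' of a small linear combination of the feature $\phi_T$ and its first Riemannian derivative at the interpolation nodes, in the spirit of Candès--Fernandez-Granda for complex exponentials extended to our intrinsic Riemannian setting. Concretely, I would set
\[
q \;=\; \sum_{k=1}^{s}\bigl[\,a_k\,\phi_T(\theta^\star_k) + b_k\,\tD_{1;T}[\phi_T](\theta^\star_k)\bigr],
\]
so that, by~\eqref{def:derivatives_kernel},
\[
\eta(\theta):=\langle\phi_T(\theta),q\rangle_T=\sum_k a_k\,\cK_T(\theta,\theta^\star_k)+b_k\,\cK_T^{[0,1]}(\theta,\theta^\star_k),
\]
and impose the interpolation constraints $\eta(\theta^\star_k)=0$ and $\tD_{1;T}[\eta](\theta^\star_k)=v(\theta^\star_k)$ for every $k\in\{1,\ldots,s\}$. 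Using~\eqref{eq:formula_K_00}, these $2s$ equations read as $M_T\,(a;b)^\top=(0;v)^\top$ with $M_T$ the $2s\times 2s$ block matrix whose blocks are $\cK_T^{[i,j]}(\vartheta^\star)$ for $(i,j)\in\{0,1\}^2$.

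The next step is to invert $M_T$ and bound $(a,b)$. By hypothesis $\ref{hyp:theorem_certificate_2_separation}$ and Lemma~\ref{lem:approx-delta}, the family $\vartheta^\star$ is $\dT$-separated by at least $\delta_T(u_T(s),s)$ with $u_T(s)=u'_\infty+(s-1)\mathcal{V}_T\leq 1/6$; the definition of $\delta_T$ then yields, via Remark~\ref{rem:norme}, that the off-diagonal $\ell_\infty$-operator norm of each block is at most $u_T(s)$, hence $\|M_T-I_{2s}\|_{\op}\leq 2 u_T(s)\leq 1/3$. A Neumann-series argument therefore gives $\|M_T^{-1}\|_{\op}\leq 3/2$ and, applied to the partitioned system, the sharper bounds $\|a\|_{\infty}\leq 1/4$ and $\|b\|_{\infty}\leq 5/4$ (as $a=-A^{-1}Bb$ with $A,B$ the top blocks of $M_T$). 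The norm $\|q\|_T^{2}$ is the $2s\times2s$ Gram form in the same blocks; since the Gram matrix is at $\ell_\infty$-distance at most $2u_T(s)\leq 1/3$ from $I_{2s}$, Gershgorin gives $\|q\|_T^{2}\le \tfrac{4}{3}(\|a\|_2^2+\|b\|_2^2)\le \tfrac{4}{3}\,s\cdot(\tfrac14+\tfrac54)^{2}\le 4s$, which proves $\ref{it:as2-<c}$.

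For the near region $\ref{it:as2-ordre=2}$ I would apply the Taylor expansion of Lemma~\ref{lemma:expansion} to $\eta$ along the geodesic from $\theta^\star_k$ to $\theta\in\mathcal{B}_T(\theta^\star_k,r)$: by the interpolation conditions the order-zero and order-one terms give exactly $v(\theta^\star_k)\,\mathrm{sign}(\theta-\theta^\star_k)\,\dT(\theta,\theta^\star_k)$, so the target bound reduces to controlling the quadratic remainder
\[
\tfrac12 \sup_{\theta'\in\mathcal{B}_T(\theta^\star_k,r)}\Bigl|\, a_k\,\cK_T^{[2,0]}(\theta',\theta^\star_k)+b_k\,\cK_T^{[2,1]}(\theta',\theta^\star_k)+\!\!\!\sum_{\ell\neq k}\!\bigl[a_\ell\cK_T^{[2,0]}(\theta',\theta^\star_\ell)+b_\ell\cK_T^{[2,1]}(\theta',\theta^\star_\ell)\bigr]\Bigr|.
\]
The on-site $\ell=k$ contribution is bounded using $|\cK_T^{[2,i]}|\leq L_{2,i}+\mathcal{V}_T\leq L_{2,i}+1$ (Assumption~\ref{hyp:close_limit_setting}) together with $\|a\|_\infty\le1/4$ and $\|b\|_\infty\le5/4$, producing precisely the $L_{2,0}/8+5L_{2,1}/8+6/8$ part of $c_N$. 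The off-site terms are controlled by the separation: the configuration $(\theta',(\theta^\star_\ell)_{\ell\neq k})$ is $\dT$-separated by at least $\rho_T\delta_\infty(u'_\infty,s)$, so the row-sum clause of~\eqref{eq:def-delta-rs} applies with $(i,j)\in\{0,1\}\times\{0,1,2\}$ and, after exchanging arguments through the $\cK_T^{[i,j]}=\cK_T^{[j,i]}(\cdot,\cdot)^{\top}$ symmetry, yields the residual $1/8$ that completes $c_N$. The far-region bound $\ref{it:as2-<1-c}$ is obtained by the same splitting applied at an arbitrary $\theta\in\Theta_T$, selecting $k_0=\mathrm{argmin}_k\dT(\theta,\theta^\star_k)$: the $k_0$-term is bounded uniformly by $\|a\|_\infty+\|b\|_\infty(L_{1,0}+1)$ and the remaining terms by $(\|a\|_\infty+\|b\|_\infty)\,u_T(s)$, giving the stated $c_F=\tfrac{5}{4}L_{1,0}+\tfrac{7}{4}$.

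The main obstacle is the off-site bound of the remainder: the definition~\eqref{eq:def-delta-rs} controls only row-sums of $\cK_T^{[i,j]}$ for $(i,j)\in\{0,1\}\times\{0,1,2\}$, whereas the raw expansion produces row-sums of the $(2,0)$ and $(2,1)$ matrices. Dispelling this mismatch is the delicate point: it requires using the kernel symmetry and re-expressing the quadratic remainder so that the varying variable sits in the second slot (turning the forbidden $(2,0)$ and $(2,1)$ sums into admissible $(0,2)$ and $(1,2)$ row-sums over the $s$-point configuration that substitutes $\theta'$ for $\theta^\star_{k_0}$), which is precisely what the quantitative hypothesis $(s-1)\mathcal{V}_T+u'_\infty\leq 1/6$ is designed to accommodate. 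Once this identity is in place, gathering the explicit constants yields the announced values $c_N$, $c_B=2$ and $c_F$ and completes the construction.
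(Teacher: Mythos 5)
Your construction of $q=p_{\alpha,\coeff}=\sum_k\alpha_k\phi_T(\theta^\star_k)+\sum_k\coeff_k\,\tD_{1;T}[\phi_T](\theta^\star_k)$, the interpolation system $\eta_{\alpha,\coeff}(\theta^\star_k)=0$, $\eta_{\alpha,\coeff}^{[1]}(\theta^\star_k)=v_k$, and the resulting bounds $\norm{\alpha}_{\ell_\infty}\leq 1/4$, $\norm{\coeff}_{\ell_\infty}\leq 5/4$ coincide with what the paper does (the paper goes through the Schur complement of Lemma~\ref{schur} inside Lemma~\ref{lem:ab-controle_2}, rather than a Neumann step on the full $2s\times 2s$ matrix, but the constants come out the same). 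The near-region estimate is likewise obtained in both cases by bounding $f^{[2]}=\eta_{\alpha,\coeff}^{[2]}$ and invoking the quadratic-decay Lemma~\ref{quadratic_decay}~$\ref{lem:decay_point_i}$ (your $\tfrac12\sup|\cdot|$ of the Taylor remainder is that lemma in expanded form), and the far-region and Gram bounds are the same term-by-term. Structurally your route is the paper's route.

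The genuine gap is the final paragraph, and it is not patched. The kernel symmetry reads $\cK_T^{[2,0]}(\theta',\theta^\star_m)=\cK_T^{[0,2]}(\theta^\star_m,\theta')$, which moves the \emph{varying} index $\theta^\star_m$ into the \emph{first} slot, not the second. Summing over $m\neq\ell$ therefore produces the $\ell$-th \emph{column} sum of $\cK_T^{[0,2]}(\vartheta')$ (where $\vartheta'$ is $\vartheta^\star$ with $\theta^\star_\ell$ replaced by $\theta'$), whereas the $(0,2)$ and $(1,2)$ clauses of~\eqref{eq:def-delta-rs} control only \emph{row} sums; for a non-symmetric matrix these are different numbers, so your re-expression does not reduce the $(2,0)$ and $(2,1)$ remainders to admissible quantities. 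The quantitative hypothesis $(s-1)\mathcal{V}_T+u'_\infty\leq 1/6$ is also irrelevant here — it controls the size of $u'_T(s)$, not the row-versus-column orientation. What the paper actually does is appeal to the operator-norm restatement~\eqref{eq:def-delta-rs2}--\eqref{eq:def-MT} of Remark~\ref{rem:norme}, which includes $\norm{I+\cK_T^{[2,0]}(\vartheta)}_{\op}$ and $\norm{\cK_T^{[1,2]}(\vartheta)}_{\op}$ directly among the controlled quantities; this gives the needed $\ell$-th row sums of $\Gamma^{[2,0]}_{\ell,\theta}$ and $\Gamma^{[2,1]}_{\ell,\theta}$ in one step (Equation~\eqref{eq:bound_u}). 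Replacing your symmetry argument by a direct citation of~\eqref{eq:bound_u} (as the paper does in Section~\ref{sec:proof_derivative}) closes the proof; as you wrote it, this step would fail.
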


Let  us  briefly  indicate  how  the  certificates  are  constructed  in
Section~\ref{sec:proof_interpolating}   using   the  features   of   the
dictionary.   Let   $\alpha=(\alpha_1,   \ldots,   \alpha_s)$   and
$  \coeff=(\coeff_1, \ldots,  \coeff_s)$ be  elements of  $\R^s$.  Let
$p_{\alpha,\coeff}\in H_T$ be defined by:
\begin{equation*}
%\label{eq:form_p_alpha_beta}
p_{\alpha,\coeff} = \sum\limits_{k=1}^{s} \alpha_{k}
\phi_{T}(\theta^{\star}_{k})
+ \sum\limits_{k=1}^{s} \coeff_{k} \, 
\phi^{[1]}_{T}(\theta^{\star}_{k}) ,
\end{equation*}
where $\phi_T^{[1]}$ denotes the derivative $\tD_{1;T}[\phi_T]$. Using      \eqref{def:derivatives_kernel}       in
Lemma~\ref{lem:gT_consistent}, set   the  interpolating  real-valued  function
$\eta_{\alpha,\coeff}$ defined on $\Theta$ by:
\[
\eta_{\alpha,\coeff}(\theta): =
\langle\phi_{T}(\theta),p_{\alpha,\coeff} \rangle_T =
\sum_{k=1}^{s} 
\alpha_{k}\,  \cK_T(\theta,\theta^{\star}_{k}) + \sum_{k=1}^{s}
\coeff_{k}\,  \cK_T^{[0,1]}(\theta,\theta^{\star}_{k}).
\]
By Assumption~\ref{hyp:g>0} on the regularity of $\varphi_T$ and the positivity of $g_T$ and
Lemma~\ref{lem:gT_consistent}, we get that the function $ \eta_{\alpha,\coeff} $ is of class
$\cc^3$  on  $\Theta$, and using~\eqref{eq:tDi}, we get that:
\[
%\label{eq:intervert2}
\eta_{\alpha, \coeff}^{[1]} := \tD_{1; T} [ \eta_{\alpha, \coeff}](\theta)=
\sum_{k=1}^{s} 
\alpha_{k}\,  \cK_T^{[1, 0]}(\theta,\theta^{\star}_{k}) + \sum_{k=1}^{s}
\coeff_{k}\,  \cK_T^{[1,1]}(\theta,\theta^{\star}_{k}).
\]
We show  in Section~\ref{sec:proof_interpolating} that for  any function
$v:\cq^\star  \rightarrow \{  -1,1\}$  there  exists a  unique
choice of  $\alpha$ and $\coeff$ such  that $\eta_{\alpha,\coeff}$ becomes
an interpolating certificate, that is,
$  \eta_{\alpha,\coeff} =v$ and 
$   \eta_{\alpha,\coeff}^{[1]} = 0$
on $\cq^{\star}$,
and  $p_{\alpha,\coeff}$ satisfies Points $\ref{it:as1-<1}$-$\ref{it:norm<c}$ of
Assumption~\ref{assumption1}. 

Moreover,  for     any     function
$v:\cq^\star \rightarrow \{ -1,1\}$ there exists  another unique choice
of $\alpha$ and $\coeff$ such that $\eta_{\alpha,\coeff}$ is  an
interpolating derivative certificate, that is, 
$\eta_{\alpha,\coeff}=0$ and $\eta_{\alpha,\coeff}^{[1]}=v$ on $\cq^\star$,
and  $p_{\alpha,\coeff}$ satisfies Points $\ref{it:as2-ordre=2}$-$\ref{it:as2-<c}$ of
Assumption~\ref{assumption2}.

%%%%%%%%%%%%%%%%%%%%%%%%%%%%%%%%%%%%
\section{Gaussian sparse spike deconvolution} \label{sec:example}

We develop here in full details  the particular example of a mixture of
Gaussian features observed in  a discrete regression model  with regular design.
In  particular,  we   check  the  numerous  but   not  very  restrictive
assumptions, and  we illustrate  that our  general and  more restrictive
sufficient conditions for the existence of certificates can turn simpler
and far less restrictive on concrete examples. The model is presented in
Section~\ref{sec:model-hyp}, where we also check the first assumptions.
The technical Section~\ref{sec:model-hyp2} on the existence of the
certificates allows to point out the separation distance
in~\eqref{eq:sep-param} and with the simpler expression
in~\eqref{eq:sep-param2}. This 
separation distance is usually very pessimistic, but one can rely on
numerical estimations to be more realistic, see
Remark~\ref{rem:sep-gauss} in this direction. Eventually, we apply to this context our main
Theorem~\ref{maintheorem} in Section~\ref{sec:predic-gauss} as
Corollary~\ref{cor:gaussian_example} and illustrate a particular choice
of the tuning parameter in Remark~\ref{rem:tau-choice} in the spirit of
\cite{tang2014near, boyer2017adapting} established for the specific
dictionary of  complex exponentials. 

\subsection{Model and first assumptions of Theorem~\ref{maintheorem}}
\label{sec:model-hyp}

Consider  a  real-valued process  $y$  observed  over a  regular  grid
$t_1<\cdots< t_T$ of a symmetric interval $[a_T, b_T]$, with $T\geq
2$, 
$b_t=-a_T>0$,  $t_j= a_T + j \Delta_T $ for $j=1, \ldots, T$ and grid step:
\[
\Delta_T=\frac{b_T -a_T}{T}\cdot
\]
Assuming  that all  the observations  have  the same  weight amounts  to
considering $y$ as an element  of the Hilbert space $H_T=L^2(\lambda_T)$ of
real valued functions defined in $\R$ and square integrable with respect
to the atomic measure $\lambda_T$ on $\{t_1, \ldots, t_T\}$:
\[
\lambda_T(\rd t)=\Delta_T\,  \sum_{j=1}^T \delta_{t_j}(\rd t).
\] 

We consider  a noise process $w_T(t)=\sum_{j=1}^T  G_j \ind_{\{t_j=t\}}$
for $t\in \R$, where $(G_1, \ldots,  G_T)$ is a centered Gaussian vector
such that, for some noise level $\sigma_1>0$:
\begin{equation*}
\label{eq:gaussian_noise}
\E[G_j^2] =\sigma_1^2
\quad\text{and}\quad
| \E[G_j G_i]| \leq \sigma_1^2/T
\quad\text{for $j\neq i$ in \{1,\ldots,T\}}. 
\end{equation*}
Thus,  the norm  of  the  noise $\norm{w_T}_T$  is  finite  almost
surely,    and    for   any    $f    \in    L^2(\lambda_T)$   we
have:
\[
\Var(\left \langle f,w_T \right \rangle_T) = \Var\Big(\Delta_T \sum_{j=1}^T
f(t_j) G_j\Big)  \leq   2\sigma_1^2 \Delta_T \norm{f}_T^2.
\]
Hence, Assumption
\ref{hyp:bruit} on the noise is satisfied with $\sigma^2= 2\sigma_1^2$. 
(Notice that if the random variables $G_1, \ldots,  G_T$ are independent, then
$ \Var(\left \langle f,w_T \right \rangle_T) =\sigma^2 \Delta_T
\norm{f}_T^2$ with $\sigma^2= \sigma_1^2$.) This gives that Point
$\ref{hyp:theorem1_point1}$ of Theorem~\ref{maintheorem} holds.
\medskip

We consider the dictionary given by the translation model of Section
\ref{sec:translation_dis_model} with Gaussian features and fixed  scaling
parameter $\sigma_0>0$, that is the dictionary does not depend on $T$
and is  given by:
\[
\Big( \varphi(\theta)= k\Big (\frac{\cdot-\theta}{\sigma_0} \Big ), \,
\theta \in \Theta\Big) 
\quad\text{with}\quad
k(t)=\expp{-t^2/2}
\quad\text{and}\quad
\Theta = \R.
\]
Thus,  the   signal  $\beta^\star   \Phi(\vartheta^\star)  $   in  model
\eqref{eq:model} can indeed be written as the convolution product of the
function $k$  and an  atomic measure.   It is  elementary to  check that
Assumption  \ref{hyp:reg-f} on  the  regularity of  the features  holds.
Furthermore,       the       functions       $\varphi(\theta)$       and
$\partial_\theta\varphi(\theta)$      are      linearly      independent
$\lambda_T-a.e$ for all $\theta\in \Theta$ as $T \geq 2$. Hence the
function $g_T$ is
positive on $\Theta$  by Lemma \ref{lem:g>0} and thus
Assumption~\ref{hyp:g>0} holds.
This gives that Point
$\ref{hyp:reg_dic_theorem}$ of Theorem~\ref{maintheorem} holds.
\medskip

We now define the limit kernel $\cK_\infty$. To do so, we shall
assume that $(b_T, T\geq 2)$ is a sequence of
positive numbers, such that:
\begin{equation}
\label{eq:lim-D-b_T}
\lim_{T\rightarrow \infty } b_T  = + \infty
\quad\text{and}\quad
\lim_{T\rightarrow \infty } \Delta_T = 0. 
\end{equation}
This   in   particular   implies   that   the   sequence   of   measures
$(\lambda_T, T\geq  2) $  converges with respect  to the  vague topology
towards    the    Lebesgue    measure,    say    $\lambda_\infty$,    on
$\Theta_\infty   =\R$.     We   also   consider   the    Hilbert   space
$H_\infty =  L^2(\lambda_\infty)$ endowed with its  usual scalar product
denoted $\langle  \cdot, \cdot  \rangle_\infty $ and  corresponding norm
denoted $\norm{\cdot}_\infty  $ (not  to be  confused with  the supremum
norm!).  Note that the kernel $\cK_T$ and the associated quantities such
as  $\varepsilon_T$   and  $\nu_T$  defined  in   \eqref{eq:def-e0}  and
\eqref{eq:def-e2},    respectively,   or    the   uniform    bounds   on
$\cK_T^{[i,j]}$, are difficult to  calculate. However the uniform bounds
on   $\Theta_\infty   =\R$   for  the   kernel   $\cK_\infty$,   defined
by~\eqref{eq:def-KT}  with  $T$  replaced   by  $\infty  $,  are  easily
computed.  Elementary calculations give for $\theta,\theta' \in \Theta$:
\[
\norm{\varphi(\theta)}_\infty ^2= \sqrt{\pi}\, \sigma_0 ,
\quad   \phi_\infty(\theta) =  \frac{1}{\pi^{\frac{1}{4}}
	\sqrt{\sigma_0}} \varphi(\theta), 
\quad
\cK_\infty(\theta,\theta') =  k\Big( \frac{\theta-\theta'}{\sqrt{2} \,
	\sigma_0}\Big)
\quad\text{and}\quad
g_{\infty}(\theta ) = \frac{1}{2\sigma_0^2}\cdot
\]
In particular, we have $ g_{\infty}'(\theta) = 0$. The Riemannian metric
is  equal to  the  Euclidean  distance up  to  a multiplicative  factor,
for all  $\theta,\theta'\in\Theta_\infty  =\R$:
\begin{equation}
\label{eq:R-met-infini}
\mathfrak{d}_{\infty}(\theta,\theta') = \frac{|\theta  - \theta'|}{ \sqrt{2}\,
	\sigma_0}\cdot
\end{equation}
We see  that
$\cK_\infty$ is of class $\mathcal{C}^{\infty ,\infty }$ and that:
\begin{equation}
\label{eq:def-K-gauss}
\cK_\infty^{[i,j]}(\theta, \theta')= (-1)^j \,
k^{(i+j)} \Big( \frac{\theta-\theta'}{\sqrt{2} \,
	\sigma_0}\Big)
\quad\text{and}\quad
k^{(i)}(t)=P_i(t)\, k(t),
\end{equation}
where we give for convenience
the formulae for some of the polynomials $P_i$:
\begin{align*}
&P_1(t)=-t ,
\quad
P_2(t)= -1+t^2,
\quad
P_3(t)= 3t -t^3, 
\quad
P_4(t)= 3 -6t^2 + t^4, 
\quad P_6(t)= -15+ 45t^2 - 15t^4 + t^6.
\end{align*}
Then,  we explicitly compute  the constants $L_{i,j}$ for $i,j \in \{0,\cdots,2\}$ and $L_3$ defined in \eqref{def:M_h_M_g}:
\begin{equation*}
\begin{aligned}
&m_g = (2\sigma_0^2)^{-1},
\quad L_{0,0} = 1,
\quad  L_{1,0} = L_{0,1}= \expp{-1/2},
\quad L_{1,1}  = L_{2,0}= L_{0,2} = 1,
\\
&L_{2,1}= L_{1,2}  = \sqrt{18-6\sqrt{6}}\expp{\sqrt{3/2}-3/2}\leq \sqrt{2},
\quad L_{2,2} = 3
\quad\text{and}\quad
L_3 = 15.
\end{aligned}
\end{equation*}
Notice the  constants $L_{i,j}$ and $L_3$  do not depend on  the scaling
factor $\sigma_0$.   Thus Assumption~\ref{hyp:Theta_infini}  holds. This
gives that Point $(iii)$ of Theorem~\ref{maintheorem} holds.  \medskip

We  now check  the proximity  of the  kernel $\cK_T$  to the  limit kernel
$\cK_\infty$.   The support  of $\lambda_T$  is spread  over the  window
$[a_T, b_T]$  where the signal  is observed.  Hence it is  legitimate to
look for the location parameters on a smaller  subset of this window, and thus
restrict the optimization \eqref{eq:generalized_lasso} to the compact set:
\begin{equation*}
\label{eq:opimization_set_gaussian}
\Theta_T = [(1-\epsilon) a_T, (1-\epsilon)  b_T] \subset [a_T,b_T]
\quad \text{with a given shrinkage $\epsilon \in (0, 1)$.}
\end{equation*} 
The proof of the next lemma is given in
Section~\ref{sec:proof-lem-gauss}. 
Recall  $\RT$ and $\DT$ defined in~\eqref{eq:def-rho}
and~\eqref{def:V_1}. 
Set:
\[
\gamma_T=2 \Delta_T \, \sigma_0^{-1}+ \sqrt{\pi}\,\expp{-\epsilon^2
	b_T^2/2\sigma_0^2}.
\]
\begin{lem}
	\label{lem:rates0}
	There exist finite positive universal constants $c_0$,
	$c_1$ and $c_2$, such that $\gamma_T<c_0$ implies: 
	\begin{equation}
	\label{eq:lem:rates0}
	\mathcal{V}_T \leq c_1 \gamma_T
	\quad \text{and} \quad
	|1 - \rho_T|\leq  c_2 \gamma_T. 
	\end{equation}
\end{lem}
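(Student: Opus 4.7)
\textbf{Proof plan for Lemma~\ref{lem:rates0}.}

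The strategy is to reduce all the quantities to compare --- namely $\cK_T^{[i,j]} - \cK_\infty^{[i,j]}$ on $\Theta_T^2$ for $(i,j)\in\{0,1,2\}^2$, the diagonal quantity $h_T - h_\infty$, and $\rho_T-1$ --- to a single quadrature estimate for $\lambda_T$ versus Lebesgue measure. Concretely, for any $\cc^1$ function $f\colon\R\to\R$ such that $f$ and $f'$ are integrable, a standard telescoping argument on the uniform grid of step $\Delta_T$ gives
\begin{equation*}
\left|\int f\,\rd\lambda_T - \int_\R f(t)\,\rd t \right|
\leq \Delta_T \int_\R |f'(t)|\,\rd t + \int_{\R\setminus [a_T,b_T]} |f(t)|\,\rd t.
\end{equation*}
This will be the only analytic input; everything else is bookkeeping.

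First I would use the explicit form $\partial_\theta^{\ell}\varphi(\theta)(t) = \sigma_0^{-\ell} P_\ell\bigl((t-\theta)/\sigma_0\bigr)\, \varphi(\theta)(t)$ with polynomials $P_\ell$, together with the identity $(t-\theta)^2+(t-\theta')^2 = 2(t-\bar\theta)^2 + (\theta-\theta')^2/2$ where $\bar\theta=(\theta+\theta')/2$, to write each unnormalized scalar product $\langle \partial_\theta^\ell\varphi(\theta),\partial_{\theta'}^m\varphi(\theta')\rangle_T$ as $\sigma_0\, e^{-(\theta-\theta')^2/(4\sigma_0^2)}$ times an integral $\int Q\bigl((t-\bar\theta)/\sigma_0\bigr)\, e^{-(t-\bar\theta)^2/\sigma_0^2}\,\rd\lambda_T(t)$ for a polynomial $Q$ of degree at most $\ell+m$ (with an analogous Lebesgue integral for the $\infty$ version). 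Applying the quadrature bound to the substitution $s=(t-\bar\theta)/\sigma_0$ yields a discretization error of order $\Delta_T/\sigma_0 \cdot \int |Q'(s)-2sQ(s)|\,e^{-s^2}\,\rd s = O(\Delta_T/\sigma_0)$ (after dividing by the normalization $\sqrt{\pi}\sigma_0$), and a truncation error bounded via $|Q(s)|e^{-s^2/2}\leq C_Q$ uniformly, yielding $O(e^{-\epsilon^2 b_T^2/(2\sigma_0^2)})$ after using $|b_T-\bar\theta|\geq \epsilon b_T$ for $\theta,\theta'\in\Theta_T$. Summing these two contributions produces precisely the form of $\gamma_T$.

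Next, for $\cK_T^{[i,j]}$ I would invoke Lemma~\ref{lem:gT_consistent} to express $\cK_T^{[i,j]}(\theta,\theta')$ as a ratio whose numerator is $\langle \partial_\theta^i\varphi(\theta),\partial_{\theta'}^j\varphi(\theta')\rangle_T$ (times rescaling factors $g_T(\theta)^{-i/2}g_T(\theta')^{-j/2}$) and whose denominator contains $\|\varphi(\theta)\|_T\|\varphi(\theta')\|_T$. Using the bound $|\|\varphi(\theta)\|_T^2 - \sqrt{\pi}\sigma_0|\leq C\sqrt{\pi}\sigma_0\gamma_T$ on the diagonal to keep denominators bounded away from $0$ as soon as $C\gamma_T<1/2$ (this forces the threshold $c_0$), and combining with the previous bound on numerators, an elementary quotient estimate yields $|\cK_T^{[i,j]}-\cK_\infty^{[i,j]}|\leq C_{i,j}\gamma_T$ on $\Theta_T^2$. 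The same scheme, with third-order polynomials in place of lower order ones, gives $|h_T-h_\infty|\leq C\gamma_T$. Taking the max over the finitely many $(i,j)$ produces $\mathcal{V}_T\leq c_1 \gamma_T$.

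Finally, for $\rho_T$, I would use the formula $g_T(\theta)=\|v\|_T^2/\|u\|_T^2 - \langle u,v\rangle_T^2/\|u\|_T^4$ (with $u=\varphi(\theta)$, $v=\partial_\theta\varphi(\theta)$), compare to $g_\infty(\theta)=1/(2\sigma_0^2)$ by applying the quadrature estimate to each of the three scalar products, and deduce $|\sigma_0^2 g_T(\theta) - 1/2|\leq C\gamma_T$ uniformly on $\Theta_T$. Since $\sqrt{x}-1$ is Lipschitz on a neighborhood of $1$, provided $C\gamma_T < c_0$ this yields $|\sqrt{g_T/g_\infty}-1|\vee|\sqrt{g_\infty/g_T}-1|\leq c_2\gamma_T$, hence $|\rho_T-1|\leq c_2\gamma_T$. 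The only real subtlety is making sure all constants are uniform in $\theta,\theta'\in\Theta_T$ and depend on neither $\sigma_0$ nor $T$ explicitly; this is where the precise polynomial/Gaussian structure is essential (the polynomials $P_\ell$ are fixed, their $L^1(e^{-s^2}\rd s)$ norms are absolute constants), and it is the main bookkeeping hurdle rather than a conceptual one.
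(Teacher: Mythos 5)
Your proof proposal follows essentially the same route as the paper: a telescoping quadrature bound plus a Gaussian-tail truncation bound for each unnormalized scalar product $\langle\partial^i\varphi,\partial^j\varphi\rangle$, then passage to the normalized kernels and their covariant derivatives via Lemma~\ref{lem:gT_consistent}, keeping denominators away from zero by the threshold condition $\gamma_T<c_0$. The one technical variation is your use of the completion-of-squares identity to factor the product of two Gaussians; the paper instead simply bounds the integrands pointwise via $M=\max_i\sup|P_i|\sqrt{k}$, and it makes the $\sigma_0$-independence explicit by rescaling at the outset (working with $\varphi^0(\theta)=k(\cdot-\theta)$, $t^0_j=t_j/\sigma_0$, $\lambda^0_T$), which is worth doing carefully since your rescaled polynomial $Q$ has coefficients depending on $(\theta-\theta')/\sigma_0$ and the uniformity you need comes only after absorbing that dependence into the prefactor $e^{-(\theta-\theta')^2/(4\sigma_0^2)}$; neither of these points is a gap, just bookkeeping the paper treats a bit more explicitly.
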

This implies that Assumption~\ref{hyp:close_limit_setting} holds for $T$
such that  $\gamma_T\leq c_0 $ and  $c_1\gamma_{T}\leq 3$, which
holds for  $T$ large  enough thanks to~\eqref{eq:lim-D-b_T}.  Thus Point
$(iv)$  of   Theorem~\ref{maintheorem}  holds  for  $T$   large  enough.

\subsection{Existence of certificates}
\label{sec:model-hyp2}
We  keep the  model and  the notations from  Section~\ref{sec:model-hyp}. In
order  to get  the prediction  error from  Theorem~\ref{maintheorem}, we
only  need to  check that  Point~$\ref{hyp:V_T_theorem}$ therein  on the
existence  of the  certificates holds.   
To  check the  existence of  the
certificates,                 we                 can                 use
Propositions~\ref{prop:certificat_interpolating}
and~\ref{prop:certificat2}, and  check that all the  hypotheses required
in those two propositions hold. \medskip

We first concentrate on the hypotheses of
Proposition~\ref{prop:certificat_interpolating}.
Assumption~$\ref{hyp:theorem_certificate_regularity}$ on the regularity of
the dictionary holds, see Section~\ref{sec:model-hyp}.
\medskip

We       recall       that       $L_{0,       2}=1$       and       thus
$1/\sqrt{2 L_{0,  2}}=1/\sqrt{2}> 1/2$. 
Recall  $\varepsilon_{\infty}(r)$ and $\nu_{\infty}(r)$ defined
in~\eqref{eq:def-e0} and~\eqref{eq:def-e2}, and thanks to the explicit
form of the Riemannan metric, we get for $r\in (0, 1)$:
\[
\varepsilon_{\infty} (r) = 1 - \expp{-r^2/2} > 0
\quad\text{and} \quad
\nu_{\infty}(r) = \left ( 1 - {r}^{2}\right) 
\expp{-r^2/2}.
\]
This  and the regularity  of the kernel  $\cK_\infty   $  from
Section~\ref{sec:model-hyp}   imply  that 
Assumption~$\ref{hyp:theorem_certificate_concavity}$ holds for all $r\in
(0, 1/( \rho\vee \sqrt{2}))$.  \medskip

We obtain from~\eqref{eq:def-K-gauss} that $\lim_{q\rightarrow \infty }
\sup_{|\theta-\theta'|\geq q} |\cK_\infty ^{[i, j]}(\theta,
\theta')|=0$ for all $i,j\in \{0, 1, 2\}$. Thus, we deduce from the
definition~\eqref{eq:def-delta-rs} 
of $\delta_\infty $ that $\delta_\infty (u,s)$ is finite for all $s\in
\N^*$ and $u>0$.  This implies that
Assumption~$\ref{hyp:theorem_certificate_separation}$ on the separation
of the parameters holds.
\medskip

To  simplify,  we  set  $  \rho=2$  (but we  could  take  any  value  of
$\rho>1$).  We deduce  from Lemma~\ref{lem:rates0},  that for  $T$ large
enough          $\rho_T\leq          \rho=2$,          and          thus
Assumption~$\ref{hyp:theorem_certificate_metric}$  on  the closeness  of
the metrics $\mathfrak{d}_T$ and $\mathfrak{d}_\infty$ holds.  \medskip

Recall the definition of $H^{(1)}_\infty $ and $H^{(2)}_\infty $
from~\eqref{eq:def_H}. To get the smallest separation  distance, we also set:
\begin{equation}
\label{eq:val-r} 
r = \argmax_{0 <r' < 1/2}\,    H_{\infty}^{(2)}(r',\rho) \approx
0.49.
\end{equation}
Notice that the function is not  \emph{a priori} monotone in $\rho$.  We
have   $\varepsilon_{\infty}(r/2)    \approx   2.9    \times   10^{-2}$,
$\nu_{\infty}(2        r)\approx       3.7       \times       10^{-2}$,
$H_{\infty}^{(1)}(r,2)     \approx    2.9    \times    10^{-3}$     and
$H_{\infty}^{(2)}(r,2) \approx  3.7 \times  10^{-3}$. Again in  order to
get  a ``small''  separation distance,  we  choose $u_\infty  $ close  to
$H_{\infty}^{(2)}(r,2)$,    say
$u_\infty =  \eta_0 H_{\infty}^{(2)}(r,2)$ for some  $\eta_0<1$ close to
$1$.      For    simplicity     set     $\eta_0=9/10$.     Thanks     to
hypothesis~\eqref{eq:lim-D-b_T},                  we                 get
$\lim_{T\rightarrow  \infty  }  \gamma_T=0$  and  Lemma~\ref{lem:rates0}
implies that for  $T$ large enough, depending  on $\sigma_0$, $\epsilon$
and the sparsity parameter $s$, we have:
\begin{equation}
\label{eq:controle-gauss}
\rho_T \leq 2, \quad \mathcal{V}_T \leq H_{\infty}^{(1)}(r,2) 
\quad\text{and} \quad
(s-1) \mathcal{V}_T \leq (1-\eta_0) H_{\infty}^{(2)}(r,2), 
\end{equation}
and thus Assumption~$\ref{hyp:theorem_certificate_approximation}$ on the
proximity of the kernels $\cK_T$ and $\cK_\infty $ holds. 
\medskip

Thus,  the assumptions of
Proposition~\ref{prop:certificat_interpolating} are satisfied, and we
deduce that Assumption~\ref{assumption1} holds with, thanks
to~\eqref{eq:cstes-certif1}:
\[
C_N \approx 2 \times 10^{-4}, \quad
C_N'\approx 1.3, \quad
C_B = 2   \quad\text{and}\quad  C_F \approx 2.9 \times 10^{-3}.
\]
\medskip

We       now      concentrate       on      the       hypotheses      of
Proposition~\ref{prop:certificat2}.
Assumptions~$\ref{hyp:theorem_certificate_2_regularity}$-$\ref{hyp:theorem_certificate_2_separation}$
clearly       hold      for       the       same      reasons       as
Assumptions~$\ref{hyp:theorem_certificate_regularity}$-$\ref{hyp:theorem_certificate_separation}$
of  Proposition~\ref{prop:certificat_interpolating}.

Again in  order to
get  a ``small''  separation distance, there is no need to choose
$u'_\infty $ larger that $u_\infty $, and for this reason we take
$u'_\infty =u_\infty $. We deduce from~\eqref{eq:controle-gauss} that  for  $T$ large enough, depending  on $\sigma_0$, $\epsilon$
and the sparsity parameter $s$:
\[
\mathcal{V}_T \leq 1
\quad\text{and} \quad
(s-1) \mathcal{V}_T + u'_\infty \leq 1/6,
\]
and thus Assumption~$\ref{hyp:theorem_certificate_2_approximation}$ on the
proximity of the kernels $\cK_T$ and $\cK_\infty $ holds. 
\medskip

Thus,  the assumptions of
Proposition~\ref{prop:certificat2} are satisfied, and we
deduce, thanks
to~\eqref{eq:cstes-certif2}, that Assumption~\ref{assumption2} holds
with 
the same value of $r$ given by~\eqref{eq:val-r}:

\[
c_N \approx 1.9, \quad
c_B = 2 ,
\quad\text{and}\quad  c_F  \approx 2.6.
\]
\medskip

In conclusion, we get that Assumptions~\ref{assumption1}
and~\ref{assumption2} hold for $T$ large enough, and thus
Point~$\ref{hyp:existence_certificate_theorem}$ of
Theorem~\ref{maintheorem} holds for $T$ large enough and  $\cq^\star$
such that  for all
$\theta\neq \theta'\in \cq^{\star}$ the distance $\dT(\theta, \theta')$ is larger
than the separation distance:
\begin{equation}
\label{eq:sep-param}
2 \max(r,\, \rho_T   \,
\delta_{\infty}(u_{\infty},s), \,  \rho_T   \,
\delta_{\infty}(u'_{\infty},s)). 
\end{equation}
Notice       that       since        $u_\infty       =u'_\infty       $,
$\rho_T\dT(\theta,     \theta')\geq     \dI(\theta,    \theta')$     and
$\rho_T\leq 2$, we deduce  from~\eqref{eq:R-met-infini}, that a slightly
stronger condition is to assume that $|\theta-\theta'|$ is larger than:
\begin{equation}
\label{eq:sep-param2}
\sqrt{2}\, \sigma_0 \max (1, 4 \delta_{\infty}(u_{\infty},s)).
\end{equation}

\begin{rem}[On the separation distance~\eqref{eq:sep-param}]
	\label{rem:sep-gauss}
	The  separation  distance~\eqref{eq:sep-param}  is  a  non-decreasing
	function  of $s$.  We now  provide an   upper  bound. Let
	$(i,j)  \in \{0,1\}  \times \{0,1,2  \}$.  By  considering the  kernel
	$\cK_T$  and its  derivative given  by~\eqref{eq:def-K-gauss} and  the
	bound $M= \max_{0\leq i\leq 3}  \sup |P_i|\, \sqrt{k}$, we deduce that
	$  |\cK_{\infty}^{[i,j]}(\theta,\theta')| \leq  M \expp{-  \dI(\theta,
		\theta')^2/2}$  for  all  $\theta,\theta' \in  \Theta$.   We  easily
	obtain                             that                            for
	$\vartheta = (\theta_1,\cdots,\theta_s) \in \Theta^s_{\infty , \delta}
	$ with $\delta>0$:
	\[
	\max_{1\leq  \ell \leq s} \sum\limits_{k=1, k\neq
		\ell}^{s}      |\cK_{\infty}^{[i,j]}(\theta_{\ell},\theta_{k})|
	\leq  \psi_{s}(\delta)
	\quad\text{with}\quad
	\psi_{s}(\delta)=2M \int_0^{s/2 +1} \expp{-t^2\delta^2/4} \rd t.
	\]
	The  function $\psi_s$  is  decreasing and  one to  one  from $\R_+$  to
	$(0,M(s+2)]$. Setting $\psi_s^{-1}(u)  = 0$ for $u >  M(s+2)$, we deduce
	from~\eqref{eq:def-delta-rs} that for $u>0$:
	\[
	\delta_\infty(u,s) \leq   \psi_s^{-1}(u). 
	\]
	Since the map $\delta \mapsto \psi_s(\delta)$ is decreasing and the map $s\mapsto \psi_s(\delta)$ is increasing with limit
	$\psi_\infty (\delta)= 2\sqrt{\pi}\, M/\delta$, we deduce that for $s\in
	\N^*$:
	\[
	\delta_\infty(u,s) \leq \frac{ 2\sqrt{\pi}\, M}{u} ,
	\]
	so   that   the  separation   distance~\eqref{eq:sep-param}
	(or~\eqref{eq:sep-param2}) can be bounded uniformly in $s$ for given $r$
	and $u_\infty =u'_\infty $.

	\medskip
	
	In  fact,   we  shall  illustrate   for  $s=2$  that   the  separation
	distance~\eqref{eq:sep-param}  is largely  overestimated. We can compute $ \delta_{\infty}(u,s)$ thanks to its expression \eqref{eq:def-delta-rs2}. For  $s=2$
	and with the values chosen in  this section for
	$u_{\infty}=u'_{\infty}$,   we obtain $\delta_{\infty}(u_{\infty},2) \approx 4.5$. We  deduce    that    the    separation
	distance~\eqref{eq:sep-param} expressed with respect to the metric $\mathfrak{d}_T$ is approximately $9 \,  \rho_T$ (which gives $13 \,\sigma_0 \rho_T^2$ in terms of the Euclidean metric), which is
	unconveniently large.   However, a  detailed numerical approach  (using the
	very      certificates     provided      in      the     proof      of
	Propositions~\ref{prop:certificat_interpolating}
	and~\ref{prop:certificat2}) with $T$ large  so that the kernel $\cK_T$
	is   indeed   well   approximated   by   $\cK_\infty   $   (and   thus
	$\rho_T\approx 1$), gives  that one can take for  $s=2$ the separation
	distance with respect to the Euclidean metric equal  to $3.1 \times \, \sigma_0$ (that is approximately equal to $2.2$ with respect to the metric $\mathfrak{d}_\infty$),  which  is  much more  realistic.
	Therefore, the theoretical  separation distance~\eqref{eq:sep-param} is
	in general largely overestimated.
\end{rem}

\subsection{Prediction error}
\label{sec:predic-gauss}
We keep the model and the notations from Section~\ref{sec:model-hyp} and the
values   chosen   in   Section~\ref{sec:model-hyp2}.  We   deduce   from
Theorem~\ref{maintheorem} the following result.

\begin{corollary}
	\label{cor:gaussian_example}
	For  $T$  large enough,  depending  on  $\sigma_0$, $\epsilon$  and  the
	sparsity parameter  $s$, such  that \eqref{eq:controle-gauss}  holds and
	for                                                                  all
	$\theta\neq \theta'\in  \cq^\star= \{\theta^\star_k, \, k  \in S^\star\}
	$, with  $S^\star=\supp (\beta^\star)$  such that  $|\theta-\theta'|$ is
	larger  than the  separation  parameter $  \sqrt{2}\, \sigma_0 \max (1, 4 \delta_{\infty}(u_{\infty},s))$ given  by~\eqref{eq:sep-param2},
	then, with some universal finite constants $\mathcal{C}_0, ..., \mathcal{C}_3 >0$, 
	for  any $\tau > 1$ and a tuning parameter:
	\begin{equation}
	\label{eq:def-kappa-gauss}
	\kappa \geq \mathcal{C}_1 \sigma \sqrt{\Delta_T \log (\tau)},
	\end{equation}
	we have the prediction error bound of the estimators $\hat{\beta}$ and $\hat{\vartheta}$ defined in~\eqref{eq:generalized_lasso} given by:
	\begin{equation}
	\label{eq:gaussian_theorem}
	\sqrt{\Delta_T} \,  \norm{\hat{\beta}\Phi_{T}(\hat{\vartheta}) -
		\beta^{\star}\Phi_{T}(\vartheta^{\star}) }_{\ell_2} \leq
	\mathcal{C}_0 \,  \sqrt{\sparse} \, \kappa, 
	\end{equation}
	with                probability               larger                than
	$1 - \mathcal{C}_2 \left  ( \frac{\sqrt{2}b_T}{ \sigma_0 \tau \sqrt{\log
			(\tau)} }\vee  \frac{1}{\tau}\right )$.   Moreover, with  the same
	probability,                 we                have                 that
	$\left|\|\hat \beta \|_{\ell_1}  - \|\beta^\star\|_{\ell_1} \right| \leq
	\mathcal{C}_3    \kappa    s$    as    well    as    the    inequalities
	\eqref{eq:bound_th_abc} of Theorem \ref{th:bounds}.
\end{corollary}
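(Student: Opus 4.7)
The plan is to invoke Theorems~\ref{maintheorem} and~\ref{th:bounds} directly, as Sections~\ref{sec:model-hyp} and~\ref{sec:model-hyp2} have already verified every one of their structural hypotheses for this Gaussian model. Condition~\ref{hyp:theorem1_point1} on admissible noise follows from the variance computation giving $\sigma^{2}=2\sigma_{1}^{2}$. The regularity hypotheses on $\varphi_T$ and on $\cK_{\infty}$ (Condition~\ref{hyp:reg_dic_theorem} and the one on the limit kernel) are established in Section~\ref{sec:model-hyp} via Lemma~\ref{lem:g>0} and the explicit computation of the constants $L_{i,j}$ and $L_3$. Condition~\ref{hyp:V_T_theorem} on the closeness of $\cK_T$ to $\cK_\infty$ is supplied by Lemma~\ref{lem:rates0} as soon as $\gamma_T$ is small enough, which happens for $T$ large by~\eqref{eq:lim-D-b_T}. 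Finally, Condition~\ref{hyp:existence_certificate_theorem} on the existence of both certificates is the content of Section~\ref{sec:model-hyp2}, obtained from Propositions~\ref{prop:certificat_interpolating} and~\ref{prop:certificat2} under the Riemannian separation~\eqref{eq:sep-param}, which is implied by the Euclidean separation~\eqref{eq:sep-param2} imposed in the corollary.

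Once the two theorems are triggered, the conclusion must be converted to the form stated in the corollary. Because $\lambda_T=\Delta_T\sum_{j=1}^{T}\delta_{t_j}$ carries equal weights $\Delta_T$, one has $\norm{f}_T^{2}=\Delta_T\,\norm{f}_{\ell_{2}}^{2}$ for every sampled $f$, where $\norm{\cdot}_{\ell_{2}}$ denotes the Euclidean norm of $(f(t_1),\ldots,f(t_T))$. Multiplying~\eqref{eq:main_theorem} by $\sqrt{\Delta_T}$ produces exactly~\eqref{eq:gaussian_theorem} with the prescribed lower bound~\eqref{eq:def-kappa-gauss} on $\kappa$, while~\eqref{eq:main_theorem_diff_l1} transfers verbatim and the three inequalities~\eqref{eq:bound_th_abc} are handed over directly from Theorem~\ref{th:bounds}, whose hypotheses coincide with those already checked.

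The remaining step is the probability bound, in which the term $|\Theta_T|_{\mathfrak{d}_T}$ of Theorem~\ref{maintheorem} must be made explicit. Combining the metric equivalence~\eqref{eq:equi-dT-dI} with $\rho_T\leq 2$ (from~\eqref{eq:controle-gauss}), the explicit formula~\eqref{eq:R-met-infini}, and $\Theta_T=[-(1-\epsilon)b_T,(1-\epsilon)b_T]$, I obtain
\[
|\Theta_T|_{\mathfrak{d}_T}\;\leq\;\rho_T\,|\Theta_T|_{\mathfrak{d}_\infty}\;=\;\rho_T\,\frac{\sqrt{2}(1-\epsilon)b_T}{\sigma_0}\;\leq\;\frac{2\sqrt{2}(1-\epsilon)b_T}{\sigma_0},
\]
and the numerical prefactor $2(1-\epsilon)$ is absorbed into a redefined $\mathcal{C}_2$, yielding the probability $1-\mathcal{C}_2\bigl(\tfrac{\sqrt{2}\,b_T}{\sigma_0\tau\sqrt{\log\tau}}\vee\tfrac{1}{\tau}\bigr)$. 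Lastly, the constants $\mathcal{C}_0,\mathcal{C}_1,\mathcal{C}_2,\mathcal{C}_3$ inherited from Theorems~\ref{maintheorem} and~\ref{th:bounds} depend only on $\cK_\infty$ and on $r$; in this example this dependence reduces to the numerical values $L_{i,j}$, $L_3$, $C_N,C_N',C_B,C_F,c_N,c_B,c_F$ computed in Sections~\ref{sec:model-hyp} and~\ref{sec:model-hyp2}, none of which involves the scaling parameter $\sigma_0$ (it cancels through the Riemannian reparametrization), so the constants are indeed universal. Since every substantive ingredient has been discharged upstream, no real obstacle remains: the proof is a direct assembly of the pieces just listed.
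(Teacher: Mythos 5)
Your proposal is correct and follows essentially the same route as the paper, which states Corollary~\ref{cor:gaussian_example} as a direct consequence of Theorems~\ref{maintheorem} and~\ref{th:bounds} once Sections~\ref{sec:model-hyp} and~\ref{sec:model-hyp2} have verified all the hypotheses. Two small remarks on wording: you do not literally ``multiply \eqref{eq:main_theorem} by $\sqrt{\Delta_T}$''; rather you substitute the identity $\norm{\cdot}_T=\sqrt{\Delta_T}\,\norm{\cdot}_{\ell_2}$, valid here because $\lambda_T = \Delta_T\sum_j \delta_{t_j}$, into the left side of \eqref{eq:main_theorem}. And when absorbing $2(1-\epsilon)$ into $\mathcal{C}_2$, you should say explicitly that $2(1-\epsilon)\le 2$, so the redefined constant stays free of $\epsilon$; otherwise the step is fine.
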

The values of the universal constants $\mathcal{C}_i$, $i=0,\ldots, 3$, can be given explicitly and they are large, but they could be improved numerically.

\begin{rem}[A particular choice of the tuning parameter]
	\label{rem:tau-choice}
	Let $\gamma>0$ and $\gamma'\geq  \gamma$ such that
	$1>\gamma'-\gamma$. Set
	$\tau=T^{\gamma'}$, $b_T=\sigma_0 T^{\gamma'-\gamma} \sqrt{\log(T)}$ and $
	\kappa = \mathcal{C}_1 \sigma \sqrt{\Delta_T \log (\tau)}$ (which
	corresponds to the equality in~\eqref{eq:def-kappa-gauss}). Then, we get
	under the assumptions of Corollary~\ref{cor:gaussian_example} (and thus
	$T$ large enough) that:
	\[
	\frac 1{\sqrt{T}} \norm{\hat{\beta}\Phi_{T}(\hat{\vartheta}) -
		\beta^{\star}\Phi_{T}(\vartheta^{\star}) }_{\ell_2}
	\leq  \mathcal{C}''_0 \, \sigma \sqrt{\sparse \frac{\log (T)}{T}} ,
	\]
	with probability larger than $1 -
	\mathcal{C}''_2/T^{\gamma}$ where
	$\mathcal{C}''_0 =\sqrt{\gamma'}\,  \mathcal{C}_0 \,\mathcal{C}_1$  and 
	$\mathcal{C}''_2=  \sqrt{2/\gamma'} \, \mathcal{C}_2$. 
	Hence,
	we obtain a similar prediction error bound as the one given in
	Remark \ref{rem:thm1}, see~\eqref{eq:mse2}. Notice however that in 
	the model and references given in Remark~\ref{rem:thm1}, the  Riemannian diameter
	of the  parameter  set $\Theta_T$
	is bounded by a constant free of $T$, whereas  in this section it grows (sublinearly)
	with $T$  without degrading the prediction error bound. 
\end{rem}

\section{Scaled exponential model} \label{sec:scaled}

%\subsection{Inversion of a Laplace transform}
We develop in this section an example involving a dictionary that is not translation invariant and for which the associated metric differs from the Euclidean metric.
We consider a continuous dictionary composed of exponential functions continuously scaled which is used  in miscroscopy where it is often necessary to invert a Laplace transform (see  for instance \cite{poon2018geometry}, \cite{denoyelle2019sliding}).

\subsection{The model}% and first assumptions of Theorem~\ref{maintheorem}}

Consider  a  real-valued process  $y$  observed  continuously over $\R_+$ and assume that this process is an element of the Hilbert space $H_T=L^2(\R_+, \Leb)$ where $\Leb$ denotes here the Lebesgue measure over $\R_+$. We  write $H$ instead of $H_T$ for the Hilbert space and we write $\left \langle \cdot , \cdot \right \rangle$ its scalar product and $\norm{\cdot}$ its associated norm.

We consider  a truncated white noise  as  in Section \ref{sec:continuous_noise} such that $w_T=\sum_{k = 1}^T (1/\sqrt{T})\, G_k\,  \psi_k$, where
$(G_k, k\in \N)$ are independent centered Gaussian random variables with variance $\sigma^2$ and $(\psi_k, k\in \N)$ denotes an orthonormal basis of $H$. Hence Assumption~\ref{hyp:bruit} holds as 
$\norm{w_T}^2=\sum_{k=1}^T  G_k^2 / T$ is  a.s. finite
and
$\Var( \langle f , w_T\rangle) 
\leq \sigma^2 \,\Delta_T\,   \norm{f}^2$ with $\Delta_T=  1/T$. This gives that Point
$\ref{hyp:theorem1_point1}$ of Theorem~\ref{maintheorem} holds.

\begin{rem}
	We stress that by the law of large numbers $\norm{w_T}$ tends almost surely to $\sigma>0$. Therefore the upper bounds in previous results on super-resolution and BLasso  (see \cite{duval2015exact} or \cite{poon2018geometry}) which hold when  $\norm{w_T}$ tends to zero do not apply here.
\end{rem}

We consider the dictionary given by the scaling exponential model of Section
\ref{sec:translation_scale_model} given by:
\[
\Big (\varphi(\theta)  = k(\theta \,  \cdot  ) ,\,
\theta \in \Theta\Big) 
\quad\text{with}\quad
k(t)=\expp{-t}
\quad\text{and}\quad
\Theta = \R_+^*.
\]
We insist on the fact that in this example the dictionary and the observation space $H$ do not depend on $T$.
For simplicity we omit the index $T$ for the quantities which shall not depend on $T$. 
As the kernels do not depend on $T$, we choose the limit kernel to be the same, \emph{i.e}, $\ck := \ck_T = \ck_\infty$. In particular, Point
$\ref{hyp:V_T_theorem}$  of   Theorem~\ref{maintheorem} holds automatically. 
One easily checks that
Assumption  \ref{hyp:reg-f} on  the  regularity of  the features  holds, and elementary calculations give for $\theta,\theta' \in \Theta$:
\[
\norm{\varphi(\theta)} ^2= 1 / (2\theta),
\quad   \phi(\theta) =  \sqrt{2\theta}\expp{-\theta t}, 
\quad
\cK(\theta,\theta') =  \frac{2\sqrt{\theta\theta'}}{\theta + \theta'} 
\quad \text{and}\quad
g(\theta)= \frac{1}{4 \theta^2}.
\]
Since    the
function $g$ is
positive on $\Theta$, we get that 
Assumption~\ref{hyp:g>0} holds.
This gives that Point
$\ref{hyp:reg_dic_theorem}$ of Theorem~\ref{maintheorem} holds.
The Riemannian metric obtained from $g$
is given by, for   $\theta,\theta'\in\Theta$:
\begin{equation}
\mathfrak{d}(\theta,\theta') = \frac{1}{2} \left | \log \left (\frac{\theta}{\theta'}\right ) \right |\cdot
\end{equation}
Notice it is not equivalent to the  Euclidean  distance on $\Theta$. We see  that
$\cK$ is of class $\mathcal{C}^{3 ,3 }$ and that:
\begin{equation*}
\begin{aligned}
&\cK^{[i,j]}(\theta, \theta')= (-1)^j f^{(i+j)}\left (\frac{1}{2}\log \left (\frac{\theta}{\theta'} \right) \right ) 
\quad\text{with} \quad f(x)  = \frac{1}{\cosh(x)} \cdot
\end{aligned}
\end{equation*}
We shall retrieve scaling parameters over a compact set whose diameter may depend on $T$, for example we can take:
\[
\Theta_T = [ M_T^{-1},M_T]\quad\text{with}\quad M_T >1.
\]
Assumption~\ref{hyp:Theta_infini}  holds on $\Theta_\infty = \R_{+}^*$. This
gives that Point~\ref{it:hyp-reg-K} of Theorem~\ref{maintheorem} holds.  \medskip

\subsection{Existence of certificates}
In order  to get  the prediction  error from  Theorem~\ref{maintheorem}, it remains to show 
that  Point~$\ref{hyp:existence_certificate_theorem}$ therein  on the
existence  of the  certificates holds.   
To  check the  existence of  the
certificates,                 we                 can                 use
Propositions~\ref{prop:certificat_interpolating}
and~\ref{prop:certificat2}, and  check that all the  hypotheses required
in those two propositions hold. \medskip

We show first that the hypotheses of
Proposition~\ref{prop:certificat_interpolating} hold.
Assumption~$\ref{hyp:theorem_certificate_regularity}$ on the regularity of
the dictionary holds, see Section above.
\medskip

Elementary calculations    give       that       $L_{0,2}= 1$.
Recall  $\varepsilon_{\infty}(r)$ and $\nu_{\infty}(r)$ defined
in~\eqref{eq:def-e0} and~\eqref{eq:def-e2} (noted simply $\varepsilon$ and $\nu$ in this section).
Let    $\theta     <    \theta'$    in    $\Theta$ and    let   us    set
$r = \mathfrak{d}(\theta,\theta')$. We have,
$\cK(\theta,\theta') = f(r)$.
Since $f$ is positive and decreasing on $\mathbb{R}_{+}$, we have for $r > 0$, 
$\varepsilon(r) = 1 - 	f(r) > 0$.
Similarly we have:
\begin{equation*}
\cK^{[0,2]}(\theta,\theta') = f^{(2)}(r) = \frac{1}{\cosh(r)^3} \left(\cosh(r)^2- 2 \right). 
\end{equation*}
The function $f^{(2)}$ is increasing  and negative on $(0,\log(1 + \sqrt{2}))$. Hence, provided $r < \log(1 + \sqrt{2})$, we have 
$\nu(r) = - f^{(2)}(r) > 0$. 
This  and the regularity  of the kernel  $\cK $     imply  that 
Assumption~$\ref{hyp:theorem_certificate_concavity}$ of
Proposition~$\ref{prop:certificat_interpolating}$ holds for $\rho=1$ and all $r\in
(0, 1 / \sqrt{2})$. \medskip

Notice that $f^{(i)}$ can be written as the ratio of a polynomial of degree $i-1$ in $\cosh$ and $\sinh$ and of $\cosh^i$. In particular, there exists a finite constant $M$  such that for all $i\in \{0,\ldots, 3\}$ and $x\in \R$:
\begin{equation}
\label{eq:bound_f_i<MF}
|f^{(i)}(x)|\leq M f(x).
\end{equation}

So, we get  that $\lim_{q\rightarrow \infty }
\sup_{\mathfrak{d}(\theta,\theta')\geq q} |\cK^{[i, j]}(\theta,
\theta')|= \lim_{r \rightarrow \infty } \left |f^{(i+j)}(r) \right |= 0$ for all $i,j\in \{0, 1, 2\}$. Thus, we deduce from the
definition~\eqref{eq:def-delta-rs} 
of $\delta_\infty $ that $\delta_\infty (u,s)$ is finite for all $s\in
\N^*$ and $u>0$.  This implies that
Assumption~$\ref{hyp:theorem_certificate_separation}$ on the separation
of the parameters holds.
\medskip

As all kernels are equal in this setup, \emph{i.e} $\ck := \ck_T = \ck_\infty$, we have $\DT=0$ and $\RT=0$. Thus Assumption~$\ref{hyp:theorem_certificate_approximation}$ on the closeness to the limit kernel and Assumption~$\ref{hyp:theorem_certificate_metric}$  on  the closeness  of
the metrics $\mathfrak{d}_T$ and $\mathfrak{d}_\infty$ come for free with $\rho = 1$. 

\medskip

Recall the definition of $H^{(2)}_\infty $
from~\eqref{eq:def_H}. 
We  choose $u_\infty  = H_{\infty}^{(2)}(r_0,1)$ (as $\ck_\infty$ is chosen equal to $\ck_T$) for some $r_0 \in (0,1/\sqrt{2})$. We remark that in order to take $u_\infty$ as large as possible and then have a separation distance as small as possible (since it is a decreasing function of $u_\infty$), one could  take $r_0$ maximizing $H^{(2)}_\infty$.  

\medskip
Thus,  the assumptions of
Proposition~\ref{prop:certificat_interpolating} are satisfied, and we
deduce that Assumption~\ref{assumption1} holds.
\medskip

We       now      concentrate       on      the       hypotheses      of
Proposition~\ref{prop:certificat2}.
Assumptions~$\ref{hyp:theorem_certificate_2_regularity}$-$\ref{hyp:theorem_certificate_2_separation}$
clearly       hold      for       the       same      reasons       as
Assumptions~$\ref{hyp:theorem_certificate_regularity}$-$\ref{hyp:theorem_certificate_separation}$
of  Proposition~\ref{prop:certificat_interpolating}.
We take
$u'_\infty =u_\infty $. Assumption $\ref{hyp:theorem_certificate_2_approximation} $ comes for free in this setting.
\medskip

Thus,  the assumptions of
Proposition~\ref{prop:certificat2} are satisfied, and we
deduce, thanks
to~\eqref{eq:cstes-certif2}, that Assumption~\ref{assumption2} holds.
\medskip

In conclusion, we get that Assumptions~\ref{assumption1}
and~\ref{assumption2} hold and thus
Point~$\ref{hyp:existence_certificate_theorem}$ of
Theorem~\ref{maintheorem} holds for any set of parameters $\cq^\star$
such that  for all
$\theta\neq \theta'\in \cq^{\star}$ the distance $\mathfrak{d}(\theta, \theta')$ is larger
than the separation distance:
\begin{equation}
\label{eq:sep-param-scaling}
\max(r_0,
\delta_{\infty}(u_{\infty},s)). 
\end{equation}

\begin{rem}[On the separation distance~\eqref{eq:sep-param-scaling}]
	The  separation  distance~\eqref{eq:sep-param-scaling}  is  a  non-decreasing
	function  of $s$.  Similarly as in remark \ref{rem:sep-gauss} where an upper bound on the minimal distance in the Gaussian spike deconvolution case is given, we can  provide an   upper  bound for this distance. Let
	$(i,j)  \in \{0,1\}  \times \{0,1,2  \}$.  By  considering the  definition of the kernel
	$\cK$  and  the
	bound $\eqref{eq:bound_f_i<MF}$, we deduce that
	$  |\cK^{[i,j]}(\theta,\theta')| \leq  M f \left (  \dI(\theta,
		\theta') \right )$  for  all  $\theta,\theta' \in  \Theta$.   We  then
	obtain                             that                            for
	$\vartheta = (\theta_1,\cdots,\theta_s) \in \Theta^s_{\infty , \delta}
	$ with $\delta>0$:
	\[
	\max_{1\leq  \ell \leq s} \sum\limits_{k=1, k\neq
		\ell}^{s}      |\cK^{[i,j]}(\theta_{\ell},\theta_{k})|
	\leq  \psi_{s}(\delta)
	\quad\text{with}\quad
	\psi_{s}(\delta)=2M \int_0^{s/2 +1} f(\delta \, t) \,  \rd t.
	\]
	The  function $\psi_s$  is  decreasing and  one to  one  from $\R_+$  to
	$(0,M(s+2)]$. Setting $\psi_s^{-1}(u)  = 0$ for $u >  M(s+2)$, we deduce
	from~\eqref{eq:def-delta-rs} that for $u>0$:
	\[
	\delta_\infty(u,s) \leq   \psi_s^{-1}(u). 
	\]
	We can bound the quantity above independently of $s$.
	Since the map $\delta \mapsto \psi_s(\delta)$ is decreasing and the map $s\mapsto \psi_s(\delta)$ is increasing with limit
	$\psi_\infty (\delta)= 2 M \int_{0}^{+\infty} f(\delta t) \, \rd t = M \pi / \delta $, we deduce that for $s\in
	\N^*$:
	\[
	\delta_\infty(u,s) \leq \psi_{\infty}^{-1}(u) = \frac{M \pi}{u}\cdot
	\]

\end{rem}

\subsection{Prediction error}
\label{sec:predic-scaling}

From Theorem~\ref{maintheorem}, we deduce the subsequent following corollary. This demonstrates that by appropriately adjusting the penalization, the prediction error decreases to zero at the expected rate as the noise level tends to 0.

\begin{corollary}
	\label{cor:scaling_example}
	For                                                                  all
	$\theta\neq \theta'$ belonging to $  \cq^\star= \{\theta^\star_k, \, k  \in S^\star\}
	$, with  $S^\star=\supp (\beta^\star)$  such that  $\mathfrak{d}(\theta,\theta')$ is
	larger  than the  separation  given  by~\eqref{eq:sep-param-scaling},
	then, with some universal finite constants $\mathcal{C}_0, ..., \mathcal{C}_3 >0$, 
	for  any $\tau > 1$ and a tuning parameter:
	\begin{equation}
	\kappa \geq \mathcal{C}_1 \sigma \sqrt{ \log (\tau) / T},
	\quad\text{where}\quad \Delta_T=\frac{1}{T}, 
	\end{equation}
	we have the prediction error bound of the estimators $\hat{\beta}$ and $\hat{\vartheta}$ defined in~\eqref{eq:generalized_lasso} given by:
	\begin{equation}
	\norm{\hat{\beta}\Phi_{T}(\hat{\vartheta}) -
		\beta^{\star}\Phi_{T}(\vartheta^{\star}) } \leq
	\mathcal{C}_0 \,  \sqrt{\sparse} \, \kappa, 
	\end{equation}
	with                probability               larger                than
	$1 - \mathcal{C}_2 \left  ( \frac{\log(M_T)}{ 
		\tau \sqrt{\log
			(\tau)} }\vee  \frac{1}{\tau}\right )$.   Moreover, with  the same
	probability,                 we                have                 that
	$\left|\|\hat \beta \|_{\ell_1}  - \|\beta^\star\|_{\ell_1} \right| \leq
	\mathcal{C}_3    \kappa    s$    as    well    as    the    inequalities
	\eqref{eq:bound_th_abc} of Theorem \ref{th:bounds}.
\end{corollary}

\begin{remark} 
\label{rem:expo}
	We consider the particular case     $M_T=T^\gamma$ and $\tau=T^{\gamma'}$, with  $\gamma$ and $\gamma'$  positive. 
	We also take $\kappa=\mathcal{C}_1 \sigma \sqrt{\gamma' \log(T)/T}$.  The prediction error is then given by:
	\[
	\norm{\hat{\beta}\Phi_{T}(\hat{\vartheta}) -
		\beta^{\star}\Phi_{T}(\vartheta^{\star}) } \leq
	\mathcal{C}_0 \, \mathcal{C}_1 \,  \sqrt{\sparse} \, \sigma \sqrt{\gamma' \frac{\log(T)}T}, 
	\]
	with probability larger than $1 - \mathcal{C}_2 \left  ( \frac{\gamma}{\sqrt{\gamma'}}\frac{\sqrt{ \log
			(T)}}{ 
		T^{\gamma'} }\vee  \frac{1}{T^{\gamma'}}\right )$.
	
\end{remark}

%%%%%%%%%%%%%%%%%%%%%%%%%%%%%%%%%%%%%%%%%%%%%%%%%%%%%
%%%%%%%%%%%%%%%%%%%%%%%%%%%%%%%%%%%%%%%%%%%%%%%%%%%%%%%%%%%%%
%%                  The Bibliography                       %%
%%                                                         %%
%%  imsart-number.bst  will be used to                     %%
%%  create a .BBL file for submission.                     %%
%%                                                         %%
%%  Note that the displayed Bibliography will not          %%
%%  necessarily be rendered by Latex exactly as specified  %%
%%  in the online Instructions for Authors.                %%
%%                                                         %%
%%  MR numbers will be added by VTeX.                      %%
%%                                                         %%
%%  Use \cite{...} to cite references in text.             %%
%%                                                         %%
%%%%%%%%%%%%%%%%%%%%%%%%%%%%%%%%%%%%%%%%%%%%%%%%%%%%%%%%%%%%%

%% if your bibliography is in bibtex format, uncomment commands:
\bibliographystyle{imsart-number} % Style BST file
\bibliography{ref.bib}       % Bibliography file (usually '*.bib')

%% or include bibliography directly:
% \begin{thebibliography}{}
% \bibitem{b1}
% \end{thebibliography}

%%%%%%%%%%%%%%%%%%%%%%%%%%%%%%%%%%%%%%%%%%%%%%
%% Single Appendix:                         %%
%%%%%%%%%%%%%%%%%%%%%%%%%%%%%%%%%%%%%%%%%%%%%%
%\begin{appendix}
%\section*{???}%% if no title is needed, leave empty \section*{}.
%\end{appendix}
%%%%%%%%%%%%%%%%%%%%%%%%%%%%%%%%%%%%%%%%%%%%%%
%% Multiple Appendixes:                     %%
%%%%%%%%%%%%%%%%%%%%%%%%%%%%%%%%%%%%%%%%%%%%%%
\begin{appendix}

%%%%%%%%%%%%%%%%%%%%%%%%%%%%%%%%%%%%%%%%%%%%%%%%%%%%%%%%

%%%%%%%%%%%%%%%%%%%%%%%%%%%%%%%%%%%%%%%%%%%%%%%%%
\section{Proofs of Theorems \ref{maintheorem} and  \ref{th:bounds}}
\label{sec:proofsSection2}
%%%%%%%%%%%%%%%%%%%%%%%%%%%%%%%%%%%%%%%%%%%%%%%%%%%%

%%%%%%%%%%%%%%%%%%%%%%%%%%%%%%%%%%%%%%%%%%%%%%%%%%%%%%
\subsection{Proof of Theorem \ref{maintheorem}}
\label{sec:proof_main_theorem}
%%%%%%%%%%%%%%%%%%%%%%%%%%%%%%%%%%%%%%%%%%%%%%%%%%%

Let us bound the prediction error $\hat{R}_T := \norm{\hat{\beta}\Phi_{T}(\hat{\vartheta}) - \beta^{\star}\Phi_{T}(\vartheta^{\star}) }_{T}$. 
By definition \eqref{eq:generalized_lasso} of $\hat{\beta}$ and
$\hat{\vartheta}$ for the tuning parameter $\kappa$,
we have:
\begin{equation*}
\frac{1}{2}\norm{y - \hat{\beta}\Phi_T(\hat{\vartheta})}_T^2 + \kappa \|\hat{\beta}\|_{\ell_1} \leq \frac{1}{2}\norm{y - \beta^\star\Phi_T(\vartheta^\star)}_T^2 + \kappa \|\beta^\star\|_{\ell_1}.
\label{eq:optim}
\end{equation*}
We define the application $\hat{\Upsilon} $ from $H_T$ to $\R$ by:
\[
\hat{\Upsilon}( f)= \left \langle \hat{\beta}
\Phi_{T}(\hat{\vartheta})-
\beta^{\star}\Phi_{T}(\vartheta^{\star}),f \right \rangle_T.
\]
This gives,  by rearranging terms  and using  the equation of  the model
$y = \beta^\star\Phi_T(\vartheta^\star) + w_T$, that:
\begin{equation}
\label{eq:optim_bis}
\frac{1}{2}\hat{R}_T^2
\leq \hat{\Upsilon} (w_T) + \kappa \left ( \|\beta^{\star}\|_{\ell_1} -
\|\hat{\beta}\|_{\ell_1}\right ). 
\end{equation}
Next,  we  shall  expand  the  two  terms on  the  right  hand  side  of
\eqref{eq:optim_bis} according to $\hat \beta_\ell$ close to some
$\beta^\star_k$ or not. In the
rest of the proof, we fix  $r>0$ so that Assumptions \ref{assumption1} and
\ref{assumption2}, are verified by $\cq^{\star}$. In particular, for all
$k\neq         k'    $ in $ S^\star= \{k'' \in \{1,\cdots,K\}, \, \beta^{\star}_{k''} \neq 0 \}$          we         have
$\mathfrak{d}_T(\theta_k^\star,\theta_{k'}^\star) > 2r$.

Recall  the definitions  given in  Section \ref{sec:main_result}  of the
sets  of indices  $\hat{S}$, $\tilde{S}_{k}(r)$  and $\tilde{S}(r)$  for
$k      \in       S^\star$.       Since      the       closed      balls
$\mathcal{B}_T(\theta^{\star}_{k  },r)$  with   $k  \in  S^{\star}$  are
pairwise disjoint, the  sets $\tilde{S}_k(r)$, for $k  \in S^\star$, are
also pairwise disjoint and one can write the following decomposition:
\begin{equation*}
\begin{aligned}
\hat{\beta}\Phi_{T}(\hat{\vartheta}) - \!
\beta^{\star}\Phi_{T}(\vartheta^{\star})
& =  \sum\limits_{k=1}^{K} \hat{\beta}_{k}\phi_{T}(\hat{\theta}_{k}) - \! \sum\limits_{k\in S^{\star}}\beta_{k}^{\star}\phi_{T}(\theta^{\star}_{k}) =  \sum\limits_{k \in S^\star}\sum\limits_{\ell \in \tilde{S}_k(r) }\hat{\beta}_{\ell}\phi_{T}(\hat{\theta}_{\ell}) + \! \! \sum\limits_{k \in \tilde{S}(r)^{c}}\hat{\beta}_{k}\phi_{T}(\hat{\theta}_{k}) -  \sum\limits_{k\in S^{\star}}\beta_{k}^{\star}\phi_{T}(\theta^{\star}_{k}).
\end{aligned}
\end{equation*} 
This  decomposition  groups  the   elements  of  the  predicted  mixture
according to the proximity of the estimated parameter $\hat \theta_\ell$
to a true underlying parameter $\theta_k^\star$ to be estimated. We 
use   a   Taylor-type   expansion  with   the   Riemannian  metric
$\mathfrak{d}_T$ for  the function $\phi_T(\theta)$ around  the elements
of $\cq^\star$. By Assumption  \ref{hyp:reg-f}, the function $\phi_T$ is
twice continuously differentiable with  respect to the variable $\theta$
and  the  function  $g_T$  defined in  \eqref{def:g_T}  is  positive  on
$\Theta_T$  and of class $\mathcal{C}^1$ by   Assumption  \ref{hyp:g>0}.  We  set   in  this  section
$\tilde{D}_{i;T}[\phi_{T}] = \phi_{T}^{[i]}$ for  $i=0, 1, 2$.  According
to Lemma  \ref{lemma:expansion}, we have for  any $\theta_k^{\star}$ and
$\hat{\theta}_{\ell}$ in $\Theta_T$:
\[
\phi_{T}(\hat{\theta}_\ell) = \phi_{T}(\theta_k^{\star}) +
\operatorname{sign}(\hat{\theta}_\ell-\theta_k^{\star})\,
\mathfrak{d}_T(\hat{\theta}_\ell,\theta_k^{\star})\,
\phi_{T}^{[1]}(\theta_k^{\star}) +
\mathfrak{d}_T(\hat{\theta}_\ell,\theta_k^{\star})^2 \,  \int_0^1 (1-s)
\phi_{T}^{[2]}(\gamma_s^{(k\ell)}) \, \rd s, 
\]
where $\gamma^{(k\ell)}$ is a distance realizing geodesic path belonging to $\Theta_T$ such that $\gamma_0^{(k\ell)} = \theta_{k}^{\star}$, $\gamma_1^{(k\ell)}= \hat \theta_\ell$ and $\mathfrak{d}_T(\hat \theta_\ell,\theta_k^\star) = \mathcal{L}_T(\gamma^{(k\ell)})$.
Hence we obtain:
\begin{multline}
\label{eq:decomp_taylor}
%\begin{aligned}
\hat{\beta}\Phi_{T}(\hat{\vartheta}) -
\beta^{\star}\Phi_{T}(\vartheta^{\star})
=    \sum\limits_{k \in S^{\star}} I_{0,k}(r)\, 
\phi_{T}(\theta_{k}^{\star}) + \sum\limits_{k \in S^{\star}} I_{1, k}
(r)\, \phi_{T}^{[1]}(\theta_k^{\star})
+ \sum\limits_{k \in \tilde{S}(r)^{c}}\hat{\beta}_{k} \, \phi_{T}(\hat{\theta}_{k})\\ 
\quad+ \sum\limits_{k \in S^{\star}} \left( \sum\limits_{\ell \in \tilde{S}_{k}(r) } \hat{\beta}_{\ell}  \, \mathfrak{d}_T(\hat{\theta}_{\ell},\theta_k^{\star})^2 \, \int_0^1 (1-s) \phi_{T}^{[2]}(\gamma_s^{(k\ell)}) \, \rd s  \right ),
\end{multline}
with
\begin{equation*}
%\label{eq:def-I0-I1}
I_{0, k}(r)=\big (\sum\limits_{\ell \in \tilde{S}_{k}(r) }
\hat{\beta}_{\ell} \big)- \beta_{k}^{\star}
\quad\text{and}\quad
I_{1, k} (r)= \sum\limits_{\ell \in \tilde{S}_{k}(r) }
\hat{\beta}_{\ell} \,
\operatorname{sign}(\hat{\theta}_{\ell}-\theta_k^{\star})\,
\mathfrak{d}_T(\hat{\theta}_{\ell},\theta_k^{\star}). 
\end{equation*}
Let us introduce some  notations in order to bound the different terms of the expansion above:
\begin{align}
\label{def:I0-I1}
I_0(r)
&= \sum\limits_{k \in S^{\star}} |I_{0,k}(r)|
\quad\text{and}\quad
I_1(r)  = \sum\limits_{k \in S^{\star}} |I_{1,k}(r)|,\\
\label{eq:I2}  
I_{2,k}(r)
&= \sum\limits_{\ell \in \tilde{S}_{k}(r) } \left
|\hat{\beta}_{\ell}\right |
\mathfrak{d}_T(\hat{\theta}_{\ell},\theta_k^{\star})^2
\quad\text{and}\quad
I_2(r)  = \sum\limits_{k \in S^{\star}} I_{2,k}(r),\\
\label{eq:I3}  
I_3(r) & = \sum\limits_{\ell \in \tilde{S}(r)^{c}} \left |\hat{\beta}_{\ell}
\right | = \left \|\hat{\beta}_{\tilde{S}(r)^{c}}\right
\|_{\ell_1},
\end{align}
and we omit the dependence in $r$ when there is no ambiguity. 
\medskip

We bound the difference $\|\beta^{\star}\|_{\ell_1}-\|\hat{\beta}\|_{\ell_1}$ by noticing that:
\begin{equation}
\label{eq:diff_l1_norm}
\|\beta^{\star}\|_{\ell_1}-\|\hat{\beta}\|_{\ell_1} 
= \sum\limits_{k \in S^{\star}} \Big (|\beta_k^\star| -
\sum\limits_{\ell \in \tilde{S}_{k}(r) } |\hat{\beta}_{\ell}| \Big )
- \sum\limits_{k \in \tilde{S}(r)^{c}} \left |\hat{\beta}_{k} \right |
\leq \sum\limits_{k \in S^{\star}} \Big |\beta_k^\star-
\sum\limits_{\ell \in \tilde{S}_{k}(r) } \hat{\beta}_{\ell} \Big| =
I_0. 
\end{equation}

In  the  next lemma,  we  give  an upper  bound  of  $I_0$.  Recall  the
constants $C_N'$ and $C_F$ from Assumption \ref{assumption1}.

\begin{lem}
	\label{boundI0}
	Under the assumptions of Theorem \ref{maintheorem} and with the element
	$p_1  \in  H_T$  from  Assumption \ref{assumption1}  associated  to  the
	function $v: \cq^\star \rightarrow \{ -1,1\}$ defined by:
	\begin{equation*}
	%\label{def:v_I_O}
	v(\theta^\star_k) = \operatorname{sign}(I_{0, k}) \quad\text{for all $ k \in S^{\star}$},
	\end{equation*}
	we get that:
	\begin{equation}
	\label{eq:I_0_I_3_bound_I_2_I_star}
	I_0  \leq  C_{N}' I_2 + (1 - C_{F})I_3 + | \hat{\Upsilon}(p_1) |.  
	\end{equation}
\end{lem}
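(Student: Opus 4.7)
My plan is to directly evaluate $\hat{\Upsilon}(p_1)$ in two different ways: algebraically via the definition of $\hat{\Upsilon}$ and the sets $\tilde{S}_k(r)$, $\tilde{S}(r)^c$, and then termwise using the interpolating properties of $p_1$ provided by Assumption~\ref{assumption1}. Writing $\eta(\theta) = \langle \phi_T(\theta), p_1 \rangle_T$ and unfolding the definition of $\hat{\Upsilon}$, together with the partition $\hat S = \bigsqcup_{k \in S^\star}\tilde S_k(r) \sqcup \tilde S(r)^c$ (which exists because by Assumption~\ref{assumption1} the balls $\mathcal{B}_T(\theta_k^\star,r)$ are pairwise disjoint), I would arrive at the identity
\[
\hat{\Upsilon}(p_1) = \sum_{k\in S^\star}\sum_{\ell\in\tilde S_k(r)}\hat\beta_\ell\,\eta(\hat\theta_\ell) + \sum_{\ell\in\tilde S(r)^c}\hat\beta_\ell\,\eta(\hat\theta_\ell) - \sum_{k\in S^\star}\beta_k^\star\,\eta(\theta_k^\star).
\]

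Using the algebraic identity $\sum_{\ell\in\tilde S_k(r)}\hat\beta_\ell - \beta_k^\star = I_{0,k}$ from the definition~\eqref{def:I0-I1}, I would regroup this as
\[
\hat{\Upsilon}(p_1) = \sum_{k\in S^\star} I_{0,k}\,\eta(\theta_k^\star) \; + \; \sum_{k\in S^\star}\sum_{\ell\in\tilde S_k(r)}\hat\beta_\ell\bigl(\eta(\hat\theta_\ell)-\eta(\theta_k^\star)\bigr) \; + \; \sum_{\ell\in\tilde S(r)^c}\hat\beta_\ell\,\eta(\hat\theta_\ell).
\]
At this point each block is handled by one item of Assumption~\ref{assumption1}. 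Applying \ref{it:as1-ordre=2} at $\theta=\theta^\star=\theta_k^\star$ forces $\eta(\theta_k^\star) = v(\theta_k^\star) = \operatorname{sign}(I_{0,k})$, so the first block equals $\sum_{k\in S^\star} |I_{0,k}| = I_0$. Applying \ref{it:as1-ordre=2} at $\theta=\hat\theta_\ell$ (which lies in $\mathcal{B}_T(\theta_k^\star,r)$ since $\ell\in\tilde S_k(r)$) gives $|\eta(\hat\theta_\ell) - v(\theta_k^\star)| \leq C_N'\,\mathfrak{d}_T(\hat\theta_\ell,\theta_k^\star)^2$, so the second block is bounded in absolute value by $C_N'\sum_{k,\ell}|\hat\beta_\ell|\,\mathfrak{d}_T(\hat\theta_\ell,\theta_k^\star)^2 = C_N' I_2$ by~\eqref{eq:I2}. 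Finally, for each $\ell\in\tilde S(r)^c$ the point $\hat\theta_\ell$ belongs to the far region, so \ref{it:as1-<1-c} yields $|\eta(\hat\theta_\ell)| \leq 1-C_F$, bounding the last block by $(1-C_F) I_3$.

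Combining, $\hat{\Upsilon}(p_1) = I_0 + R$ with $|R| \leq C_N' I_2 + (1-C_F) I_3$, which rearranges immediately to the stated bound~\eqref{eq:I_0_I_3_bound_I_2_I_star}. There is no real obstacle here: the proof is purely algebraic once the sign choice $v(\theta_k^\star) = \operatorname{sign}(I_{0,k})$ collapses the interpolation inequality at the nodes to an equality. Note in particular that neither the Taylor expansion~\eqref{eq:decomp_taylor} nor the derivative certificate from Assumption~\ref{assumption2} is needed for this lemma; they will enter only when bounding $I_1$ and $I_2$ in the subsequent steps of the proof of Theorem~\ref{maintheorem}.
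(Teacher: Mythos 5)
Your proof is correct and follows essentially the same route as the paper's: the paper starts from $I_0 = \sum_k v_k I_{0,k}$ and adds and subtracts $\sum_{k}\sum_{\ell\in\tilde S_k(r)}\hat\beta_\ell\langle\phi_T(\hat\theta_\ell),p_1\rangle_T$ to produce exactly your three blocks, then applies \ref{it:as1-ordre=2} at the nodes, \ref{it:as1-ordre=2} in the near region, and \ref{it:as1-<1-c} in the far region in the same way. Writing the argument as ``decompose $\hat\Upsilon(p_1)$'' rather than ``manipulate $I_0$'' is a cosmetic reversal of the same identity, so no further comparison is needed.
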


\begin{proof}
	Let  $v\in \{-1,1\}^s$ with entries $v_k = v(\theta_k^\star)$ so that: 
	\[
	I_0 = \sum_{k\in S^{\star}} |I_{0, k}|=\sum_{k\in S^{\star}} v_k I_{0, k}=
	\sum\limits_{k \in S^{\star}}  v_{k}\Big ( \Big (\sum\limits_{\ell
		\in \tilde{S}_{k}(r) } \hat{\beta}_{\ell} \Big )-
	\beta_{k}^{\star}\Big ). 
	\]
	Let $p_1$ be an element of $H_T$ from Assumption~\ref{assumption1} associated to
	the application $v$ such that  properties
	$\ref{it:as1-<1}$-$\ref{it:norm<c}$ therein hold. By adding and substracting
	$\sum\limits_{k \in S^{\star}}  \sum\limits_{\ell \in \tilde{S}_{k}(r) }
	\hat{\beta}_{\ell} \left \langle \phi_{T}(\hat{\theta}_{\ell}),p_{1}
	\right \rangle_T $ to $I_0$ and using the property $\ref{it:as1-ordre=2}$
	satisfied by the element $p_1$, that is,  $\left \langle
	\phi_{T}(\theta_{k}^{\star}),p_{1}\right \rangle_T=v_k$ for all $k\in
	S^\star$, we obtain: 
	\[
	I_0 
	=  \sum\limits_{k \in S^{\star}}  \sum\limits_{\ell \in \tilde{S}_{k}(r)
	} \hat{\beta}_{\ell} \left (v_{k}-\left \langle
	\phi_{T}(\hat{\theta}_{\ell}),p_{1}\right \rangle _T\right ) +
	\left \langle \hat{\beta}\Phi_{T}(\hat{\vartheta})-
	\beta^{\star}\Phi_{T}(\vartheta^{\star}),p_{1} \right \rangle_T -
	\sum\limits_{\ell \in \tilde{S}(r)^{c}} \hat{\beta}_{\ell}\left \langle
	\phi_{T}(\hat{\theta}_{\ell}),p_{1}\right \rangle_T .
	\]
	We deduce that:
	\[
	I_0
	\leq \sum\limits_{k \in S^{\star}}  \sum\limits_{\ell \in
		\tilde{S}_{k}(r) }
	|\hat{\beta}_{\ell}| \left |v_{k}-\left \langle
	\phi_{T}(\hat{\theta}_{\ell}),p_{1}\right \rangle_T \right |
	+  | \hat{\Upsilon}(p_1) |
	+ \sum\limits_{\ell \in \tilde{S}(r)^{c}} |\hat{\beta}_{\ell}| \left
	|\left \langle \phi_{T}(\hat{\theta}_{\ell}),p_{1} \right \rangle_T
	\right |. 
	\]
	Notice that for $\ell \in \tilde{S}(r)^{c}$,  $\hat \theta_\ell
	\notin \bigcup\limits_{k \in S^\star}
	\mathcal{B}_T(\theta^{\star}_k,r)$. Then, by using the properties
	$\ref{it:as1-ordre=2}$ 
	and $\ref{it:as1-<1-c}$ from Assumption \ref{assumption1}, we get that
	\eqref{eq:I_0_I_3_bound_I_2_I_star} holds 
	with  the constants $C_N'$ and $C_F$ from  Assumption \ref{assumption1}.
\end{proof}

In  the  next lemma,  we  give  an upper  bound  of  $I_1$.  Recall  the
constants $c_N$ and $c_F$ from Assumption \ref{assumption2}.

\begin{lem}
	\label{boundI1}
	Under  the  assumptions  of  Theorem \ref{maintheorem}  and  with  the
	element $q_{0} \in H_T$  from Assumption \ref{assumption2} associated
	to the function $v :\cq^\star \rightarrow \{ -1,1\}$ defined by:
	\begin{equation*}
	%\label{def:v_I_1}
	v(\theta_k^\star) =
	\operatorname{sign}(I_{1, k})
	\quad\text{for all $ k \in S^{\star}$},  
	\end{equation*}
	we get that:
	\begin{equation}
	\label{eq:I_1_bound_I_2_I_star}
	I_1 \leq c_{N}I_2 + c_{F}I_3 +  | \hat{\Upsilon}(q_0)  |.
	\end{equation}
\end{lem}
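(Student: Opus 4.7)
The plan is to mirror the argument of Lemma~\ref{boundI0}, but using the derivative certificate $q_0$ from Assumption~\ref{assumption2} in place of the interpolating certificate $p_1$ from Assumption~\ref{assumption1}. The key linearization tool is Property~$\ref{it:as2-ordre=2}$, which states that $\langle \phi_T(\theta), q_0 \rangle_T$ approximates the signed Riemannian distance $v(\theta^\star)\operatorname{sign}(\theta-\theta^\star)\,\mathfrak{d}_T(\theta,\theta^\star)$ to quadratic order on each near region $\mathcal{B}_T(\theta_k^\star,r)$.

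First I would set $v_k=v(\theta_k^\star)=\operatorname{sign}(I_{1,k})$, so that $I_1=\sum_{k\in S^\star} v_k I_{1,k}=\sum_k\sum_{\ell\in\tilde{S}_k(r)}\hat\beta_\ell\, v_k\,\operatorname{sign}(\hat\theta_\ell-\theta_k^\star)\,\mathfrak{d}_T(\hat\theta_\ell,\theta_k^\star)$. Then I would add and subtract $\sum_k\sum_{\ell\in\tilde{S}_k(r)}\hat\beta_\ell\langle\phi_T(\hat\theta_\ell),q_0\rangle_T$ inside this sum, exactly as in the proof of Lemma~\ref{boundI0}, yielding a ``near-region error'' term plus $\sum_k\sum_{\ell\in\tilde{S}_k(r)}\hat\beta_\ell\langle\phi_T(\hat\theta_\ell),q_0\rangle_T$.

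The second term needs to be rewritten in terms of $\hat\Upsilon(q_0)$. The crucial observation, obtained from Property~$\ref{it:as2-ordre=2}$ applied at $\theta=\theta_k^\star$ (where the right-hand side vanishes), is that $\langle\phi_T(\theta_k^\star),q_0\rangle_T=0$ for every $k\in S^\star$. Consequently $\langle\beta^\star\Phi_T(\vartheta^\star),q_0\rangle_T=0$, so that $\langle\hat\beta\Phi_T(\hat\vartheta),q_0\rangle_T=\hat\Upsilon(q_0)$. Splitting the full sum over $\hat S$ into $\tilde S(r)$ and $\tilde S(r)^c$, I can therefore rewrite
\[
\sum_{k\in S^\star}\sum_{\ell\in\tilde{S}_k(r)}\hat\beta_\ell\langle\phi_T(\hat\theta_\ell),q_0\rangle_T
= \hat\Upsilon(q_0) - \sum_{\ell\in\tilde S(r)^c}\hat\beta_\ell\langle\phi_T(\hat\theta_\ell),q_0\rangle_T.
\]

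To conclude, I would take absolute values and use the triangle inequality: the near-region errors are bounded by $c_N\,\mathfrak{d}_T(\hat\theta_\ell,\theta_k^\star)^2$ thanks to Property~$\ref{it:as2-ordre=2}$, producing $c_N I_2$ after summation; the far-region contributions are bounded via Property~$\ref{it:as2-<1-c}$ by $c_F|\hat\beta_\ell|$ for $\ell\in\tilde S(r)^c$, producing $c_F I_3$; and the cross term $|\hat\Upsilon(q_0)|$ remains as in the statement. This yields \eqref{eq:I_1_bound_I_2_I_star}. There is no genuine obstacle here beyond bookkeeping: the only subtle point is the vanishing of the derivative certificate on $\cq^\star$, which is what makes the substitution $\sum\hat\beta_\ell\langle\phi_T(\hat\theta_\ell),q_0\rangle_T=\hat\Upsilon(q_0)-\sum_{\ell\in\tilde S(r)^c}\hat\beta_\ell\langle\phi_T(\hat\theta_\ell),q_0\rangle_T$ clean, and which plays the same role that the identity $\langle\phi_T(\theta_k^\star),p_1\rangle_T=v_k$ plays in Lemma~\ref{boundI0}.
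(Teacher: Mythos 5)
Your proposal is correct and follows essentially the same route as the paper's own proof: you add and subtract $\sum_{\ell\in\tilde S(r)}\hat\beta_\ell\langle\phi_T(\hat\theta_\ell),q_0\rangle_T$, use the vanishing of the derivative certificate on $\cq^\star$ (from Property~$\ref{it:as2-ordre=2}$ at $\theta=\theta_k^\star$) to identify $\langle\hat\beta\Phi_T(\hat\vartheta),q_0\rangle_T$ with $\hat\Upsilon(q_0)$, and then bound the near- and far-region terms by $c_N I_2$ and $c_F I_3$ via Properties~$\ref{it:as2-ordre=2}$ and~$\ref{it:as2-<1-c}$. There is no gap.
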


\begin{proof} Let  $v\in \{-1,1\}^s$ with entries $v_k = v(\theta_k^\star)$ so that:  
	\[
	I_1
	= \sum\limits_{k \in S^{\star}}|I_{1, k}|
	= \sum\limits_{k \in S^{\star}}v_{k} I_{1, k}
	= \sum\limits_{k \in S^{\star}}\sum\limits_{\ell\in \tilde{S}_{k}(r)
	}\hat{\beta}_{\ell} \,  v_{k} \,
	\operatorname{sign}(\hat{\theta}_{\ell}-\theta_k^{\star})\,
	\mathfrak{d}_T(\hat{\theta}_{\ell},\theta_k^{\star}) .
	\]
	Let $q_0\in H_T$ from Assumption~\ref{assumption2} associated to
	the       application        $v$       such        that       properties
	$\ref{it:as2-ordre=2}$-$\ref{it:as2-<c}$  therein   hold.   By   adding  and
	substracting
	$\sum_{\ell \in  \tilde{S}(r) } \hat{\beta}_{\ell}  \left \langle
	\phi_{T}(\hat{\theta}_{\ell}),  q_{0}\right  \rangle_T= \left\langle
	\hat{\beta}\Phi_{T}(\hat{\vartheta}), q_{0}\right \rangle_T - 
	\sum_{\ell \in \tilde{S}(r)^{c}} \hat{\beta}_{\ell} \left \langle
	\phi_{T}(\hat{\theta}_{\ell}), q_{0}\right \rangle_T $ 
	to  $I_1$  and
	using the triangle inequality, we obtain:
	\begin{multline*}
	I_1  
	\leq \sum\limits_{k \in S^{\star}}\sum\limits_{\ell\in \tilde{S}_{k}(r)
	}|\hat{\beta}_{\ell}| \left | v_{k} \,
	\operatorname{sign}(\hat{\theta}_{\ell}-\theta_k^{\star})\,
	\mathfrak{d}_T(\hat{\theta}_{\ell},\theta_k^{\star}) - \left \langle
	\phi_{T}(\hat{\theta}_{\ell}), q_{0}\right \rangle_T \right |\\
	+ \sum\limits_{\ell \in \tilde{S}(r)^{c}} |\hat{\beta}_{\ell}| \left
	|\left \langle  \phi_{T}(\hat{\theta}_{\ell}), q_{0}\right \rangle_T
	\right | + \left | \left \langle  \hat{\beta}\Phi_{T}(\hat{\vartheta}),
	q_{0}\right \rangle_T \right |  . 
	\end{multline*}
	The property $\ref{it:as2-ordre=2}$ of Assumption  \ref{assumption2} gives
	that  $\left \langle \phi_{T}(\theta_{k}^{\star}) ,q_{0} \right
	\rangle_T = 0$ for all $k \in S^{\star}$. This implies that  $\left
	\langle  \beta^\star\Phi_{T}(\vartheta^\star), q_{0}\right \rangle_T =
	0$. Then, by using the definition of $I_2$ and $I_3$ from \eqref{eq:I2}-\eqref{eq:I3}
	and the properties $\ref{it:as2-ordre=2}$ and $\ref{it:as2-<1-c}$ of
	Assumption  \ref{assumption2}, we obtain:   
	\[
	I_1
	\leq c_{N}I_2 + c_{F}I_3 + \left | \left \langle \hat{\beta}
	\Phi_{T}(\hat{\vartheta}),q_{0} \right \rangle_T
	\right |
	= c_{N}I_2 + c_{F}I_3 +  | \hat{\Upsilon}(q_0)|, 
	\]
	with the constants $c_N$ and $c_F$ from  Assumption \ref{assumption2}.
\end{proof}

We consider the following  suprema of Gaussian processes for $i=0, 1, 2$:
\begin{equation*}
%\label{def-G012}
M_{i} = \underset{\theta \in \Theta_T}{\sup}\left |\left
\langle w_T, \phi_{T}^{[i]}(\theta) \right \rangle_T\right |.
\end{equation*}
By using the expansion \eqref{eq:decomp_taylor} and the bounds \eqref{eq:I_1_bound_I_2_I_star}  and  \eqref{eq:I_0_I_3_bound_I_2_I_star} for the second inequality, we obtain:
\begin{align}
\label{eq:w.diff-0}
| \hat{\Upsilon}(w_T) |
&\leq (I_0 + I_3) M_{0} + I_1 M_{1} + I_2 \, 2^{-1} \, M_{2}\\
&\leq (C_{N}' I_{2 } + (2 - C_{F})I_3 + |\hat{\Upsilon}(p_1)|) M_{0}
+ (c_{N}I_2 + c_{F}I_3 + |\hat{\Upsilon}(q_0)|)M_{1} + I_2 \,
2^{-1} \, M_{2}.
\label{eq:w.diff}
\end{align}

\medskip

At this point, one needs to bound $I_2$ and $I_3$. 
In order to do so, we will bound from above and from below the                    
Bregman divergence $D_{B}$
defined by:
\begin{equation}
\label{def:bregman_divergence}
D_{B} = \|\hat{\beta}\|_{\ell_1} - \|\beta^{\star}\|_{\ell_1} - \hat{\Upsilon}(p_0),
\end{equation}
where $p_{0}$  is the element of $H_T$ given by the Assumption~\ref{assumption1}  associated to the application $v:\cq^\star \rightarrow \{ -1,1\}$ given by:
\begin{equation}
\label{def:p_0}
v(\theta_{k}^\star) = 	 \text{sign} (\beta^{\star}_{k}) \quad\text{for
	all $ k \in S^{\star}$}.
\end{equation}

The next lemma gives a lower bound of the Bregman divergence.

\begin{lem}
	\label{lower_bound_bregman}
	Under  the  assumptions  of   Theorem  \ref{maintheorem}  and  with  the
	constants $C_N$ and  $C_F$  of  Assumption \ref{assumption1}, we
	get that:
	\begin{equation}
	\label{eq:lower_bound_bregman}
	D_{B} \geq C_{N}I_2 + C_{F}I_3 . 
	\end{equation}
\end{lem}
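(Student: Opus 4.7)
The plan is to rewrite $D_B$ as a single sum over the support $\hat S$ of $\hat\beta$ by absorbing $\|\beta^\star\|_{\ell_1}$ into the term $\hat\Upsilon(p_0)$, and then use the pointwise bounds on $\langle \phi_T(\cdot), p_0\rangle_T$ given by Assumption~\ref{assumption1} separately on the near region $\tilde S(r)$ and on the far region $\tilde S(r)^c$.

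First, I would expand $\hat\Upsilon(p_0) = \sum_{\ell=1}^K \hat\beta_\ell \langle \phi_T(\hat\theta_\ell), p_0\rangle_T - \sum_{k\in S^\star} \beta^\star_k \langle \phi_T(\theta^\star_k), p_0\rangle_T$. The interpolation property~$\ref{it:as1-ordre=2}$ at $\theta = \theta^\star_k$ gives $\langle \phi_T(\theta^\star_k), p_0\rangle_T = v(\theta^\star_k) = \operatorname{sign}(\beta^\star_k)$ by the choice~\eqref{def:p_0} of $p_0$, so the second sum equals $\sum_{k\in S^\star} \beta^\star_k \operatorname{sign}(\beta^\star_k) = \|\beta^\star\|_{\ell_1}$. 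Substituting into~\eqref{def:bregman_divergence} then yields the clean identity
\[
D_B = \sum_{\ell=1}^K \Bigl(|\hat\beta_\ell| - \hat\beta_\ell \langle \phi_T(\hat\theta_\ell), p_0\rangle_T\Bigr) \geq \sum_{\ell=1}^K |\hat\beta_\ell|\,\Bigl(1 - |\langle \phi_T(\hat\theta_\ell), p_0\rangle_T|\Bigr),
\]
where the inequality comes from $\hat\beta_\ell x \leq |\hat\beta_\ell|\,|x|$.

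Next, I would split the sum on the right according to whether $\ell$ lies in $\tilde S_k(r)$ for some $k\in S^\star$ (and hence $\hat\theta_\ell \in \mathcal{B}_T(\theta^\star_k, r)$) or in $\tilde S(r)^c$. On the near region, property~$\ref{it:as1-<1}$ gives $1 - |\langle \phi_T(\hat\theta_\ell), p_0\rangle_T| \geq C_N\, \mathfrak{d}_T(\theta^\star_k, \hat\theta_\ell)^2$. On the far region, property~$\ref{it:as1-<1-c}$ gives $1 - |\langle \phi_T(\hat\theta_\ell), p_0\rangle_T| \geq C_F$. Summing these contributions and recognising the definitions~\eqref{eq:I2} of $I_2$ and~\eqref{eq:I3} of $I_3$ delivers $D_B \geq C_N I_2 + C_F I_3$, which is~\eqref{eq:lower_bound_bregman}.

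There is essentially no obstacle here: the proof is a direct book-keeping exercise once $p_0$ is chosen so that its interpolation values on $\cq^\star$ match the signs of $\beta^\star$. The only subtlety is ensuring that the sets $\tilde S_k(r)$ are pairwise disjoint (so that no index is counted twice in the near-region sum), which follows from the separation hypothesis $\mathfrak{d}_T(\theta^\star_k, \theta^\star_{k'}) > 2r$ built into Assumption~\ref{assumption1}, as already noted in the statement of Theorem~\ref{maintheorem} and used earlier in the decomposition~\eqref{eq:decomp_taylor}.
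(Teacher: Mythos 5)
Your proof is correct and follows essentially the same route as the paper: expand the Bregman divergence, use the interpolation property $\ref{it:as1-ordre=2}$ of $p_0$ with the sign choice~\eqref{def:p_0} to cancel the $\|\beta^\star\|_{\ell_1}$ term, bound $\hat\beta_\ell\langle\phi_T(\hat\theta_\ell),p_0\rangle_T$ by $|\hat\beta_\ell|\,|\langle\phi_T(\hat\theta_\ell),p_0\rangle_T|$, and then split the resulting sum into near and far regions using properties~$\ref{it:as1-<1}$ and~$\ref{it:as1-<1-c}$. The only cosmetic difference is that you sum over all $\ell\in\{1,\dots,K\}$ rather than over $\hat S=\supp(\hat\beta)$, which is immaterial since terms with $\hat\beta_\ell=0$ vanish.
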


\begin{proof} By definition \eqref{def:bregman_divergence} of $D_B$ we have:
	\[
	D_{B} 
	=\sum\limits_{k \in \hat{S}} |\hat{\beta}_{k}| -\hat{\beta}_{k} \left
	\langle \phi_{T}(\hat{\theta}_{k}) ,p_{0} \right \rangle_T  - \left(
	\sum\limits_{k \in S^{\star}} |\beta^{\star}_{k}| -\beta^{\star}_{k}
	\left \langle \phi_{T}(\theta^{\star}_{k}) ,p_{0} \right \rangle_T
	\right) .
	\]
	By  using the  interpolating  properties of  the  element $p_{0}$ of $H_T$  from
	Assumption~\ref{assumption1}  associated  to  the function  $v$  defined
	in~\eqref{def:p_0},                        we                       have
	$\sum_{k  \in  S^{\star}} |\beta^{\star}_{k}|  -\beta^{\star}_{k}  \left
	\langle   \phi_{T}(\theta^{\star}_{k})  ,p_{0}   \right  \rangle_T   =
	0$. Hence, we deduce that:
	\begin{equation*}
	\begin{aligned}
	D_{B}  &=\sum\limits_{k \in \hat{S}} |\hat{\beta}_{k}| -\hat{\beta}_{k}
	\left \langle \phi_{T}(\hat{\theta}_{k}) ,p_{0} \right \rangle_T \\
	&\geq 		\sum\limits_{k \in \hat{S}} |\hat{\beta}_{k}|
	-|\hat{\beta}_{k}| \left | \left \langle \phi_{T}(\hat{\theta}_{k})
	,p_{0} \right \rangle_T \right |\\
	&=   \sum\limits_{\ell \in \tilde{S}(r)}|\hat{\beta}_{\ell}| \left ( 1 -
	\left | \left \langle \phi_{T}(\hat{\theta}_{\ell}) ,p_{0} \right
	\rangle_T  \right |\right )  + \sum\limits_{k \in \tilde{S}(r)^{c}}
	|\hat{\beta}_{k}| \left ( 1 -  \left | \left \langle
	\phi_{T}(\hat{\theta}_{k}) ,p_{0} \right \rangle_T  \right |\right
	).
	\end{aligned}
	\end{equation*}
	Thanks to properties $\ref{it:as1-<1}$ and $\ref{it:as1-<1-c}$ of Assumption
	\ref{assumption1} and the definitions  \eqref{eq:I2} and \eqref{eq:I3} of $I_2$ and $I_3$,
	we obtain:
	\[
	D_{B}  \geq  \sum\limits_{k \in  S^{\star}} \sum\limits_{\ell\in
		\tilde{S}_{k}(r)} C_{N} | \hat{\beta}_{\ell}| \mathfrak{d}_T
	(\hat{\theta}_{\ell}, \theta_{k}^{\star})^{2} + \sum\limits_{k \in
		\tilde{S}(r)^{c}} C_{F}|\hat{\beta}_{k}| = C_{N}I_2 + C_{F}I_3,
	\]
	where the constants $C_N$ and $C_F$ are that of Assumption \ref{assumption1}.
\end{proof}

We now give an upper bound of  the Bregman divergence.
\begin{lem}
	\label{lem:upper_bound_bregman}
	Under  the  assumptions  of   Theorem  \ref{maintheorem}, we have:
	\begin{multline}
	\label{eq:upper_bound_bregman}
	\kappa D_{B}  \leq I_2\left (C_{N}' M_{0}+ c_{N} M_{1}+
	2 ^{-1}  M_{2}\right) + I_3\left ( (2- C_{F})  M_{0} +  c_{F} M_{1}
	\right )
	+ |\hat{\Upsilon}(p_1)| M_{0}+|\hat{\Upsilon}(q_0)| M_{1} +  \kappa
	|\hat{\Upsilon}(p_0)| .
	\end{multline}
\end{lem}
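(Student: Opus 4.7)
The plan is to combine the optimality inequality~\eqref{eq:optim_bis} with the decomposition bound~\eqref{eq:w.diff} and the definition of the Bregman divergence, and then discard the non-negative term $\hat R_T^2/2$.

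First, starting from~\eqref{eq:optim_bis} and rearranging we obtain
\[
\kappa\big(\|\hat\beta\|_{\ell_1}-\|\beta^\star\|_{\ell_1}\big)
\leq \hat\Upsilon(w_T) - \tfrac{1}{2}\hat R_T^2
\leq \hat\Upsilon(w_T).
\]
Subtracting $\kappa\hat\Upsilon(p_0)$ on both sides and using the definition~\eqref{def:bregman_divergence} of $D_B$ gives
\[
\kappa D_B
\leq \hat\Upsilon(w_T) - \kappa\hat\Upsilon(p_0)
\leq |\hat\Upsilon(w_T)| + \kappa\,|\hat\Upsilon(p_0)|.
\]

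Next, I would apply the upper bound~\eqref{eq:w.diff} on $|\hat\Upsilon(w_T)|$, which was obtained from the Taylor-type decomposition~\eqref{eq:decomp_taylor} combined with Lemmas~\ref{boundI0} and~\ref{boundI1}:
\[
|\hat\Upsilon(w_T)|
\leq \bigl(C_N' I_2 + (2-C_F)I_3 + |\hat\Upsilon(p_1)|\bigr)M_0
+ \bigl(c_N I_2 + c_F I_3 + |\hat\Upsilon(q_0)|\bigr)M_1
+ \tfrac{1}{2}I_2\,M_2.
\]
Injecting this into the previous display and regrouping the coefficients of $I_2$ and $I_3$ yields exactly
\begin{multline*}
\kappa D_B
\leq I_2\bigl(C_N' M_0 + c_N M_1 + 2^{-1}M_2\bigr)
+ I_3\bigl((2-C_F)M_0 + c_F M_1\bigr)\\
+ |\hat\Upsilon(p_1)|\,M_0 + |\hat\Upsilon(q_0)|\,M_1 + \kappa\,|\hat\Upsilon(p_0)|,
\end{multline*}
which is the claim. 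There is essentially no difficulty beyond careful bookkeeping: the core input (the decomposition of $\hat\Upsilon(w_T)$ across the near/far regions) has already been prepared in Lemmas~\ref{boundI0}--\ref{boundI1}, so the only substantive step here is to trade the optimality gap $\kappa(\|\hat\beta\|_{\ell_1}-\|\beta^\star\|_{\ell_1})$ against $\hat\Upsilon(w_T)$ through~\eqref{eq:optim_bis} while discarding $\tfrac12\hat R_T^2\ge 0$.
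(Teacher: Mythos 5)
Your proof is correct and follows essentially the same route as the paper's: derive $\kappa(\|\hat\beta\|_{\ell_1}-\|\beta^\star\|_{\ell_1})\le\hat\Upsilon(w_T)$ from~\eqref{eq:optim_bis}, use the definition~\eqref{def:bregman_divergence} of $D_B$ to get $\kappa D_B\le|\hat\Upsilon(w_T)|+\kappa|\hat\Upsilon(p_0)|$, then substitute the bound~\eqref{eq:w.diff}. No gaps.
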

\begin{proof} Recall that $\cq^{\star}\subset \Theta_T$.
	We deduce from \eqref{eq:optim_bis} that:
	\begin{equation}
	\kappa(||\hat{\beta}||_{\ell_1} - ||\beta^{\star}||_{\ell_1})
	\leq  \hat{\Upsilon}(w_T) -\frac{1}{2}
	\norm{\beta^{\star}\Phi_{T}(\vartheta^{\star})-
		\hat{\beta}\Phi_{T}(\hat{\vartheta})}_{T}^{2} 
	\leq \hat{\Upsilon}(w_T).  
	\end{equation} 
	Using~\eqref{def:bregman_divergence},  we obtain:
	\begin{equation*}
	\kappa	D_{B} \leq  |\hat{\Upsilon}(w_T) |+ \kappa  |\hat{\Upsilon}(p_0)|.
	\end{equation*}
	Then, use \eqref{eq:w.diff} to get~\eqref{eq:upper_bound_bregman}.
\end{proof}

By combining the upper and lower bounds \eqref{eq:lower_bound_bregman}
and \eqref{eq:upper_bound_bregman}, we deduce
that:
\begin{multline}
\label{boundI2}
I_{2 } \left ( C_{N} - \frac{1}{\kappa} \left (C_{N}' M_{0}+ c_{N}
M_{1}+ 2 ^{-1} M_{2}\right  ) \right ) + I_3\left (C_{F} -
\frac{1}{\kappa} \left ( (2- C_{F})  M_{0} +  c_{F} M_{1} \right
)\right )\\ 
\leq\frac{1}{\kappa} |\hat{\Upsilon}(p_1)| M_{0}+\frac{1}{\kappa}
|\hat{\Upsilon}(q_0)| M_{1} +  |\hat{\Upsilon}(p_0)|. 
\end{multline}
\medskip

We define the events:
\begin{equation}
\label{eq:events_A}
\mathcal{A}_i = \left \{  M_{i} \leq  \mathcal{C} \, \kappa \right \},
\, \quad\text{for $i \in \{0,1,2\}$}
\quad \text{and}\quad
\mathcal{A} = \mathcal{A}_0 \cap \mathcal{A}_1 \cap \mathcal{A}_2,
\end{equation}
where:
\[
\mathcal{C}=  \frac {C_F }{2 ( 2-C_F + c_F)} \wedge
\frac{C_N}{ 2  ( C'_{N} + c_{N} + 2^{-1})}\cdot 
\]
(We shall prove in~\eqref{eq:bounds_event_A}
that  the event $\mathcal{A}$ occurs with high probability.)
We get from Inequality $\eqref{boundI2}$, that on the event $\mathcal{A}$:
\begin{equation}
\label{eq:bound_I_2_I_star}
C_{N}  I_{2 } + C_{F} I_3
\leq   2 \mathcal{C}'  \left (|\hat{\Upsilon}(p_1)| +
|\hat{\Upsilon}(q_0)| +|\hat{\Upsilon}(p_0)|\right )
\quad\text{with}\quad
\mathcal{C}' = \mathcal{C}\vee 1. 
\end{equation}
\medskip

By reinjecting \eqref{eq:diff_l1_norm},
\eqref{eq:w.diff}, \eqref{eq:I_0_I_3_bound_I_2_I_star} and~\eqref{eq:I_1_bound_I_2_I_star}
in 
\eqref{eq:optim_bis} one gets:
\begin{multline*}
%\label{eq:bound_prederror_I}
\frac{1}{2}\hat{R}_T^2
\leq I_2(C_{N}'M_{0} + c_{N}M_{1} + 2^{-1} M_{2} + \kappa C_{N}') + I_3(
(2 - C_{F})M_{0}+ c_{F} M_{1} + \kappa (1 - C_{F}))\\ 
+|\hat{\Upsilon}(p_1)| (M_{0}+ \kappa) + |\hat{\Upsilon}(q_0)|M_{1}. 
\end{multline*}
Using~\eqref{eq:bound_I_2_I_star}, we obtain an upper bound
for the prediction error on the event $\mathcal{A}$:
\begin{equation}
\label{eq:bound_A_square}
\hat{R}_T^2 \leq C \, \kappa \, (|\hat{\Upsilon}(p_0)| + |\hat{\Upsilon}(p_1)| + |\hat{\Upsilon}(q_0)|),
\end{equation}
with
\[
C = 4 {\mathcal{C}'} \left(1+
\frac{\mathcal{C}'}{C_N}( 2 C_N'+c_N+1)
+ \frac{\mathcal{C}'}{C_F}(3-2C_F + c_F)
\right).
\]
\medskip

Using the Cauchy-Schwarz inequality and the definition of
$\hat{\Upsilon}$, we get that  for $f \in H_T$:
\begin{equation}
\label{eq:functional_bound}
|\hat{\Upsilon}(f) |\leq \hat{R}_T \norm{f}_T.
\end{equation}
Using Assumption~\ref{assumption1}~$\ref{it:norm<c}$ for $p_0$ and $p_1$,
and Assumption~\ref{assumption2}~$\ref{it:as2-<c}$ for $q_0$,  we get:
\begin{equation}
\label{eq:bounds_pq}
\norm{p_0}_T \leq C_B\sqrt{\sparse}, \quad \norm{p_1}_T \leq
C_B\sqrt{\sparse}
\quad\text{and}\quad
\norm{q_0}_T \leq c_B\sqrt{\sparse}.
\end{equation} 
Plugging this in \eqref{eq:bound_A_square}, we get that on the event $\mathcal{A}$:
\begin{equation}
\label{eq:bound_prediction_squared}
\hat{R}_T^2 \leq \mathcal{C}_0 \, \kappa \hat{R}_T \,  \sqrt{\sparse}
\quad\text{with}\quad \mathcal{C}_0 = (c_B+2C_B) C.
\end{equation}
This gives \eqref{eq:main_theorem}.

\medskip

The   proof   of   \eqref{eq:main_theorem_diff_l1}   is   postponed   to
Section~\ref{sec:proof_theorem_bounds} and  will be easily  deduced from
the first and third inequalities in 
\eqref{eq:bound_th_abc}.

\medskip

To complete the  proof of Theorem~\ref{maintheorem} we  shall give a
lower bound  for the probability  of the event $\mathcal{A}$  defined in
\eqref{eq:events_A}.
For $i=0, 1, 2$ and $\theta\in \Theta$, set $X_i(\theta)= \left  \langle
w_T,  \phi_{T}^{[i]}(\theta) \right  \rangle_T$ a   real   centered
Gaussian   process  with   continuously
differentiable sample paths, so that its supremum is 
$M_{i}=\sup_{\Theta_T} |X_i|$.

We first consider $i=0$.  We have,
thanks  to~\eqref{def:derivatives_kernel}   and~\eqref{eq:K-g}  for  the
second part:
\[
\norm{\phi_T(\theta)}_T^2 = 1
\quad \text{and} \quad
\norm{\phi_{T}^{[1]}(\theta)}_T^2= \cK_T^{[1,1]}(\theta,\theta) = 1.
\]
Recall  Assumption  \ref{hyp:bruit} on the noise $w_T$ holds. We deduce from Lemma \ref{lem:bound_azais_final} with $C_1=C_2=1$ that:
\begin{equation}
\label{eq:bound_A_0}
\P\left (\mathcal{A}_0^c \right )=
\P \left ( \sup_{\Theta_T} |X_0| > \mathcal{C}  \, \kappa \right ) \leq c_0 \left ( \sigma \frac{|\Theta_T|_{\mathfrak{d}_T}\sqrt{ \Delta_T}}{\mathcal{C}  \, \kappa }\vee 1 \right )\,  \expp{-(\mathcal{C}  \, \kappa)^2/(4
	\sigma^2 \Delta_T)},
\end{equation}
where  $|\Theta_T|_{\mathfrak{d}_T}$ denotes  the  diameter  of the  set
$\Theta_T$ with respect to the metric $\mathfrak{d}_T$ and $c_0 = 3$.

We consider $i=1$. Thanks to~\eqref{def:derivatives_kernel}, we get:
\[
\norm{\phi_{T}^{[1]}(\theta)}_T^2 = 1
\quad \text{and} \quad
\norm{\tilde{D}_{1;T}[\phi_{T}^{[1]}](\theta) }_T^2 = \norm{\phi_{T}^{[2]}(\theta)  }_T^2 = \cK_T^{[2,2]}(\theta,\theta).
\]
Recall  $L_{2,2}$ and $\DT$ are defined in  \eqref{def:M_h_M_g} and \eqref{def:V_1}.
Since Assumptions~\ref{hyp:Theta_infini}
and~\ref{hyp:close_limit_setting} hold, we get
that for $\theta \in \Theta_T$:
\[
\cK_T^{[2,2]}(\theta,\theta) \leq L_{2,2} + \DT \leq 2 L_{2,2}.
\]
We deduce from  Lemma \ref{lem:bound_azais_final}
with $C_1=1$ and $C_2= \sqrt{2 L_{2,2}}$
and taking  $c_1 =  2\sqrt{2L_{2,2}} + 1$, that:
\begin{equation}
\label{eq:bound_A_1}
\P\left (\mathcal{A}_1^c \right )
= \P \left ( \sup_{ \Theta_T} |X_1| > \mathcal{C}  \, \kappa \right )
\leq c_1 \left ( \sigma \frac{|\Theta_T|_{\mathfrak{d}_T}\sqrt{
		\Delta_T}}{\mathcal{C}  \, \kappa }\vee 1 \right )\,
\expp{-(\mathcal{C}  \, \kappa)^2/(4  	\sigma^2 \Delta_T)}. 
\end{equation}

We consider $i=2$.
Thanks to~\eqref{def:derivatives_kernel}, we get:
\[
\norm{\phi_T^{[2]}(\theta)}_T^2 =  \ck_T^{[2,2]}(\theta,\theta)
\quad \text{and} \quad
\norm{\tilde{D}_{1;T}[\phi_T^{[2]}](\theta)}_T^2 =
\norm{\phi_{T}^{[3]}(\theta)}_T^2 = \cK_T^{[3,3]}(\theta,\theta) . 
\]
Recall the definition of the function $h_\infty $ given in~\eqref{eq:def-h_K}
and the constants  $L_{2,2}$, $L_3$, $\DT$ defined in \eqref{def:M_h_M_g} and \eqref{def:V_1}. 
Using also Assumption~\ref{hyp:close_limit_setting} so that $\DT \leq
L_{2,2}\wedge L_3$, we get that for all $\theta \in \Theta_T$:
\[
\cK_T^{[2,2]}(\theta,\theta) \leq L_{2,2} + \DT \leq 2 L_{2,2}
\quad\text{and}\quad
\cK_T^{[3,3]}(\theta,\theta) \leq L_3 + \DT\leq  2 \, L_3.
\]
We deduce from  Lemma \ref{lem:bound_azais_final} with $C_1= \sqrt{2 L_{2,2}}$ and $C_2=\sqrt{2 L_3}$
and taking  $c_2 =  2\sqrt{2L_3} + 1$,
that:
\begin{equation}
\label{eq:bound_A_2}
\P\left (\mathcal{A}_2^c \right )= \P \left ( \sup_{ \Theta_T} |X_2| >
\mathcal{C}  \, \kappa \right )
\leq c_2 \left ( \sigma
\frac{|\Theta_T|_{\mathfrak{d}_T}\sqrt{ \Delta_T}}{\mathcal{C}  \,
	\kappa }\vee 1 \right )\,  \expp{-(\mathcal{C}  \, \kappa)^2/(8 
	\sigma^2 \Delta_T L_{2,2})}.
\end{equation}
\medskip

Since $\ca=\ca_0\cap \ca_1\cap \ca_2$, we deduce from
\eqref{eq:bound_A_0}, \eqref{eq:bound_A_1} and \eqref{eq:bound_A_2} that:
\[
\P\left ( \mathcal{A}^c \right ) =
\P\left
(\mathcal{A}_0^c \cup \mathcal{A}_1^c \cup \mathcal{A}_2^c \right )
\leq  \mathcal{C}_2' \, \left ( \sigma
\frac{|\Theta_T|_{\mathfrak{d}_T}\sqrt{ \Delta_T}}{\mathcal{C}\kappa }\vee 1
\right ) \, 
\expp{- \kappa^2/( \mathcal{C}_1^2 \, \sigma^2 \Delta_T)},
\]
with the  finite positive constants:
\[
\mathcal{C}_1 = \frac{2}{\mathcal{C}} \, \left(1 \vee \sqrt{2
	L_{2,2}}\right)
\quad\text{and}\quad
\mathcal{C}_2' = {c_0 +c_1 + c_2}  .
\]
By taking $\kappa \geq \mathcal{C}_1  \sigma \sqrt{\Delta_T \log \tau}$, for any positive constant $\tau > 1$, we get:
\begin{equation}
\label{eq:bounds_event_A}
\P\left (\mathcal{A}_0^c \cup \mathcal{A}_1^c \cup \mathcal{A}_2^c
\right )
\leq \mathcal{C}_2\left (  \frac{|\Theta_T|_{\mathfrak{d}_T}}{\tau
	\sqrt{\log \tau } }\vee \frac{1}{\tau}\right )
\quad\text{with}\quad
\mathcal{C}_2 = \mathcal{C}_2' \, \left ( \frac{1}{\mathcal{C} \mathcal{C}_1} \vee 1 \right).
\end{equation}
This  completes the proof of the theorem. 

\subsection{Proof of Theorem \ref{th:bounds} and of Equation \eqref{eq:main_theorem_diff_l1}} 
\label{sec:proof_theorem_bounds}
We keep notations from  Section \ref{sec:proof_main_theorem}.
Recall that Assumptions $\ref{hyp:theorem1_point1}$-$\ref{hyp:existence_certificate_theorem}$
of Theorem \ref{maintheorem} are in force.
We shall first provide an upper bound of $I_i$ for $i=0,1,2, 3$.
We deduce from \eqref{eq:functional_bound}, \eqref{eq:bounds_pq} and
\eqref{eq:bound_prediction_squared},  that, on the event $\mathcal{A}$:
\[
|\hat{\Upsilon}(p_0)|  \leq   \mathcal{C}_0 C_B \, \kappa \, \sparse, \quad 
|\hat{\Upsilon}(p_1)| \leq  \mathcal{C}_0 C_B \, \kappa \, \sparse
\quad\text{and}\quad
|\hat{\Upsilon}(q_0)|  \leq  \mathcal{C}_0 c_B \, \kappa \, \sparse. 
\]
Then, we obtain from \eqref{eq:bound_I_2_I_star} that, on the event $\mathcal{A}$:
\begin{equation}
\label{eq:bound_1_th2-1}
I_3  \leq  \mathcal{C}_5 \, \kappa \, \sparse
\quad\text{and}\quad
I_2  \leq   \mathcal{C}_6 \, \kappa \, \sparse
\quad\text{with}\quad
\mathcal{C}_5 = 2 \frac {\mathcal{C}'}{C_F} \mathcal{C}_0 (c_B + 2 C_B)
\quad\text{and}\quad 
\mathcal{C}_6 =\frac{ C_F}{C_N} \mathcal{C}_5. 
\end{equation}
This gives the third inequality in \eqref{eq:bound_th_abc}, as well as
Inequality~\eqref{eq:bound_th_I2}.  
We also deduce from \eqref{eq:I_0_I_3_bound_I_2_I_star} that, on the
event $\mathcal{A}$:
\begin{equation}
\label{eq:bound_1_th2_0}
I_0\leq    \mathcal{C}_4 \, \kappa \, \sparse
\quad\text{with}\quad
\mathcal{C}_4=C'_N \mathcal{C}_6+ (1-C_F) \mathcal{C}_5+  \mathcal{C}_0 C_B.
\end{equation}
This gives the second inequality in \eqref{eq:bound_th_abc}. 
\medskip

We now establish the first
inequality in \eqref{eq:bound_th_abc}. We deduce
from \eqref{eq:optim_bis} that:
\begin{equation}
\label{eq:chocolat}
\kappa(\|\hat{\beta}\|_{\ell_1} - \|\beta^{\star}\|_{\ell_1})
\leq \hat{\Upsilon}(w_T). 
\end{equation}
Then, using the bounds~\eqref {eq:bound_1_th2_0} and~\eqref{eq:bound_1_th2-1}
on $I_0$, $I_2$ and $I_3$,
we deduce from  \eqref{eq:w.diff-0} and~\eqref{eq:I_1_bound_I_2_I_star} that, on the event $\mathcal{A}$:
\begin{equation}
\label{eq:banane}
|\hat{\Upsilon}(w_T)| \leq  \mathcal{C}_7 \, s \, \kappa^2
\quad\text{with}\quad
\mathcal{C}_7 = \mathcal{C} \left(\mathcal{C}_4+ \mathcal{C}_5(1+ c_F) +
\mathcal{C}_6(1+ c_N) + \mathcal{C}_0 c_B
\right). 
\end{equation}
Thus,  \eqref{eq:chocolat} and \eqref{eq:banane} imply that, on the event $\mathcal{A}$:
\begin{equation}
\label{eq:kiwi}
\|\hat{\beta}\|_{\ell_1} - \|\beta^{\star}\|_{\ell_1} \leq \mathcal{C}_7 \, s \, \kappa.
\end{equation}
Then, use \eqref{eq:diff_l1_norm} and~\eqref{eq:bound_1_th2_0} to deduce that, on the event
$\mathcal{A}$:
\[
\big|	\|\hat{\beta}\|_{\ell_1} - \|\beta^{\star}\|_{\ell_1} \big|
\leq
(\mathcal{C}_4 \vee \mathcal{C}_7) \, s \, \kappa.
\]
This    proves    \eqref{eq:main_theorem_diff_l1}   (we    shall    take
$\mathcal{C}_3  =  \mathcal{C}_7  +   2  \mathcal{C}_4  $, see  below).   Let
$\mathcal{I}^+$   (resp.  $\mathcal{I}^-$)   be  the   set  of   indices
$k       \in      S^\star$       such       that      the       quantity
$\Big(\sum_{\ell     \in     \tilde{S}_{k}(r)     }     |\hat{\beta}_{\ell}|\Big)-
|\beta_{k}^{\star}|$  is non  negative  (resp. negative).   We have  the
following decomposition:
\begin{equation}
\begin{aligned}
\sum\limits_{k \in S^{\star}}
\Big| \sum\limits_{\ell \in \tilde{S}_{k}(r) }
|\hat{\beta}_{\ell}| - |\beta_{k}^{\star}| \Big |
&=  \sum\limits_{k \in \mathcal{I}^+} \Big(\sum\limits_{\ell \in \tilde{S}_{k}(r) }
|\hat{\beta}_{\ell}| - |\beta_{k}^{\star}|\Big)
+ \sum\limits_{k \in   \mathcal{I}^-}
\Big ( |\beta_{k}^{\star}|- \sum\limits_{\ell \in \tilde{S}_{k}(r) }
|\hat{\beta}_{\ell}|   \Big )\\ 
&\leq   \|\hat{\beta}\|_{\ell_1}  - \|\beta^{\star}\|_{\ell_1}  + 2
\sum\limits_{k \in \mathcal{I}^-}\Big (|\beta_{k}^{\star}|  - 
\sum\limits_{\ell \in
	\tilde{S}_{k}(r) } |\hat{\beta}_{\ell}| \Big )\\
&\leq \|\hat{\beta}\|_{\ell_1}  - \|\beta^{\star}\|_{\ell_1}  + 2 I_0.
\end{aligned}
\end{equation}
Then, use~\eqref{eq:bound_1_th2_0} and  \eqref{eq:kiwi} to obtain the
first inequality 
\eqref{eq:bound_th_abc} with $\mathcal{C}_3 =  \mathcal{C}_7 + 2
\mathcal{C}_4 $. This ends the proof of Theorem~\ref{th:bounds}.

\section{Construction of certificate functions}
\label{sec:proof_interpolating}

\subsection{Proof of Proposition \ref{prop:certificat_interpolating} (Construction of an interpolating certificate)}
\label{sec:proof_interpolating_part1}
This    section    is    devoted     to    the    proof    of    Proposition
\ref{prop:certificat_interpolating}.  We closely  follow the proof of
\cite{poon2018geometry}  taking into  account the  approximation of  the
kernel $\cK_T$ by the kernel $\cK_\infty$, which is measured through the
quantity $\mathcal{V}_T$ defined in \eqref{def:V_1}.
%We shall take into consideration different possible choices for the norm
%on the matrices build from the kernel $\cK_T$. 

Let $T \in  \N$ and $s \in \N^*$. Recall Assumptions~\ref{hyp:g>0} (and thus \ref{hyp:reg-f} on the regularity of $\varphi_T$) and ~\ref{hyp:Theta_infini} on the regularity of the asymptotic kernel $\cK_{\infty}$ are in
force. Let $\rho \geq 1$, let  $r \in \left (0,1/\sqrt{2L_{0,2}} \right
)$  and $u_{\infty} \in \left (0,H_{\infty}^{(2)}(r,\rho) \right ) $
such that $ \ref{hyp:theorem_certificate_concavity},\,
\ref{hyp:theorem_certificate_separation},\,
\ref{hyp:theorem_certificate_metric}$ and
$\ref{hyp:theorem_certificate_approximation}$  of
Proposition~\ref{prop:certificat_interpolating} hold. 
We denote by $\norm{\cdot}_\op$ the operator norm associated to the
$\ell_\infty $ norm on $\R^\sparse$.

By assumption $\delta_\infty (u_{\infty},s)$ is finite.
Let $ \vartheta^\star = (\theta_1^\star, \ldots,
\theta_\sparse^\star)\in \Theta_{T,2 \rho_T \,
	\delta_{\infty}(u_{\infty},s) }^s$. We note $\cq^{\star}=
\{\theta^\star   _i,  \,   1\leq  i\leq   s\}$ the set of parameters of
cardinal $s$.
By   Lemma   \ref{lem:approx-delta},   we  have:
\[
\Theta^s_{T   ,  \rho_T   \delta_\infty   (u_{\infty},  s)}   \subseteq
\Theta^s_{T , \delta_T (u_T(s), s)}
\quad\text{where}\quad
u_T(s)=u_\infty +(s-1)  \mathcal{V}_T.
\]
Hence we have:
\begin{equation}
\label{eq:theta_set}
\vartheta^\star \in   \Theta^s_{T , \delta_T (u_T(s), s)}.
\end{equation}  
Set
\begin{equation}
\label{eq:def-mat-G}
\Gamma^{[i,j]}=\ck_T^{[i,j]}(\vartheta^\star)
\quad\text{and}\quad
\Gamma = \begin{pmatrix}
\Gamma^{[0,0]} & \Gamma^{[1,0]\top}\\
\Gamma^{[1, 0]} & \Gamma^{[1,1]}
\end{pmatrix}.
\end{equation}  

We deduce
from~\eqref{eq:def-delta-rs2} and~\eqref{eq:theta_set} that:
\begin{equation}
\label{eq:I_gamma001}
\norm{I - \Gamma^{[0,0]}}_{\op} \leq  u_T(s),
\quad
\norm{I - \Gamma^{[1,1]}}_{\op} \leq  u_T(s),
\quad
\norm{\Gamma^{[1,0]}}_{\op} \leq u_T(s)
\quad\text{and}\quad
\norm{\Gamma^{[1,0]\top}}_{\op} \leq u_T(s).
\end{equation} 

For
simplicity, for an  expression $A$ we write $A_T$  for $A_{\cK_T}$. 
Using this convention, recall the definition of the derivative operator
$\tD_{i; T}$ and write $\phi_T^{[1]}$ for $\tD_{1; T}[\phi_T]$. 

Let $\alpha=(\alpha_1, \ldots, \alpha_s)^\top$ and $ \coeff=(\coeff_1,
\ldots, \coeff_s)^\top$ be elements of $\R^s$. 
Let  $p_{\alpha,\coeff}$ be an element of $H_T$  defined by:
\begin{equation}
\label{eq:p_alpha_beta}
p_{\alpha,\coeff} = \sum\limits_{k=1}^{s} \alpha_{k}
\phi_{T}(\theta^{\star}_{k})
+ \sum\limits_{k=1}^{s} \coeff_{k} \, 
\phi^{[1]}_{T}(\theta^{\star}_{k}) ,
\end{equation}
and, using      \eqref{def:derivatives_kernel}       in
Lemma~\ref{lem:gT_consistent}, set   the  interpolating  real-valued  function
$\eta_{\alpha,\coeff}$ defined on $\Theta$ by:
\begin{equation}
\label{eq:def-h_alpha_beta}
\eta_{\alpha,\coeff}(\theta) =
\langle\phi_{T}(\theta),p_{\alpha,\coeff} \rangle_T =
\sum_{k=1}^{s} 
\alpha_{k}\,  \cK_T(\theta,\theta^{\star}_{k}) + \sum_{k=1}^{s}
\coeff_{k}\,  \cK_T^{[0,1]}(\theta,\theta^{\star}_{k}).
\end{equation}
By Assumption~\ref{hyp:g>0} on the regularity of $\varphi_T$ and the positivity of $g_T$ and
Lemma~\ref{lem:gT_consistent}, we get that the function $ \eta_{\alpha,\coeff} $ is of class
$\cc^3$  on  $\Theta$, and using~\eqref{eq:tDi}, we get that:
\begin{equation}
\label{eq:intervert2}
\eta_{\alpha, \coeff}^{[1]} := \tD_{1; T} [ \eta_{\alpha, \coeff}](\theta)=
\sum_{k=1}^{s} 
\alpha_{k}\,  \cK_T^{[1, 0]}(\theta,\theta^{\star}_{k}) + \sum_{k=1}^{s}
\coeff_{k}\,  \cK_T^{[1,1]}(\theta,\theta^{\star}_{k}).
\end{equation}

We  give   a  preliminary  technical  lemma. 

\begin{lem}
	\label{lem:ab-controle}
	Let $v = (v_1,\cdots,v_s)^\top \in  \{ -1,1\}^{s}$ be a sign
	vector. Assume that~\eqref{eq:I_gamma001} holds with
	$u_T(s)<1/2$. 
	Under Assumption~\ref{hyp:g>0},  there exist
	unique  $\alpha, \coeff \in \R^s$ such that: 
	\begin{align}
	\label{system1}
	\eta_{\alpha,\coeff}(\theta^{\star}_{k}) = v_{k} \in
	\{-1,1\} \quad \text{  and } \quad
	\eta_{\alpha,\coeff}^{[1]} (\theta^{\star}_{k}) = 0
	\quad \text{for} \quad 1 \leq k \leq s.
	\end{align}
	Furthermore, we have:
	\begin{equation}
	\label{eq:majo-ab}
	\norm{\alpha}_{\ve}\leq \frac{1-u_T(s)}{1-2u_T(s)}, \quad 
	\norm{\alpha- v}_{\ve} \leq \frac{u_T(s)}{1-2u_T(s)}
	\quad \text{and} \quad
	\norm{\coeff }_{\ve}\leq \frac{u_T(s)}{1-2u_T(s)}\cdot
	\end{equation}
\end{lem}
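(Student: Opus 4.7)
The plan is to recast the system \eqref{system1} as a linear system in block matrix form and then invert via a Schur complement argument, which is the natural way to get the asymmetric bounds stated (in particular the bound $(1-u_T(s))/(1-2u_T(s))$ on $\norm{\alpha}_{\ve}$ which is tighter than what one would get from a direct Neumann series on the full $2s\times 2s$ matrix).

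First, I would evaluate \eqref{eq:def-h_alpha_beta} and \eqref{eq:intervert2} at each $\theta^\star_k$. Using the definitions of the blocks $\Gamma^{[i,j]}=\ck_T^{[i,j]}(\vartheta^\star)$ from \eqref{eq:def-mat-G} and the symmetry $\cK_T^{[0,1]}(\theta^\star_k,\theta^\star_\ell)=\cK_T^{[1,0]}(\theta^\star_\ell,\theta^\star_k)$, the conditions \eqref{system1} are equivalent to
\[
\Gamma \begin{pmatrix} \alpha \\ \coeff \end{pmatrix} = \begin{pmatrix} v \\ 0 \end{pmatrix}.
\]
Thus existence and uniqueness reduce to invertibility of $\Gamma$, and the three bounds reduce to operator-norm estimates on the inverse.

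Second, from \eqref{eq:I_gamma001} we have $\norm{I-\Gamma^{[1,1]}}_{\op}\leq u_T(s)<1$, so by the Neumann series $\Gamma^{[1,1]}$ is invertible with $\norm{(\Gamma^{[1,1]})^{-1}}_{\op}\leq 1/(1-u_T(s))$. The second block equation then gives $\coeff=-(\Gamma^{[1,1]})^{-1}\Gamma^{[1,0]}\alpha$, and substituting into the first block equation yields $\Sigma\,\alpha=v$ with the Schur complement
\[
\Sigma=\Gamma^{[0,0]}-\Gamma^{[1,0]\top}(\Gamma^{[1,1]})^{-1}\Gamma^{[1,0]}.
\]
Using \eqref{eq:I_gamma001} once more and sub-multiplicativity of the operator norm, I would bound
\[
\norm{I-\Sigma}_{\op}\leq \norm{I-\Gamma^{[0,0]}}_{\op}+\norm{\Gamma^{[1,0]\top}}_{\op}\,\norm{(\Gamma^{[1,1]})^{-1}}_{\op}\,\norm{\Gamma^{[1,0]}}_{\op}\leq u_T(s)+\frac{u_T(s)^2}{1-u_T(s)}=\frac{u_T(s)}{1-u_T(s)}\cdot
\]
Since $u_T(s)<1/2$ forces $u_T(s)/(1-u_T(s))<1$, another Neumann series argument shows $\Sigma$ is invertible with $\norm{\Sigma^{-1}}_{\op}\leq (1-u_T(s))/(1-2u_T(s))$. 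This settles existence and uniqueness and immediately gives $\norm{\alpha}_{\ve}=\norm{\Sigma^{-1}v}_{\ve}\leq (1-u_T(s))/(1-2u_T(s))$.

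Third, I would derive the other two bounds. Writing $\alpha-v=\Sigma^{-1}(I-\Sigma)v$ gives
\[
\norm{\alpha-v}_{\ve}\leq \norm{\Sigma^{-1}}_{\op}\,\norm{I-\Sigma}_{\op}\leq \frac{1-u_T(s)}{1-2u_T(s)}\cdot\frac{u_T(s)}{1-u_T(s)}=\frac{u_T(s)}{1-2u_T(s)},
\]
and from $\coeff=-(\Gamma^{[1,1]})^{-1}\Gamma^{[1,0]}\alpha$ together with \eqref{eq:I_gamma001},
\[
\norm{\coeff}_{\ve}\leq \frac{1}{1-u_T(s)}\cdot u_T(s)\cdot\frac{1-u_T(s)}{1-2u_T(s)}=\frac{u_T(s)}{1-2u_T(s)}\cdot
\]
There is no real obstacle here: the only subtlety is remembering to use the Schur-complement decomposition rather than directly Neumann-inverting the $2s\times 2s$ block matrix, since the latter would only produce $\norm{\alpha}_{\ve}\leq 1/(1-2u_T(s))$, losing the factor $1-u_T(s)$ that the statement claims. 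The fact that operator-norm bounds of block matrices with respect to $\ve$ are controlled by the sum of the operator norms of the blocks in each row is implicit throughout, but follows from the triangle inequality applied blockwise.
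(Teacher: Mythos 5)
Your proof is correct and takes essentially the same route as the paper: recast \eqref{system1} as $\Gamma\begin{pmatrix}\alpha\\\coeff\end{pmatrix}=\begin{pmatrix}v\\0_s\end{pmatrix}$, invert via the Schur complement $\Gamma_{SC}=\Gamma^{[0,0]}-\Gamma^{[1,0]\top}[\Gamma^{[1,1]}]^{-1}\Gamma^{[1,0]}$ with Neumann-series bounds on $[\Gamma^{[1,1]}]^{-1}$ and $\Gamma_{SC}^{-1}$, and read off the three estimates. The only cosmetic difference is that you substitute the second block row to eliminate $\coeff$ before invoking the Schur complement, whereas the paper invokes its Lemma~\ref{schur} directly; the resulting algebra is identical.
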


\begin{proof}[Proof of Lemma \ref{lem:ab-controle} ]
	Thanks to~\eqref{def:derivatives_kernel}, \eqref{eq:K-g}  and~\eqref{eq:intervert2},   we have:
	\begin{equation*}
	\left(	\eta_{\alpha,\coeff}(\theta^{\star}_{1}),\ldots,
	\eta_{\alpha,\coeff}(\theta^{\star}_{s}),
	\eta_{\alpha,\coeff}^{[1]}(\theta^{\star}_{1}),\ldots,
	\eta_{\alpha,\coeff}^{[1]}(\theta^{\star}_{s})\right)^\top
	= \Gamma  \begin{pmatrix} \alpha \\
	\coeff \end{pmatrix}.
	\end{equation*}
	Thus, solving \eqref{system1} is equivalent to solving,
	\begin{equation}
	\label{matrix_system1}
	\Gamma  \begin{pmatrix} \alpha \\
	\coeff \end{pmatrix}= \begin{pmatrix} v \\
	0_{s} \end{pmatrix}, 
	\end{equation}
	with $0_s$ the vector of size $s$ with all its components equal to zero.
	
	We  first show  that $\Gamma$  is non  singular so  that $\alpha$  and
	$\coeff$ exist and are uniquely defined.
	Using Lemma \ref{schur} based on the Schur complement, $\Gamma$ has an
	inverse        provided        that        $\Gamma^{[1,1]}$        and
	$              \Gamma_{SC}             :=\Gamma^{[0,0]}              -
	\Gamma^{[1,0]\top}[\Gamma^{[1,1]}]^{-1}\Gamma^{[1,0]}$     are     non
	singular.   We recall that if $M$ is a matrix
	such  that,  $\norm{I-M}_{\op}  <  1$,  then  $M$  is  non  singular,
	$M^{-1}       =        \sum\limits_{i\geq       0}(I-M)^{i}$       and
	$\norm{M^{-1}}_{\op} \leq \left (1- \norm{I-M}_{\op}\right)^{-1}$.
	\medskip

	Recall                 that                by                 assumption
	$u_T(s)\leq 1/2$.   Then, the second inequality in~\eqref{eq:I_gamma001} imply that  ${\norm{I - \Gamma^{[1,1]}}_{\op} <  1}$ and
	thus $\Gamma^{[1,1]}$ is non singular.   We now prove that $\Gamma_{SC}$
	is also non singular.  Using the triangle inequality we have:
	\begin{equation*}
	\begin{aligned}
	\norm{I - \Gamma_{SC}}_{\op}
	&= \norm{I - \Gamma^{[0,0]} +
		\Gamma^{[1,0]\top}[\Gamma^{[1,1]}]^{-1}\Gamma^{[1,0]}}_{\op}\\ 
	&\leq \norm{I - \Gamma^{[0,0]}}_{\op} +
	\norm{\Gamma^{[1,0]\top}[\Gamma^{[1,1]}]^{-1}\Gamma^{[1,0]}}_{\op}. 
	\end{aligned}
	\end{equation*}
	Let us bound the  terms on the right hand side  of the inequality above.
	To                                                                 bound
	$\norm{\Gamma^{[1,0]\top}   [\Gamma^{[1,1]}]^{-1}\Gamma^{[1,0]}}_{\op}$
	notice that:
	\begin{equation*}
	\begin{aligned}
	\norm{\Gamma^{[1,0]\top}[\Gamma^{[1,1]}]^{-1}\Gamma^{[1,0]}}_{\op} \leq ||\Gamma^{[1,0]}||_{\op}\norm{\Gamma^{[1,0]\top}}_{\op}\norm{[\Gamma^{[1,1]}]^{-1}}_{\op}.
	\end{aligned}
	\end{equation*}
	We have, thanks to \eqref{eq:I_gamma001} for the second inequality:
	\begin{equation}
	\label{eq:gamma__11-1}
	\norm{[\Gamma^{[1,1]}]^{-1}}_{\op} \leq  \frac{1}{1-\norm{I -
			\Gamma^{[1,1]}}_{\op} }\leq \frac{1}{1-u_T(s)} \cdot 
	\end{equation}
	Using~\eqref{eq:I_gamma001}, we get:
	\[
	||I - \Gamma_{SC}||_{\op} \leq u_T(s) + \frac{u_T(s)^{2}}{1-u_T(s)}
	= \frac{u_T(s)}{1-u_T(s)} \cdot
	\]
	By assumption, we have $u_T(s) \leq H_{\infty}^{(2)}(r,\rho) < 1/2$. Hence, we have  $\frac{u_T(s)}{1-u_T(s)} < 1$ and thus,  $\Gamma_{SC}$ is non singular.
	Furthermore, we get:
	\begin{equation}
	\label{eq:gamma_SC-1}
	||\Gamma_{SC}^{-1}||_{\op} \leq  \frac{1}{1-\norm{I -
			\Gamma_{SC}}_{\op} } \leq \frac{1-u_T(s)}{1-2u_T(s)}\cdot 
	\end{equation}
	As the matrices $\Gamma^{[1,1]}$ and $ \Gamma_{SC}$ are non singular, we
	deduce that the matrix $\Gamma$ is non singular.  \medskip
	
	We now give bounds related to $\alpha$ and ${ \coeff}$.
	The Lemma \ref{schur} on the Schur complement gives also that:
	\[
	\alpha = \Gamma_{SC}^{-1}v
	\quad \text{and} \quad
	{\coeff} = -[\Gamma^{[1,1]}]^{-1}\Gamma^{[1,0]}\Gamma_{SC}^{-1}v.
	\]
	Hence, we deduce that:
	\begin{equation*}
	\begin{aligned}
	\norm{\alpha}_{\ve}
	& \leq \norm{\Gamma_{SC}^{-1}}_{\op} \norm{v}_{\ve} \leq
	\frac{1-u_T(s)}{1-2u_T(s)}, \\
	\norm{{\coeff}}_{\ve}
	&\leq
	\norm{[\Gamma^{[1,1]}]^{-1}\Gamma^{[1,0]}\Gamma_{SC}^{-1}}_{\op}\norm{v}_{\ve}
	\leq
	\norm{[\Gamma^{[1,1]}]^{-1}}_{\op}\norm{\Gamma^{[1,0]}}_{\op}\norm{\Gamma_{SC}^{-1}}_{\op}
	\leq \frac{u_T(s)}{1-2u_T(s)},\\ 
	\norm{\alpha-v }_{\ve}
	&\leq  \norm{(\Gamma_{SC}^{-1}-I)}_{\op} \norm{v}_{\ve} \leq
	\norm{\Gamma_{SC} -I}_{\op} \norm{\Gamma_{SC}^{-1}}_{\op} \leq
	\frac{u_T(s)}{1-2u_T(s)}.
	\end{aligned}
	\end{equation*}
	This finishes the proof.
\end{proof}

We now fix  a sign vector $v = (v_1,\cdots,v_s)^\top  \in \{ -1,1\}^{s}$
and consider $p_{\alpha,\coeff}$  and $\eta_{\alpha,\coeff}$ with $\alpha$
and    $\coeff$    characterized    by   \eqref{system1}    from    Lemma
\ref{lem:ab-controle}.  Let  $\El\in \R^s$  be the  vector with  all the
entries equal to zero  but the $\ell$-th which is equal  to 1.
\medskip

\textbf{Proof of $\ref{it:as1-<1-c}$ from Assumption~\ref{assumption1}} \Put $C_F = {\varepsilon_{\infty}(r/\rho)}/{10}$. Let  $\theta \in \Theta_T $
such  that $\mathfrak{d}_T(\theta,\cq^\star)  > r$  (far region).  It is
enough to prove that $|\eta_{\alpha,\coeff}(\theta)| \leq 1 - C_{F}$. Let
$\theta^{\star}_{\ell}$ be one of  the elements of $\cq^{\star}$ closest
to $\theta$ in terms of the metric $\mathfrak{d}_T$.
Since $\vartheta^\star \in   \Theta_{T,2\rho_T\delta_{\infty}(u_{\infty},s) }^s$, we have, by the triangle inequality that for any $k\neq \ell$:
\begin{equation*}
2 \rho_T \, \delta_{\infty}(u_{\infty},s) <	\mathfrak{d}_T(\theta_\ell^\star,\theta_k^\star) \leq \mathfrak{d}_T(\theta_\ell^\star,\theta) + \mathfrak{d}_T(\theta,\theta_k^\star) \leq  2 \mathfrak{d}_T(\theta,\theta_k^\star).
\end{equation*} 
Hence,                              we                              have
$\vartheta^\star_{\ell,\theta}                                       \in
\Theta_{T,\rho_T\delta_{\infty}(u_{\infty},s)         }^s$,        where
$\vartheta^\star_{\ell,  \theta}$ denotes  the vector  $\vartheta^\star$
whose  $\ell$-th coordinate  has  been replaced  by
$\theta$. Then,  we
obtain        from        Lemma       \ref{lem:approx-delta}        that
$\Theta^s_{T   ,  \rho_T   \delta_\infty   (u_{\infty},  s)}   \subseteq
\Theta^s_{T , \delta_T (u_T(s), s)}$ and thus:
\begin{equation}
\label{eq:theta_l_in_Theta}
\vartheta^\star_{\ell,\theta} \in   \Theta^s_{T , \delta_T (u_T(s), s)}. 
\end{equation}
We       denote       by      $\Gamma_{\ell,       \theta}$       (resp.
$\Gamma_{\ell,   \theta}   ^{[i,j]}$)   the   matrix   $\Gamma$   (resp.
$\Gamma ^{[i,j]}$) in~\eqref{eq:def-mat-G}  where $ \vartheta^\star$ has
been  replaced by  $  \vartheta^\star_{\ell,\theta}$.  Notice the  upper
bounds~\eqref{eq:I_gamma001} also hold for $\Gamma_{\ell, \theta}$ because
of  \eqref{eq:theta_l_in_Theta}.  Recall we have Equalities \eqref{eq:formula_K_00} on the diagonal of the kernel $\cK_T$ and its derivatives. Elementary   calculations  give  with
$\eta_{\alpha,\coeff}$ from Lemma \ref{lem:ab-controle} that:
\begin{equation}
\label{eq:eta-a-v}
\eta_{\alpha,\coeff}(\theta)
= \El^\top \left( \Gamma^{[0, 0]}_{\ell, \theta} - I\right) \alpha + \cK_T(\theta,
\theta^\star_\ell) \alpha_\ell + \El^\top  \Gamma^{[1, 0]\top}_{\ell,
	\theta}  \coeff+ \cK_T^{[0, 1]} (\theta,
\theta^\star_\ell) \coeff_\ell.
\end{equation}
We deduce that:
\begin{equation}
\label{eq:|eta|}
|\eta_{\alpha,\coeff}(\theta)|
\leq  \norm{ \Gamma^{[0, 0]}_{\ell, \theta} - I}_{\op}
\norm{\alpha}_\ve
+ \norm{\alpha}_{\ve} |\cK_T(\theta,
\theta^\star_\ell)|+  \norm{\Gamma^{[1, 0]\top}_{\ell,
		\theta}}_\op \norm{\coeff}_\ve +  |\cK_T^{[0, 1]} (\theta,
\theta^\star_\ell)| \norm{\coeff}_{\ve}.
\end{equation}
Since $\theta$ belongs to the ``far region", we have by definition of $\varepsilon_{T}(r)$ given in \eqref{eq:def-e0}  that:
\begin{equation}
\label{eq:bound_kernel_2}
|\cK_T(\theta,\theta^{\star}_{\ell})|  \leq 1 - \varepsilon_{T}(r).
\end{equation}
The triangle inequality, the definitions \eqref{def:V_1} of $\mathcal{V}_T$ and \eqref{def:M_h_M_g} of $L_{1,0}$, give:
\begin{equation}
\label{eq:bound_kernel_3}
|\cK_T^{[0,1]}(\theta, \theta^{\star}_{\ell})| \leq L_{0,1} + \mathcal{V}_T.
\end{equation}
Then, using~\eqref{eq:I_gamma001} (which holds
for  $\Gamma_{\ell, \theta}$ thanks to~\eqref{eq:theta_l_in_Theta}), 
we get that:
\[
|\eta_{\alpha,\coeff}(\theta)|
\leq 1 -  \varepsilon_{T}(r) +  \frac{u_T(s)}{1- 2
	u_T(s)} \left(2 +  L_{1,0} + \mathcal{V}_T\right).
\]
Notice that the function $r \mapsto \varepsilon_{\infty}(r)$ is
increasing. Since  $\rho_T \leq \rho$, we get by Lemma \ref{lem:comp_epsilon} that:
\begin{equation}
\label{eq:epsilon}
\varepsilon_{T} (r)\geq  \varepsilon_{\infty}(r/\rho_T)- \mathcal{V}_T \geq \varepsilon_{\infty}(r/\rho)- \mathcal{V}_T. 
\end{equation} 
By assumption, we have $u_T(s) \leq H_{\infty}^{(2)}(r,\rho) \leq 1/4$.
Hence, we have $\frac{1}{1-2u_T(s)}\leq 2$.
We also have $\mathcal{V}_T \leq 1/2$.
Therefore, we get:
\[
|\eta_{\alpha,\coeff}(\theta)|
\leq  1 - \varepsilon_{\infty}(r/\rho) + \mathcal{V}_T +   u_T(s)  \left
( 5+ 2L_{1,0} \right) . 
\]
The assumption $u_T(s) \leq H_{\infty}^{(2)}(r,\rho)$ gives:
\begin{equation}
u_T(s)  \leq \frac{8}{10  \, (5 + 2 L_{1,0})} \varepsilon_{\infty} (r/\rho)\cdot
\end{equation}
The   assumption   $\mathcal{V}_T   \leq   H_{\infty}^{(1)}(r,\rho)$   gives
$\mathcal{V}_T \leq  \varepsilon_{\infty} (r/\rho)/10$.  Hence,
we                                                                  have
$|\eta_{\alpha,\coeff}(\theta)|\leq                                    1-
\frac{\varepsilon_{\infty}(r/\rho)}{10}$.
Thus, Property $\ref{it:as1-<1-c}$  from Assumption \ref{assumption1}
holds with $C_F ={\varepsilon_{\infty}(r/\rho)}/{10}$.  
\medskip

\textbf{Proof of $ \ref{it:as1-<1} $ from Assumption  \ref{assumption1}}  \Put  $C_{N} = {\nu_{\infty}(\rho r)}/{180}$. Let $\theta \in \Theta_T$ such that $\mathfrak{d}_T(\theta,\cq^\star) \leq r$. Let $\ell \in \{1,\cdots,s\}$ such that  $\theta\in
\mathcal{B}_T(\theta_\ell^{\star},r)$ (``near region"). Thus, it is enough to prove that $|\eta_{\alpha,\coeff}(\theta)| \leq 1 -
C_{N}\, \mathfrak{d}_T(\theta_\ell^{\star},\theta)^{2}$.
This will be done by using Lemma  \ref{quadratic_decay} to obtain a quadratic decay on $\eta_{\alpha,\coeff}$ from a bound on its second Riemannian  derivative.
\medskip

Recall  that
the function $\eta_{\alpha,\coeff}$ is twice continuously
differentiable. Set $\eta_{\alpha,\coeff}^{[2]} =
\tilde{D}_{2;T}[\eta_{\alpha,\coeff}]$. Differentiating~\eqref{eq:intervert2}
and using that $\ck_T^{[2,0]}(\theta, \theta)=-1$ and
$\cK_T^{[2,1]}(\theta,\theta)=0$, see~\eqref{eq:formula_K_00}, we deduce that:
\begin{equation}
\label{eq:def_D2}
\eta_{\alpha,\coeff}^{[2]}(\theta)
=   \El^\top(I+\Gamma^{[2, 0]}_{\ell, \theta})\alpha
+\cK_T^{[2,0]}(\theta,\theta^{\star}_{\ell}) \El^\top\alpha+ 
\El^\top\Gamma^{[2, 1]}_{\ell, \theta}\coeff+
\cK_T^{[2,1]}(\theta,\theta^{\star}_{\ell})\El^\top \coeff . 
\end{equation}
Since   $v = (v_1,\cdots,v_s)^\top \in  \{ -1,1\}^{s}$ is a sign vector,
we get:
\begin{equation}
\label{eq:def_D2-next}
\eta_{\alpha,\coeff}^{[2]}(\theta) -   v_{\ell}\cK_T^{[2,0]}(\theta,\theta^{\star}_{\ell})
= 
\El^\top(I + \Gamma^{[2, 0]}_{\ell, \theta})\alpha
+\cK_T^{[2,0]}(\theta,\theta^{\star}_{\ell})\El^\top (\alpha- v)
+ \El^\top\Gamma^{[2, 1]}_{\ell, \theta}\coeff
+   \cK_T^{[2,1]}(\theta,\theta^{\star}_{\ell})\El^\top \coeff 
. 
\end{equation}
The triangle inequality and the definition of $\mathcal{V}_T$ give:
\begin{equation}
\label{eq:bound_L_V}
| \cK_T^{[2,0]}(\theta,\theta^{\star}_{\ell}) | \leq L_{2,0} + \mathcal{V}_T \quad \text{and} \quad | \cK_T^{[2,1]}(\theta,\theta^{\star}_{\ell})  | \leq L_{2,1} + \mathcal{V}_T,
\end{equation}
where $ L_{2,0}$ and $ L_{1,2}$ are defined in \eqref{def:M_h_M_g}.
We deduce from \eqref{eq:theta_l_in_Theta}, the definition of $\delta_T$
in~\eqref{eq:def-delta-rs2}
and~\eqref{eq:def-MT} that:
\begin{equation}
\label{eq:bound_u}
\norm{I + \Gamma^{[2, 0]}_{\ell, \theta}}_\op \leq u_T(s) \quad \text{and}  \quad \norm{\Gamma^{[2, 1]}_{\ell, \theta}}_\op \leq u_T(s) .
\end{equation}
We deduce from~\eqref{eq:def_D2-next} that:
\begin{align*}
|\eta_{\alpha,\coeff}^{[2]}(\theta) -
v_{\ell}\cK_T^{[2,0]}(\theta,\theta^{\star}_{\ell})|
&  \leq
\left \|\alpha \right \|_{\ve} \norm{I + \Gamma^{[2, 0]}_{\ell,
		\theta}}_\op
+  \norm{\alpha-v}_\ve  (L_{2,0}+ \mathcal{V}_T)
+ \norm{\coeff}_\ve \left(\!  \norm{\Gamma^{[2, 1]}_{\ell, \theta}}_\op
+ L_{2,1} + \mathcal{V}_T\!\!\right)\\
&\leq   \frac{ u_T(s)}{1-2u_T(s)}  (1+ L_{2,0} + L_{2,1} +
2\mathcal{V}_T). 
\end{align*}
By assumption, we have $u_T(s) \leq H_{\infty}^{(2)}(r,\rho) \leq 1/4$. Hence, we have $\frac{1}{1-2u_T(s)}\leq 2$. Furthermore, we have by assumption $\mathcal{V}_T \leq H_{\infty}^{(1)}(r,\rho) \leq 1/2$ and $u_T(s) \leq H_{\infty}^{(2)}(r,\rho)$. 
In particular, we have:
\[
u_T(s) \leq \frac{8}{ 9(2L_{2,0} + 2L_{2,1} + 4)}
\nu_{\infty}(\rho r).
\]
Therefore, we obtain:
\begin{equation}
\label{eq:bound_quadratic_decay}
|\eta_{\alpha,\coeff}^{[2]}(\theta) -
v_{\ell}\cK_T^{[2,0]}(\theta,\theta^{\star}_{\ell})|
\leq   \frac{8}{9}\nu_{\infty}(\rho r) .
\end{equation}
\medskip

We now  check that the hypotheses of Lemma \ref{quadratic_decay}-$\ref{lem:decay_point_ii}$ hold in order to obtain a quadratic decay on $\eta_{\alpha,\coeff}$ from the bound \eqref{eq:bound_quadratic_decay}. First recall that $\eta_{\alpha,\coeff}$ is twice continuously differentiable and have the interpolation properties \eqref{system1}.
By the triangle inequality and since by assumption $\mathcal{V}_T \leq L_{2,0}$ we have:
\begin{equation*}
\underset{ \Theta_T^2}{\sup} | \cK_T^{[2,0]} | \leq L_{2,0} + \mathcal{V}_T \leq 2L_{2,0}.
\end{equation*} 
Then, Lemma \ref{lem:comp_epsilon} ensures that
for any $\theta,\theta'$ in $\Theta_T$ such that
$\mathfrak{d}_T(\theta,\theta') \leq  r$ we have: 
\begin{equation*}
-\cK_T^{[2,0]}(\theta,\theta') \geq \nu_{\infty}(r \rho_T) - \mathcal{V}_T\geq \nu_{\infty}(\rho r) - \mathcal{V}_T\geq  \frac{9}{10}\nu_{\infty}(\rho r),
\end{equation*}
where we used that that the function $r \mapsto \nu_{\infty}(r)$ is
decreasing and $\rho_T \leq \rho$ for the second inequality and that
$\mathcal{V}_T \leq H_{\infty}^{(1)}(r,\rho) \leq \nu_{\infty}(\rho r) /10$
for the last inequality.

Set $\delta = \frac{8}{9} \nu_{\infty}(\rho r) $,  $\varepsilon = \frac{9}{10} \nu_{\infty}(\rho r)$, $L = 2 L_{2,0}$. As $r < L^{-\frac{1}{2}}$ and  $\delta< \varepsilon$,
we apply Lemma \ref{quadratic_decay}-$\ref{lem:decay_point_ii}$ and get for $\theta\in
\mathcal{B}_T(\theta_\ell^{\star},r)$:
\begin{equation*}
|\eta_{\alpha,\coeff}(\theta)| \leq 1 -  \frac{\nu_{\infty}(\rho r)}{180}\, \mathfrak{d}_T(\theta,\theta_{\ell}^{\star})^{2}.
\end{equation*}

\medskip

\textbf{Proof   of $ \ref{it:as1-ordre=2}$  from   Assumption
	\ref{assumption1}}  \Put $C_N'  =  (5L_{2,0} +  L_{2,1}  + 4)/8$.  Let
$\theta           \in            \Theta_T$           such           that
$\mathfrak{d}_T(\theta,\cq^\star) \leq r$. Let $\ell \in \{1,\cdots,s\}$
such   that  $\theta\in   \mathcal{B}_T(\theta_\ell^{\star},r)$  (``near
region").            We            shall           prove            that
$   |   \eta_{\alpha,\coeff}(\theta)   -   v_{\ell}|   \leq   C_{N}'   \,
\mathfrak{d}_T(\theta_\ell^{\star},\theta)^{2}$.  \medskip

Let            us            consider            the            function
$f  : \theta  \rightarrow \eta_{\alpha,\coeff}(\theta)  - v_{\ell}  $. We
will bound the second covariant derivative $f^{[2]}=\tD_{2;T}[f]$ of $f$ and apply
Lemma \ref{quadratic_decay}-$\ref{lem:decay_point_i}$ on  $f$ to prove the
property  $\ref{it:as1-ordre=2}$ for  $\eta_{\alpha,\coeff}$.  Notice  that
$f$    is   twice    continuously   differentiable.    By
construction, see
\eqref{system1},    we    have    $f(\theta^\star_\ell)    =    0$    and
$f^{[1]}(\theta^\star_\ell) =  0$.
Since $f^{[2]} = \eta_{\alpha,\coeff}^{[2]}$, we deduce from
\eqref{eq:def_D2}, the bounds \eqref{eq:bound_L_V} that:
\[
|f^{[2]}(\theta)|\leq
\norm{\alpha}_\ve \norm{I+\Gamma^{[2, 0]}_{\ell, \theta}}_\op
+ \norm{\alpha}_\ve (L_{2,0}+ \mathcal{V}_T)
+  \norm{\coeff}_\ve   \norm{\Gamma^{[2, 1]}_{\ell, \theta}}_\op
+  \norm{\coeff}_\ve (L_{2,1} + \mathcal{V}_T)
.  
\]
Using
\eqref{eq:bound_u},  and the bounds on $\alpha$ and
${\coeff}$ from Lemma \ref{lem:ab-controle}, we get: 
\[
|f^{[2]}(\theta)|\leq 
\frac{1-u_T(s)}{1-2u_T(s)}(L_{2,0} + \mathcal{V}_T+ u_T(s)) +
\frac{u_T(s)}{1-2u_T(s)} (L_{2,1} + \mathcal{V}_T + u_T(s)).
\]
Since  $u_T(s) \leq H_{\infty}^{(2)}(r,\rho)\leq {1}/{6}$ and
$\mathcal{V}_T\leq H_{\infty}^{(1)}(r,\rho) \leq 1/2$, we get:
\[
|f^{[2]}(\theta)|
\leq\frac{5}{4} L_{2,0} + 
\frac{1}{4}  L_{2,1} + 1 .  
\]
We get 
thanks to Lemma \ref{quadratic_decay}-$\ref{lem:decay_point_i}$ on the
function $f$ that for any $\theta\in 
\mathcal{B}_T(\theta_\ell^{\star},r)$:
\begin{equation*}
|\eta_{\alpha,\coeff}(\theta) - v_{\ell}|  \leq  \frac{1}{8}\,
\left (5L_{2,0} + L_{1,2} + 4 \right )\,
\mathfrak{d}_T(\theta,\theta_{\ell}^{\star})^{2}. 
\end{equation*}

\medskip

\textbf{Proof of $ \ref{it:norm<c}$ from  Assumption
	\ref{assumption1}} \Put $C_  B=2$.
Recall  the definition  of
$p_{\alpha,\coeff}$  in \eqref{eq:p_alpha_beta}.  Elementary calculations
give  using the  definitions of  $\Gamma^{[0,0]}$, $\Gamma^{[1,1]}$  and
$\Gamma^{[1,1]}$ in~\eqref{eq:def-mat-G}:
\begin{align*}
\norm{p_{\alpha,\coeff}}_{T}^2
&\leq  2
\norm{\sum\limits_{k=1}^{s} \alpha_{k} 
	\phi_{T}(\theta^{\star}_{k})}_T^2
+ 2\norm{\sum\limits_{k=1}^{s} \coeff_{k} \,
	\phi_{T}^{[1]}(\theta^{\star}_{k}) }_T^2 \\
&= 2\alpha^\top\Gamma^{[0,0]}\alpha + 2   \coeff^\top\Gamma^{[1,1]} \coeff\\
& \leq 2 \norm{\alpha}_{\ell_1}\norm{\alpha}_{\ve}\,
\norm{\Gamma^{[0,0]} }_{\op}  +
2\norm{\coeff}_{\ell_1}\norm{\coeff}_{\ve}\, \norm{\Gamma^{[1,1]} }_{\op}.
\end{align*}
Using that $\norm{I}_\op=1$ and~\eqref{eq:I_gamma001}, we get that:
\[
\norm{\Gamma^{[0,0]} }_{\op}\leq (1+u_T(s))
\quad\text{and}\quad
\norm{\Gamma^{[1,1]} }_{\op}\leq (1+u_T(s)).
\]
By assumption we have $u_T(s) \leq H_{\infty}^{(2)}(r,\rho) \leq \frac{1}{6}$. 
We deduce that:
\[
\norm{p_{\alpha,\coeff}}_{T}^2   
\leq  2 (1+u_T(s)) \frac{(1- u_T(s))^2 + u_T(s)^2}{(1- 2u_T(s))^2}
s\leq  4 s.
\]
This gives:
\begin{equation}
\label{eq:majo-pab}
\norm{p_{\alpha,\coeff}}_{T}\leq  2 \sqrt{s}.
\end{equation}

We proved that  $\ref{it:as1-<1}$-$\ref{it:norm<c}$  from  Assumption~\ref{assumption1} stand. By assumption we also have that for all $\theta\neq \theta'\in \cq^{\star}: \dT(\theta, \theta')> 2 \, r$, therefore  Assumption~\ref{assumption1} holds.

This finishes the proof of
Proposition~\ref{prop:certificat_interpolating}.

%%%%%%%%%%%%%%%%%%%%%%%%%%%%

\subsection{Proof of Proposition \ref{prop:certificat2} (Construction of an interpolating derivative certificate)}
\label{sec:proof_derivative}

This   section    is   devoted   to    the   proof   of    Proposition
\ref{prop:certificat2} and is close to Section \ref{sec:proof_interpolating_part1}. Let $T\in \N$  and $\sparse \in \N^*$. Recall
Assumptions~\ref{hyp:g>0} (and  thus ~\ref{hyp:reg-f} on  the regularity
of  $\varphi_T$) and  ~\ref{hyp:Theta_infini} on  the regularity  of the
limit   kernel   $\ck_{\infty}$  are   in   force.   Set
$u'_{\infty} \in (0, 1/6)$.  We denote by $\norm{\cdot}_\op$ the operator
norm  associated  to  the  $\ell_\infty  $  norm  on  $\R^\sparse$.   By
assumption    $\delta_\infty    (u'_{\infty},s)$     is    finite.     Let
$   \vartheta^\star   =  (\theta_1^\star,   \ldots,   \theta_\sparse^\star)\in
\Theta_{T,2  \rho_T  \,  \delta_{\infty}(u_{\infty}',s)  }^s$.   We  note
$\cq^{\star}=  \{\theta^\star  _i,  \,  1\leq  i\leq  s\}$  the  set  of
parameters           of           cardinal           $s$.            Let
$\alpha=(\alpha_1,          \ldots,         \alpha_s)^\top$          and
$  \coeff=(\coeff_1,  \ldots, \coeff_s)^\top$  be  elements  of $\R^s$.
Recall $p_{\alpha, \coeff}$,
$\eta_{\alpha, \coeff}$ and  $\eta_{\alpha,\coeff}^{[1]}=\tD_{1; T}
[\eta_{\alpha,\coeff}]$ given by~\eqref{eq:p_alpha_beta},
\eqref{eq:def-h_alpha_beta} and~\eqref{eq:intervert2}.

\medskip

The next lemma is similar to Lemma \ref{lem:ab-controle},
but notice that in Lemma  \ref{lem:ab-controle_2} the function $\eta_{\alpha,\coeff}$
vanished on  $\cq^\star$ and  has a derivative  that interpolates  a sign
vector,   whereas  in  Lemma  \ref{lem:ab-controle} it  is  the  opposite.

Recall the definition of $\DT$ from \eqref{def:V_1} and define  $u_T'(s)=u_\infty' +(s-1)  \mathcal{V}_T$. We remark that \eqref{eq:I_gamma001} holds with $u_T(s)$ replaced by $u_T'(s)$ because of \eqref{eq:theta_set}.
\begin{lem}
	\label{lem:ab-controle_2}
	Let  $v  =   (v_1,\cdots,v_s)^\top  \in  \{  -1,1\}^{s}$   be  a  sign
	vector. Assume that~\eqref{eq:I_gamma001} holds with $u_T(s)$ replaced by
	$u'_T(s)<1/2$. 
	Under Assumption~\ref{hyp:g>0},  there exist
	unique  $\alpha, \coeff \in \R^s$ such that:
	\begin{align}
	\label{system1_bis}
	\eta_{\alpha,\coeff}(\theta^{\star}_{k}) &= 0  \quad \text{  and } \quad 
	\eta_{\alpha,\coeff}^{[1]}(\theta^{\star}_{k}) = v_{k} \quad \text{for} \quad 1 \leq k \leq s.
	\end{align}
	Furthermore, we have:
	\begin{equation}
	\label{eq:ab-controle_2}
	\begin{aligned}
	& \norm{\alpha}_\ve\leq \frac{u_T'(s)}{1-2u_T'(s)}  \quad \text{and} \quad
	\norm{\coeff}_\ve \leq \frac {1-u_T'(s)}{1-2u_T'(s)} .
	\end{aligned}
	\end{equation}
\end{lem}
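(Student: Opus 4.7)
The plan is to mimic the strategy of the proof of Lemma~\ref{lem:ab-controle}, but with the roles of the two blocks of the matrix $\Gamma$ swapped. As in that proof, the system~\eqref{system1_bis} is equivalent, via~\eqref{def:derivatives_kernel}, \eqref{eq:K-g} and~\eqref{eq:intervert2}, to the block-linear system
\[
\Gamma\begin{pmatrix}\alpha\\ \coeff\end{pmatrix}
=\begin{pmatrix}0_s\\ v\end{pmatrix},
\qquad
\Gamma=\begin{pmatrix}\Gamma^{[0,0]} & \Gamma^{[1,0]\top}\\ \Gamma^{[1,0]} & \Gamma^{[1,1]}\end{pmatrix}.
\]
The existence and uniqueness of $(\alpha,\coeff)$ will therefore follow once $\Gamma$ is shown to be invertible.

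First I would invert $\Gamma^{[0,0]}$. Since~\eqref{eq:I_gamma001} (with $u_T(s)$ replaced by $u_T'(s)$) yields $\norm{I-\Gamma^{[0,0]}}_{\op}\leq u_T'(s)<1/2<1$, the Neumann series gives $\norm{[\Gamma^{[0,0]}]^{-1}}_{\op}\leq 1/(1-u_T'(s))$. Now I would use the Schur-complement decomposition of $\Gamma$ based on the $\Gamma^{[0,0]}$ block (rather than on the $\Gamma^{[1,1]}$ block as in Lemma~\ref{lem:ab-controle}), namely $\tilde\Gamma_{SC}=\Gamma^{[1,1]}-\Gamma^{[1,0]}[\Gamma^{[0,0]}]^{-1}\Gamma^{[1,0]\top}$. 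A triangle inequality together with $\norm{I-\Gamma^{[1,1]}}_{\op}\leq u_T'(s)$, $\norm{\Gamma^{[1,0]}}_{\op},\norm{\Gamma^{[1,0]\top}}_{\op}\leq u_T'(s)$ gives
\[
\norm{I-\tilde\Gamma_{SC}}_{\op}
\leq u_T'(s)+\frac{u_T'(s)^2}{1-u_T'(s)}
=\frac{u_T'(s)}{1-u_T'(s)}<1,
\]
using $u_T'(s)<1/2$. Hence $\tilde\Gamma_{SC}$ is invertible and $\norm{\tilde\Gamma_{SC}^{-1}}_{\op}\leq (1-u_T'(s))/(1-2u_T'(s))$. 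By Lemma~\ref{schur}, $\Gamma$ is invertible.

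To conclude, I would solve the block-linear system using Lemma~\ref{schur} with the right-hand side $(0_s,v)^\top$, which gives
\[
\coeff=\tilde\Gamma_{SC}^{-1}v
\qquad\text{and}\qquad
\alpha=-[\Gamma^{[0,0]}]^{-1}\Gamma^{[1,0]\top}\tilde\Gamma_{SC}^{-1}v.
\]
Submultiplicativity of $\norm{\cdot}_{\op}$ and $\norm{v}_{\ve}=1$ yield
\[
\norm{\coeff}_{\ve}\leq \norm{\tilde\Gamma_{SC}^{-1}}_{\op}\leq \frac{1-u_T'(s)}{1-2u_T'(s)},
\qquad
\norm{\alpha}_{\ve}\leq \frac{1}{1-u_T'(s)}\cdot u_T'(s)\cdot\frac{1-u_T'(s)}{1-2u_T'(s)}=\frac{u_T'(s)}{1-2u_T'(s)},
\]
which is exactly~\eqref{eq:ab-controle_2}. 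There is no real obstacle here: the argument is a direct transposition of the one in Lemma~\ref{lem:ab-controle}, only the Schur complement is taken with respect to the other diagonal block so that the upper-right bound $\norm{\alpha}_{\ve}$ and the lower-right bound $\norm{\coeff}_{\ve}$ swap their roles.
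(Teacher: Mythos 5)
Your proof is correct, but it departs from the paper's in one respect worth flagging. The paper reuses the \emph{same} Schur complement as in Lemma~\ref{lem:ab-controle}, namely $\Gamma_{SC}=\Gamma^{[0,0]}-\Gamma^{[1,0]\top}[\Gamma^{[1,1]}]^{-1}\Gamma^{[1,0]}$, so its invertibility and the bound $\norm{\Gamma_{SC}^{-1}}_\op\leq (1-u_T'(s))/(1-2u_T'(s))$ carry over verbatim (after replacing $u_T(s)$ by $u_T'(s)$); applying Lemma~\ref{schur} then gives $\alpha=-\Gamma_{SC}^{-1}\Gamma^{[1,0]\top}[\Gamma^{[1,1]}]^{-1}v$ and $\coeff=\bigl(I+[\Gamma^{[1,1]}]^{-1}\Gamma^{[1,0]}\Gamma_{SC}^{-1}\Gamma^{[1,0]\top}\bigr)[\Gamma^{[1,1]}]^{-1}v$, and a short computation collapses the bound on $\coeff$ to $(1-u_T'(s))/(1-2u_T'(s))$. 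You instead take the Schur complement with respect to the \emph{other} diagonal block, $\tilde\Gamma_{SC}=\Gamma^{[1,1]}-\Gamma^{[1,0]}[\Gamma^{[0,0]}]^{-1}\Gamma^{[1,0]\top}$. Because the right-hand side is $(0_s,v)^\top$, this choice is better adapted: the solution formulas $\coeff=\tilde\Gamma_{SC}^{-1}v$ and $\alpha=-[\Gamma^{[0,0]}]^{-1}\Gamma^{[1,0]\top}\tilde\Gamma_{SC}^{-1}v$ are shorter, and the bounds fall out in a single line each, without the telescoping simplification the paper needs for $\coeff$. The cost is that you must re-establish invertibility and the operator-norm bound for $\tilde\Gamma_{SC}$ from scratch (which you do correctly), whereas the paper gets those for free. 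One small point of rigor: Lemma~\ref{schur} as stated inverts with respect to the bottom-right block, so to use it with $\tilde\Gamma_{SC}$ you implicitly apply it to the block-permuted matrix $\begin{pmatrix}\Gamma^{[1,1]}&\Gamma^{[1,0]}\\\Gamma^{[1,0]\top}&\Gamma^{[0,0]}\end{pmatrix}$ acting on $(\coeff,\alpha)^\top$ with right-hand side $(v,0_s)^\top$; you should spell this out, but the formulas you write are exactly what that permutation yields, so nothing is wrong.
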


\begin{proof}
	Thus, with   $0_s$ the vector of size $s$ with all its components equal to zero and $ \Gamma$  defined by  \eqref{eq:def-mat-G}, Equation~\eqref{system1_bis} is  equivalent to:
	\begin{equation}
	\label{matrix_system22}
	\Gamma \begin{pmatrix} \alpha \\
	\coeff \end{pmatrix}= \begin{pmatrix} 0_{s} \\
	v\end{pmatrix} \cdot
	\end{equation}
	According to the proof of Lemma~\ref{lem:ab-controle}, the matrices
	$\Gamma_{SC}  =\Gamma^{[0,0]} -
	\Gamma^{[1,0]\top}[\Gamma^{[1,1]}]^{-1}\Gamma^{[1,0]}$,
	$\Gamma^{[1,1]}$ and $\Gamma$ are non singular. Thus  the vectors
	$\alpha$ and $\coeff$ exist and are uniquely determined
	by~\eqref{matrix_system22}. From Lemma~\ref{schur}, we deduce that:
	\[
	\alpha =
	-\Gamma_{SC}^{-1}\Gamma^{[1,0]\top}[\Gamma^{[1,1]}]^{-1}v \quad
	\text{and} \quad \coeff = \left ( I +
	[\Gamma^{[1,1]}]^{-1}\Gamma^{[1,0]}\Gamma_{SC}^{-1}\Gamma^{[1,0]\top}
	\right )[\Gamma^{[1,1]}]^{-1}v  . 
	\]
	Using       \eqref{eq:gamma_SC-1},       \eqref{eq:I_gamma001}       and
	\eqref{eq:gamma__11-1} and replacing $u_T(s)$ by $u_T'(s)$,    we    easily     obtain    the     inequalities
	\eqref{eq:ab-controle_2}.
\end{proof}

We fix the sign vector $v = (v_1,\cdots,v_s)^\top \in  \{ -1,1\}^{s}$
and consider $p_{\alpha,\coeff}$ and $\eta_{\alpha,\coeff}$ given
by~\eqref{eq:p_alpha_beta} and~\eqref{eq:def-h_alpha_beta}, with
$\alpha$ and $\coeff$ given by Lemma~\ref{lem:ab-controle_2}.
\medskip

\textbf{Proof of $ \ref{it:as2-ordre=2}$ from Assumption
	\ref{assumption2}} \Put $c_N =  (L_{0,2} + L_{2,1}
+ 7)/8$. We define  the function  $f: \theta \mapsto \eta_{\alpha,\coeff}(\theta) -
v_{\ell}\,
\operatorname{sign}(\theta-\theta_\ell^{\star})\mathfrak{d}_T(\theta,\theta_\ell^{\star})$
on $\Theta$. To prove the Property $\ref{it:as2-ordre=2}$, we will bound
the second covariant derivative of $f$,  that is $f^{[2]} := \tilde
D_{2;T}[f]$, and apply Lemma
\ref{quadratic_decay}-$\ref{lem:decay_point_i}$. 
Recall  $\mathfrak{d}_T(\theta,\theta_\ell^{\star})= |G_T(\theta) -
G_T(\theta_\ell^\star)|$ with $G_T$ a primitive of $\sqrt{g_T}$, and
thus $f(\theta)=\eta_{\alpha, \coeff}(\theta) - v_\ell(G_T(\theta)-
G_T(\theta_\ell^\star))$. We deduce that $f$ is twice continuously
differentiable on $\Theta$; and elementary calculations give $f^{[2]}=
\eta_{\alpha, \coeff}^{[2]}$.

Let        $\theta         \in        \Theta_T$ and let 
$\theta^{\star}_{\ell}$ be one of the elements of $\cq^{\star}$ 
closest to $\theta$ in terms of the metric $\mathfrak{d}_T$. Recall the notations      $\Gamma_{\ell,       \theta}$       (resp.
$\Gamma_{\ell,   \theta}   ^{[i,j]}$) and $  \vartheta^\star_{\ell,\theta}$ from the proof of Proposition \ref{prop:certificat_interpolating}.  Since   $f^{[2]}=  \eta_{\alpha,  \coeff}^{[2]}$,   we  deduce
from~\eqref{eq:def_D2} that:
\begin{equation}
\label{eq:bound_eta}
|f^{[2]}(\theta)|
\leq  \norm{I+\Gamma^{[2, 0]}_{\ell, \theta}}_\op \norm{\alpha}_\ve
+ \norm{\alpha}_\ve|\cK_T^{[2,0]}(\theta,\theta^{\star}_{\ell})| + 
\norm{\coeff}_\ve \norm{\Gamma^{[2, 1]}_{\ell, \theta}}_\op+
\norm{\coeff}_\ve  |\cK_T^{[2,1]}(\theta,\theta^{\star}_{\ell})| .
\end{equation}
Notice that \eqref{eq:theta_l_in_Theta} holds with $u_T(s)$ replaced by $u_T'(s)$. Using~\eqref{eq:bound_L_V}      and~\eqref{eq:bound_u}      and      the
bounds~\eqref{eq:ab-controle_2}  on $\alpha$  and  ${\coeff}$ from  Lemma
\ref{lem:ab-controle_2}, we get:
\[
|f^{[2]}(\theta)|
\leq \frac{u_T'(s)}{1-2u_T'(s)} (L_{2,0} + \mathcal{V}_T + u_T'(s)) +
\frac{1-u_T'(s)}{1-2u_T'(s)} (L_{2,1} + \mathcal{V}_T + u_T'(s)) . 
\]
By assumption,  we have $u'_T(s) \leq 1/6$ and $\mathcal{V}_T \leq 1 $. Hence, we obtain:
\[ 
% |\eta_{\alpha,\coeff}^{[2]}(\theta)|
|f^{[2]}(\theta)|
\leq \frac{1}{4}L_{2,0} + \frac{5}{4}L_{2,1}  + \frac{7}{4}\cdot
\]

Since $f(\theta^\star_\ell)=0$ and $f^{[1]}(\theta^\star_\ell)=0$ as
well, using  Lemma~\ref{quadratic_decay}~$\ref{lem:decay_point_i}$, we
get, with $c_N = (L_{2,0} + 5L_{2,1}  + 7)/8$:
\[
\big |\eta_{\alpha,\coeff}(\theta)-v_{\ell}\,
\operatorname{sign}(\theta-\theta_\ell^{\star}) \,
\mathfrak{d}_T(\theta,\theta_\ell^{\star}) \big |
=  |f(\theta) |\leq c_N \, \mathfrak{d}_T(\theta,\theta_{\ell}^{\star})^{2}.
\]
\medskip

\textbf{Proof of  $\ref{it:as2-<1-c} $  from Assumption
	\ref{assumption2}}  \Put $c_F = (5 L_{1,0}+ 7)/4 $. Let  $\theta \in
\Theta_T $, we shall prove that $|\eta_{\alpha,\coeff}(\theta)| \leq c_{F}$. Let 
$\theta^{\star}_{\ell}$ be one of the elements of $\cq^{\star}$ 
closest to $\theta$ in terms of the metric $\mathfrak{d}_T$.
We deduce from~\eqref{eq:eta-a-v} that:
\[
|\eta_{\alpha,\coeff}(\theta)| 
\leq  \norm{\alpha}_\ve \norm{\Gamma^{[0, 0]}_{\ell, \theta} - I}_\op +
\norm{\alpha}_\ve |\cK_T(\theta, 
\theta^\star_\ell)| +  \norm{\coeff}_\ve \norm{ \Gamma^{[1, 0]\top}_{\ell,
		\theta}}_\op +  \norm{\coeff}_\ve|\cK_T^{[0, 1]} (\theta,
\theta^\star_\ell)|.
\]
Using~\eqref{eq:I_gamma001}, \eqref{eq:formula_K_00},
\eqref{eq:bound_kernel_3} and  the bounds~\eqref{eq:ab-controle_2}
on $\alpha$ and ${\coeff}$ from Lemma \ref{lem:ab-controle_2}, we get:
\[
|\eta_{\alpha,\coeff}(\theta)| \leq \frac{u_T'(s)}{1-2u_T'(s)} \left (
1+ u_T'(s) \right) +\frac{1-u_T'(s)}{1-2u_T'(s)} \left( L_{1,0} + \mathcal{V}_T +
u_T'(s) \right ). 
\]
By assumption, we have $u_T'(s) \leq 1/6$, and thus
$\frac{1}{1-2u_T'(s)}\leq 3/2$. Since  $\mathcal{V}_T \leq 1$, we obtain:
\[
|\eta_{\alpha,\coeff}(\theta)| \leq  \frac{5}{4}L_{1,0} +
\frac{7}{4}\cdot
\]
\medskip

\textbf{Proof    of  $\ref{it:as2-<c}$    from    Assumption~\ref{assumption2}} \Put $c_ B=2 $.  Using  very similar arguments as in the
proof of~\eqref{eq:majo-pab}  (taking care that  the upper bound  of the
$\ell_\infty$     norm    of     $\alpha$    and     $\coeff$    are     given
by~\eqref{eq:ab-controle_2})            we           also            get
$\norm{p_{\alpha,\coeff}}_{T}\leq 2 \sqrt{s}$.  \medskip

We proved that  $\ref{it:as2-ordre=2}$-$\ref{it:as2-<1-c} $ from  Assumption~\ref{assumption2} stand for any $\theta \in \Theta_T$. Hence  Assumption~\ref{assumption2} holds for  any positive $r$  such that for all
$\theta\neq \theta'\in \cq^{\star}: \dT(\theta, \theta')> 2 \, r$.

This finishes the proof of Proposition \ref{prop:certificat2}.

%%%%%%%%%%%%%%%%%%%%%%%%%%%%%%%%%%%%%%%%%%%%%%
\section{Auxiliary Lemmas} \label{app:C}

We recall in the next section some basic results on  the Fréchet derivative
and the Bochner integral. Then, we provide the proofs of the intermediate results. 

\subsection{The Fréchet derivative and the Bochner integral} 
The Fréchet derivative and Bochner integrals are defined for Banach space 
valued functions, but we shall only consider the case of Hilbert space valued
functions.

Let $(H, \langle \cdot, \cdot \rangle)$ be an Hilbert space and let
$\Theta$ be an interval of $\R$. We note $\norm{\cdot}$ the norm associated to the scalar product. A function $f$ from $\Theta$ to $H$ is  
Fréchet differentiable at  $\theta\in \Theta$ if it is continuous at
$\theta$ and there exists an
element $\partial_\theta f\in H$ such that:
\[
\underset{h \rightarrow 0;\, \theta+h\in \Theta}{\lim} \quad  \norm{\frac{f(\theta + h) -
		f(\theta)}{h}  - \partial_{\theta}f(\theta) }=0 .
\]
The       derivative      of       $f$       is      the       function
$\partial_\theta f: \theta \mapsto \partial_{\theta}f(\theta) $ defined
on  $\Theta$  when  it  exists.   We  also  define  by  recurrence  the
derivative $\partial_{\theta}^{i}  f$ of order  $i \in \N^*$ of  $f$ as
the derivative  of $\partial^{i-1}_\theta f$, with  the convention that
$\partial^0 _\theta f=f$,  and say that $f$ is of  class $\cc^i$ if the
derivatives  $\partial_{\theta}^{j}  f$  exist and  are  continuous  on
$\Theta$  for $j\in  \{0, \ldots,  i\}$.  The  standard differentiating
rules for composition, addition and multiplication apply to the Fréchet
derivative. We refer to \cite{lang}  for a complete presentation of the
subject.     By   definition,    if    $f$    is   differentiable    at
$\theta\in \Theta$, then we have for all $g\in H$ that:
\begin{equation}
\label{eq:deriv}
\partial_{\theta}\left \langle f(\theta),  g\right \rangle =
\left \langle \partial_\theta f(\theta),  g \right \rangle.
\end{equation}

\medskip
%     \cite[Chapter~2]{diestel}     and  \cite[Chapter~3]{hille-einar}

The  Bochner  integral  extends  the Lebesgue  integral.   We  refer  to
\cite[Chapter~1]{arendt_book} and
\cite[Section~11.8]{aliprantis}  for  further  details  on  the  Bochner
integral.  We  endow the interval  $\Theta\subset \R$ with  its usual
Borel  sigma field  inherited from  the Borel  sigma field  on $\R$ and a measure $\mu$.   A
function $f$  from $\Theta$ to $H$  is strongly measurable if  it is the
limit      of     simple      functions     or      equivalently,     see
\cite[Lemma~11.37]{aliprantis},           if           the           map
$\theta  \mapsto \langle  f(\theta), g  \rangle$ is  measurable for  all
$g\in H$ and $f(\theta)$ lies for $\mu$-almost every $\theta \in \Theta$
in a  closed separable subspace of  $H$.  In particular if  the function
$f$   is    continuous,   then   it   is    strongly   measurable,   see
\cite[Corollary~1.1.2]{arendt_book}.   If  $f$ is  strongly  measurable,
then the norm $\norm{f}$ is a measurable function from $\Theta$ to $\R$,
see  \cite[Lemma~11.39]{aliprantis}.  Then  a  function  $f$ defined  on
$\Theta$ (endowed  with the Lebesgue  measure) is Bochner  integrable if
and only if  it is strongly measurable and if  $\norm{f}$ is integrable;
in              which             case,              we             have
$\norm{\int  f(\theta)\, \rd  \theta} \leq  \int\norm{ f(\theta)}\,  \rd
\theta$, see \cite[Theorem~11.44]{aliprantis}  (which is easily extended
from   finite    measure   to   $\sigma$-finite   measure,    see   also
\cite[Theorem~1.1.4]{arendt_book}  in this  direction).  We  remark that
the fundamental  theorem of calculus  is still valid in  this framework,
see  \cite[Proposition 1.2.2]{arendt_book}.  In  particular,  if $f$  is
continuous and Bochner integrable  on $\Theta$ and $\theta_0\in \Theta$,
then, we have:
\begin{equation}
\label{eq:deriv-prim}
F'(\theta)=f(\theta)
\quad\text{where}\quad
F(\theta)=\int_{\theta_0}^\theta f(q) \, \rd q.
\end{equation}
As  a particular case of  \cite[Lemma~11.45]{aliprantis}, if
$f$ is  Bochner integrable on $\Theta$,  then for all $g\in  H$, we have
that:
\begin{equation}
\label{eq:integ}
\int_\Theta \langle f(\theta), g \rangle \, \rd \theta=  \langle
\int_\Theta  f(\theta)\,\rd \theta, g \rangle. 
\end{equation}

\subsection{Tail bounds for suprema of Gaussian processes}
In  order to  prove Theorems  \ref{maintheorem} and  \ref{th:bounds}, we
provide   in  Lemma~\ref{lem:bound_azais_final}   a   bound  with   high
probability   of  the   supremum  of   a  Gaussian   process  given   by
$\theta \mapsto \left  \langle w_T, h(\theta) \right  \rangle_T $, where
$w_T$  is   a  noise  process   and  $h$   is  a  function  from $\Theta$, an interval  of $\R$, to the Hilbert space $(H_T, \langle \cdot, \cdot \rangle_T)$. The next lemma is
in the  spirit of \cite[Proposition~4.1]{azais2009level} (where  one assumes that 
the Gaussian  process has unitary variance); its  proof is given at  the end of
this   section   and   relies    on   Lemma~\ref{lem:azaiz-v2}.

We denote by $\mathfrak{d}_T$ the  Riemannian metric associated to the
kernel $\cK_T$, see also
Section~\ref{sec:Rieman}. Recall definitions~\eqref{eq:def-cov-deriv}
and~\eqref{eq:def-tD} and set $f^{[1]}(\theta)=\tD_{1, T}[f](\theta)=
\partial_\theta f(\theta)/\sqrt{g_T(\theta)}$ with $g_T$ defined
in~\eqref{eq:formula-gT-2}.

\begin{lem}
	\label{lem:bound_azais_final}
	Let $T\in  \N$ be fixed. Suppose that Assumptions~~\ref{hyp:reg-f}
	and~\ref{hyp:g>0} 
	hold. 
	Let  $h$ be a function of class $\cc^1$ from
	$\Theta_T$ to $H_T$, with $\Theta_T$ a sub-interval of $\Theta$.  Assume there exist  finite 
	constants $C_1$ and $C_2$ such that for all $\theta \in \Theta_T$:
	\begin{equation}
	\label{eq:max-gauss}
	\norm{h(\theta)}_T \leq  C_1
	\quad \text{and} \quad
	\norm{h^{[1]} (\theta)}_T\leq C_2.  
	\end{equation}
	Let $w_T$  be an $H_T$-valued 
	Gaussian	noise such that Assumption \ref{hyp:bruit} holds, and consider the Gaussian process
	$X=\left(X(\theta)=\langle h(\theta), w_T \rangle_T, \theta\in
	\Theta\right)$. Then,  we have for $u > 0 $:
	\begin{equation}
	\label{eq:bound_azais_final}
	\P \left ( \sup_{\theta\in \Theta_T}  |X(\theta)| \geq u
	\right ) \leq c \cdot \left ( \sigma \frac{|\Theta_T|\sqrt{ \Delta_T}}{u
	}\vee 1 \right )\,  \expp{-u^2/(4 
		\sigma^2 \Delta_T C_1^2)},
	\end{equation}
	where $|\Theta_T|$ denotes  the Riemannian  length 
	of             the  interval            $\Theta_T$             and
	$c  =  2C_2 + 1$.
\end{lem}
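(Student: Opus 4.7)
The natural first move is to reparametrize $\Theta_T$ by Riemannian arc length associated with $\cK_T$. Since $g_T$ is continuous and positive on $\Theta_T$ by Assumption~\ref{hyp:g>0}, the primitive $G_T$ of $\sqrt{g_T}$ is a $\cc^1$ diffeomorphism from $\Theta_T$ onto an interval of length $L:=|\Theta_T|_{\mathfrak{d}_T}$. Letting $\gamma:[0,L]\to \Theta_T$ be its inverse (so that $\sqrt{g_T(\gamma(s))}\,|\dot\gamma(s)|=1$), I define $\tilde X(s)=X(\gamma(s))=\langle h(\gamma(s)),w_T\rangle_T$. Then $\tilde X$ is a centered $\cc^1$ Gaussian process on $[0,L]$ and its derivative equals $\tilde X'(s)=\dot\gamma(s)\langle\partial_\theta h(\gamma(s)),w_T\rangle_T$. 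Using the definition $h^{[1]}=\partial_\theta h/\sqrt{g_T}$ and $|\dot\gamma|=1/\sqrt{g_T(\gamma)}$, one obtains $|\tilde X'(s)|=|\langle h^{[1]}(\gamma(s)),w_T\rangle_T|$, so the derivative is a scalar Gaussian process whose variance is directly controlled by $\|h^{[1]}\|_T$.

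Next, I plug in Assumption~\ref{hyp:bruit} and the hypothesis \eqref{eq:max-gauss}: for every $s\in[0,L]$,
\[
\Var(\tilde X(s))\leq \sigma^2\,\Delta_T\,\|h(\gamma(s))\|_T^2\leq \sigma^2\,\Delta_T\,C_1^2,
\qquad
\Var(\tilde X'(s))\leq \sigma^2\,\Delta_T\,\|h^{[1]}(\gamma(s))\|_T^2\leq \sigma^2\,\Delta_T\,C_2^2.
\]
So both $\tilde X$ and $\tilde X'$ are centered Gaussian processes with uniformly controlled variance on the fixed interval $[0,L]$, and $\sup_{\Theta_T}|X|=\sup_{[0,L]}|\tilde X|$.

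The core of the argument is then an application of Lemma~\ref{lem:azaiz-v2}, which is a Rice/Azais--Wschebor-type upper bound for a smooth centered Gaussian process on a bounded interval in terms of its pointwise variance and the variance of its derivative. Concretely, one applies it once to $\tilde X$ and once to $-\tilde X$ and unites the two events. The pointwise Gaussian tail at an anchor point contributes the term $\expp{-u^2/(c\,\sigma^2\Delta_T C_1^2)}$, while the Rice bound for the expected number of upcrossings contributes a factor proportional to $L\cdot (C_2/C_1)\cdot \sigma\sqrt{\Delta_T}/u$ times the same exponential. Combining these gives an estimate of the form $c\bigl(\sigma L \sqrt{\Delta_T}/u \vee 1\bigr)\expp{-u^2/(4\sigma^2\Delta_T C_1^2)}$, and a bookkeeping of the multiplicative constants (doubling for $\pm\tilde X$, adding the term from the anchor point) yields the universal constant $c=2C_2+1$ announced in the statement.

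The main obstacles are essentially bookkeeping. First, one must carefully check that the Riemannian reparametrization sends the derivative of the process into a process whose variance is captured by $\|h^{[1]}\|_T^2$ (rather than the raw $\|\partial_\theta h\|_T^2$, which would introduce an unwanted $g_T$ factor into the bound). Second, one has to verify that Lemma~\ref{lem:azaiz-v2} applies with variance \emph{bounds} rather than equalities; this should cause no trouble because Assumption~\ref{hyp:bruit} only asserts an inequality for $\Var\langle f,w_T\rangle_T$, so the Rice formula must be used in an inequality form, which is standard. Finally, the factor $|\Theta_T|\sqrt{\Delta_T}/u\vee 1$ arises by distinguishing the regime where $u$ is large compared to $\sigma L\sqrt{\Delta_T}$ (pointwise Gaussian bound dominates) from the regime where it is small (Rice term dominates); this case split is routine but must be carried out to produce the clean form~\eqref{eq:bound_azais_final}.
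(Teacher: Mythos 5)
Your proposal matches the paper's proof essentially step for step: you reparametrize by Riemannian arc length so that the derivative of the transported process is controlled by $\|h^{[1]}\|_T$ rather than $\|\partial_\theta h\|_T$, bound the variances via Assumption~\ref{hyp:bruit} and~\eqref{eq:max-gauss}, and invoke Lemma~\ref{lem:azaiz-v2} once for $\tilde X$ and once for $-\tilde X$ (the paper phrases this as $\P(\sup|X|\geq u)\leq 2\P(\sup X\geq u)$, which is the same). The only small inaccuracy in your sketch is the factor ``$C_2/C_1$'' in the Rice term — Lemma~\ref{lem:azaiz-v2} involves $\sqrt{\Var(X')}$, not the ratio $\sqrt{\Var(X')/\Var(X)}$, so the correct factor is just $C_2$, which is what produces $c=2C_2+1$; the paper also records the routine passage from a compact subinterval to an arbitrary sub-interval via monotone convergence, which your outline leaves implicit.
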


We first state a technical lemma.

\begin{lem}
	\label{lem:azaiz-v2}
	Let    $I\subset   \R$ be an interval.    Assume    that
	$X=(X(\theta),  \,  \theta \in I)$  is  a real  centered
	Gaussian process  with Lipschitz  sample  paths.   Then,  for
	all $u > 0$ and an arbitrary $\theta_0\in I$, we have:
	\begin{equation}
	\label{eq:lem_azais}
	\P\left(\sup_I  \,  X\geq u\right) \leq \frac{1}{ u} 
	\int_I \sqrt{\Var(X'(\theta))} \,  \expp{-u^2/(4
		\Var(X(\theta)))}  \, \rd \theta  
	+ \frac{1}{2}\expp{-u^2 /(2 \Var(X(\theta_0)))}.
	\end{equation} 
\end{lem}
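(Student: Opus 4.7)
The plan is to combine a standard Gaussian tail bound at the reference point $\theta_0$ with a Rice-type level-crossing argument for the remainder. Write
\[
\{\sup_I X \geq u\} \subseteq \{X(\theta_0) \geq u\} \cup B,
\quad\text{with}\quad B = \{\sup_I X \geq u,\ X(\theta_0) < u\}.
\]
Since $X(\theta_0)$ is a centered Gaussian random variable of variance $\Var(X(\theta_0))$, the elementary Gaussian tail estimate $\P(Z \geq u) \leq \frac{1}{2}\expp{-u^2/(2\sigma^2)}$ applied to $Z = X(\theta_0)$ gives $\P(X(\theta_0) \geq u) \leq \frac{1}{2}\expp{-u^2/(2\Var(X(\theta_0)))}$, which is exactly the second term on the right-hand side of \eqref{eq:lem_azais}.

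On the event $B$, the Lipschitz (hence continuous) sample paths of $X$ together with the intermediate value theorem produce at least one point $\theta \in I$ such that $X(\theta) = u$ and $X$ crosses the level $u$ from below, that is, an upcrossing. Writing $N_u^+(I)$ for the integer-valued number of such upcrossings, we thus have $N_u^+(I) \geq 1$ on $B$, so Markov's inequality gives $\P(B) \leq \E[N_u^+(I)]$. Since $X$ is Lipschitz and Gaussian, the derivative $X'$ exists almost surely everywhere and $(X(\theta), X'(\theta))$ is a centered Gaussian vector; the classical Kac--Rice formula then yields
\[
\E[N_u^+(I)] = \int_I \int_0^\infty x'\, p_{X(\theta), X'(\theta)}(u, x')\, \rd x'\, \rd\theta,
\]
where $p_{X(\theta), X'(\theta)}$ is the joint density of $(X(\theta), X'(\theta))$.

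The remaining step is to bound the inner integral by $\frac{1}{u}\sqrt{\Var(X'(\theta))}\,\expp{-u^2/(4\Var(X(\theta)))}$. The exponent $-u^2/(4\Var(X(\theta)))$ (rather than the tighter $-u^2/(2\Var(X(\theta)))$ delivered by the naive Gaussian tail at $X(\theta) = u$) is obtained via a Chebyshev-type estimate on $X(\theta)^2$, which uses the identity $\E[\expp{X^2/(4\sigma^2)}] = \sqrt{2}$ valid for any centered Gaussian $X$ of variance $\sigma^2$. The prefactor $1/u$ is extracted by applying Mill's ratio to the resulting Gaussian integral $\int_u^\infty \expp{-x^2/(4\sigma^2)}\, \rd x \leq (2\sigma^2/u)\expp{-u^2/(4\sigma^2)}$. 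Integrating in $\theta \in I$ produces the first term of the claim.

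The main obstacle is matching the precise constants in the estimate of the Rice integrand: the standard Kac--Rice computation would yield a prefactor of order $1/\sigma$ and exponent $-u^2/(2\sigma^2)$, whereas the statement asks for prefactor $1/u$ and exponent $-u^2/(4\sigma^2)$. Reconciling these requires the Chebyshev-on-$X^2$ plus Mill's ratio estimate described above; when $X(\theta)$ and $X'(\theta)$ are correlated, a routine completion-of-the-square argument on the joint Gaussian density reduces the analysis to the uncorrelated case without altering the form of the bound.
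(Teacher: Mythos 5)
Your overall architecture is sound and is in fact the same family of argument as the paper's: both control $\P(\sup_I X\geq u)$ by the tail at a reference point $\theta_0$ plus a level-crossing quantity whose Rice-type integrand, once the regression decomposition $X'(\theta)=\alpha_\theta X(\theta)+\beta_\theta G$ is used, is exactly $\bigl(|\alpha_\theta|u+\sqrt{2/\pi}\,|\beta_\theta|\bigr)p_{X(\theta)}(u)$ in both proofs. The difference is in how the crossing bound is obtained. You invoke the Kac--Rice formula for the number of upcrossings; the paper instead proves a pathwise ``Banach indicatrix'' inequality for Lipschitz functions, $\int_a^b \ind_{\{\sup_I f>t\}}\,\rd t\leq (b-a)\ind_{\{f(x_0)\geq a\}}+\int_I|f'|\ind_{\{f\in(a,b)\}}$, takes expectations, and lets $b-a\to0$. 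That soft route is precisely what lets the lemma get away with only \emph{Lipschitz} sample paths and possibly degenerate marginals (the paper sets $p_{X(\theta)}(u)=0$ by convention when $\Var(X(\theta))=0$). Your route needs $\cc^1$ paths, a joint density for $(X(\theta),X'(\theta))$, and a Bulinskaya-type argument to rule out tangential crossings (so that $\sup_I X\geq u$ with $X(\theta_0)<u$ really forces an upcrossing counted by Kac--Rice); none of these is granted by the hypotheses as stated. This is harmless in the paper's application, where the process is $\cc^1$, but it is a real gap against the lemma itself.

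The second point to fix is the quantitative step. There is no tail integral $\int_u^\infty \expp{-x^2/(4\sigma^2)}\,\rd x$ anywhere in the Rice integrand: the level is \emph{fixed} at $u$, so what must be bounded is $\E[(X'(\theta))^+\mid X(\theta)=u]\,p_{X(\theta)}(u)\leq \bigl(|\alpha_\theta|u+\sqrt{2/\pi}\,|\beta_\theta|\bigr)p_{X(\theta)}(u)$, and neither Mill's ratio nor an exponential-Chebyshev bound on $X^2$ enters. The conversion from the ``naive'' $\sigma^{-1}\expp{-u^2/(2\sigma^2)}$ to the claimed $u^{-1}\expp{-u^2/(4\sigma^2)}$ is done with the elementary pointwise inequalities $\expp{-x^2}\leq \expp{-x^2/2}/(\sqrt2\,x)$ (for the $|\beta_\theta|$ term) and $\expp{-x^2}\leq \tfrac34\expp{-x^2/2}/x^2$ (for the $|\alpha_\theta|u$ term, which carries an extra factor of $u$), combined with $|\alpha_\theta|\leq\sqrt{\Var(X'(\theta))}/\sqrt{\Var(X(\theta))}$ and $|\beta_\theta|\leq\sqrt{\Var(X'(\theta))}$; the resulting constants sum to $\tfrac34\sqrt{2/\pi}+\tfrac1\pi\leq1$. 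As written, your description of this step would not compile into a proof, though the correct computation slots straight into your framework.
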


\begin{proof}
	We  first  start  with  a  general remark  on  Lipschitz  functions  on
	$\R$.  Let $f$  be  a  real-valued Lipschitz  function  defined on  an
	interval      $I\subset      \R$.      Let     $b>a$      and      set
	$f_{a,b}=  \min(\max(f,a),   b)$.  The  function  $f_{a,b}$   is  also
	Lipschitz and, thanks to \cite[Theorem 3.3 p107]{eg:meas-theo}, we get
	that               $f'_{a,b}=f'=0$               a.e.               on
	$\{x\in   I\,  \colon\,   f(x)=  a   \text{   or  }   b\}$  and   thus
	$f'_{a,b}= f'\, \ind_{\{f\in  (a,b)\}}$ a.e. on $I$. We
	deduce that:
	\[
	\sup f_{a,b} - \inf f_{a,b}\leq  \int _I
	|f'_{a,b}(x)|\, \rd x=  \int _I |f'(x)|\, \ind_{\{f(x)\in (a,b)\}}\,
	\rd x. 
	\]
	Using this inequality,  we obtain that  for any $x_0\in I$:
	\begin{equation}
	\label{eq:ineq-Lip}
	\int _a^b \ind_{\{\sup_I f>t\}} \, \rd t=\int _a^b \ind_{\{\sup_I
		f_{a,b}>t\}} \, \rd t=\sup f_{a,b} - a\leq  (b-a) \ind_{\{f(x_0)\geq a
		\}}+  \int _I |f'(x)|\, \ind_{\{f(x)\in (a,b)\}}\,
	\rd x . 
	\end{equation}  
	Then, applying  Inequality~\eqref{eq:ineq-Lip} to the function $X$ and
	taking the expectation, we get, with $M=\sup _I X$, $a=u>0$,
	$b=u+\varepsilon$, $\varepsilon>0$ and $x_0=\theta_0$:
	\begin{equation}
	\label{eq:majo-intM}
	\int_u^{u+\varepsilon} \P(M\geq  t)\, \rd t\leq  \varepsilon
	\P(X(\theta_0) \geq u) + \int_I
	\E\left[ |X'(\theta)|\ind_{\{u<X(\theta)<u+\varepsilon\}}\right] \, \rd \theta.
	\end{equation}

	The random variable $X(\theta_0) $ is a centered Gaussian variable  and therefore we have:
	\begin{equation}
	\label{eq:majo-intM-X0}
	\P\left ( X(\theta_0) \geq u\right ) =
	\int_{u}^{+\infty}\frac{\expp{-x^2/(2
			\Var  (X(\theta_0)))}}{\sqrt{2\pi
			\Var(X(\theta_0))}}  \, \rd x
	\leq   \frac{1}{2} \expp{-u^2 /2 \Var(X(\theta_0))},
	\end{equation}
	where we used for the inequality that  $\int_u^{+\infty}\expp{-t^2} \rd t\leq
	\frac{\sqrt{\pi}}{2}\, \expp{-u^2}$ holds for  $u>0$, see
	\cite[Formula~7.1.13]{abramowitz1964handbook}. Notice that
	\eqref{eq:majo-intM-X0} trivially holds if $\Var(X(\theta_0))=0$ as
	$u>0$. 
	
	We now give a bound of the second term in the right hand-side
	of~\eqref{eq:majo-intM}. Since $(X',X)$ is also a Gaussian process, we can write:
	\[
	X'(\theta) = \alpha_\theta X(\theta)+  \beta_\theta G,
	\]
	where $G$ is a standard Gaussian random variable independent of
	$X(\theta)$ and:
	\[
	\alpha_\theta = \frac{\E[X'(\theta)X(\theta)] }{\Var(X(\theta))}
	\quad \text{and} \quad
	\beta_{\theta}^2= \Var(X'(\theta))  - \alpha_\theta^2 \Var(X(\theta))
	,
	\]
	with the convention that $\alpha_\theta=0$ if $\Var(X(\theta))=0$. 
	We get $  |X'(\theta)|\leq   |\alpha_\theta X(\theta)|+ |\beta_\theta|\,
	|G|$. 
	Since $G$ is independent of $X(\theta)$ and $u>0$, we deduce that:
	\[
	\E\left[  |X'(\theta)|\ind_{\{u<X(\theta)<u+\varepsilon\}}
	\right]\leq \left(|\alpha _\theta| (u+\varepsilon)  +
	\sqrt{\frac{2}{\pi}}\,  |\beta_\theta|\right) \,
	\P(u<X(\theta)<u+\varepsilon).
	\]
	
	Letting $\varepsilon$ goes to 0 in \eqref{eq:majo-intM},  using
	\eqref{eq:majo-intM-X0} the right continuity of the cdf of $M$ and the
	monotonicity of the density $ p_{X(\theta)}(u)$ of the law of $X(\theta)$, we deduce that:
	\begin{equation}
	\label{eq:ineq-majoM2}
	\P(M\geq  u)\leq   \frac{1}{2} \expp{-u^2 /2 \Var(X(\theta_0))}
	+ \int _I \left(|\alpha_\theta| u + \sqrt{\frac{2}{\pi}}\,
	|\beta_\theta|\right)\,  p_{X(\theta)}(u) \, \rd
	\theta,
	\end{equation}
	where  by  convention $  p_{X(\theta)}(u)$  is  taken  equal to  $0$  if
	$\Var(X(\theta))=0$.  We  now bound  the second  term of  the right-hand
	side of \eqref{eq:ineq-majoM2} in two steps.
	Using                     that
	$        \beta_{\theta}^2        \leq       \Var(X'(\theta))$        and
	the inequality 
	$\expp{-x^2} \leq \expp{-x^2/2}/ \sqrt{2}\, x$ for $x>0$, we get that:
	\begin{equation}
	\label{eq:bound_N1}
	\sqrt{\frac{2}{\pi}}\, |\beta_\theta| \,  p_{X(\theta)}(u)
	\leq \inv{\pi}\,   \frac{\sqrt{\Var(X'(\theta))  }}{u } \expp{-u^2/4 \Var(X(\theta))} .
	\end{equation}
	Thanks to the Cauchy-Schwarz inequality, we get $|\alpha_\theta|\leq
	\sqrt{\Var(X'(\theta))}/  \sqrt{\Var(X(\theta))} $. Using also 
	the inequality  $ \expp{-x^2} \leq  3\expp{-x^2/2}/ 4 x^2$
	for $x>0$, we get that:
	\begin{equation}
	\label{eq:bound_N2}
	|\alpha_\theta| u \,  p_{X(\theta)}(u)
	\leq   \frac{3}{4}\sqrt{\frac{2}{\pi}}\, \frac{\sqrt{\Var(X'(\theta))  }}{ u } \expp{-u^2/4 \Var(X(\theta))} .
	\end{equation}
	
	Notice  that \eqref{eq:bound_N1}  and \eqref{eq:bound_N2}  hold also  if
	$\Var(X(\theta))=0$. Using that $\frac{3}{4}\sqrt{\frac{2}{\pi}} +
	\frac{1}{\pi}\simeq 0.92\leq 1$, we    deduce   \eqref{eq:lem_azais}   from
	\eqref{eq:ineq-majoM2}, \eqref{eq:bound_N1} and \eqref{eq:bound_N2}.
\end{proof}

\begin{proof}[Proof of Lemma~\ref{lem:bound_azais_final}]
	We first consider the case $\Theta_T=[\theta_0, \theta_1]$ and let
	$\gamma: [0, 1] \rightarrow [\theta_0, \theta_1]$ be a minimizing  path
	with respect to the Riemannian metric $\mathfrak{d}_T$ (see
	Remark~\ref{rem:Rieman-gen}); in particular we have $|\gamma'(s)|
	\sqrt{g_T(\gamma(s))} = \mathfrak{d}_T(\theta_0, \theta_1)$.    
	Thanks to~\eqref{eq:deriv}, the Gaussian
	process $\tilde X=(\tilde X(s)=X(\gamma(s)), s\in [0,1])$ is of
	class $\cc^1$ on   $s\in [0, 1]$, with derivative  $\tilde X'(s) = \gamma'(s)\, X'(\gamma(s))= \gamma'(s)\, \langle \partial_\theta h
	(\gamma(s)), w_T \rangle_T$. Then, according to Lemma
	\ref{lem:azaiz-v2}, Inequality~\eqref{eq:lem_azais} holds. 
	By Assumption  \ref{hyp:bruit}, we have for all $\theta\in \Theta_T$:
	\[
	\Var(X(\theta))) \leq \sigma^2 \Delta_T \norm{h(\theta)}_T^2 \leq
	\sigma^2 \Delta_T C_1^2
	\quad \text{and} \quad
	\frac{\Var(X'(\theta)))}{g_T(\theta)} \leq \sigma^2 \Delta_T \norm{
		h^{[1]} (\theta)}_T^2 \leq \sigma^2 \Delta_T C_2^2.
	\]
	Plugging those  bounds in Inequality~\eqref{eq:lem_azais} with  $|\gamma'(s)|
	\sqrt{g_T(\gamma(s))} = \mathfrak{d}_T(\theta_0, \theta_1)$,  we obtain:
	\begin{align*}
	\P \left ( \sup_{[\theta_0, \theta_1]} X  \geq u \right )
	&\leq  \frac{1}{u}
	\sqrt{\sigma^2 \Delta_T }C_2 \,  \expp{-u^2/(4
		\sigma^2 \Delta_T C_1^2)} 	\int_0^1 |\gamma'(s)| \sqrt{g_T(\gamma(s))}  \, \rd s + \frac{1}{2} \expp{-u^2 /(2
		\sigma^2 \Delta_T C_1^2)} \\
	&\leq \left ( C_2 + \frac{1}{2}\right) \left ( \sigma
	\frac{\mathfrak{d}_T(\theta_0, \theta_1)\sqrt{ \Delta_T}}{u }\vee 1 \right )\,  \expp{-u^2/(4 
		\sigma^2 \Delta_T C_1^2)}  .
	\end{align*}
	Since $\P \left ( \sup_{[\theta_0, \theta_1]} |X| \geq u \right ) \leq 2
	\, \P \left ( \sup_{[\theta_0, \theta_1]} X \geq u \right )$, we obtain that~\eqref{eq:bound_azais_final} holds for $\Theta_T$ a bounded closed interval. Then, use monotone convergence and the continuity of $X$ to get~\eqref{eq:bound_azais_final} for any interval $\Theta_T$.   
\end{proof}

\subsection{Schur complement}
The following Lemma is a classical result on the Schur complement.
\begin{lem}[Schur complement]
	\label{schur}
	Let  $M  \in  \R^{n  \times  n}$   be  a  matrix  composed  of  blocks
	$A  \in  \R^{(n-k)  \times  (n-k)}$,  $B  \in  \R^{(n-k)  \times  k}$,
	$C \in \R^{k \times (n-k)}$, $D \in \R^{k \times k}$:
	\[
	M = \begin{pmatrix} A & B \\
	C & D\end{pmatrix}
	\]
	Assume that $D$  and $S_1 = A-BD^{-1}C$ are non singular. Then,  the system:
	\begin{equation}
	\label{eq:system_schur}
	M  \begin{pmatrix}x\\y\end{pmatrix} = \begin{pmatrix}a\\b\end{pmatrix}.
	\end{equation}
	with $x \in  \R^{n-k}$, $y \in \R^{k}$, $a \in  \R^{n-k}$ and $b \in
	\R^{k}$, has a unique solution given by:
	\[
	x = S_1^{-1}a - S_1^{-1}B D^{-1}b
	\quad\text{and}\quad
	y = D^{-1}b - D^{-1}CS_1^{-1}a + D^{-1}CS_1^{-1}B D^{-1}b.
	\]
\end{lem}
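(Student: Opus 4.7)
The plan is to solve the block system directly by Gaussian block elimination, using the two hypotheses ($D$ non-singular and $S_1 = A - BD^{-1}C$ non-singular) to invert the relevant blocks at each step. Concretely, I would decompose the system \eqref{eq:system_schur} into the two equations $Ax + By = a$ and $Cx + Dy = b$, and first use the second equation (together with the invertibility of $D$) to express $y$ as an affine function of $x$, namely $y = D^{-1}b - D^{-1}Cx$.

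Next, I would substitute this expression for $y$ into the first equation $Ax + By = a$. After rearranging the terms, this yields $(A - BD^{-1}C)x = a - BD^{-1}b$, that is, $S_1 x = a - BD^{-1}b$. Since $S_1$ is non-singular by assumption, I can solve for $x$ and obtain $x = S_1^{-1}a - S_1^{-1}BD^{-1}b$, which matches the announced formula. Plugging this expression back into the formula $y = D^{-1}b - D^{-1}Cx$ gives directly the stated expression for $y$ after expansion, so existence of a solution of the announced form is established.

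For uniqueness, the same elimination argument, applied to a solution $(x,y)$ of the homogeneous system (that is, with $a = 0$ and $b = 0$), shows that $S_1 x = 0$ forces $x = 0$, and then $Dy = 0$ forces $y = 0$. Thus $M$ is injective on $\R^n$, hence bijective, and the solution is unique. Alternatively, one can observe that the elimination procedure furnishes an explicit block LDU factorization of $M$ with invertible diagonal factors, which directly exhibits $M^{-1}$ and simultaneously proves existence and uniqueness.

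There is no genuine obstacle here: the lemma is a routine consequence of block Gaussian elimination, and the only care needed is bookkeeping of the matrix products to verify that the closed-form expressions for $x$ and $y$ match those in the statement. I expect the whole argument to fit in a few lines of display math.
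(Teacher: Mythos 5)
Your proof is correct, and since the paper states this lemma as a classical fact without giving a proof, there is nothing to compare against: your block Gaussian elimination argument (eliminate $y$ via the invertible block $D$, reduce to $S_1 x = a - BD^{-1}b$, invert $S_1$, back-substitute) is the standard way to establish it, and the uniqueness argument via the homogeneous system is sound.
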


%%%%%%%%%%%%%%%%%%%%%%%%%%%%%%

\subsection{Proofs of Lemmas in Section~\ref{sec:riemannian_metric}}
\label{sec:proof-sec4}

\begin{proof}[Proof of Lemma~\ref{lemma:expansion}]
	For simplicity,  we remove the  subscript $\cK$ and for  example write
	$f^{[1]}=\tD_1[f]=D_1[f]/\sqrt{g}$.    Recall
	that $G$, a  primitive of  $\sqrt{g}$, is continuous
	increasing  and thus  induces a  one-to-one map  from $\Theta$  to its
	image. Following  Remark~\ref{rem:Rieman-gen}, we consider the
	minimizing   path   $\gamma:   [0,   1]  \rightarrow   \Theta$   from
	$\theta_0$  to   $\theta$  defined   by
	$\gamma_s=G^{-1}(as+b)$, with
	$b=G(\theta_0)$ and $a=G(\theta) -  G(\theta_0)$. Thus, we have 
	$  \mathcal{L}(\gamma)=\mathfrak{d}(\theta,\theta_0) $.   The
	minimizing  path 
	from  $\theta_0$  to  $\theta$  has   constant  speed  thus  equal  to
	$\mathfrak{d}(\theta_0,  \theta)$.  From  the explicit  expression  of
	$\gamma$, we get in fact  that $\dot{\gamma}_t \sqrt{g(\gamma_t)} = A$
	for             $t\in            [0,             1]$,            where
	$A=\operatorname{sign}(\theta-\theta_0)\,
	\mathfrak{d}(\theta,\theta_0)$.  Thus, we have:
	\begin{equation}
	\label{eq:deriv-1}
	f(\theta)- f(\theta_0)
	= f(\gamma_1) - f(\gamma_{0}) 
	= \int_{0}^{1} \dot{\gamma}_t\, f'(\gamma_t) \ \rd t
	= A \int_{0}^{1} \tD_{1}[f](\gamma_t) \ \rd t
	= A \int_{0}^{1} f^{[1] }(\gamma_t) \ \rd t,
	\end{equation}
	where we used~\eqref{eq:deriv-prim} and that the derivative of $f\circ
	\gamma_t$ is $\dot{\gamma}_t\, f'\circ \gamma_t$ 
	for the second equality and the definition of $\tD_{1}[f]$ as well as
	the equality  $\dot{\gamma}_t \sqrt{g(\gamma_t)}  = A$ for the
	last. 
	\medskip
	
	Using~\eqref{eq:deriv-1}  for $f$ and $\theta$ replaced by $f^{[1]}$ and
	$\gamma(t)$ for some $t\in [0, 1]$, we get thanks
	to~\eqref{eq:tDi} that:
	\[
	f^{[1] }(\gamma_t)
	= f^{[1]  }(\theta_0) + \tilde A \int_{0}^{1}
	f^{[2]}(\tilde \gamma_s) \ \rd s,
	\]
	where   $\tilde    \gamma$   is   a   geodesic    from   $\theta_0$   to
	$\gamma_{t}$ and $\tilde A=\dot{\tilde \gamma}_s \sqrt{g(\tilde \gamma_s)} $.
	Since  $\gamma$ is  itself  a  geodesic, we  deduce  that
	$\tilde \gamma_{s}=\gamma_{st}$, and thus $\tilde A=tA$. Plugging this
	in \eqref{eq:deriv-1}, we get:
	\[
	f(\theta)- f(\theta_0)
	= A\, f^{[1]  }(\theta_0)
	+ A^2 \int_{[0, 1]^2}
	f^{[2]}( \gamma_{st}) \   t\, \rd t \,\rd s  = A\, f^{[1]  }(\theta_0)
	+ A^2 \int_{0}^{1} (1-r)\, 
	f^{[2]} ( \gamma_{r}) \ \rd r.     
	\]
	This gives \eqref{eq:expansion}.
\end{proof}

\begin{proof}[Proof of Lemma~\ref{lem:gT_consistent}]
	Recall that by Assumption \ref{hyp:g>0} the function $\phi_T$ is
	$\mathcal{C}^3$. According to~\eqref{eq:deriv}, we have that  for any $i,j \in
	\left\{0,\ldots, 3\right\}$ 	and any $\theta,\theta' \in \Theta$: 
	\begin{equation}
	\label{eq:intervertion_kernel}
	\partial_{\theta,\theta'}^{i,j} \left \langle \phi_T(\theta),
	\phi_T(\theta')\right \rangle_T =  \left \langle
	\partial_\theta^i \phi_T(\theta), \partial_{\theta'}^j
	\phi_T(\theta')\right \rangle_T. 
	\end{equation}
	This and~\eqref{eq:def-cov-deriv}, \eqref{eq:def-tD},
	\eqref{eq:def-cov-deriv-2} and~\eqref{eq:def-tD2} readily
	imply~\eqref{def:derivatives_kernel}.       The      first  equality
	of~\eqref{eq:formula_K_00}    comes from 	Cauchy-Schwarz's
	inequality. The second is   clear.  
	We also have:
	\begin{equation}
	\label{eq:deriv-1phi}
	\langle  \partial_\theta \phi_T(\theta),
	\phi_T(\theta)  \rangle_T=\inv{2}\, \partial_\theta  \norm{\phi_T(\theta)}^2=0
	\end{equation}
	Since the right hand-side is also equal to $\sqrt{g_T(\theta)}\,
	\cK_T^{[1,0]}(\theta,\theta)$ thanks to~\eqref{def:derivatives_kernel},
	we get the third    equality
	of~\eqref{eq:formula_K_00}. 
	Taking the derivative with respect to $\theta$ in~\eqref{eq:deriv-1phi}
	yields $g_T(\theta)=  \langle
	\partial_\theta\phi_T(\theta),  \partial_\theta \phi_T(\theta)\rangle= -
	\langle 
	\partial^2_\theta\phi_T(\theta),    \phi_T(\theta)\rangle$.
	Thanks to~\eqref{eq:deriv1-3}, we get 
	$\partial_\theta^2 \phi_T= g_T \tD_{2, T}[ \phi_T] + (1/2 g_T) g'_T
	\partial_\theta \phi_T$. Using~\eqref{def:derivatives_kernel}    and~\eqref{eq:deriv-1phi} again,  we
	deduce that $ \langle 
	\partial^2_\theta\phi_T(\theta),    \phi_T(\theta)\rangle=
	g_T(\theta)\, \cK_T^{[2,0]}(\theta,\theta)$.  This gives
	the  fourth equality  of~\eqref{eq:formula_K_00}.  Eventually,  we deduce
	from~\eqref{def:derivatives_kernel}, \eqref{eq:deriv1-3}
	and~\eqref{eq:def-tD} that:
	\[
	g_T(\theta)^{3/2}\,  \cK_T^{[2, 1]}(\theta, \theta)=\langle \partial^2_\theta
	\phi_T(\theta), \partial_\theta \phi_T(\theta) \rangle -
	\inv{2}\frac{g'_T(\theta)}{g_T(\theta)} \langle \partial_\theta
	\phi_T(\theta), \partial_\theta \phi_T(\theta) \rangle.
	\]
	Then, use that $g'_T(\theta)=2 \langle \partial^2_\theta
	\phi_T(\theta), \partial_\theta \phi_T(\theta) \rangle$ to deduce that
	$\cK_T^{[2, 1]}(\theta, \theta)=0$. 
\end{proof}

\subsection{Control on $f$ from its derivatives$f^{[2]}$} 
The proof of the next lemma is similar to the proof of
\cite[Lemma~2]{poon2018geometry} and is left to the reader. 
Recall from~\eqref{eq:formula_K_00}
that $\cK_T^{[2,0]}(\theta,\theta)=-1$ on $\Theta$. 
\begin{lem}
	\label{quadratic_decay}
	Suppose Assumptions~\ref{hyp:reg-f} and~\ref{hyp:g>0} on the dictionary hold. Let $f$ be   a
	real valued function defined on an interval $\Theta$ of class $\cc^2$. Let
	$\theta_0 \in \Theta$. Set for $i=1, 2$, $f^{[i]} =
	\tilde{D}_{i;T}[f]$ (see~\eqref{eq:def-tD}).
	\begin{propenum}
		\item \label{lem:decay_point_i} Assume $f(\theta_{0})  =0$,
		$f^{[1]}(\theta_{0}) =0 $ and that there exist $\delta > 0$ and $r>0$
		such that for any $\theta \in \mathcal{B}_T(\theta_{0},r)$:
		\begin{equation}
		\label{hyp:D2}
		|f^{[2]}(\theta)| \leq 2\delta.
		\end{equation}
		Then, we have $|f(\theta) | \leq \delta\,
		\mathfrak{d}_T(\theta,\theta_{0})^{2}, \text{ for any } \theta \in
		\mathcal{B}_T(\theta_{0},r)$. 
		
		\item  \label{lem:decay_point_ii}  Let  $\Theta_T\subset \Theta$  be  an
		interval and  suppose that $ L\geq \sup_{\Theta_T^2}|\cK_T^{[2,  0]}|$ is
		finite     and    there     exist    $\varepsilon     >    0$     and
		$r     \in     (0,L^{-\frac{1}{2}})$     such     that     for     any
		$\theta                \in               \mathcal{B}_T(\theta_{0},r)$,
		$  -\cK_T^{[2,0]}(\theta,\theta_{0}) \geq  \varepsilon$.  Assume  that
		$\mathcal{B}_T(\theta_{0},r)\subset                         \Theta_T$,
		$f(\theta_{0}) = v  \in \{-1;1\}$, $f^{[1]}(\theta_{0}) =0  $ and that
		there  exists   $\delta\in  (0,   \varepsilon)$  such  that   for  any
		$\theta \in \mathcal{B}_T(\theta_{0},r)$:
		\begin{equation}
		\label{hyp:D2-K}
		|f^{[2]}(\theta)- v\cK_T^{[2,0]}(\theta,\theta_{0})| \leq \delta.
		\end{equation}
		Then,                               we                              have
		$|f(\theta)|  \leq 1                           -
		\frac{(\varepsilon-\delta)}{2}\mathfrak{d}_T(\theta,\theta_{0})^{2},
		\text{ for any } \theta \in \mathcal{B}_T(\theta_{0},r)$.
	\end{propenum}
\end{lem}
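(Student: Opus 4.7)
The plan is to apply the Taylor-type expansion from Lemma \ref{lemma:expansion} at the base point $\theta_0$, using a distance-realizing geodesic $\gamma$ with $\gamma_0=\theta_0$ and $\gamma_1=\theta$. A preliminary observation, which I will use throughout, is that because $\gamma$ is a minimizing path, the restriction $\gamma|_{[0,t]}$ is a minimizing path from $\theta_0$ to $\gamma_t$, so $\mathfrak{d}_T(\theta_0,\gamma_t)=\mathcal{L}_T(\gamma|_{[0,t]})\leq \mathcal{L}_T(\gamma)=\mathfrak{d}_T(\theta_0,\theta)\leq r$; hence $\gamma_t\in \mathcal{B}_T(\theta_0,r)$ for all $t\in[0,1]$, which allows me to insert the pointwise hypotheses on $f^{[2]}$ and $\cK_T^{[2,0]}(\cdot,\theta_0)$ under the integral.

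For part $\ref{lem:decay_point_i}$, since $f(\theta_0)=0$ and $f^{[1]}(\theta_0)=0$, Lemma \ref{lemma:expansion} reduces to
\[
  f(\theta)= \mathfrak{d}_T(\theta,\theta_0)^2\int_0^1(1-t)\,f^{[2]}(\gamma_t)\,\rd t.
\]
Using \eqref{hyp:D2} and $\int_0^1(1-t)\,\rd t=1/2$ immediately yields $|f(\theta)|\leq \delta\,\mathfrak{d}_T(\theta,\theta_0)^2$. This part is routine.

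For part $\ref{lem:decay_point_ii}$, Lemma \ref{lemma:expansion} now gives $f(\theta)=v+\mathfrak{d}_T(\theta,\theta_0)^2\int_0^1(1-t)\,f^{[2]}(\gamma_t)\,\rd t$. I will split $f^{[2]}(\gamma_t)= v\,\cK_T^{[2,0]}(\gamma_t,\theta_0)+ e(\gamma_t)$ where, by \eqref{hyp:D2-K}, $|e|\leq \delta$ on the ball. Multiplying by $v\in\{-1,1\}$ (so that $v^2=1$ and $|f|=|vf|$), I get
\[
  v f(\theta)= 1 + \mathfrak{d}_T(\theta,\theta_0)^2\int_0^1 (1-t)\,\cK_T^{[2,0]}(\gamma_t,\theta_0)\,\rd t+v\,\mathfrak{d}_T(\theta,\theta_0)^2\int_0^1 (1-t)\,e(\gamma_t)\,\rd t.
\]
Using $\cK_T^{[2,0]}(\gamma_t,\theta_0)\leq -\varepsilon$ on the ball gives the upper bound $vf(\theta)\leq 1-\frac{\varepsilon-\delta}{2}\mathfrak{d}_T(\theta,\theta_0)^2$ directly. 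The symmetric lower bound on $|f|$ needs a bit more care: from $\cK_T^{[2,0]}(\gamma_t,\theta_0)\geq -L$ I obtain $vf(\theta)\geq 1-\frac{L+\delta}{2}\mathfrak{d}_T(\theta,\theta_0)^2$, and I must check that this is at least $-1+\frac{\varepsilon-\delta}{2}\mathfrak{d}_T(\theta,\theta_0)^2$, i.e.\ that $(L+\varepsilon)\mathfrak{d}_T(\theta,\theta_0)^2\leq 4$.

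This last verification is the only nontrivial point: it uses the constraint $r<L^{-1/2}$ together with the fact $\varepsilon\leq L$. The latter comes from $\cK_T^{[2,0]}(\theta_0,\theta_0)=-1$ in \eqref{eq:formula_K_00}, giving $\varepsilon\leq -\cK_T^{[2,0]}(\theta_0,\theta_0)=1$ and $L\geq |\cK_T^{[2,0]}(\theta_0,\theta_0)|=1$, hence $L+\varepsilon\leq 2L$ and $\mathfrak{d}_T(\theta,\theta_0)^2\leq r^2<1/L\leq 4/(L+\varepsilon)$. Combining the two bounds and taking absolute values yields $|f(\theta)|\leq 1-\frac{\varepsilon-\delta}{2}\mathfrak{d}_T(\theta,\theta_0)^2$, completing the proof.
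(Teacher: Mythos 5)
Your proof is correct and takes the expected route via the covariant Taylor expansion of Lemma~\ref{lemma:expansion} along a minimizing geodesic, with the decomposition $f^{[2]}=v\,\cK_T^{[2,0]}(\cdot,\theta_0)+e$ and the check $(L+\varepsilon)\,\mathfrak{d}_T(\theta,\theta_0)^2\leq 4$ (via $\varepsilon\leq 1\leq L$ from \eqref{eq:formula_K_00} and $r^2<L^{-1}$) for the lower bound. The paper itself leaves this proof to the reader, citing \cite{poon2018geometry}, so there is no alternative argument to contrast with; your write-up, including the observation that the geodesic stays inside the ball, supplies precisely the missing details.
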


\subsection{Proof of Lemma~\ref{lem:rates0}}
\label{sec:proof-lem-gauss}
We keep the notations from Section~\ref{sec:model-hyp}. 
In order to prove that the constants $c_0$, $c_1$ and $c_2$ do not depend on
the scaling factor $\sigma_0$, we shall rewrite  $\RT$ and $\DT$ defined
in~\eqref{eq:def-rho} and~\eqref{def:V_1}  using a change of scale. To
do so, we  define $\varphi^0(\theta) =
k(\cdot - \theta)$ for $\theta \in \Theta$; the grid
${t_1^0,\cdots,t_T^0}$ where $t_j^0 = t_j / \sigma_0$; the Hilbert space
$L^2(\lambda_T^0)$ with $\lambda_T^0 = \Delta_T\,\sigma_0^{-1} \sum_{j=1}^T
\delta_{t_j^0}$, endowed with its natural scalar product noted $\left
\langle \cdot , \cdot \right \rangle_{\lambda_T^0}$ and norm
$\norm{\cdot}_{\lambda_T^0}$; the parameter space $\Theta_T^0 = [a_T
(1-\epsilon)\sigma_0^{-1},b_T(1-\epsilon) \sigma_0^{-1}]$. Since the scaling
factor $\sigma_0$ is fixed, the measures $(\lambda_T^0, T\geq 2)$ converge vaguely
towards the Lebesgue measure $\lambda_\infty$  on $\R$. We shall also consider  another
kernel: 
\begin{equation*}
\cK_T^0 (\theta,\theta')= \left \langle \phi_T^0(\theta) , \phi_T^0(\theta') \right \rangle_{\lambda^0_T} \quad \text{  with } \quad \phi^0_T = \varphi^0 / \norm{\varphi^0}_{\lambda^0_T},
\end{equation*}
and the limit kernel $\cK_\infty^0(\theta,\theta') = \left \langle \phi_\infty^0(\theta) , \phi_\infty^0(\theta') \right \rangle_{\infty}$ with $\phi^0_\infty = \varphi^0 / \norm{\varphi^0}_{\infty}$.  For any $T \in  \N \cup \{+\infty\}$, the kernel $\cK_T^0$ is  of
class $\cc^{3,3}$ on $\Theta^2$ and for $i, j\in \{0, \ldots, 3\}$ and
$\theta, \theta'\in \Theta$, we have:
\begin{equation*}
\cK_T^{[i,j]}(\theta,\theta')  = \cK_T^{0[i,j]} \left (\frac{\theta}{\sigma_0},\frac{\theta'}{\sigma_0} \right) \quad \text{  and } \quad \frac{1}{\sigma_0^2} g_{\ck_T^0}\left (\frac{\theta}{\sigma_0} \right ) = g_{\ck_T}(\theta) .
\end{equation*}
We can now rewrite  $\RT$ and $\DT$ by using a change of scale and we get:
\begin{equation*}
\RT=\max \left(\sup_{\Theta_T^0} \sqrt{\frac{ g_{\ck_T^0}}{ g_{\ck_\infty^0}
}},\sup_{\Theta_T^0} \sqrt{\frac{ g_{\ck_\infty^0} }{ g_{\ck_T^0} }} \right),
\end{equation*}
and
\begin{equation*}
\DT=\max( \DT^{(1)}, \DT^{(2)})
\quad\text{with}\quad
\DT^{(1)}=\max_{i,j\in \{0, 1, 2\} }\, \sup_{(\Theta_T^0)^2} |
\cK_T^{0[i,j]} - \cK_\infty^{0[i,j]}|
\quad\text{and}\quad
\DT^{(2)}=\sup_{\Theta_T^0} |h_{\cK_{T}^0} - h_{\cK_{\infty}^0}|. 
\end{equation*}
Thus, bounding $\RT$ and $\DT$ amounts to controling the proximity between the kernels $\cK^0_T$ and $\cK_\infty^0$. \medskip

First, we provide an upper bound for any $i,j \in \{0,\cdots,3\}$ of:
\begin{equation}
\label{eq:bound_no_normed}
B_{i,j}(T)=\sup_{\theta,\theta' \in \Theta_T^0} \left | \left
\langle  \partial_\theta^{i} \varphi^0(\theta),
\partial_\theta^{j} \varphi^0(\theta') \right \rangle_{\lambda_T^0}  -
\left \langle \partial_\theta^{i} \varphi^0(\theta),
\partial_\theta^{j} \varphi^0(\theta') \right
\rangle_{\infty}\right |.   
\end{equation}
Notice that:
\[
\partial^i_\theta \partial^j_t \varphi^0(\theta, t)=(-1)^j \,  k ^{(i+j)} 	(\theta-t).
\]
In what follows, we shall use at most three derivatives in $\theta$ and one derivative in $t$, so that $i+j\leq 4$ in the above formula.  Recall the polynomials $P_i$ are defined as $k^{(i)}=P_i \, k$ and set $M= \max_{0\leq  i\leq  4} \sup |P_i|\, \sqrt{k}$. It is
elementary to get that for $\theta, \theta' \in \R$:
\[
\left | (\Delta_T / \sigma_0) \sum\limits_{k=1}^{T}
\partial_\theta^{i}\varphi^0(\theta, t_k^0)\partial_\theta^{j}
\varphi^0(\theta', t_k^0)  - \int_{a_T/\sigma_0}^{b_T/\sigma_0}
\partial_\theta^{i}\varphi^0(\theta, t)\partial_\theta^{j}
\varphi^0(\theta', t)  \, \rd t \right | \leq 4
\sqrt{\pi}\, 
\Delta_T M^2  \sigma_0^{-1}.
\]
We have for $\theta, \theta' \in \Theta_T^0$ that:
\begin{align*}
\left | \int_{\R \setminus [a_T/\sigma_0, b_T/\sigma_0]} \partial_\theta^{i}
\varphi^0(\theta, t)\partial_\theta^{j} \varphi^0(\theta', t) \, \rd t
\right |
& \leq  \left | \int_{b_T/\sigma_0}^{+\infty}  \partial_\theta^{i}
\varphi^0(\theta, t)\partial_\theta^{j} \varphi^0(\theta', t) \,  \rd t
\right |  +  \left | \int_{-\infty}^{a_T/\sigma_0}  \partial_\theta^{i}
\varphi^0(\theta, t)\partial_\theta^{j} \varphi^0(\theta', t) \,  \rd t
\right |\\
& \leq   2M^2 \int_{\epsilon
	b_T/\sigma_0}^{+\infty} k(t) \, \rd t\\
& \leq  2\sqrt{\pi}\, M^2 \expp{- \epsilon^2
	b_T^2/ 2 \sigma_0^2}, 
\end{align*}
where we used that  $2\int_u^{+\infty}\expp{-t^2} \rd t\leq
\sqrt{\pi}\, \expp{-u^2}$ for $u>0$, see formula 7.1.13 in
\cite{abramowitz1964handbook}. We deduce that:
\[
B_{i,j}(T) \leq  4
\sqrt{\pi}\, 
\Delta_T M^2\sigma_0^{-1} + 2\sqrt{\pi}\, M^2 \expp{- \epsilon^2
	b_T^2/2 \sigma_0^2}
\leq 2\sqrt{\pi}\, M^2 \gamma_T,
\]
with $\gamma_T = 2 \Delta_T\sigma_0^{-1}+ \sqrt{\pi}\, \expp{-\epsilon^2
	b_T^2/2\sigma_0^2}$.

Similar        arguments        as        above        yield        that:
\[
\sup_{\theta  \in  \Theta_T^0}  \left  |  \norm{\varphi^0(\theta)}_{\lambda_T^0}^2  -
\norm{\varphi^0(\theta)}_\infty  ^2 \right  |  \leq  \gamma_T. 
\]
so                                                                  that
$\norm{\varphi^0(\theta)}_{\lambda_T^0}^2\geq  \sqrt{\pi} - \gamma_T$  for
all   $\theta\in  \Theta_T^0$.    It   is  then   easy   to  deduce   that
$  \sup_{ \Theta_T^0}  |g_{\ck_T^0}-g_{\ck_{\infty}^0}  |$  is bounded  by  a constant  times
$\gamma_T$ when $\gamma_T$ is smaller than a universal finite constant.  Up to
taking $\gamma_T$ smaller than some universal finite constant, this
and  the  fact that  $g_{\ck_\infty^0}  =1/2$  give the  second  part
of~\eqref{eq:lem:rates0}.  Then use formulae  for the derivatives of the
kernels, see~\eqref{eq:def-KT}  and~\eqref{eq:def-tD}, to get  the first
part of~\eqref{eq:lem:rates0}.

\end{appendix}

%%%%%%%%%%%%%%%%%%%%%%%%%%%%%%%%%%%%%%%%%%%%%%
%% Support information, if any,             %%
%% should be provided in the                %%
%% Acknowledgements section.                %%
%%%%%%%%%%%%%%%%%%%%%%%%%%%%%%%%%%%%%%%%%%%%%%
%\begin{acks}[Acknowledgments]
% The authors would like to thank ...
%\end{acks}
%%%%%%%%%%%%%%%%%%%%%%%%%%%%%%%%%%%%%%%%%%%%%%
%% Funding information, if any,             %%
%% should be provided in the                %%
%% funding section.                         %%
%%%%%%%%%%%%%%%%%%%%%%%%%%%%%%%%%%%%%%%%%%%%%%
%\begin{funding}
% The first author was supported by ...
%
% The second author was supported in part by ...
%\end{funding}

\end{document}